\RequirePackage{fix-cm}
\documentclass{article}
\usepackage[utf8]{inputenc}
\usepackage[T1]{fontenc}
\usepackage[left=1in,right=1in,top=1in,bottom=1in]{geometry}
\usepackage{microtype}
\usepackage{natbib}
\setcitestyle{authoryear,round,citesep={;},aysep={,},yysep={;}}

\usepackage{amsmath,amsfonts,bm}

\def\eqref#1{equation~\ref{#1}}

\def\1{\bm{1}}

\DeclareMathAlphabet{\mathsfit}{\encodingdefault}{\sfdefault}{m}{sl}
\SetMathAlphabet{\mathsfit}{bold}{\encodingdefault}{\sfdefault}{bx}{n}

\usepackage{url}
\usepackage{graphicx}
\usepackage{xcolor}
\usepackage{colortbl}
\usepackage{amsthm,aliascnt}
\newtheorem{proposition}{Proposition}[section]

\newtheorem{corollary}[proposition]{Corollary}
\theoremstyle{definition}

\usepackage{needspace,etoolbox}
\definecolor{LoRAColor}{HTML}{425B76}
\definecolor{OFTColor}{HTML}{B76E4A}
\definecolor{RestoredColor}{HTML}{268978}
\definecolor{NLLReductionColor}{HTML}{B75164}
\definecolor{DoRAColor}{HTML}{3F7C73}
\definecolor{HRAColor}{HTML}{78627F}
\definecolor{PiSSAColor}{HTML}{A85C68}
\definecolor{MiLoRAColor}{HTML}{9B8448}
\definecolor{FullFTColor}{HTML}{748092}
\definecolor{BaseColor}{HTML}{8E99A8}
\definecolor{InkColor}{HTML}{000000}
\definecolor{MutedColor}{HTML}{667182}
\definecolor{GuideColor}{HTML}{DDE2E8}
\definecolor{MinorGuideColor}{HTML}{EFF2F5}
\definecolor{SpineColor}{HTML}{414A57}
\definecolor{RuleColor}{HTML}{D4DAE2}
\definecolor{PanelColor}{HTML}{F7F9FC}
\definecolor{LinkColor}{HTML}{FF0000}
\definecolor{CiteColor}{HTML}{0000FF}
\definecolor{AccentColor}{HTML}{2458A6}
\definecolor{BaselineBarColor}{HTML}{ABB2BC}
\definecolor{SharedTuningColor}{HTML}{8796A9}
\definecolor{MethodTuningColor}{HTML}{536A84}
\definecolor{EarlyDiagnosticColor}{HTML}{4B8C97}
\definecolor{FinalDiagnosticColor}{HTML}{316678}
\definecolor{ContentsBlue}{HTML}{001473}

\usepackage[colorlinks=true,linkcolor=LinkColor,citecolor=CiteColor,urlcolor=CiteColor]{hyperref}
\usepackage[capitalise,nameinlink]{cleveref}
\newcommand{\loraname}{\textcolor{LoRAColor}{LoRA}}
\newcommand{\oftname}{\textcolor{OFTColor}{OFT}}
\newcommand{\bestcell}[2]{\cellcolor{#1!16}#2}
\newcommand{\bestmean}[1]{\ensuremath{\boldsymbol{#1}}}
\usepackage{booktabs}
\usepackage{placeins}
\title{Are Parameter-Efficient Fine-tuning Methods Really Different?}
\author{%
  Yikuan Li\thanks{Department of Statistics, The University of Warwick, UK. Email: \href{mailto:Yikuan.Li@warwick.ac.uk}{\texttt{Yikuan.Li@warwick.ac.uk}}}
  ~~~
  Pinyan Lu\thanks{Key Laboratory of Interdisciplinary Research of Computation and Economics, Shanghai University of Finance and Economics, China. Email: \href{mailto:lu.pinyan@mail.shufe.edu.cn}{\texttt{lu.pinyan@mail.shufe.edu.cn}}}
  ~~~
  Fanghui Liu\thanks{School of Mathematical Sciences, Institute of Natural Sciences and MOE-LSC, Shanghai Jiao Tong University, China. Email: \href{mailto:fanghui.liu@sjtu.edu.cn}{\texttt{fanghui.liu@sjtu.edu.cn}} (Corresponding author)} 
}
\date{}
\usepackage{eso-pic}
\newcommand{\headerlogo}[2]{%
  \raisebox{-0.5\height}{\includegraphics[height=#1]{#2}}%
}
\AddToShipoutPictureFG*{%
  \AtPageUpperLeft{%
    \raisebox{-1.45cm}{%
      \hspace*{\dimexpr 1in+\oddsidemargin\relax}%
      \makebox[\textwidth][l]{%
        \headerlogo{1.05cm}{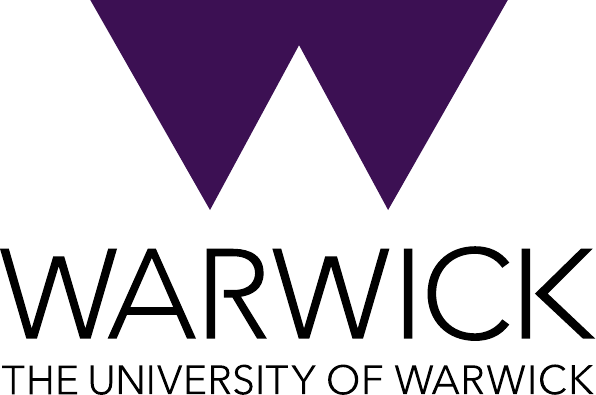}%
        \hspace{0.55cm}%
        \headerlogo{1.075cm}{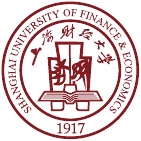}%
        \hspace{0.55cm}%
        \headerlogo{1.115cm}{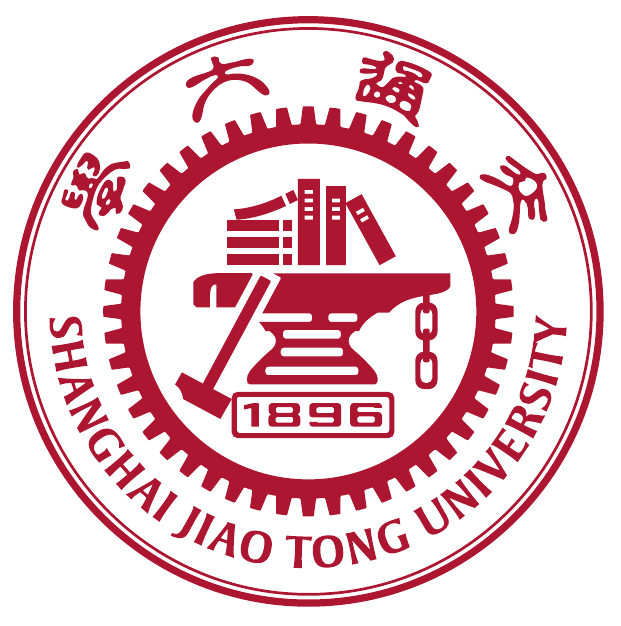}%
      }%
    }%
  }%
}
\AddToShipoutPictureFG*{%
  \AtPageUpperLeft{%
    \raisebox{-2.2cm}{%
      \hspace*{\dimexpr 1in+\oddsidemargin\relax}%
      \makebox[\textwidth][l]{\rule{\textwidth}{0.5pt}}%
    }%
  }%
}

\providecommand{\doi}[1]{\href{https://doi.org/#1}{doi:~\nolinkurl{#1}}}
\hypersetup{pdftitle={Are Parameter-Efficient Fine-tuning Methods Really Different?},
  pdfauthor={Yikuan Li, Pinyan Lu, Fanghui Liu}}

\usepackage{xurl}
\renewcommand{\bibfont}{\small}

\usepackage[font=small,labelfont=bf,justification=raggedright,
  singlelinecheck=false]{caption}
\AtBeginEnvironment{table}{%
  \renewenvironment{center}{\par\centering}{\par}%
  \renewcommand{\arraystretch}{1.12}%
}
\pretocmd{\section}{\Needspace{7\baselineskip}}{}{}
\pretocmd{\subsection}{\Needspace{5\baselineskip}}{}{}
\BeforeBeginEnvironment{proposition}{\Needspace{9\baselineskip}}
\BeforeBeginEnvironment{corollary}{\Needspace{10\baselineskip}}


\newcommand{\result}[1]{\ifcsname result@#1\endcsname\csname result@#1\endcsname\else\PackageError{paper}{Missing numerical value #1}{Define the numerical value in data/numbers.tex.}\fi}
\expandafter\def\csname result@fx22_lora_restore_dino_mean\endcsname{68.12}
\expandafter\def\csname result@fx22_lora_restore_dino_sd\endcsname{0.45}
\expandafter\def\csname result@fx22_lora_restore_dino_vs_reconstruct_mean\endcsname{0.28}
\expandafter\def\csname result@fx22_lora_restore_dino_vs_reconstruct_sd\endcsname{0.04}
\expandafter\def\csname result@fx22_lora_trained_dino_mean\endcsname{68.06}
\expandafter\def\csname result@fx22_lora_trained_dino_sd\endcsname{0.78}
\expandafter\def\csname result@fx22_lora_trained_ratio_mean\endcsname{1.12}
\expandafter\def\csname result@fx22_lora_trained_ratio_sd\endcsname{0.09}
\expandafter\def\csname result@fx22_lora_trained_dW_mean\endcsname{0.783}
\expandafter\def\csname result@fx22_lora_trained_dW_sd\endcsname{0.029}
\expandafter\def\csname result@fx22_lora_trained_dS_mean\endcsname{0.0088}
\expandafter\def\csname result@fx22_lora_trained_dS_sd\endcsname{0.0005}
\expandafter\def\csname result@fx22_lora_reconstruct_dino_mean\endcsname{67.84}
\expandafter\def\csname result@fx22_lora_reconstruct_dino_sd\endcsname{0.47}
\expandafter\def\csname result@fx22_lora_spectrum_only_dino_mean\endcsname{36.24}
\expandafter\def\csname result@fx22_lora_spectrum_only_dino_sd\endcsname{1.13}
\expandafter\def\csname result@fx22_oft_restore_dino_mean\endcsname{59.18}
\expandafter\def\csname result@fx22_oft_restore_dino_sd\endcsname{5.42}
\expandafter\def\csname result@fx22_oft_restore_dino_vs_reconstruct_mean\endcsname{-0.15}
\expandafter\def\csname result@fx22_oft_restore_dino_vs_reconstruct_sd\endcsname{0.57}
\expandafter\def\csname result@fx22_oft_trained_dino_mean\endcsname{59.04}
\expandafter\def\csname result@fx22_oft_trained_dino_sd\endcsname{5.51}
\expandafter\def\csname result@fx22_oft_trained_ratio_mean\endcsname{0.07}
\expandafter\def\csname result@fx22_oft_trained_ratio_sd\endcsname{0.01}
\expandafter\def\csname result@fx22_oft_trained_dW_mean\endcsname{10.587}
\expandafter\def\csname result@fx22_oft_trained_dW_sd\endcsname{0.085}
\expandafter\def\csname result@fx22_oft_trained_dS_mean\endcsname{0.0071}
\expandafter\def\csname result@fx22_oft_trained_dS_sd\endcsname{0.0013}
\expandafter\def\csname result@fx22_oft_reconstruct_dino_mean\endcsname{59.33}
\expandafter\def\csname result@fx22_oft_reconstruct_dino_sd\endcsname{5.91}
\expandafter\def\csname result@fx22_oft_spectrum_only_dino_mean\endcsname{36.17}
\expandafter\def\csname result@fx22_oft_spectrum_only_dino_sd\endcsname{1.09}
\expandafter\def\csname result@fx22_dora_restore_dino_mean\endcsname{70.59}
\expandafter\def\csname result@fx22_dora_restore_dino_sd\endcsname{3.75}
\expandafter\def\csname result@fx22_dora_restore_dino_vs_reconstruct_mean\endcsname{-0.36}
\expandafter\def\csname result@fx22_dora_restore_dino_vs_reconstruct_sd\endcsname{0.36}
\expandafter\def\csname result@fx22_dora_trained_dino_mean\endcsname{70.47}
\expandafter\def\csname result@fx22_dora_trained_dino_sd\endcsname{3.96}
\expandafter\def\csname result@fx22_dora_trained_ratio_mean\endcsname{1.28}
\expandafter\def\csname result@fx22_dora_trained_ratio_sd\endcsname{0.02}
\expandafter\def\csname result@fx22_dora_trained_dW_mean\endcsname{1.586}
\expandafter\def\csname result@fx22_dora_trained_dW_sd\endcsname{0.062}
\expandafter\def\csname result@fx22_dora_trained_dS_mean\endcsname{0.0204}
\expandafter\def\csname result@fx22_dora_trained_dS_sd\endcsname{0.0007}
\expandafter\def\csname result@fx22_dora_reconstruct_dino_mean\endcsname{70.95}
\expandafter\def\csname result@fx22_dora_reconstruct_dino_sd\endcsname{4.05}
\expandafter\def\csname result@fx22_dora_spectrum_only_dino_mean\endcsname{36.24}
\expandafter\def\csname result@fx22_dora_spectrum_only_dino_sd\endcsname{1.11}
\expandafter\def\csname result@fx22_pissa_restore_dino_mean\endcsname{69.03}
\expandafter\def\csname result@fx22_pissa_restore_dino_sd\endcsname{2.96}
\expandafter\def\csname result@fx22_pissa_restore_dino_vs_reconstruct_mean\endcsname{-0.05}
\expandafter\def\csname result@fx22_pissa_restore_dino_vs_reconstruct_sd\endcsname{0.83}
\expandafter\def\csname result@fx22_pissa_trained_dino_mean\endcsname{68.88}
\expandafter\def\csname result@fx22_pissa_trained_dino_sd\endcsname{3.45}
\expandafter\def\csname result@fx22_pissa_trained_ratio_mean\endcsname{5.79}
\expandafter\def\csname result@fx22_pissa_trained_ratio_sd\endcsname{0.04}
\expandafter\def\csname result@fx22_pissa_trained_dW_mean\endcsname{2.477}
\expandafter\def\csname result@fx22_pissa_trained_dW_sd\endcsname{0.027}
\expandafter\def\csname result@fx22_pissa_trained_dS_mean\endcsname{0.1434}
\expandafter\def\csname result@fx22_pissa_trained_dS_sd\endcsname{0.0021}
\expandafter\def\csname result@fx22_pissa_reconstruct_dino_mean\endcsname{69.07}
\expandafter\def\csname result@fx22_pissa_reconstruct_dino_sd\endcsname{2.91}
\expandafter\def\csname result@fx22_pissa_spectrum_only_dino_mean\endcsname{36.21}
\expandafter\def\csname result@fx22_pissa_spectrum_only_dino_sd\endcsname{1.24}
\expandafter\def\csname result@fx22_milora_restore_dino_mean\endcsname{66.06}
\expandafter\def\csname result@fx22_milora_restore_dino_sd\endcsname{2.15}
\expandafter\def\csname result@fx22_milora_restore_dino_vs_reconstruct_mean\endcsname{0.03}
\expandafter\def\csname result@fx22_milora_restore_dino_vs_reconstruct_sd\endcsname{0.60}
\expandafter\def\csname result@fx22_milora_trained_dino_mean\endcsname{66.13}
\expandafter\def\csname result@fx22_milora_trained_dino_sd\endcsname{2.10}
\expandafter\def\csname result@fx22_milora_trained_ratio_mean\endcsname{8.73}
\expandafter\def\csname result@fx22_milora_trained_ratio_sd\endcsname{0.06}
\expandafter\def\csname result@fx22_milora_trained_dW_mean\endcsname{2.019}
\expandafter\def\csname result@fx22_milora_trained_dW_sd\endcsname{0.046}
\expandafter\def\csname result@fx22_milora_trained_dS_mean\endcsname{0.1763}
\expandafter\def\csname result@fx22_milora_trained_dS_sd\endcsname{0.0029}
\expandafter\def\csname result@fx22_milora_reconstruct_dino_mean\endcsname{66.03}
\expandafter\def\csname result@fx22_milora_reconstruct_dino_sd\endcsname{2.17}
\expandafter\def\csname result@fx22_milora_spectrum_only_dino_mean\endcsname{36.34}
\expandafter\def\csname result@fx22_milora_spectrum_only_dino_sd\endcsname{1.35}
\expandafter\def\csname result@fx22_hra_restore_dino_mean\endcsname{63.54}
\expandafter\def\csname result@fx22_hra_restore_dino_sd\endcsname{5.99}
\expandafter\def\csname result@fx22_hra_restore_dino_vs_reconstruct_mean\endcsname{-0.15}
\expandafter\def\csname result@fx22_hra_restore_dino_vs_reconstruct_sd\endcsname{0.27}
\expandafter\def\csname result@fx22_hra_trained_dino_mean\endcsname{63.75}
\expandafter\def\csname result@fx22_hra_trained_dino_sd\endcsname{5.76}
\expandafter\def\csname result@fx22_hra_trained_ratio_mean\endcsname{0.01}
\expandafter\def\csname result@fx22_hra_trained_ratio_sd\endcsname{0.00}
\expandafter\def\csname result@fx22_hra_trained_dW_mean\endcsname{11.744}
\expandafter\def\csname result@fx22_hra_trained_dW_sd\endcsname{0.260}
\expandafter\def\csname result@fx22_hra_trained_dS_mean\endcsname{0.0017}
\expandafter\def\csname result@fx22_hra_trained_dS_sd\endcsname{0.0000}
\expandafter\def\csname result@fx22_hra_reconstruct_dino_mean\endcsname{63.69}
\expandafter\def\csname result@fx22_hra_reconstruct_dino_sd\endcsname{5.78}
\expandafter\def\csname result@fx22_hra_spectrum_only_dino_mean\endcsname{36.19}
\expandafter\def\csname result@fx22_hra_spectrum_only_dino_sd\endcsname{1.06}
\expandafter\def\csname result@fx22_lora_params\endcsname{38.34}
\expandafter\def\csname result@fx22_oft_params\endcsname{39.17}
\expandafter\def\csname result@fx22_dora_params\endcsname{39.23}
\expandafter\def\csname result@fx22_pissa_params\endcsname{38.34}
\expandafter\def\csname result@fx22_milora_params\endcsname{38.34}
\expandafter\def\csname result@fx22_hra_params\endcsname{38.09}
\expandafter\def\csname result@cross_cat_selected_settings\endcsname{4}
\expandafter\def\csname result@cross_cat_lora_lowest\endcsname{3}
\expandafter\def\csname result@cross_cat_lora_lowest_two\endcsname{4}
\expandafter\def\csname result@cross_cat_pissa_above_lora_seeds\endcsname{12}
\expandafter\def\csname result@cross_lora_math_lowest\endcsname{4}
\expandafter\def\csname result@cross_math_matched_blocks\endcsname{29}
\expandafter\def\csname result@cross_math_matched_pissa_highest\endcsname{29}
\expandafter\def\csname result@cross_math_matched_pissa_seed_highest\endcsname{87}
\expandafter\def\csname result@cross_coding_matched_blocks\endcsname{10}
\expandafter\def\csname result@cross_coding_matched_pissa_highest\endcsname{10}
\expandafter\def\csname result@cross_coding_matched_pissa_seed_highest\endcsname{30}
\expandafter\def\csname result@cross_cat_matched_blocks\endcsname{32}
\expandafter\def\csname result@cross_cat_matched_pissa_highest\endcsname{21}
\expandafter\def\csname result@cross_cat_matched_pissa_highest_two\endcsname{32}
\expandafter\def\csname result@cross_cat_hra_matched_lowest\endcsname{25}
\expandafter\def\csname result@ext_code_qwen_base_he\endcsname{58.54}
\expandafter\def\csname result@ext_code_qwen_base_hep\endcsname{51.22}
\expandafter\def\csname result@ext_code_qwen_base_mb\endcsname{76.72}
\expandafter\def\csname result@ext_code_qwen_base_mbp\endcsname{64.02}
\expandafter\def\csname result@ext_code_llama_base_he\endcsname{38.41}
\expandafter\def\csname result@ext_code_llama_base_hep\endcsname{31.71}
\expandafter\def\csname result@ext_code_llama_base_mb\endcsname{61.38}
\expandafter\def\csname result@ext_code_llama_base_mbp\endcsname{50.79}
\expandafter\def\csname result@ext_code_llama_dora_0p0003_he_mean\endcsname{42.07}
\expandafter\def\csname result@ext_code_llama_dora_0p0003_he_sd\endcsname{0.61}
\expandafter\def\csname result@ext_code_llama_dora_0p0003_hep_mean\endcsname{37.80}
\expandafter\def\csname result@ext_code_llama_dora_0p0003_hep_sd\endcsname{1.06}
\expandafter\def\csname result@ext_code_llama_dora_0p0003_mb_mean\endcsname{67.55}
\expandafter\def\csname result@ext_code_llama_dora_0p0003_mb_sd\endcsname{1.10}
\expandafter\def\csname result@ext_code_llama_dora_0p0003_mbp_mean\endcsname{56.08}
\expandafter\def\csname result@ext_code_llama_dora_0p0003_mbp_sd\endcsname{0.00}
\expandafter\def\csname result@ext_code_llama_dora_0p0003_nll_mean\endcsname{13.51}
\expandafter\def\csname result@ext_code_llama_dora_0p0003_nll_sd\endcsname{0.97}
\expandafter\def\csname result@ext_code_llama_hra_0p001_he_mean\endcsname{39.84}
\expandafter\def\csname result@ext_code_llama_hra_0p001_he_sd\endcsname{1.27}
\expandafter\def\csname result@ext_code_llama_hra_0p001_hep_mean\endcsname{35.57}
\expandafter\def\csname result@ext_code_llama_hra_0p001_hep_sd\endcsname{0.70}
\expandafter\def\csname result@ext_code_llama_hra_0p001_mb_mean\endcsname{66.75}
\expandafter\def\csname result@ext_code_llama_hra_0p001_mb_sd\endcsname{0.76}
\expandafter\def\csname result@ext_code_llama_hra_0p001_mbp_mean\endcsname{55.64}
\expandafter\def\csname result@ext_code_llama_hra_0p001_mbp_sd\endcsname{1.19}
\expandafter\def\csname result@ext_code_llama_hra_0p001_nll_mean\endcsname{15.09}
\expandafter\def\csname result@ext_code_llama_hra_0p001_nll_sd\endcsname{1.11}
\expandafter\def\csname result@ext_code_llama_lora_0p0003_he_mean\endcsname{41.46}
\expandafter\def\csname result@ext_code_llama_lora_0p0003_he_sd\endcsname{1.06}
\expandafter\def\csname result@ext_code_llama_lora_0p0003_hep_mean\endcsname{36.99}
\expandafter\def\csname result@ext_code_llama_lora_0p0003_hep_sd\endcsname{0.93}
\expandafter\def\csname result@ext_code_llama_lora_0p0003_mb_mean\endcsname{65.87}
\expandafter\def\csname result@ext_code_llama_lora_0p0003_mb_sd\endcsname{0.46}
\expandafter\def\csname result@ext_code_llama_lora_0p0003_mbp_mean\endcsname{54.76}
\expandafter\def\csname result@ext_code_llama_lora_0p0003_mbp_sd\endcsname{0.46}
\expandafter\def\csname result@ext_code_llama_lora_0p0003_nll_mean\endcsname{10.73}
\expandafter\def\csname result@ext_code_llama_lora_0p0003_nll_sd\endcsname{0.56}
\expandafter\def\csname result@ext_code_llama_milora_0p0003_he_mean\endcsname{42.28}
\expandafter\def\csname result@ext_code_llama_milora_0p0003_he_sd\endcsname{0.93}
\expandafter\def\csname result@ext_code_llama_milora_0p0003_hep_mean\endcsname{37.80}
\expandafter\def\csname result@ext_code_llama_milora_0p0003_hep_sd\endcsname{0.00}
\expandafter\def\csname result@ext_code_llama_milora_0p0003_mb_mean\endcsname{65.61}
\expandafter\def\csname result@ext_code_llama_milora_0p0003_mb_sd\endcsname{0.26}
\expandafter\def\csname result@ext_code_llama_milora_0p0003_mbp_mean\endcsname{54.06}
\expandafter\def\csname result@ext_code_llama_milora_0p0003_mbp_sd\endcsname{0.67}
\expandafter\def\csname result@ext_code_llama_milora_0p0003_nll_mean\endcsname{22.61}
\expandafter\def\csname result@ext_code_llama_milora_0p0003_nll_sd\endcsname{1.53}
\expandafter\def\csname result@ext_code_llama_oft_0p0003_he_mean\endcsname{43.09}
\expandafter\def\csname result@ext_code_llama_oft_0p0003_he_sd\endcsname{1.53}
\expandafter\def\csname result@ext_code_llama_oft_0p0003_hep_mean\endcsname{39.02}
\expandafter\def\csname result@ext_code_llama_oft_0p0003_hep_sd\endcsname{1.06}
\expandafter\def\csname result@ext_code_llama_oft_0p0003_mb_mean\endcsname{65.61}
\expandafter\def\csname result@ext_code_llama_oft_0p0003_mb_sd\endcsname{1.61}
\expandafter\def\csname result@ext_code_llama_oft_0p0003_mbp_mean\endcsname{54.41}
\expandafter\def\csname result@ext_code_llama_oft_0p0003_mbp_sd\endcsname{1.88}
\expandafter\def\csname result@ext_code_llama_oft_0p0003_nll_mean\endcsname{25.01}
\expandafter\def\csname result@ext_code_llama_oft_0p0003_nll_sd\endcsname{2.49}
\expandafter\def\csname result@ext_code_llama_pissa_0p0001_he_mean\endcsname{45.53}
\expandafter\def\csname result@ext_code_llama_pissa_0p0001_he_sd\endcsname{0.35}
\expandafter\def\csname result@ext_code_llama_pissa_0p0001_hep_mean\endcsname{40.45}
\expandafter\def\csname result@ext_code_llama_pissa_0p0001_hep_sd\endcsname{0.93}
\expandafter\def\csname result@ext_code_llama_pissa_0p0001_mb_mean\endcsname{66.14}
\expandafter\def\csname result@ext_code_llama_pissa_0p0001_mb_sd\endcsname{0.26}
\expandafter\def\csname result@ext_code_llama_pissa_0p0001_mbp_mean\endcsname{53.88}
\expandafter\def\csname result@ext_code_llama_pissa_0p0001_mbp_sd\endcsname{0.40}
\expandafter\def\csname result@ext_code_llama_pissa_0p0001_nll_mean\endcsname{43.00}
\expandafter\def\csname result@ext_code_llama_pissa_0p0001_nll_sd\endcsname{0.73}
\expandafter\def\csname result@ext_code_qwen_dora_0p0001_he_mean\endcsname{59.96}
\expandafter\def\csname result@ext_code_qwen_dora_0p0001_he_sd\endcsname{0.93}
\expandafter\def\csname result@ext_code_qwen_dora_0p0001_hep_mean\endcsname{52.64}
\expandafter\def\csname result@ext_code_qwen_dora_0p0001_hep_sd\endcsname{0.70}
\expandafter\def\csname result@ext_code_qwen_dora_0p0001_mb_mean\endcsname{76.63}
\expandafter\def\csname result@ext_code_qwen_dora_0p0001_mb_sd\endcsname{1.00}
\expandafter\def\csname result@ext_code_qwen_dora_0p0001_mbp_mean\endcsname{64.37}
\expandafter\def\csname result@ext_code_qwen_dora_0p0001_mbp_sd\endcsname{0.61}
\expandafter\def\csname result@ext_code_qwen_dora_0p0001_nll_mean\endcsname{1.89}
\expandafter\def\csname result@ext_code_qwen_dora_0p0001_nll_sd\endcsname{0.47}
\expandafter\def\csname result@ext_code_qwen_hra_0p001_he_mean\endcsname{58.94}
\expandafter\def\csname result@ext_code_qwen_hra_0p001_he_sd\endcsname{2.14}
\expandafter\def\csname result@ext_code_qwen_hra_0p001_hep_mean\endcsname{51.83}
\expandafter\def\csname result@ext_code_qwen_hra_0p001_hep_sd\endcsname{1.83}
\expandafter\def\csname result@ext_code_qwen_hra_0p001_mb_mean\endcsname{77.25}
\expandafter\def\csname result@ext_code_qwen_hra_0p001_mb_sd\endcsname{0.70}
\expandafter\def\csname result@ext_code_qwen_hra_0p001_mbp_mean\endcsname{64.46}
\expandafter\def\csname result@ext_code_qwen_hra_0p001_mbp_sd\endcsname{0.85}
\expandafter\def\csname result@ext_code_qwen_hra_0p001_nll_mean\endcsname{2.92}
\expandafter\def\csname result@ext_code_qwen_hra_0p001_nll_sd\endcsname{1.08}
\expandafter\def\csname result@ext_code_qwen_lora_0p0001_he_mean\endcsname{59.96}
\expandafter\def\csname result@ext_code_qwen_lora_0p0001_he_sd\endcsname{0.70}
\expandafter\def\csname result@ext_code_qwen_lora_0p0001_hep_mean\endcsname{52.24}
\expandafter\def\csname result@ext_code_qwen_lora_0p0001_hep_sd\endcsname{0.35}
\expandafter\def\csname result@ext_code_qwen_lora_0p0001_mb_mean\endcsname{76.90}
\expandafter\def\csname result@ext_code_qwen_lora_0p0001_mb_sd\endcsname{0.40}
\expandafter\def\csname result@ext_code_qwen_lora_0p0001_mbp_mean\endcsname{64.29}
\expandafter\def\csname result@ext_code_qwen_lora_0p0001_mbp_sd\endcsname{0.53}
\expandafter\def\csname result@ext_code_qwen_lora_0p0001_nll_mean\endcsname{1.60}
\expandafter\def\csname result@ext_code_qwen_lora_0p0001_nll_sd\endcsname{0.49}
\expandafter\def\csname result@ext_code_qwen_milora_0p0001_he_mean\endcsname{58.94}
\expandafter\def\csname result@ext_code_qwen_milora_0p0001_he_sd\endcsname{0.35}
\expandafter\def\csname result@ext_code_qwen_milora_0p0001_hep_mean\endcsname{51.02}
\expandafter\def\csname result@ext_code_qwen_milora_0p0001_hep_sd\endcsname{0.93}
\expandafter\def\csname result@ext_code_qwen_milora_0p0001_mb_mean\endcsname{76.63}
\expandafter\def\csname result@ext_code_qwen_milora_0p0001_mb_sd\endcsname{0.55}
\expandafter\def\csname result@ext_code_qwen_milora_0p0001_mbp_mean\endcsname{64.02}
\expandafter\def\csname result@ext_code_qwen_milora_0p0001_mbp_sd\endcsname{0.53}
\expandafter\def\csname result@ext_code_qwen_milora_0p0001_nll_mean\endcsname{3.11}
\expandafter\def\csname result@ext_code_qwen_milora_0p0001_nll_sd\endcsname{0.35}
\expandafter\def\csname result@ext_code_qwen_oft_0p0001_he_mean\endcsname{58.94}
\expandafter\def\csname result@ext_code_qwen_oft_0p0001_he_sd\endcsname{0.70}
\expandafter\def\csname result@ext_code_qwen_oft_0p0001_hep_mean\endcsname{51.83}
\expandafter\def\csname result@ext_code_qwen_oft_0p0001_hep_sd\endcsname{0.61}
\expandafter\def\csname result@ext_code_qwen_oft_0p0001_mb_mean\endcsname{77.43}
\expandafter\def\csname result@ext_code_qwen_oft_0p0001_mb_sd\endcsname{0.55}
\expandafter\def\csname result@ext_code_qwen_oft_0p0001_mbp_mean\endcsname{64.55}
\expandafter\def\csname result@ext_code_qwen_oft_0p0001_mbp_sd\endcsname{0.53}
\expandafter\def\csname result@ext_code_qwen_oft_0p0001_nll_mean\endcsname{2.23}
\expandafter\def\csname result@ext_code_qwen_oft_0p0001_nll_sd\endcsname{0.47}
\expandafter\def\csname result@ext_code_qwen_pissa_1em05_he_mean\endcsname{59.15}
\expandafter\def\csname result@ext_code_qwen_pissa_1em05_he_sd\endcsname{1.06}
\expandafter\def\csname result@ext_code_qwen_pissa_1em05_hep_mean\endcsname{52.44}
\expandafter\def\csname result@ext_code_qwen_pissa_1em05_hep_sd\endcsname{1.61}
\expandafter\def\csname result@ext_code_qwen_pissa_1em05_mb_mean\endcsname{76.63}
\expandafter\def\csname result@ext_code_qwen_pissa_1em05_mb_sd\endcsname{0.40}
\expandafter\def\csname result@ext_code_qwen_pissa_1em05_mbp_mean\endcsname{63.93}
\expandafter\def\csname result@ext_code_qwen_pissa_1em05_mbp_sd\endcsname{0.61}
\expandafter\def\csname result@ext_code_qwen_pissa_1em05_nll_mean\endcsname{4.25}
\expandafter\def\csname result@ext_code_qwen_pissa_1em05_nll_sd\endcsname{0.31}
\expandafter\def\csname result@ext_flux_flux_dora_cap4_0p0005_params\endcsname{5.68}
\expandafter\def\csname result@ext_flux_flux_dora_cap4_0p0005_dino_mean\endcsname{69.92}
\expandafter\def\csname result@ext_flux_flux_dora_cap4_0p0005_dino_sd\endcsname{2.05}
\expandafter\def\csname result@ext_flux_flux_dora_cap4_0p0005_drift_mean\endcsname{16.93}
\expandafter\def\csname result@ext_flux_flux_dora_cap4_0p0005_drift_sd\endcsname{3.89}
\expandafter\def\csname result@ext_flux_flux_dora_cap4_0p0005_clip_mean\endcsname{26.81}
\expandafter\def\csname result@ext_flux_flux_dora_cap4_0p0005_clip_sd\endcsname{0.62}
\expandafter\def\csname result@ext_flux_flux_dora_cap4_0p0005_validation_mean\endcsname{71.87}
\expandafter\def\csname result@ext_flux_flux_dora_cap4_0p0005_validation_sd\endcsname{2.31}
\expandafter\def\csname result@ext_flux_flux_dora_0p0003_params\endcsname{10.48}
\expandafter\def\csname result@ext_flux_flux_dora_0p0003_dino_mean\endcsname{68.77}
\expandafter\def\csname result@ext_flux_flux_dora_0p0003_dino_sd\endcsname{2.44}
\expandafter\def\csname result@ext_flux_flux_dora_0p0003_drift_mean\endcsname{14.77}
\expandafter\def\csname result@ext_flux_flux_dora_0p0003_drift_sd\endcsname{0.57}
\expandafter\def\csname result@ext_flux_flux_dora_0p0003_clip_mean\endcsname{27.40}
\expandafter\def\csname result@ext_flux_flux_dora_0p0003_clip_sd\endcsname{2.09}
\expandafter\def\csname result@ext_flux_flux_dora_0p0003_validation_mean\endcsname{70.42}
\expandafter\def\csname result@ext_flux_flux_dora_0p0003_validation_sd\endcsname{2.84}
\expandafter\def\csname result@ext_flux_flux_dora_cap16_5em05_params\endcsname{20.06}
\expandafter\def\csname result@ext_flux_flux_dora_cap16_5em05_dino_mean\endcsname{66.38}
\expandafter\def\csname result@ext_flux_flux_dora_cap16_5em05_dino_sd\endcsname{2.31}
\expandafter\def\csname result@ext_flux_flux_dora_cap16_5em05_drift_mean\endcsname{13.83}
\expandafter\def\csname result@ext_flux_flux_dora_cap16_5em05_drift_sd\endcsname{2.73}
\expandafter\def\csname result@ext_flux_flux_dora_cap16_5em05_clip_mean\endcsname{28.27}
\expandafter\def\csname result@ext_flux_flux_dora_cap16_5em05_clip_sd\endcsname{0.66}
\expandafter\def\csname result@ext_flux_flux_dora_cap16_5em05_validation_mean\endcsname{73.73}
\expandafter\def\csname result@ext_flux_flux_dora_cap16_5em05_validation_sd\endcsname{1.29}
\expandafter\def\csname result@ext_flux_flux_dora_cap32_0p0001_params\endcsname{39.23}
\expandafter\def\csname result@ext_flux_flux_dora_cap32_0p0001_dino_mean\endcsname{70.97}
\expandafter\def\csname result@ext_flux_flux_dora_cap32_0p0001_dino_sd\endcsname{3.86}
\expandafter\def\csname result@ext_flux_flux_dora_cap32_0p0001_drift_mean\endcsname{15.98}
\expandafter\def\csname result@ext_flux_flux_dora_cap32_0p0001_drift_sd\endcsname{2.36}
\expandafter\def\csname result@ext_flux_flux_dora_cap32_0p0001_clip_mean\endcsname{27.34}
\expandafter\def\csname result@ext_flux_flux_dora_cap32_0p0001_clip_sd\endcsname{0.60}
\expandafter\def\csname result@ext_flux_flux_dora_cap32_0p0001_validation_mean\endcsname{72.33}
\expandafter\def\csname result@ext_flux_flux_dora_cap32_0p0001_validation_sd\endcsname{2.88}
\expandafter\def\csname result@ext_flux_flux_hra_cap16_0p0005_params\endcsname{4.92}
\expandafter\def\csname result@ext_flux_flux_hra_cap16_0p0005_dino_mean\endcsname{68.40}
\expandafter\def\csname result@ext_flux_flux_hra_cap16_0p0005_dino_sd\endcsname{1.09}
\expandafter\def\csname result@ext_flux_flux_hra_cap16_0p0005_drift_mean\endcsname{14.83}
\expandafter\def\csname result@ext_flux_flux_hra_cap16_0p0005_drift_sd\endcsname{4.11}
\expandafter\def\csname result@ext_flux_flux_hra_cap16_0p0005_clip_mean\endcsname{26.95}
\expandafter\def\csname result@ext_flux_flux_hra_cap16_0p0005_clip_sd\endcsname{2.05}
\expandafter\def\csname result@ext_flux_flux_hra_cap16_0p0005_validation_mean\endcsname{72.99}
\expandafter\def\csname result@ext_flux_flux_hra_cap16_0p0005_validation_sd\endcsname{1.85}
\expandafter\def\csname result@ext_flux_flux_hra_0p0003_params\endcsname{9.83}
\expandafter\def\csname result@ext_flux_flux_hra_0p0003_dino_mean\endcsname{70.10}
\expandafter\def\csname result@ext_flux_flux_hra_0p0003_dino_sd\endcsname{2.95}
\expandafter\def\csname result@ext_flux_flux_hra_0p0003_drift_mean\endcsname{13.47}
\expandafter\def\csname result@ext_flux_flux_hra_0p0003_drift_sd\endcsname{4.26}
\expandafter\def\csname result@ext_flux_flux_hra_0p0003_clip_mean\endcsname{26.63}
\expandafter\def\csname result@ext_flux_flux_hra_0p0003_clip_sd\endcsname{1.47}
\expandafter\def\csname result@ext_flux_flux_hra_0p0003_validation_mean\endcsname{73.11}
\expandafter\def\csname result@ext_flux_flux_hra_0p0003_validation_sd\endcsname{0.95}
\expandafter\def\csname result@ext_flux_flux_hra_cap62_0p0005_params\endcsname{19.05}
\expandafter\def\csname result@ext_flux_flux_hra_cap62_0p0005_dino_mean\endcsname{50.89}
\expandafter\def\csname result@ext_flux_flux_hra_cap62_0p0005_dino_sd\endcsname{23.55}
\expandafter\def\csname result@ext_flux_flux_hra_cap62_0p0005_drift_mean\endcsname{32.82}
\expandafter\def\csname result@ext_flux_flux_hra_cap62_0p0005_drift_sd\endcsname{12.12}
\expandafter\def\csname result@ext_flux_flux_hra_cap62_0p0005_clip_mean\endcsname{21.10}
\expandafter\def\csname result@ext_flux_flux_hra_cap62_0p0005_clip_sd\endcsname{5.13}
\expandafter\def\csname result@ext_flux_flux_hra_cap62_0p0005_validation_mean\endcsname{70.95}
\expandafter\def\csname result@ext_flux_flux_hra_cap62_0p0005_validation_sd\endcsname{3.50}
\expandafter\def\csname result@ext_flux_flux_hra_cap124_0p0003_params\endcsname{38.09}
\expandafter\def\csname result@ext_flux_flux_hra_cap124_0p0003_dino_mean\endcsname{63.63}
\expandafter\def\csname result@ext_flux_flux_hra_cap124_0p0003_dino_sd\endcsname{5.99}
\expandafter\def\csname result@ext_flux_flux_hra_cap124_0p0003_drift_mean\endcsname{32.82}
\expandafter\def\csname result@ext_flux_flux_hra_cap124_0p0003_drift_sd\endcsname{2.95}
\expandafter\def\csname result@ext_flux_flux_hra_cap124_0p0003_clip_mean\endcsname{23.27}
\expandafter\def\csname result@ext_flux_flux_hra_cap124_0p0003_clip_sd\endcsname{0.99}
\expandafter\def\csname result@ext_flux_flux_hra_cap124_0p0003_validation_mean\endcsname{72.51}
\expandafter\def\csname result@ext_flux_flux_hra_cap124_0p0003_validation_sd\endcsname{4.16}
\expandafter\def\csname result@ext_flux_flux_lora_cap4_0p0003_params\endcsname{4.79}
\expandafter\def\csname result@ext_flux_flux_lora_cap4_0p0003_dino_mean\endcsname{68.90}
\expandafter\def\csname result@ext_flux_flux_lora_cap4_0p0003_dino_sd\endcsname{0.87}
\expandafter\def\csname result@ext_flux_flux_lora_cap4_0p0003_drift_mean\endcsname{14.23}
\expandafter\def\csname result@ext_flux_flux_lora_cap4_0p0003_drift_sd\endcsname{1.01}
\expandafter\def\csname result@ext_flux_flux_lora_cap4_0p0003_clip_mean\endcsname{27.00}
\expandafter\def\csname result@ext_flux_flux_lora_cap4_0p0003_clip_sd\endcsname{1.24}
\expandafter\def\csname result@ext_flux_flux_lora_cap4_0p0003_validation_mean\endcsname{73.48}
\expandafter\def\csname result@ext_flux_flux_lora_cap4_0p0003_validation_sd\endcsname{2.90}
\expandafter\def\csname result@ext_flux_flux_lora_0p0001_params\endcsname{9.58}
\expandafter\def\csname result@ext_flux_flux_lora_0p0001_dino_mean\endcsname{68.48}
\expandafter\def\csname result@ext_flux_flux_lora_0p0001_dino_sd\endcsname{1.29}
\expandafter\def\csname result@ext_flux_flux_lora_0p0001_drift_mean\endcsname{14.38}
\expandafter\def\csname result@ext_flux_flux_lora_0p0001_drift_sd\endcsname{0.68}
\expandafter\def\csname result@ext_flux_flux_lora_0p0001_clip_mean\endcsname{28.09}
\expandafter\def\csname result@ext_flux_flux_lora_0p0001_clip_sd\endcsname{1.48}
\expandafter\def\csname result@ext_flux_flux_lora_0p0001_validation_mean\endcsname{71.65}
\expandafter\def\csname result@ext_flux_flux_lora_0p0001_validation_sd\endcsname{1.88}
\expandafter\def\csname result@ext_flux_flux_lora_cap16_0p0001_params\endcsname{19.17}
\expandafter\def\csname result@ext_flux_flux_lora_cap16_0p0001_dino_mean\endcsname{65.78}
\expandafter\def\csname result@ext_flux_flux_lora_cap16_0p0001_dino_sd\endcsname{1.41}
\expandafter\def\csname result@ext_flux_flux_lora_cap16_0p0001_drift_mean\endcsname{12.87}
\expandafter\def\csname result@ext_flux_flux_lora_cap16_0p0001_drift_sd\endcsname{2.45}
\expandafter\def\csname result@ext_flux_flux_lora_cap16_0p0001_clip_mean\endcsname{27.99}
\expandafter\def\csname result@ext_flux_flux_lora_cap16_0p0001_clip_sd\endcsname{1.60}
\expandafter\def\csname result@ext_flux_flux_lora_cap16_0p0001_validation_mean\endcsname{74.88}
\expandafter\def\csname result@ext_flux_flux_lora_cap16_0p0001_validation_sd\endcsname{1.44}
\expandafter\def\csname result@ext_flux_flux_lora_cap32_5em05_params\endcsname{38.34}
\expandafter\def\csname result@ext_flux_flux_lora_cap32_5em05_dino_mean\endcsname{67.63}
\expandafter\def\csname result@ext_flux_flux_lora_cap32_5em05_dino_sd\endcsname{0.88}
\expandafter\def\csname result@ext_flux_flux_lora_cap32_5em05_drift_mean\endcsname{14.98}
\expandafter\def\csname result@ext_flux_flux_lora_cap32_5em05_drift_sd\endcsname{2.53}
\expandafter\def\csname result@ext_flux_flux_lora_cap32_5em05_clip_mean\endcsname{28.78}
\expandafter\def\csname result@ext_flux_flux_lora_cap32_5em05_clip_sd\endcsname{0.85}
\expandafter\def\csname result@ext_flux_flux_lora_cap32_5em05_validation_mean\endcsname{72.46}
\expandafter\def\csname result@ext_flux_flux_lora_cap32_5em05_validation_sd\endcsname{5.94}
\expandafter\def\csname result@ext_flux_flux_milora_cap4_0p0001_params\endcsname{4.79}
\expandafter\def\csname result@ext_flux_flux_milora_cap4_0p0001_dino_mean\endcsname{66.51}
\expandafter\def\csname result@ext_flux_flux_milora_cap4_0p0001_dino_sd\endcsname{1.07}
\expandafter\def\csname result@ext_flux_flux_milora_cap4_0p0001_drift_mean\endcsname{15.89}
\expandafter\def\csname result@ext_flux_flux_milora_cap4_0p0001_drift_sd\endcsname{3.13}
\expandafter\def\csname result@ext_flux_flux_milora_cap4_0p0001_clip_mean\endcsname{26.94}
\expandafter\def\csname result@ext_flux_flux_milora_cap4_0p0001_clip_sd\endcsname{1.39}
\expandafter\def\csname result@ext_flux_flux_milora_cap4_0p0001_validation_mean\endcsname{71.79}
\expandafter\def\csname result@ext_flux_flux_milora_cap4_0p0001_validation_sd\endcsname{5.90}
\expandafter\def\csname result@ext_flux_flux_milora_5em05_params\endcsname{9.58}
\expandafter\def\csname result@ext_flux_flux_milora_5em05_dino_mean\endcsname{65.87}
\expandafter\def\csname result@ext_flux_flux_milora_5em05_dino_sd\endcsname{2.53}
\expandafter\def\csname result@ext_flux_flux_milora_5em05_drift_mean\endcsname{14.58}
\expandafter\def\csname result@ext_flux_flux_milora_5em05_drift_sd\endcsname{4.24}
\expandafter\def\csname result@ext_flux_flux_milora_5em05_clip_mean\endcsname{28.25}
\expandafter\def\csname result@ext_flux_flux_milora_5em05_clip_sd\endcsname{0.21}
\expandafter\def\csname result@ext_flux_flux_milora_5em05_validation_mean\endcsname{69.65}
\expandafter\def\csname result@ext_flux_flux_milora_5em05_validation_sd\endcsname{2.02}
\expandafter\def\csname result@ext_flux_flux_milora_cap16_0p0001_params\endcsname{19.17}
\expandafter\def\csname result@ext_flux_flux_milora_cap16_0p0001_dino_mean\endcsname{65.14}
\expandafter\def\csname result@ext_flux_flux_milora_cap16_0p0001_dino_sd\endcsname{1.60}
\expandafter\def\csname result@ext_flux_flux_milora_cap16_0p0001_drift_mean\endcsname{15.38}
\expandafter\def\csname result@ext_flux_flux_milora_cap16_0p0001_drift_sd\endcsname{2.56}
\expandafter\def\csname result@ext_flux_flux_milora_cap16_0p0001_clip_mean\endcsname{27.07}
\expandafter\def\csname result@ext_flux_flux_milora_cap16_0p0001_clip_sd\endcsname{0.13}
\expandafter\def\csname result@ext_flux_flux_milora_cap16_0p0001_validation_mean\endcsname{73.96}
\expandafter\def\csname result@ext_flux_flux_milora_cap16_0p0001_validation_sd\endcsname{2.00}
\expandafter\def\csname result@ext_flux_flux_milora_cap32_0p0001_params\endcsname{38.34}
\expandafter\def\csname result@ext_flux_flux_milora_cap32_0p0001_dino_mean\endcsname{65.96}
\expandafter\def\csname result@ext_flux_flux_milora_cap32_0p0001_dino_sd\endcsname{1.88}
\expandafter\def\csname result@ext_flux_flux_milora_cap32_0p0001_drift_mean\endcsname{15.44}
\expandafter\def\csname result@ext_flux_flux_milora_cap32_0p0001_drift_sd\endcsname{2.85}
\expandafter\def\csname result@ext_flux_flux_milora_cap32_0p0001_clip_mean\endcsname{27.69}
\expandafter\def\csname result@ext_flux_flux_milora_cap32_0p0001_clip_sd\endcsname{1.70}
\expandafter\def\csname result@ext_flux_flux_milora_cap32_0p0001_validation_mean\endcsname{73.32}
\expandafter\def\csname result@ext_flux_flux_milora_cap32_0p0001_validation_sd\endcsname{2.90}
\expandafter\def\csname result@ext_flux_flux_oft_cap32_0p0003_params\endcsname{4.76}
\expandafter\def\csname result@ext_flux_flux_oft_cap32_0p0003_dino_mean\endcsname{67.43}
\expandafter\def\csname result@ext_flux_flux_oft_cap32_0p0003_dino_sd\endcsname{4.54}
\expandafter\def\csname result@ext_flux_flux_oft_cap32_0p0003_drift_mean\endcsname{27.66}
\expandafter\def\csname result@ext_flux_flux_oft_cap32_0p0003_drift_sd\endcsname{4.85}
\expandafter\def\csname result@ext_flux_flux_oft_cap32_0p0003_clip_mean\endcsname{24.88}
\expandafter\def\csname result@ext_flux_flux_oft_cap32_0p0003_clip_sd\endcsname{1.90}
\expandafter\def\csname result@ext_flux_flux_oft_cap32_0p0003_validation_mean\endcsname{74.58}
\expandafter\def\csname result@ext_flux_flux_oft_cap32_0p0003_validation_sd\endcsname{2.56}
\expandafter\def\csname result@ext_flux_flux_oft_5em05_params\endcsname{9.68}
\expandafter\def\csname result@ext_flux_flux_oft_5em05_dino_mean\endcsname{68.90}
\expandafter\def\csname result@ext_flux_flux_oft_5em05_dino_sd\endcsname{1.03}
\expandafter\def\csname result@ext_flux_flux_oft_5em05_drift_mean\endcsname{15.03}
\expandafter\def\csname result@ext_flux_flux_oft_5em05_drift_sd\endcsname{3.55}
\expandafter\def\csname result@ext_flux_flux_oft_5em05_clip_mean\endcsname{27.88}
\expandafter\def\csname result@ext_flux_flux_oft_5em05_clip_sd\endcsname{2.43}
\expandafter\def\csname result@ext_flux_flux_oft_5em05_validation_mean\endcsname{71.17}
\expandafter\def\csname result@ext_flux_flux_oft_5em05_validation_sd\endcsname{0.91}
\expandafter\def\csname result@ext_flux_flux_oft_cap128_0p0001_params\endcsname{19.51}
\expandafter\def\csname result@ext_flux_flux_oft_cap128_0p0001_dino_mean\endcsname{63.11}
\expandafter\def\csname result@ext_flux_flux_oft_cap128_0p0001_dino_sd\endcsname{1.24}
\expandafter\def\csname result@ext_flux_flux_oft_cap128_0p0001_drift_mean\endcsname{30.83}
\expandafter\def\csname result@ext_flux_flux_oft_cap128_0p0001_drift_sd\endcsname{4.67}
\expandafter\def\csname result@ext_flux_flux_oft_cap128_0p0001_clip_mean\endcsname{23.23}
\expandafter\def\csname result@ext_flux_flux_oft_cap128_0p0001_clip_sd\endcsname{1.11}
\expandafter\def\csname result@ext_flux_flux_oft_cap128_0p0001_validation_mean\endcsname{70.87}
\expandafter\def\csname result@ext_flux_flux_oft_cap128_0p0001_validation_sd\endcsname{7.10}
\expandafter\def\csname result@ext_flux_flux_oft_cap256_0p0001_params\endcsname{39.17}
\expandafter\def\csname result@ext_flux_flux_oft_cap256_0p0001_dino_mean\endcsname{59.54}
\expandafter\def\csname result@ext_flux_flux_oft_cap256_0p0001_dino_sd\endcsname{5.56}
\expandafter\def\csname result@ext_flux_flux_oft_cap256_0p0001_drift_mean\endcsname{43.69}
\expandafter\def\csname result@ext_flux_flux_oft_cap256_0p0001_drift_sd\endcsname{6.55}
\expandafter\def\csname result@ext_flux_flux_oft_cap256_0p0001_clip_mean\endcsname{16.51}
\expandafter\def\csname result@ext_flux_flux_oft_cap256_0p0001_clip_sd\endcsname{5.30}
\expandafter\def\csname result@ext_flux_flux_oft_cap256_0p0001_validation_mean\endcsname{74.71}
\expandafter\def\csname result@ext_flux_flux_oft_cap256_0p0001_validation_sd\endcsname{4.21}
\expandafter\def\csname result@ext_flux_flux_pissa_cap4_0p0001_params\endcsname{4.79}
\expandafter\def\csname result@ext_flux_flux_pissa_cap4_0p0001_dino_mean\endcsname{68.29}
\expandafter\def\csname result@ext_flux_flux_pissa_cap4_0p0001_dino_sd\endcsname{3.41}
\expandafter\def\csname result@ext_flux_flux_pissa_cap4_0p0001_drift_mean\endcsname{18.63}
\expandafter\def\csname result@ext_flux_flux_pissa_cap4_0p0001_drift_sd\endcsname{4.99}
\expandafter\def\csname result@ext_flux_flux_pissa_cap4_0p0001_clip_mean\endcsname{27.80}
\expandafter\def\csname result@ext_flux_flux_pissa_cap4_0p0001_clip_sd\endcsname{1.04}
\expandafter\def\csname result@ext_flux_flux_pissa_cap4_0p0001_validation_mean\endcsname{73.93}
\expandafter\def\csname result@ext_flux_flux_pissa_cap4_0p0001_validation_sd\endcsname{1.62}
\expandafter\def\csname result@ext_flux_flux_pissa_5em05_params\endcsname{9.58}
\expandafter\def\csname result@ext_flux_flux_pissa_5em05_dino_mean\endcsname{69.41}
\expandafter\def\csname result@ext_flux_flux_pissa_5em05_dino_sd\endcsname{1.43}
\expandafter\def\csname result@ext_flux_flux_pissa_5em05_drift_mean\endcsname{18.32}
\expandafter\def\csname result@ext_flux_flux_pissa_5em05_drift_sd\endcsname{2.22}
\expandafter\def\csname result@ext_flux_flux_pissa_5em05_clip_mean\endcsname{28.04}
\expandafter\def\csname result@ext_flux_flux_pissa_5em05_clip_sd\endcsname{0.55}
\expandafter\def\csname result@ext_flux_flux_pissa_5em05_validation_mean\endcsname{72.23}
\expandafter\def\csname result@ext_flux_flux_pissa_5em05_validation_sd\endcsname{3.77}
\expandafter\def\csname result@ext_flux_flux_pissa_cap16_5em05_params\endcsname{19.17}
\expandafter\def\csname result@ext_flux_flux_pissa_cap16_5em05_dino_mean\endcsname{69.98}
\expandafter\def\csname result@ext_flux_flux_pissa_cap16_5em05_dino_sd\endcsname{5.48}
\expandafter\def\csname result@ext_flux_flux_pissa_cap16_5em05_drift_mean\endcsname{19.69}
\expandafter\def\csname result@ext_flux_flux_pissa_cap16_5em05_drift_sd\endcsname{2.20}
\expandafter\def\csname result@ext_flux_flux_pissa_cap16_5em05_clip_mean\endcsname{26.83}
\expandafter\def\csname result@ext_flux_flux_pissa_cap16_5em05_clip_sd\endcsname{0.78}
\expandafter\def\csname result@ext_flux_flux_pissa_cap16_5em05_validation_mean\endcsname{71.70}
\expandafter\def\csname result@ext_flux_flux_pissa_cap16_5em05_validation_sd\endcsname{3.16}
\expandafter\def\csname result@ext_flux_flux_pissa_cap32_5em05_params\endcsname{38.34}
\expandafter\def\csname result@ext_flux_flux_pissa_cap32_5em05_dino_mean\endcsname{68.92}
\expandafter\def\csname result@ext_flux_flux_pissa_cap32_5em05_dino_sd\endcsname{3.47}
\expandafter\def\csname result@ext_flux_flux_pissa_cap32_5em05_drift_mean\endcsname{26.14}
\expandafter\def\csname result@ext_flux_flux_pissa_cap32_5em05_drift_sd\endcsname{2.62}
\expandafter\def\csname result@ext_flux_flux_pissa_cap32_5em05_clip_mean\endcsname{26.79}
\expandafter\def\csname result@ext_flux_flux_pissa_cap32_5em05_clip_sd\endcsname{1.44}
\expandafter\def\csname result@ext_flux_flux_pissa_cap32_5em05_validation_mean\endcsname{71.87}
\expandafter\def\csname result@ext_flux_flux_pissa_cap32_5em05_validation_sd\endcsname{2.13}
\expandafter\def\csname result@rev_llama_dora_7_reconstruct_accuracy_mean\endcsname{71.47}
\expandafter\def\csname result@rev_llama_dora_7_reconstruct_accuracy_sd\endcsname{1.03}
\expandafter\def\csname result@rev_llama_dora_7_restore_accuracy_mean\endcsname{71.17}
\expandafter\def\csname result@rev_llama_dora_7_restore_accuracy_sd\endcsname{0.57}
\expandafter\def\csname result@rev_llama_dora_7_restore_nll_vs_reconstruct_mean\endcsname{-0.00029}
\expandafter\def\csname result@rev_llama_dora_7_restore_nll_vs_reconstruct_sd\endcsname{0.00029}
\expandafter\def\csname result@rev_llama_dora_7_rotation_band_restore_bottom_accuracy_change_mean\endcsname{-0.27}
\expandafter\def\csname result@rev_llama_dora_7_rotation_band_restore_bottom_accuracy_change_sd\endcsname{0.50}
\expandafter\def\csname result@rev_llama_dora_7_rotation_band_restore_mid_accuracy_change_mean\endcsname{-0.30}
\expandafter\def\csname result@rev_llama_dora_7_rotation_band_restore_mid_accuracy_change_sd\endcsname{0.52}
\expandafter\def\csname result@rev_llama_dora_7_rotation_band_restore_top_accuracy_change_mean\endcsname{-45.40}
\expandafter\def\csname result@rev_llama_dora_7_rotation_band_restore_top_accuracy_change_sd\endcsname{10.83}
\expandafter\def\csname result@rev_llama_dora_7_rotation_band_restore_top_nll_change_mean\endcsname{-0.02029}
\expandafter\def\csname result@rev_llama_dora_7_rotation_band_restore_top_nll_change_sd\endcsname{0.00029}
\expandafter\def\csname result@rev_llama_dora_7_trained_accuracy_mean\endcsname{71.10}
\expandafter\def\csname result@rev_llama_dora_7_trained_accuracy_sd\endcsname{0.61}
\expandafter\def\csname result@rev_llama_dora_7_trained_d_W_mean\endcsname{2.200}
\expandafter\def\csname result@rev_llama_dora_7_trained_d_W_sd\endcsname{0.028}
\expandafter\def\csname result@rev_llama_dora_7_trained_d_S_mean\endcsname{0.04447}
\expandafter\def\csname result@rev_llama_dora_7_trained_d_S_sd\endcsname{0.00197}
\expandafter\def\csname result@rev_llama_dora_7_trained_ratio_mean\endcsname{2.02}
\expandafter\def\csname result@rev_llama_dora_7_trained_ratio_sd\endcsname{0.07}
\expandafter\def\csname result@rev_llama_dora_15_reconstruct_accuracy_mean\endcsname{72.40}
\expandafter\def\csname result@rev_llama_dora_15_reconstruct_accuracy_sd\endcsname{1.01}
\expandafter\def\csname result@rev_llama_dora_15_restore_accuracy_mean\endcsname{72.70}
\expandafter\def\csname result@rev_llama_dora_15_restore_accuracy_sd\endcsname{0.17}
\expandafter\def\csname result@rev_llama_dora_15_restore_nll_vs_reconstruct_mean\endcsname{-0.00065}
\expandafter\def\csname result@rev_llama_dora_15_restore_nll_vs_reconstruct_sd\endcsname{0.00034}
\expandafter\def\csname result@rev_llama_dora_15_rotation_band_restore_bottom_accuracy_change_mean\endcsname{-1.00}
\expandafter\def\csname result@rev_llama_dora_15_rotation_band_restore_bottom_accuracy_change_sd\endcsname{0.44}
\expandafter\def\csname result@rev_llama_dora_15_rotation_band_restore_mid_accuracy_change_mean\endcsname{-1.07}
\expandafter\def\csname result@rev_llama_dora_15_rotation_band_restore_mid_accuracy_change_sd\endcsname{0.42}
\expandafter\def\csname result@rev_llama_dora_15_rotation_band_restore_top_accuracy_change_mean\endcsname{-42.10}
\expandafter\def\csname result@rev_llama_dora_15_rotation_band_restore_top_accuracy_change_sd\endcsname{8.64}
\expandafter\def\csname result@rev_llama_dora_15_rotation_band_restore_top_nll_change_mean\endcsname{-0.02078}
\expandafter\def\csname result@rev_llama_dora_15_rotation_band_restore_top_nll_change_sd\endcsname{0.00252}
\expandafter\def\csname result@rev_llama_dora_15_trained_accuracy_mean\endcsname{72.77}
\expandafter\def\csname result@rev_llama_dora_15_trained_accuracy_sd\endcsname{0.87}
\expandafter\def\csname result@rev_llama_dora_15_trained_d_W_mean\endcsname{2.871}
\expandafter\def\csname result@rev_llama_dora_15_trained_d_W_sd\endcsname{0.023}
\expandafter\def\csname result@rev_llama_dora_15_trained_d_S_mean\endcsname{0.06579}
\expandafter\def\csname result@rev_llama_dora_15_trained_d_S_sd\endcsname{0.00819}
\expandafter\def\csname result@rev_llama_dora_15_trained_ratio_mean\endcsname{2.29}
\expandafter\def\csname result@rev_llama_dora_15_trained_ratio_sd\endcsname{0.27}
\expandafter\def\csname result@rev_llama_dora_30_reconstruct_accuracy_mean\endcsname{73.33}
\expandafter\def\csname result@rev_llama_dora_30_reconstruct_accuracy_sd\endcsname{1.90}
\expandafter\def\csname result@rev_llama_dora_30_restore_accuracy_mean\endcsname{73.07}
\expandafter\def\csname result@rev_llama_dora_30_restore_accuracy_sd\endcsname{1.48}
\expandafter\def\csname result@rev_llama_dora_30_restore_nll_vs_reconstruct_mean\endcsname{-0.00208}
\expandafter\def\csname result@rev_llama_dora_30_restore_nll_vs_reconstruct_sd\endcsname{0.00085}
\expandafter\def\csname result@rev_llama_dora_30_rotation_band_restore_bottom_accuracy_change_mean\endcsname{-1.20}
\expandafter\def\csname result@rev_llama_dora_30_rotation_band_restore_bottom_accuracy_change_sd\endcsname{1.44}
\expandafter\def\csname result@rev_llama_dora_30_rotation_band_restore_mid_accuracy_change_mean\endcsname{-1.77}
\expandafter\def\csname result@rev_llama_dora_30_rotation_band_restore_mid_accuracy_change_sd\endcsname{1.17}
\expandafter\def\csname result@rev_llama_dora_30_rotation_band_restore_top_accuracy_change_mean\endcsname{-27.03}
\expandafter\def\csname result@rev_llama_dora_30_rotation_band_restore_top_accuracy_change_sd\endcsname{1.55}
\expandafter\def\csname result@rev_llama_dora_30_rotation_band_restore_top_nll_change_mean\endcsname{-0.02313}
\expandafter\def\csname result@rev_llama_dora_30_rotation_band_restore_top_nll_change_sd\endcsname{0.00277}
\expandafter\def\csname result@rev_llama_dora_30_trained_accuracy_mean\endcsname{73.10}
\expandafter\def\csname result@rev_llama_dora_30_trained_accuracy_sd\endcsname{2.19}
\expandafter\def\csname result@rev_llama_dora_30_trained_d_W_mean\endcsname{3.712}
\expandafter\def\csname result@rev_llama_dora_30_trained_d_W_sd\endcsname{0.015}
\expandafter\def\csname result@rev_llama_dora_30_trained_d_S_mean\endcsname{0.10183}
\expandafter\def\csname result@rev_llama_dora_30_trained_d_S_sd\endcsname{0.00382}
\expandafter\def\csname result@rev_llama_dora_30_trained_ratio_mean\endcsname{2.74}
\expandafter\def\csname result@rev_llama_dora_30_trained_ratio_sd\endcsname{0.09}
\expandafter\def\csname result@rev_llama_hra_16_reconstruct_accuracy_mean\endcsname{70.83}
\expandafter\def\csname result@rev_llama_hra_16_reconstruct_accuracy_sd\endcsname{0.87}
\expandafter\def\csname result@rev_llama_hra_16_restore_accuracy_mean\endcsname{70.50}
\expandafter\def\csname result@rev_llama_hra_16_restore_accuracy_sd\endcsname{0.56}
\expandafter\def\csname result@rev_llama_hra_16_restore_nll_vs_reconstruct_mean\endcsname{0.00010}
\expandafter\def\csname result@rev_llama_hra_16_restore_nll_vs_reconstruct_sd\endcsname{0.00006}
\expandafter\def\csname result@rev_llama_hra_16_rotation_band_restore_bottom_accuracy_change_mean\endcsname{-0.27}
\expandafter\def\csname result@rev_llama_hra_16_rotation_band_restore_bottom_accuracy_change_sd\endcsname{1.43}
\expandafter\def\csname result@rev_llama_hra_16_rotation_band_restore_mid_accuracy_change_mean\endcsname{-0.77}
\expandafter\def\csname result@rev_llama_hra_16_rotation_band_restore_mid_accuracy_change_sd\endcsname{1.07}
\expandafter\def\csname result@rev_llama_hra_16_rotation_band_restore_top_accuracy_change_mean\endcsname{-19.33}
\expandafter\def\csname result@rev_llama_hra_16_rotation_band_restore_top_accuracy_change_sd\endcsname{3.96}
\expandafter\def\csname result@rev_llama_hra_16_rotation_band_restore_top_nll_change_mean\endcsname{-0.04873}
\expandafter\def\csname result@rev_llama_hra_16_rotation_band_restore_top_nll_change_sd\endcsname{0.00166}
\expandafter\def\csname result@rev_llama_hra_16_trained_accuracy_mean\endcsname{70.57}
\expandafter\def\csname result@rev_llama_hra_16_trained_accuracy_sd\endcsname{0.95}
\expandafter\def\csname result@rev_llama_hra_16_trained_d_W_mean\endcsname{5.547}
\expandafter\def\csname result@rev_llama_hra_16_trained_d_W_sd\endcsname{0.023}
\expandafter\def\csname result@rev_llama_hra_16_trained_d_S_mean\endcsname{0.00188}
\expandafter\def\csname result@rev_llama_hra_16_trained_d_S_sd\endcsname{0.00000}
\expandafter\def\csname result@rev_llama_hra_16_trained_ratio_mean\endcsname{0.03}
\expandafter\def\csname result@rev_llama_hra_16_trained_ratio_sd\endcsname{0.00}
\expandafter\def\csname result@rev_llama_hra_32_reconstruct_accuracy_mean\endcsname{71.03}
\expandafter\def\csname result@rev_llama_hra_32_reconstruct_accuracy_sd\endcsname{1.29}
\expandafter\def\csname result@rev_llama_hra_32_restore_accuracy_mean\endcsname{70.40}
\expandafter\def\csname result@rev_llama_hra_32_restore_accuracy_sd\endcsname{1.30}
\expandafter\def\csname result@rev_llama_hra_32_restore_nll_vs_reconstruct_mean\endcsname{0.00012}
\expandafter\def\csname result@rev_llama_hra_32_restore_nll_vs_reconstruct_sd\endcsname{0.00015}
\expandafter\def\csname result@rev_llama_hra_32_rotation_band_restore_bottom_accuracy_change_mean\endcsname{0.40}
\expandafter\def\csname result@rev_llama_hra_32_rotation_band_restore_bottom_accuracy_change_sd\endcsname{1.25}
\expandafter\def\csname result@rev_llama_hra_32_rotation_band_restore_mid_accuracy_change_mean\endcsname{-0.73}
\expandafter\def\csname result@rev_llama_hra_32_rotation_band_restore_mid_accuracy_change_sd\endcsname{0.50}
\expandafter\def\csname result@rev_llama_hra_32_rotation_band_restore_top_accuracy_change_mean\endcsname{-14.07}
\expandafter\def\csname result@rev_llama_hra_32_rotation_band_restore_top_accuracy_change_sd\endcsname{5.22}
\expandafter\def\csname result@rev_llama_hra_32_rotation_band_restore_top_nll_change_mean\endcsname{-0.09701}
\expandafter\def\csname result@rev_llama_hra_32_rotation_band_restore_top_nll_change_sd\endcsname{0.00496}
\expandafter\def\csname result@rev_llama_hra_32_trained_accuracy_mean\endcsname{71.00}
\expandafter\def\csname result@rev_llama_hra_32_trained_accuracy_sd\endcsname{1.67}
\expandafter\def\csname result@rev_llama_hra_32_trained_d_W_mean\endcsname{9.077}
\expandafter\def\csname result@rev_llama_hra_32_trained_d_W_sd\endcsname{0.052}
\expandafter\def\csname result@rev_llama_hra_32_trained_d_S_mean\endcsname{0.00189}
\expandafter\def\csname result@rev_llama_hra_32_trained_d_S_sd\endcsname{0.00000}
\expandafter\def\csname result@rev_llama_hra_32_trained_ratio_mean\endcsname{0.02}
\expandafter\def\csname result@rev_llama_hra_32_trained_ratio_sd\endcsname{0.00}
\expandafter\def\csname result@rev_llama_hra_64_reconstruct_accuracy_mean\endcsname{71.07}
\expandafter\def\csname result@rev_llama_hra_64_reconstruct_accuracy_sd\endcsname{1.18}
\expandafter\def\csname result@rev_llama_hra_64_restore_accuracy_mean\endcsname{70.93}
\expandafter\def\csname result@rev_llama_hra_64_restore_accuracy_sd\endcsname{0.83}
\expandafter\def\csname result@rev_llama_hra_64_restore_nll_vs_reconstruct_mean\endcsname{-0.00005}
\expandafter\def\csname result@rev_llama_hra_64_restore_nll_vs_reconstruct_sd\endcsname{0.00023}
\expandafter\def\csname result@rev_llama_hra_64_rotation_band_restore_bottom_accuracy_change_mean\endcsname{0.17}
\expandafter\def\csname result@rev_llama_hra_64_rotation_band_restore_bottom_accuracy_change_sd\endcsname{1.04}
\expandafter\def\csname result@rev_llama_hra_64_rotation_band_restore_mid_accuracy_change_mean\endcsname{-1.13}
\expandafter\def\csname result@rev_llama_hra_64_rotation_band_restore_mid_accuracy_change_sd\endcsname{2.12}
\expandafter\def\csname result@rev_llama_hra_64_rotation_band_restore_top_accuracy_change_mean\endcsname{-11.07}
\expandafter\def\csname result@rev_llama_hra_64_rotation_band_restore_top_accuracy_change_sd\endcsname{0.81}
\expandafter\def\csname result@rev_llama_hra_64_rotation_band_restore_top_nll_change_mean\endcsname{-0.08909}
\expandafter\def\csname result@rev_llama_hra_64_rotation_band_restore_top_nll_change_sd\endcsname{0.00459}
\expandafter\def\csname result@rev_llama_hra_64_trained_accuracy_mean\endcsname{71.63}
\expandafter\def\csname result@rev_llama_hra_64_trained_accuracy_sd\endcsname{1.08}
\expandafter\def\csname result@rev_llama_hra_64_trained_d_W_mean\endcsname{6.864}
\expandafter\def\csname result@rev_llama_hra_64_trained_d_W_sd\endcsname{0.027}
\expandafter\def\csname result@rev_llama_hra_64_trained_d_S_mean\endcsname{0.00189}
\expandafter\def\csname result@rev_llama_hra_64_trained_d_S_sd\endcsname{0.00000}
\expandafter\def\csname result@rev_llama_hra_64_trained_ratio_mean\endcsname{0.03}
\expandafter\def\csname result@rev_llama_hra_64_trained_ratio_sd\endcsname{0.00}
\expandafter\def\csname result@rev_llama_lora_7_reconstruct_accuracy_mean\endcsname{71.30}
\expandafter\def\csname result@rev_llama_lora_7_reconstruct_accuracy_sd\endcsname{0.46}
\expandafter\def\csname result@rev_llama_lora_7_restore_accuracy_mean\endcsname{71.07}
\expandafter\def\csname result@rev_llama_lora_7_restore_accuracy_sd\endcsname{0.51}
\expandafter\def\csname result@rev_llama_lora_7_restore_nll_vs_reconstruct_mean\endcsname{0.00006}
\expandafter\def\csname result@rev_llama_lora_7_restore_nll_vs_reconstruct_sd\endcsname{0.00016}
\expandafter\def\csname result@rev_llama_lora_7_rotation_band_restore_bottom_accuracy_change_mean\endcsname{-0.03}
\expandafter\def\csname result@rev_llama_lora_7_rotation_band_restore_bottom_accuracy_change_sd\endcsname{0.71}
\expandafter\def\csname result@rev_llama_lora_7_rotation_band_restore_mid_accuracy_change_mean\endcsname{-0.67}
\expandafter\def\csname result@rev_llama_lora_7_rotation_band_restore_mid_accuracy_change_sd\endcsname{0.59}
\expandafter\def\csname result@rev_llama_lora_7_rotation_band_restore_top_accuracy_change_mean\endcsname{-48.23}
\expandafter\def\csname result@rev_llama_lora_7_rotation_band_restore_top_accuracy_change_sd\endcsname{11.06}
\expandafter\def\csname result@rev_llama_lora_7_rotation_band_restore_top_nll_change_mean\endcsname{-0.02150}
\expandafter\def\csname result@rev_llama_lora_7_rotation_band_restore_top_nll_change_sd\endcsname{0.00224}
\expandafter\def\csname result@rev_llama_lora_7_trained_accuracy_mean\endcsname{71.00}
\expandafter\def\csname result@rev_llama_lora_7_trained_accuracy_sd\endcsname{0.35}
\expandafter\def\csname result@rev_llama_lora_7_trained_d_W_mean\endcsname{2.167}
\expandafter\def\csname result@rev_llama_lora_7_trained_d_W_sd\endcsname{0.020}
\expandafter\def\csname result@rev_llama_lora_7_trained_d_S_mean\endcsname{0.04985}
\expandafter\def\csname result@rev_llama_lora_7_trained_d_S_sd\endcsname{0.00207}
\expandafter\def\csname result@rev_llama_lora_7_trained_ratio_mean\endcsname{2.30}
\expandafter\def\csname result@rev_llama_lora_7_trained_ratio_sd\endcsname{0.08}
\expandafter\def\csname result@rev_llama_lora_15_reconstruct_accuracy_mean\endcsname{72.50}
\expandafter\def\csname result@rev_llama_lora_15_reconstruct_accuracy_sd\endcsname{0.26}
\expandafter\def\csname result@rev_llama_lora_15_restore_accuracy_mean\endcsname{72.23}
\expandafter\def\csname result@rev_llama_lora_15_restore_accuracy_sd\endcsname{0.90}
\expandafter\def\csname result@rev_llama_lora_15_restore_nll_vs_reconstruct_mean\endcsname{-0.00023}
\expandafter\def\csname result@rev_llama_lora_15_restore_nll_vs_reconstruct_sd\endcsname{0.00025}
\expandafter\def\csname result@rev_llama_lora_15_rotation_band_restore_bottom_accuracy_change_mean\endcsname{-0.17}
\expandafter\def\csname result@rev_llama_lora_15_rotation_band_restore_bottom_accuracy_change_sd\endcsname{0.61}
\expandafter\def\csname result@rev_llama_lora_15_rotation_band_restore_mid_accuracy_change_mean\endcsname{-1.57}
\expandafter\def\csname result@rev_llama_lora_15_rotation_band_restore_mid_accuracy_change_sd\endcsname{1.36}
\expandafter\def\csname result@rev_llama_lora_15_rotation_band_restore_top_accuracy_change_mean\endcsname{-34.73}
\expandafter\def\csname result@rev_llama_lora_15_rotation_band_restore_top_accuracy_change_sd\endcsname{6.28}
\expandafter\def\csname result@rev_llama_lora_15_rotation_band_restore_top_nll_change_mean\endcsname{-0.02192}
\expandafter\def\csname result@rev_llama_lora_15_rotation_band_restore_top_nll_change_sd\endcsname{0.00220}
\expandafter\def\csname result@rev_llama_lora_15_trained_accuracy_mean\endcsname{72.33}
\expandafter\def\csname result@rev_llama_lora_15_trained_accuracy_sd\endcsname{0.93}
\expandafter\def\csname result@rev_llama_lora_15_trained_d_W_mean\endcsname{2.845}
\expandafter\def\csname result@rev_llama_lora_15_trained_d_W_sd\endcsname{0.026}
\expandafter\def\csname result@rev_llama_lora_15_trained_d_S_mean\endcsname{0.08243}
\expandafter\def\csname result@rev_llama_lora_15_trained_d_S_sd\endcsname{0.00717}
\expandafter\def\csname result@rev_llama_lora_15_trained_ratio_mean\endcsname{2.90}
\expandafter\def\csname result@rev_llama_lora_15_trained_ratio_sd\endcsname{0.23}
\expandafter\def\csname result@rev_llama_lora_30_reconstruct_accuracy_mean\endcsname{72.40}
\expandafter\def\csname result@rev_llama_lora_30_reconstruct_accuracy_sd\endcsname{0.89}
\expandafter\def\csname result@rev_llama_lora_30_restore_accuracy_mean\endcsname{71.97}
\expandafter\def\csname result@rev_llama_lora_30_restore_accuracy_sd\endcsname{1.43}
\expandafter\def\csname result@rev_llama_lora_30_restore_nll_vs_reconstruct_mean\endcsname{-0.00254}
\expandafter\def\csname result@rev_llama_lora_30_restore_nll_vs_reconstruct_sd\endcsname{0.00068}
\expandafter\def\csname result@rev_llama_lora_30_rotation_band_restore_bottom_accuracy_change_mean\endcsname{-0.60}
\expandafter\def\csname result@rev_llama_lora_30_rotation_band_restore_bottom_accuracy_change_sd\endcsname{1.14}
\expandafter\def\csname result@rev_llama_lora_30_rotation_band_restore_mid_accuracy_change_mean\endcsname{-0.60}
\expandafter\def\csname result@rev_llama_lora_30_rotation_band_restore_mid_accuracy_change_sd\endcsname{1.31}
\expandafter\def\csname result@rev_llama_lora_30_rotation_band_restore_top_accuracy_change_mean\endcsname{-27.43}
\expandafter\def\csname result@rev_llama_lora_30_rotation_band_restore_top_accuracy_change_sd\endcsname{1.07}
\expandafter\def\csname result@rev_llama_lora_30_rotation_band_restore_top_nll_change_mean\endcsname{-0.02506}
\expandafter\def\csname result@rev_llama_lora_30_rotation_band_restore_top_nll_change_sd\endcsname{0.00329}
\expandafter\def\csname result@rev_llama_lora_30_trained_accuracy_mean\endcsname{72.20}
\expandafter\def\csname result@rev_llama_lora_30_trained_accuracy_sd\endcsname{1.28}
\expandafter\def\csname result@rev_llama_lora_30_trained_d_W_mean\endcsname{3.707}
\expandafter\def\csname result@rev_llama_lora_30_trained_d_W_sd\endcsname{0.024}
\expandafter\def\csname result@rev_llama_lora_30_trained_d_S_mean\endcsname{0.12641}
\expandafter\def\csname result@rev_llama_lora_30_trained_d_S_sd\endcsname{0.00536}
\expandafter\def\csname result@rev_llama_lora_30_trained_ratio_mean\endcsname{3.41}
\expandafter\def\csname result@rev_llama_lora_30_trained_ratio_sd\endcsname{0.13}
\expandafter\def\csname result@rev_llama_milora_7_reconstruct_accuracy_mean\endcsname{70.90}
\expandafter\def\csname result@rev_llama_milora_7_reconstruct_accuracy_sd\endcsname{0.44}
\expandafter\def\csname result@rev_llama_milora_7_restore_accuracy_mean\endcsname{71.50}
\expandafter\def\csname result@rev_llama_milora_7_restore_accuracy_sd\endcsname{0.30}
\expandafter\def\csname result@rev_llama_milora_7_restore_nll_vs_reconstruct_mean\endcsname{-0.00046}
\expandafter\def\csname result@rev_llama_milora_7_restore_nll_vs_reconstruct_sd\endcsname{0.00061}
\expandafter\def\csname result@rev_llama_milora_7_rotation_band_restore_bottom_accuracy_change_mean\endcsname{-1.43}
\expandafter\def\csname result@rev_llama_milora_7_rotation_band_restore_bottom_accuracy_change_sd\endcsname{0.49}
\expandafter\def\csname result@rev_llama_milora_7_rotation_band_restore_mid_accuracy_change_mean\endcsname{0.67}
\expandafter\def\csname result@rev_llama_milora_7_rotation_band_restore_mid_accuracy_change_sd\endcsname{0.85}
\expandafter\def\csname result@rev_llama_milora_7_rotation_band_restore_top_accuracy_change_mean\endcsname{-11.93}
\expandafter\def\csname result@rev_llama_milora_7_rotation_band_restore_top_accuracy_change_sd\endcsname{1.76}
\expandafter\def\csname result@rev_llama_milora_7_rotation_band_restore_top_nll_change_mean\endcsname{-0.03790}
\expandafter\def\csname result@rev_llama_milora_7_rotation_band_restore_top_nll_change_sd\endcsname{0.00074}
\expandafter\def\csname result@rev_llama_milora_7_trained_accuracy_mean\endcsname{70.97}
\expandafter\def\csname result@rev_llama_milora_7_trained_accuracy_sd\endcsname{0.80}
\expandafter\def\csname result@rev_llama_milora_7_trained_d_W_mean\endcsname{1.851}
\expandafter\def\csname result@rev_llama_milora_7_trained_d_W_sd\endcsname{0.015}
\expandafter\def\csname result@rev_llama_milora_7_trained_d_S_mean\endcsname{0.16542}
\expandafter\def\csname result@rev_llama_milora_7_trained_d_S_sd\endcsname{0.00088}
\expandafter\def\csname result@rev_llama_milora_7_trained_ratio_mean\endcsname{8.94}
\expandafter\def\csname result@rev_llama_milora_7_trained_ratio_sd\endcsname{0.02}
\expandafter\def\csname result@rev_llama_milora_15_reconstruct_accuracy_mean\endcsname{71.97}
\expandafter\def\csname result@rev_llama_milora_15_reconstruct_accuracy_sd\endcsname{0.86}
\expandafter\def\csname result@rev_llama_milora_15_restore_accuracy_mean\endcsname{71.70}
\expandafter\def\csname result@rev_llama_milora_15_restore_accuracy_sd\endcsname{0.46}
\expandafter\def\csname result@rev_llama_milora_15_restore_nll_vs_reconstruct_mean\endcsname{-0.00042}
\expandafter\def\csname result@rev_llama_milora_15_restore_nll_vs_reconstruct_sd\endcsname{0.00058}
\expandafter\def\csname result@rev_llama_milora_15_rotation_band_restore_bottom_accuracy_change_mean\endcsname{-1.77}
\expandafter\def\csname result@rev_llama_milora_15_rotation_band_restore_bottom_accuracy_change_sd\endcsname{0.85}
\expandafter\def\csname result@rev_llama_milora_15_rotation_band_restore_mid_accuracy_change_mean\endcsname{-1.07}
\expandafter\def\csname result@rev_llama_milora_15_rotation_band_restore_mid_accuracy_change_sd\endcsname{0.15}
\expandafter\def\csname result@rev_llama_milora_15_rotation_band_restore_top_accuracy_change_mean\endcsname{-7.57}
\expandafter\def\csname result@rev_llama_milora_15_rotation_band_restore_top_accuracy_change_sd\endcsname{0.25}
\expandafter\def\csname result@rev_llama_milora_15_rotation_band_restore_top_nll_change_mean\endcsname{-0.04841}
\expandafter\def\csname result@rev_llama_milora_15_rotation_band_restore_top_nll_change_sd\endcsname{0.00096}
\expandafter\def\csname result@rev_llama_milora_15_trained_accuracy_mean\endcsname{71.90}
\expandafter\def\csname result@rev_llama_milora_15_trained_accuracy_sd\endcsname{0.46}
\expandafter\def\csname result@rev_llama_milora_15_trained_d_W_mean\endcsname{2.457}
\expandafter\def\csname result@rev_llama_milora_15_trained_d_W_sd\endcsname{0.013}
\expandafter\def\csname result@rev_llama_milora_15_trained_d_S_mean\endcsname{0.25238}
\expandafter\def\csname result@rev_llama_milora_15_trained_d_S_sd\endcsname{0.00124}
\expandafter\def\csname result@rev_llama_milora_15_trained_ratio_mean\endcsname{10.27}
\expandafter\def\csname result@rev_llama_milora_15_trained_ratio_sd\endcsname{0.04}
\expandafter\def\csname result@rev_llama_milora_30_reconstruct_accuracy_mean\endcsname{72.53}
\expandafter\def\csname result@rev_llama_milora_30_reconstruct_accuracy_sd\endcsname{0.92}
\expandafter\def\csname result@rev_llama_milora_30_restore_accuracy_mean\endcsname{71.97}
\expandafter\def\csname result@rev_llama_milora_30_restore_accuracy_sd\endcsname{0.81}
\expandafter\def\csname result@rev_llama_milora_30_restore_nll_vs_reconstruct_mean\endcsname{0.00208}
\expandafter\def\csname result@rev_llama_milora_30_restore_nll_vs_reconstruct_sd\endcsname{0.00065}
\expandafter\def\csname result@rev_llama_milora_30_rotation_band_restore_bottom_accuracy_change_mean\endcsname{-3.17}
\expandafter\def\csname result@rev_llama_milora_30_rotation_band_restore_bottom_accuracy_change_sd\endcsname{0.91}
\expandafter\def\csname result@rev_llama_milora_30_rotation_band_restore_mid_accuracy_change_mean\endcsname{-1.00}
\expandafter\def\csname result@rev_llama_milora_30_rotation_band_restore_mid_accuracy_change_sd\endcsname{1.08}
\expandafter\def\csname result@rev_llama_milora_30_rotation_band_restore_top_accuracy_change_mean\endcsname{-5.77}
\expandafter\def\csname result@rev_llama_milora_30_rotation_band_restore_top_accuracy_change_sd\endcsname{0.49}
\expandafter\def\csname result@rev_llama_milora_30_rotation_band_restore_top_nll_change_mean\endcsname{-0.04858}
\expandafter\def\csname result@rev_llama_milora_30_rotation_band_restore_top_nll_change_sd\endcsname{0.00341}
\expandafter\def\csname result@rev_llama_milora_30_trained_accuracy_mean\endcsname{72.63}
\expandafter\def\csname result@rev_llama_milora_30_trained_accuracy_sd\endcsname{1.01}
\expandafter\def\csname result@rev_llama_milora_30_trained_d_W_mean\endcsname{3.282}
\expandafter\def\csname result@rev_llama_milora_30_trained_d_W_sd\endcsname{0.019}
\expandafter\def\csname result@rev_llama_milora_30_trained_d_S_mean\endcsname{0.36372}
\expandafter\def\csname result@rev_llama_milora_30_trained_d_S_sd\endcsname{0.00318}
\expandafter\def\csname result@rev_llama_milora_30_trained_ratio_mean\endcsname{11.08}
\expandafter\def\csname result@rev_llama_milora_30_trained_ratio_sd\endcsname{0.05}
\expandafter\def\csname result@rev_llama_oft_32_reconstruct_accuracy_mean\endcsname{71.77}
\expandafter\def\csname result@rev_llama_oft_32_reconstruct_accuracy_sd\endcsname{0.81}
\expandafter\def\csname result@rev_llama_oft_32_restore_accuracy_mean\endcsname{71.73}
\expandafter\def\csname result@rev_llama_oft_32_restore_accuracy_sd\endcsname{1.29}
\expandafter\def\csname result@rev_llama_oft_32_restore_nll_vs_reconstruct_mean\endcsname{-0.00007}
\expandafter\def\csname result@rev_llama_oft_32_restore_nll_vs_reconstruct_sd\endcsname{0.00012}
\expandafter\def\csname result@rev_llama_oft_32_rotation_band_restore_bottom_accuracy_change_mean\endcsname{-0.03}
\expandafter\def\csname result@rev_llama_oft_32_rotation_band_restore_bottom_accuracy_change_sd\endcsname{0.38}
\expandafter\def\csname result@rev_llama_oft_32_rotation_band_restore_mid_accuracy_change_mean\endcsname{-0.40}
\expandafter\def\csname result@rev_llama_oft_32_rotation_band_restore_mid_accuracy_change_sd\endcsname{1.56}
\expandafter\def\csname result@rev_llama_oft_32_rotation_band_restore_top_accuracy_change_mean\endcsname{-23.30}
\expandafter\def\csname result@rev_llama_oft_32_rotation_band_restore_top_accuracy_change_sd\endcsname{3.12}
\expandafter\def\csname result@rev_llama_oft_32_rotation_band_restore_top_nll_change_mean\endcsname{-0.03143}
\expandafter\def\csname result@rev_llama_oft_32_rotation_band_restore_top_nll_change_sd\endcsname{0.00099}
\expandafter\def\csname result@rev_llama_oft_32_trained_accuracy_mean\endcsname{71.80}
\expandafter\def\csname result@rev_llama_oft_32_trained_accuracy_sd\endcsname{1.37}
\expandafter\def\csname result@rev_llama_oft_32_trained_d_W_mean\endcsname{5.023}
\expandafter\def\csname result@rev_llama_oft_32_trained_d_W_sd\endcsname{0.011}
\expandafter\def\csname result@rev_llama_oft_32_trained_d_S_mean\endcsname{0.00190}
\expandafter\def\csname result@rev_llama_oft_32_trained_d_S_sd\endcsname{0.00000}
\expandafter\def\csname result@rev_llama_oft_32_trained_ratio_mean\endcsname{0.04}
\expandafter\def\csname result@rev_llama_oft_32_trained_ratio_sd\endcsname{0.00}
\expandafter\def\csname result@rev_llama_oft_64_reconstruct_accuracy_mean\endcsname{73.03}
\expandafter\def\csname result@rev_llama_oft_64_reconstruct_accuracy_sd\endcsname{0.29}
\expandafter\def\csname result@rev_llama_oft_64_restore_accuracy_mean\endcsname{73.57}
\expandafter\def\csname result@rev_llama_oft_64_restore_accuracy_sd\endcsname{0.49}
\expandafter\def\csname result@rev_llama_oft_64_restore_nll_vs_reconstruct_mean\endcsname{0.00003}
\expandafter\def\csname result@rev_llama_oft_64_restore_nll_vs_reconstruct_sd\endcsname{0.00010}
\expandafter\def\csname result@rev_llama_oft_64_rotation_band_restore_bottom_accuracy_change_mean\endcsname{-0.63}
\expandafter\def\csname result@rev_llama_oft_64_rotation_band_restore_bottom_accuracy_change_sd\endcsname{1.42}
\expandafter\def\csname result@rev_llama_oft_64_rotation_band_restore_mid_accuracy_change_mean\endcsname{-1.43}
\expandafter\def\csname result@rev_llama_oft_64_rotation_band_restore_mid_accuracy_change_sd\endcsname{1.22}
\expandafter\def\csname result@rev_llama_oft_64_rotation_band_restore_top_accuracy_change_mean\endcsname{-17.63}
\expandafter\def\csname result@rev_llama_oft_64_rotation_band_restore_top_accuracy_change_sd\endcsname{3.27}
\expandafter\def\csname result@rev_llama_oft_64_rotation_band_restore_top_nll_change_mean\endcsname{-0.06295}
\expandafter\def\csname result@rev_llama_oft_64_rotation_band_restore_top_nll_change_sd\endcsname{0.00097}
\expandafter\def\csname result@rev_llama_oft_64_trained_accuracy_mean\endcsname{73.43}
\expandafter\def\csname result@rev_llama_oft_64_trained_accuracy_sd\endcsname{0.50}
\expandafter\def\csname result@rev_llama_oft_64_trained_d_W_mean\endcsname{6.626}
\expandafter\def\csname result@rev_llama_oft_64_trained_d_W_sd\endcsname{0.015}
\expandafter\def\csname result@rev_llama_oft_64_trained_d_S_mean\endcsname{0.00193}
\expandafter\def\csname result@rev_llama_oft_64_trained_d_S_sd\endcsname{0.00000}
\expandafter\def\csname result@rev_llama_oft_64_trained_ratio_mean\endcsname{0.03}
\expandafter\def\csname result@rev_llama_oft_64_trained_ratio_sd\endcsname{0.00}
\expandafter\def\csname result@rev_llama_oft_128_reconstruct_accuracy_mean\endcsname{72.40}
\expandafter\def\csname result@rev_llama_oft_128_reconstruct_accuracy_sd\endcsname{0.87}
\expandafter\def\csname result@rev_llama_oft_128_restore_accuracy_mean\endcsname{72.33}
\expandafter\def\csname result@rev_llama_oft_128_restore_accuracy_sd\endcsname{0.85}
\expandafter\def\csname result@rev_llama_oft_128_restore_nll_vs_reconstruct_mean\endcsname{0.00004}
\expandafter\def\csname result@rev_llama_oft_128_restore_nll_vs_reconstruct_sd\endcsname{0.00014}
\expandafter\def\csname result@rev_llama_oft_128_rotation_band_restore_bottom_accuracy_change_mean\endcsname{-0.07}
\expandafter\def\csname result@rev_llama_oft_128_rotation_band_restore_bottom_accuracy_change_sd\endcsname{0.58}
\expandafter\def\csname result@rev_llama_oft_128_rotation_band_restore_mid_accuracy_change_mean\endcsname{-0.83}
\expandafter\def\csname result@rev_llama_oft_128_rotation_band_restore_mid_accuracy_change_sd\endcsname{1.12}
\expandafter\def\csname result@rev_llama_oft_128_rotation_band_restore_top_accuracy_change_mean\endcsname{-18.97}
\expandafter\def\csname result@rev_llama_oft_128_rotation_band_restore_top_accuracy_change_sd\endcsname{0.61}
\expandafter\def\csname result@rev_llama_oft_128_rotation_band_restore_top_nll_change_mean\endcsname{-0.03398}
\expandafter\def\csname result@rev_llama_oft_128_rotation_band_restore_top_nll_change_sd\endcsname{0.00105}
\expandafter\def\csname result@rev_llama_oft_128_trained_accuracy_mean\endcsname{72.47}
\expandafter\def\csname result@rev_llama_oft_128_trained_accuracy_sd\endcsname{1.31}
\expandafter\def\csname result@rev_llama_oft_128_trained_d_W_mean\endcsname{3.488}
\expandafter\def\csname result@rev_llama_oft_128_trained_d_W_sd\endcsname{0.007}
\expandafter\def\csname result@rev_llama_oft_128_trained_d_S_mean\endcsname{0.00188}
\expandafter\def\csname result@rev_llama_oft_128_trained_d_S_sd\endcsname{0.00000}
\expandafter\def\csname result@rev_llama_oft_128_trained_ratio_mean\endcsname{0.05}
\expandafter\def\csname result@rev_llama_oft_128_trained_ratio_sd\endcsname{0.00}
\expandafter\def\csname result@rev_llama_pissa_7_reconstruct_accuracy_mean\endcsname{70.67}
\expandafter\def\csname result@rev_llama_pissa_7_reconstruct_accuracy_sd\endcsname{0.55}
\expandafter\def\csname result@rev_llama_pissa_7_restore_accuracy_mean\endcsname{70.47}
\expandafter\def\csname result@rev_llama_pissa_7_restore_accuracy_sd\endcsname{0.25}
\expandafter\def\csname result@rev_llama_pissa_7_restore_nll_vs_reconstruct_mean\endcsname{-0.00509}
\expandafter\def\csname result@rev_llama_pissa_7_restore_nll_vs_reconstruct_sd\endcsname{0.00037}
\expandafter\def\csname result@rev_llama_pissa_7_rotation_band_restore_bottom_accuracy_change_mean\endcsname{-0.23}
\expandafter\def\csname result@rev_llama_pissa_7_rotation_band_restore_bottom_accuracy_change_sd\endcsname{0.38}
\expandafter\def\csname result@rev_llama_pissa_7_rotation_band_restore_mid_accuracy_change_mean\endcsname{-0.10}
\expandafter\def\csname result@rev_llama_pissa_7_rotation_band_restore_mid_accuracy_change_sd\endcsname{0.78}
\expandafter\def\csname result@rev_llama_pissa_7_rotation_band_restore_top_accuracy_change_mean\endcsname{-70.77}
\expandafter\def\csname result@rev_llama_pissa_7_rotation_band_restore_top_accuracy_change_sd\endcsname{0.40}
\expandafter\def\csname result@rev_llama_pissa_7_rotation_band_restore_top_nll_change_mean\endcsname{-0.10749}
\expandafter\def\csname result@rev_llama_pissa_7_rotation_band_restore_top_nll_change_sd\endcsname{0.00253}
\expandafter\def\csname result@rev_llama_pissa_7_trained_accuracy_mean\endcsname{70.77}
\expandafter\def\csname result@rev_llama_pissa_7_trained_accuracy_sd\endcsname{0.40}
\expandafter\def\csname result@rev_llama_pissa_7_trained_d_W_mean\endcsname{1.796}
\expandafter\def\csname result@rev_llama_pissa_7_trained_d_W_sd\endcsname{0.006}
\expandafter\def\csname result@rev_llama_pissa_7_trained_d_S_mean\endcsname{0.15402}
\expandafter\def\csname result@rev_llama_pissa_7_trained_d_S_sd\endcsname{0.00041}
\expandafter\def\csname result@rev_llama_pissa_7_trained_ratio_mean\endcsname{8.58}
\expandafter\def\csname result@rev_llama_pissa_7_trained_ratio_sd\endcsname{0.01}
\expandafter\def\csname result@rev_llama_pissa_15_reconstruct_accuracy_mean\endcsname{70.83}
\expandafter\def\csname result@rev_llama_pissa_15_reconstruct_accuracy_sd\endcsname{0.45}
\expandafter\def\csname result@rev_llama_pissa_15_restore_accuracy_mean\endcsname{71.30}
\expandafter\def\csname result@rev_llama_pissa_15_restore_accuracy_sd\endcsname{0.36}
\expandafter\def\csname result@rev_llama_pissa_15_restore_nll_vs_reconstruct_mean\endcsname{-0.00390}
\expandafter\def\csname result@rev_llama_pissa_15_restore_nll_vs_reconstruct_sd\endcsname{0.00071}
\expandafter\def\csname result@rev_llama_pissa_15_rotation_band_restore_bottom_accuracy_change_mean\endcsname{-0.07}
\expandafter\def\csname result@rev_llama_pissa_15_rotation_band_restore_bottom_accuracy_change_sd\endcsname{0.21}
\expandafter\def\csname result@rev_llama_pissa_15_rotation_band_restore_mid_accuracy_change_mean\endcsname{-0.37}
\expandafter\def\csname result@rev_llama_pissa_15_rotation_band_restore_mid_accuracy_change_sd\endcsname{0.50}
\expandafter\def\csname result@rev_llama_pissa_15_rotation_band_restore_top_accuracy_change_mean\endcsname{-71.00}
\expandafter\def\csname result@rev_llama_pissa_15_rotation_band_restore_top_accuracy_change_sd\endcsname{0.26}
\expandafter\def\csname result@rev_llama_pissa_15_rotation_band_restore_top_nll_change_mean\endcsname{-0.15029}
\expandafter\def\csname result@rev_llama_pissa_15_rotation_band_restore_top_nll_change_sd\endcsname{0.00406}
\expandafter\def\csname result@rev_llama_pissa_15_trained_accuracy_mean\endcsname{71.00}
\expandafter\def\csname result@rev_llama_pissa_15_trained_accuracy_sd\endcsname{0.26}
\expandafter\def\csname result@rev_llama_pissa_15_trained_d_W_mean\endcsname{2.309}
\expandafter\def\csname result@rev_llama_pissa_15_trained_d_W_sd\endcsname{0.005}
\expandafter\def\csname result@rev_llama_pissa_15_trained_d_S_mean\endcsname{0.19040}
\expandafter\def\csname result@rev_llama_pissa_15_trained_d_S_sd\endcsname{0.00148}
\expandafter\def\csname result@rev_llama_pissa_15_trained_ratio_mean\endcsname{8.25}
\expandafter\def\csname result@rev_llama_pissa_15_trained_ratio_sd\endcsname{0.05}
\expandafter\def\csname result@rev_llama_pissa_30_reconstruct_accuracy_mean\endcsname{71.67}
\expandafter\def\csname result@rev_llama_pissa_30_reconstruct_accuracy_sd\endcsname{0.85}
\expandafter\def\csname result@rev_llama_pissa_30_restore_accuracy_mean\endcsname{71.27}
\expandafter\def\csname result@rev_llama_pissa_30_restore_accuracy_sd\endcsname{1.05}
\expandafter\def\csname result@rev_llama_pissa_30_restore_nll_vs_reconstruct_mean\endcsname{-0.00063}
\expandafter\def\csname result@rev_llama_pissa_30_restore_nll_vs_reconstruct_sd\endcsname{0.00010}
\expandafter\def\csname result@rev_llama_pissa_30_rotation_band_restore_bottom_accuracy_change_mean\endcsname{-0.13}
\expandafter\def\csname result@rev_llama_pissa_30_rotation_band_restore_bottom_accuracy_change_sd\endcsname{0.21}
\expandafter\def\csname result@rev_llama_pissa_30_rotation_band_restore_mid_accuracy_change_mean\endcsname{0.10}
\expandafter\def\csname result@rev_llama_pissa_30_rotation_band_restore_mid_accuracy_change_sd\endcsname{0.53}
\expandafter\def\csname result@rev_llama_pissa_30_rotation_band_restore_top_accuracy_change_mean\endcsname{-71.23}
\expandafter\def\csname result@rev_llama_pissa_30_rotation_band_restore_top_accuracy_change_sd\endcsname{0.06}
\expandafter\def\csname result@rev_llama_pissa_30_rotation_band_restore_top_nll_change_mean\endcsname{-0.06317}
\expandafter\def\csname result@rev_llama_pissa_30_rotation_band_restore_top_nll_change_sd\endcsname{0.00101}
\expandafter\def\csname result@rev_llama_pissa_30_trained_accuracy_mean\endcsname{71.23}
\expandafter\def\csname result@rev_llama_pissa_30_trained_accuracy_sd\endcsname{0.06}
\expandafter\def\csname result@rev_llama_pissa_30_trained_d_W_mean\endcsname{1.060}
\expandafter\def\csname result@rev_llama_pissa_30_trained_d_W_sd\endcsname{0.001}
\expandafter\def\csname result@rev_llama_pissa_30_trained_d_S_mean\endcsname{0.04113}
\expandafter\def\csname result@rev_llama_pissa_30_trained_d_S_sd\endcsname{0.00028}
\expandafter\def\csname result@rev_llama_pissa_30_trained_ratio_mean\endcsname{3.88}
\expandafter\def\csname result@rev_llama_pissa_30_trained_ratio_sd\endcsname{0.02}
\expandafter\def\csname result@rev_qwen_dora_7_reconstruct_accuracy_mean\endcsname{92.23}
\expandafter\def\csname result@rev_qwen_dora_7_reconstruct_accuracy_sd\endcsname{0.38}
\expandafter\def\csname result@rev_qwen_dora_7_restore_accuracy_mean\endcsname{92.30}
\expandafter\def\csname result@rev_qwen_dora_7_restore_accuracy_sd\endcsname{0.10}
\expandafter\def\csname result@rev_qwen_dora_7_restore_nll_vs_reconstruct_mean\endcsname{0.00005}
\expandafter\def\csname result@rev_qwen_dora_7_restore_nll_vs_reconstruct_sd\endcsname{0.00008}
\expandafter\def\csname result@rev_qwen_dora_7_rotation_band_restore_bottom_accuracy_change_mean\endcsname{-0.33}
\expandafter\def\csname result@rev_qwen_dora_7_rotation_band_restore_bottom_accuracy_change_sd\endcsname{0.67}
\expandafter\def\csname result@rev_qwen_dora_7_rotation_band_restore_mid_accuracy_change_mean\endcsname{0.17}
\expandafter\def\csname result@rev_qwen_dora_7_rotation_band_restore_mid_accuracy_change_sd\endcsname{0.60}
\expandafter\def\csname result@rev_qwen_dora_7_rotation_band_restore_top_accuracy_change_mean\endcsname{0.07}
\expandafter\def\csname result@rev_qwen_dora_7_rotation_band_restore_top_accuracy_change_sd\endcsname{0.29}
\expandafter\def\csname result@rev_qwen_dora_7_rotation_band_restore_top_nll_change_mean\endcsname{-0.00218}
\expandafter\def\csname result@rev_qwen_dora_7_rotation_band_restore_top_nll_change_sd\endcsname{0.00102}
\expandafter\def\csname result@rev_qwen_dora_7_trained_accuracy_mean\endcsname{92.27}
\expandafter\def\csname result@rev_qwen_dora_7_trained_accuracy_sd\endcsname{0.31}
\expandafter\def\csname result@rev_qwen_dora_7_trained_d_W_mean\endcsname{1.373}
\expandafter\def\csname result@rev_qwen_dora_7_trained_d_W_sd\endcsname{0.010}
\expandafter\def\csname result@rev_qwen_dora_7_trained_d_S_mean\endcsname{0.02003}
\expandafter\def\csname result@rev_qwen_dora_7_trained_d_S_sd\endcsname{0.00212}
\expandafter\def\csname result@rev_qwen_dora_7_trained_ratio_mean\endcsname{1.46}
\expandafter\def\csname result@rev_qwen_dora_7_trained_ratio_sd\endcsname{0.15}
\expandafter\def\csname result@rev_qwen_dora_14_reconstruct_accuracy_mean\endcsname{92.53}
\expandafter\def\csname result@rev_qwen_dora_14_reconstruct_accuracy_sd\endcsname{0.47}
\expandafter\def\csname result@rev_qwen_dora_14_restore_accuracy_mean\endcsname{92.80}
\expandafter\def\csname result@rev_qwen_dora_14_restore_accuracy_sd\endcsname{0.35}
\expandafter\def\csname result@rev_qwen_dora_14_restore_nll_vs_reconstruct_mean\endcsname{-0.00007}
\expandafter\def\csname result@rev_qwen_dora_14_restore_nll_vs_reconstruct_sd\endcsname{0.00008}
\expandafter\def\csname result@rev_qwen_dora_14_rotation_band_restore_bottom_accuracy_change_mean\endcsname{-0.93}
\expandafter\def\csname result@rev_qwen_dora_14_rotation_band_restore_bottom_accuracy_change_sd\endcsname{0.12}
\expandafter\def\csname result@rev_qwen_dora_14_rotation_band_restore_mid_accuracy_change_mean\endcsname{-0.70}
\expandafter\def\csname result@rev_qwen_dora_14_rotation_band_restore_mid_accuracy_change_sd\endcsname{0.62}
\expandafter\def\csname result@rev_qwen_dora_14_rotation_band_restore_top_accuracy_change_mean\endcsname{-1.17}
\expandafter\def\csname result@rev_qwen_dora_14_rotation_band_restore_top_accuracy_change_sd\endcsname{0.51}
\expandafter\def\csname result@rev_qwen_dora_14_rotation_band_restore_top_nll_change_mean\endcsname{-0.00147}
\expandafter\def\csname result@rev_qwen_dora_14_rotation_band_restore_top_nll_change_sd\endcsname{0.00068}
\expandafter\def\csname result@rev_qwen_dora_14_trained_accuracy_mean\endcsname{92.83}
\expandafter\def\csname result@rev_qwen_dora_14_trained_accuracy_sd\endcsname{0.50}
\expandafter\def\csname result@rev_qwen_dora_14_trained_d_W_mean\endcsname{1.863}
\expandafter\def\csname result@rev_qwen_dora_14_trained_d_W_sd\endcsname{0.016}
\expandafter\def\csname result@rev_qwen_dora_14_trained_d_S_mean\endcsname{0.02374}
\expandafter\def\csname result@rev_qwen_dora_14_trained_d_S_sd\endcsname{0.00088}
\expandafter\def\csname result@rev_qwen_dora_14_trained_ratio_mean\endcsname{1.27}
\expandafter\def\csname result@rev_qwen_dora_14_trained_ratio_sd\endcsname{0.04}
\expandafter\def\csname result@rev_qwen_dora_28_reconstruct_accuracy_mean\endcsname{92.50}
\expandafter\def\csname result@rev_qwen_dora_28_reconstruct_accuracy_sd\endcsname{0.36}
\expandafter\def\csname result@rev_qwen_dora_28_restore_accuracy_mean\endcsname{92.47}
\expandafter\def\csname result@rev_qwen_dora_28_restore_accuracy_sd\endcsname{0.15}
\expandafter\def\csname result@rev_qwen_dora_28_restore_nll_vs_reconstruct_mean\endcsname{0.00004}
\expandafter\def\csname result@rev_qwen_dora_28_restore_nll_vs_reconstruct_sd\endcsname{0.00012}
\expandafter\def\csname result@rev_qwen_dora_28_rotation_band_restore_bottom_accuracy_change_mean\endcsname{0.03}
\expandafter\def\csname result@rev_qwen_dora_28_rotation_band_restore_bottom_accuracy_change_sd\endcsname{0.06}
\expandafter\def\csname result@rev_qwen_dora_28_rotation_band_restore_mid_accuracy_change_mean\endcsname{0.67}
\expandafter\def\csname result@rev_qwen_dora_28_rotation_band_restore_mid_accuracy_change_sd\endcsname{0.47}
\expandafter\def\csname result@rev_qwen_dora_28_rotation_band_restore_top_accuracy_change_mean\endcsname{-0.20}
\expandafter\def\csname result@rev_qwen_dora_28_rotation_band_restore_top_accuracy_change_sd\endcsname{0.46}
\expandafter\def\csname result@rev_qwen_dora_28_rotation_band_restore_top_nll_change_mean\endcsname{-0.00272}
\expandafter\def\csname result@rev_qwen_dora_28_rotation_band_restore_top_nll_change_sd\endcsname{0.00066}
\expandafter\def\csname result@rev_qwen_dora_28_trained_accuracy_mean\endcsname{92.33}
\expandafter\def\csname result@rev_qwen_dora_28_trained_accuracy_sd\endcsname{0.15}
\expandafter\def\csname result@rev_qwen_dora_28_trained_d_W_mean\endcsname{0.813}
\expandafter\def\csname result@rev_qwen_dora_28_trained_d_W_sd\endcsname{0.004}
\expandafter\def\csname result@rev_qwen_dora_28_trained_d_S_mean\endcsname{0.01036}
\expandafter\def\csname result@rev_qwen_dora_28_trained_d_S_sd\endcsname{0.00023}
\expandafter\def\csname result@rev_qwen_dora_28_trained_ratio_mean\endcsname{1.27}
\expandafter\def\csname result@rev_qwen_dora_28_trained_ratio_sd\endcsname{0.03}
\expandafter\def\csname result@rev_qwen_hra_16_reconstruct_accuracy_mean\endcsname{91.83}
\expandafter\def\csname result@rev_qwen_hra_16_reconstruct_accuracy_sd\endcsname{0.21}
\expandafter\def\csname result@rev_qwen_hra_16_restore_accuracy_mean\endcsname{91.97}
\expandafter\def\csname result@rev_qwen_hra_16_restore_accuracy_sd\endcsname{0.29}
\expandafter\def\csname result@rev_qwen_hra_16_restore_nll_vs_reconstruct_mean\endcsname{-0.00004}
\expandafter\def\csname result@rev_qwen_hra_16_restore_nll_vs_reconstruct_sd\endcsname{0.00003}
\expandafter\def\csname result@rev_qwen_hra_16_rotation_band_restore_bottom_accuracy_change_mean\endcsname{-0.10}
\expandafter\def\csname result@rev_qwen_hra_16_rotation_band_restore_bottom_accuracy_change_sd\endcsname{0.26}
\expandafter\def\csname result@rev_qwen_hra_16_rotation_band_restore_mid_accuracy_change_mean\endcsname{-0.03}
\expandafter\def\csname result@rev_qwen_hra_16_rotation_band_restore_mid_accuracy_change_sd\endcsname{0.06}
\expandafter\def\csname result@rev_qwen_hra_16_rotation_band_restore_top_accuracy_change_mean\endcsname{-72.03}
\expandafter\def\csname result@rev_qwen_hra_16_rotation_band_restore_top_accuracy_change_sd\endcsname{1.29}
\expandafter\def\csname result@rev_qwen_hra_16_rotation_band_restore_top_nll_change_mean\endcsname{-0.00436}
\expandafter\def\csname result@rev_qwen_hra_16_rotation_band_restore_top_nll_change_sd\endcsname{0.00052}
\expandafter\def\csname result@rev_qwen_hra_16_trained_accuracy_mean\endcsname{91.93}
\expandafter\def\csname result@rev_qwen_hra_16_trained_accuracy_sd\endcsname{0.06}
\expandafter\def\csname result@rev_qwen_hra_16_trained_d_W_mean\endcsname{0.564}
\expandafter\def\csname result@rev_qwen_hra_16_trained_d_W_sd\endcsname{0.004}
\expandafter\def\csname result@rev_qwen_hra_16_trained_d_S_mean\endcsname{0.00153}
\expandafter\def\csname result@rev_qwen_hra_16_trained_d_S_sd\endcsname{0.00000}
\expandafter\def\csname result@rev_qwen_hra_16_trained_ratio_mean\endcsname{0.27}
\expandafter\def\csname result@rev_qwen_hra_16_trained_ratio_sd\endcsname{0.00}
\expandafter\def\csname result@rev_qwen_hra_32_reconstruct_accuracy_mean\endcsname{92.00}
\expandafter\def\csname result@rev_qwen_hra_32_reconstruct_accuracy_sd\endcsname{0.10}
\expandafter\def\csname result@rev_qwen_hra_32_restore_accuracy_mean\endcsname{91.90}
\expandafter\def\csname result@rev_qwen_hra_32_restore_accuracy_sd\endcsname{0.20}
\expandafter\def\csname result@rev_qwen_hra_32_restore_nll_vs_reconstruct_mean\endcsname{0.00002}
\expandafter\def\csname result@rev_qwen_hra_32_restore_nll_vs_reconstruct_sd\endcsname{0.00001}
\expandafter\def\csname result@rev_qwen_hra_32_rotation_band_restore_bottom_accuracy_change_mean\endcsname{-0.10}
\expandafter\def\csname result@rev_qwen_hra_32_rotation_band_restore_bottom_accuracy_change_sd\endcsname{0.46}
\expandafter\def\csname result@rev_qwen_hra_32_rotation_band_restore_mid_accuracy_change_mean\endcsname{-0.43}
\expandafter\def\csname result@rev_qwen_hra_32_rotation_band_restore_mid_accuracy_change_sd\endcsname{0.12}
\expandafter\def\csname result@rev_qwen_hra_32_rotation_band_restore_top_accuracy_change_mean\endcsname{-50.87}
\expandafter\def\csname result@rev_qwen_hra_32_rotation_band_restore_top_accuracy_change_sd\endcsname{3.04}
\expandafter\def\csname result@rev_qwen_hra_32_rotation_band_restore_top_nll_change_mean\endcsname{-0.00576}
\expandafter\def\csname result@rev_qwen_hra_32_rotation_band_restore_top_nll_change_sd\endcsname{0.00019}
\expandafter\def\csname result@rev_qwen_hra_32_trained_accuracy_mean\endcsname{92.10}
\expandafter\def\csname result@rev_qwen_hra_32_trained_accuracy_sd\endcsname{0.17}
\expandafter\def\csname result@rev_qwen_hra_32_trained_d_W_mean\endcsname{0.636}
\expandafter\def\csname result@rev_qwen_hra_32_trained_d_W_sd\endcsname{0.003}
\expandafter\def\csname result@rev_qwen_hra_32_trained_d_S_mean\endcsname{0.00155}
\expandafter\def\csname result@rev_qwen_hra_32_trained_d_S_sd\endcsname{0.00001}
\expandafter\def\csname result@rev_qwen_hra_32_trained_ratio_mean\endcsname{0.24}
\expandafter\def\csname result@rev_qwen_hra_32_trained_ratio_sd\endcsname{0.00}
\expandafter\def\csname result@rev_qwen_hra_64_reconstruct_accuracy_mean\endcsname{91.80}
\expandafter\def\csname result@rev_qwen_hra_64_reconstruct_accuracy_sd\endcsname{0.17}
\expandafter\def\csname result@rev_qwen_hra_64_restore_accuracy_mean\endcsname{92.03}
\expandafter\def\csname result@rev_qwen_hra_64_restore_accuracy_sd\endcsname{0.15}
\expandafter\def\csname result@rev_qwen_hra_64_restore_nll_vs_reconstruct_mean\endcsname{0.00006}
\expandafter\def\csname result@rev_qwen_hra_64_restore_nll_vs_reconstruct_sd\endcsname{0.00008}
\expandafter\def\csname result@rev_qwen_hra_64_rotation_band_restore_bottom_accuracy_change_mean\endcsname{-0.03}
\expandafter\def\csname result@rev_qwen_hra_64_rotation_band_restore_bottom_accuracy_change_sd\endcsname{0.12}
\expandafter\def\csname result@rev_qwen_hra_64_rotation_band_restore_mid_accuracy_change_mean\endcsname{-2.87}
\expandafter\def\csname result@rev_qwen_hra_64_rotation_band_restore_mid_accuracy_change_sd\endcsname{0.76}
\expandafter\def\csname result@rev_qwen_hra_64_rotation_band_restore_top_accuracy_change_mean\endcsname{-77.60}
\expandafter\def\csname result@rev_qwen_hra_64_rotation_band_restore_top_accuracy_change_sd\endcsname{0.56}
\expandafter\def\csname result@rev_qwen_hra_64_rotation_band_restore_top_nll_change_mean\endcsname{-0.00312}
\expandafter\def\csname result@rev_qwen_hra_64_rotation_band_restore_top_nll_change_sd\endcsname{0.00010}
\expandafter\def\csname result@rev_qwen_hra_64_trained_accuracy_mean\endcsname{91.70}
\expandafter\def\csname result@rev_qwen_hra_64_trained_accuracy_sd\endcsname{0.20}
\expandafter\def\csname result@rev_qwen_hra_64_trained_d_W_mean\endcsname{0.286}
\expandafter\def\csname result@rev_qwen_hra_64_trained_d_W_sd\endcsname{0.003}
\expandafter\def\csname result@rev_qwen_hra_64_trained_d_S_mean\endcsname{0.00131}
\expandafter\def\csname result@rev_qwen_hra_64_trained_d_S_sd\endcsname{0.00001}
\expandafter\def\csname result@rev_qwen_hra_64_trained_ratio_mean\endcsname{0.46}
\expandafter\def\csname result@rev_qwen_hra_64_trained_ratio_sd\endcsname{0.01}
\expandafter\def\csname result@rev_qwen_lora_7_reconstruct_accuracy_mean\endcsname{90.60}
\expandafter\def\csname result@rev_qwen_lora_7_reconstruct_accuracy_sd\endcsname{0.85}
\expandafter\def\csname result@rev_qwen_lora_7_restore_accuracy_mean\endcsname{90.73}
\expandafter\def\csname result@rev_qwen_lora_7_restore_accuracy_sd\endcsname{0.55}
\expandafter\def\csname result@rev_qwen_lora_7_restore_accuracy_vs_reconstruct_mean\endcsname{0.13}
\expandafter\def\csname result@rev_qwen_lora_7_restore_nll_vs_reconstruct_mean\endcsname{0.00756}
\expandafter\def\csname result@rev_qwen_lora_7_restore_nll_vs_reconstruct_sd\endcsname{0.00546}
\expandafter\def\csname result@rev_qwen_lora_7_rotation_band_restore_bottom_accuracy_change_mean\endcsname{0.07}
\expandafter\def\csname result@rev_qwen_lora_7_rotation_band_restore_bottom_accuracy_change_sd\endcsname{0.93}
\expandafter\def\csname result@rev_qwen_lora_7_rotation_band_restore_mid_accuracy_change_mean\endcsname{-0.70}
\expandafter\def\csname result@rev_qwen_lora_7_rotation_band_restore_mid_accuracy_change_sd\endcsname{0.44}
\expandafter\def\csname result@rev_qwen_lora_7_rotation_band_restore_top_accuracy_change_mean\endcsname{-82.00}
\expandafter\def\csname result@rev_qwen_lora_7_rotation_band_restore_top_accuracy_change_sd\endcsname{2.40}
\expandafter\def\csname result@rev_qwen_lora_7_rotation_band_restore_top_nll_change_mean\endcsname{-0.01099}
\expandafter\def\csname result@rev_qwen_lora_7_rotation_band_restore_top_nll_change_sd\endcsname{0.00343}
\expandafter\def\csname result@rev_qwen_lora_7_trained_accuracy_mean\endcsname{90.87}
\expandafter\def\csname result@rev_qwen_lora_7_trained_accuracy_sd\endcsname{0.90}
\expandafter\def\csname result@rev_qwen_lora_7_trained_d_W_mean\endcsname{8.154}
\expandafter\def\csname result@rev_qwen_lora_7_trained_d_W_sd\endcsname{0.010}
\expandafter\def\csname result@rev_qwen_lora_7_trained_d_S_mean\endcsname{1.67437}
\expandafter\def\csname result@rev_qwen_lora_7_trained_d_S_sd\endcsname{0.01900}
\expandafter\def\csname result@rev_qwen_lora_7_trained_ratio_mean\endcsname{20.53}
\expandafter\def\csname result@rev_qwen_lora_7_trained_ratio_sd\endcsname{0.25}
\expandafter\def\csname result@rev_qwen_lora_14_reconstruct_accuracy_mean\endcsname{91.53}
\expandafter\def\csname result@rev_qwen_lora_14_reconstruct_accuracy_sd\endcsname{0.29}
\expandafter\def\csname result@rev_qwen_lora_14_restore_accuracy_mean\endcsname{91.73}
\expandafter\def\csname result@rev_qwen_lora_14_restore_accuracy_sd\endcsname{0.15}
\expandafter\def\csname result@rev_qwen_lora_14_restore_nll_vs_reconstruct_mean\endcsname{-0.00002}
\expandafter\def\csname result@rev_qwen_lora_14_restore_nll_vs_reconstruct_sd\endcsname{0.00007}
\expandafter\def\csname result@rev_qwen_lora_14_rotation_band_restore_bottom_accuracy_change_mean\endcsname{-0.33}
\expandafter\def\csname result@rev_qwen_lora_14_rotation_band_restore_bottom_accuracy_change_sd\endcsname{0.47}
\expandafter\def\csname result@rev_qwen_lora_14_rotation_band_restore_mid_accuracy_change_mean\endcsname{-1.17}
\expandafter\def\csname result@rev_qwen_lora_14_rotation_band_restore_mid_accuracy_change_sd\endcsname{0.12}
\expandafter\def\csname result@rev_qwen_lora_14_rotation_band_restore_top_accuracy_change_mean\endcsname{-49.67}
\expandafter\def\csname result@rev_qwen_lora_14_rotation_band_restore_top_accuracy_change_sd\endcsname{1.91}
\expandafter\def\csname result@rev_qwen_lora_14_rotation_band_restore_top_nll_change_mean\endcsname{-0.00139}
\expandafter\def\csname result@rev_qwen_lora_14_rotation_band_restore_top_nll_change_sd\endcsname{0.00026}
\expandafter\def\csname result@rev_qwen_lora_14_trained_accuracy_mean\endcsname{91.73}
\expandafter\def\csname result@rev_qwen_lora_14_trained_accuracy_sd\endcsname{0.29}
\expandafter\def\csname result@rev_qwen_lora_14_trained_d_W_mean\endcsname{0.151}
\expandafter\def\csname result@rev_qwen_lora_14_trained_d_W_sd\endcsname{0.000}
\expandafter\def\csname result@rev_qwen_lora_14_trained_d_S_mean\endcsname{0.00161}
\expandafter\def\csname result@rev_qwen_lora_14_trained_d_S_sd\endcsname{0.00003}
\expandafter\def\csname result@rev_qwen_lora_14_trained_ratio_mean\endcsname{1.07}
\expandafter\def\csname result@rev_qwen_lora_14_trained_ratio_sd\endcsname{0.02}
\expandafter\def\csname result@rev_qwen_lora_28_reconstruct_accuracy_mean\endcsname{92.23}
\expandafter\def\csname result@rev_qwen_lora_28_reconstruct_accuracy_sd\endcsname{0.47}
\expandafter\def\csname result@rev_qwen_lora_28_restore_accuracy_mean\endcsname{92.07}
\expandafter\def\csname result@rev_qwen_lora_28_restore_accuracy_sd\endcsname{0.49}
\expandafter\def\csname result@rev_qwen_lora_28_restore_nll_vs_reconstruct_mean\endcsname{0.00005}
\expandafter\def\csname result@rev_qwen_lora_28_restore_nll_vs_reconstruct_sd\endcsname{0.00005}
\expandafter\def\csname result@rev_qwen_lora_28_rotation_band_restore_bottom_accuracy_change_mean\endcsname{-0.10}
\expandafter\def\csname result@rev_qwen_lora_28_rotation_band_restore_bottom_accuracy_change_sd\endcsname{0.20}
\expandafter\def\csname result@rev_qwen_lora_28_rotation_band_restore_mid_accuracy_change_mean\endcsname{0.20}
\expandafter\def\csname result@rev_qwen_lora_28_rotation_band_restore_mid_accuracy_change_sd\endcsname{0.17}
\expandafter\def\csname result@rev_qwen_lora_28_rotation_band_restore_top_accuracy_change_mean\endcsname{-1.03}
\expandafter\def\csname result@rev_qwen_lora_28_rotation_band_restore_top_accuracy_change_sd\endcsname{0.06}
\expandafter\def\csname result@rev_qwen_lora_28_rotation_band_restore_top_nll_change_mean\endcsname{-0.00259}
\expandafter\def\csname result@rev_qwen_lora_28_rotation_band_restore_top_nll_change_sd\endcsname{0.00023}
\expandafter\def\csname result@rev_qwen_lora_28_trained_accuracy_mean\endcsname{92.20}
\expandafter\def\csname result@rev_qwen_lora_28_trained_accuracy_sd\endcsname{0.53}
\expandafter\def\csname result@rev_qwen_lora_28_trained_d_W_mean\endcsname{0.307}
\expandafter\def\csname result@rev_qwen_lora_28_trained_d_W_sd\endcsname{0.001}
\expandafter\def\csname result@rev_qwen_lora_28_trained_d_S_mean\endcsname{0.00321}
\expandafter\def\csname result@rev_qwen_lora_28_trained_d_S_sd\endcsname{0.00014}
\expandafter\def\csname result@rev_qwen_lora_28_trained_ratio_mean\endcsname{1.05}
\expandafter\def\csname result@rev_qwen_lora_28_trained_ratio_sd\endcsname{0.04}
\expandafter\def\csname result@rev_qwen_milora_7_reconstruct_accuracy_mean\endcsname{92.57}
\expandafter\def\csname result@rev_qwen_milora_7_reconstruct_accuracy_sd\endcsname{0.47}
\expandafter\def\csname result@rev_qwen_milora_7_rotation_band_restore_bottom_accuracy_change_mean\endcsname{-60.07}
\expandafter\def\csname result@rev_qwen_milora_7_rotation_band_restore_bottom_accuracy_change_sd\endcsname{1.01}
\expandafter\def\csname result@rev_qwen_milora_7_rotation_band_restore_mid_accuracy_change_mean\endcsname{-0.93}
\expandafter\def\csname result@rev_qwen_milora_7_rotation_band_restore_mid_accuracy_change_sd\endcsname{0.47}
\expandafter\def\csname result@rev_qwen_milora_7_rotation_band_restore_top_accuracy_change_mean\endcsname{-1.77}
\expandafter\def\csname result@rev_qwen_milora_7_rotation_band_restore_top_accuracy_change_sd\endcsname{0.21}
\expandafter\def\csname result@rev_qwen_milora_7_rotation_band_restore_top_nll_change_mean\endcsname{-0.00245}
\expandafter\def\csname result@rev_qwen_milora_7_rotation_band_restore_top_nll_change_sd\endcsname{0.00008}
\expandafter\def\csname result@rev_qwen_milora_7_trained_accuracy_mean\endcsname{92.53}
\expandafter\def\csname result@rev_qwen_milora_7_trained_accuracy_sd\endcsname{0.15}
\expandafter\def\csname result@rev_qwen_milora_7_trained_d_W_mean\endcsname{0.139}
\expandafter\def\csname result@rev_qwen_milora_7_trained_d_W_sd\endcsname{0.000}
\expandafter\def\csname result@rev_qwen_milora_7_trained_d_S_mean\endcsname{0.00676}
\expandafter\def\csname result@rev_qwen_milora_7_trained_d_S_sd\endcsname{0.00002}
\expandafter\def\csname result@rev_qwen_milora_7_trained_ratio_mean\endcsname{4.87}
\expandafter\def\csname result@rev_qwen_milora_7_trained_ratio_sd\endcsname{0.01}
\expandafter\def\csname result@rev_qwen_milora_14_reconstruct_accuracy_mean\endcsname{92.70}
\expandafter\def\csname result@rev_qwen_milora_14_reconstruct_accuracy_sd\endcsname{0.35}
\expandafter\def\csname result@rev_qwen_milora_14_rotation_band_restore_bottom_accuracy_change_mean\endcsname{-54.83}
\expandafter\def\csname result@rev_qwen_milora_14_rotation_band_restore_bottom_accuracy_change_sd\endcsname{1.30}
\expandafter\def\csname result@rev_qwen_milora_14_rotation_band_restore_mid_accuracy_change_mean\endcsname{-0.53}
\expandafter\def\csname result@rev_qwen_milora_14_rotation_band_restore_mid_accuracy_change_sd\endcsname{0.35}
\expandafter\def\csname result@rev_qwen_milora_14_rotation_band_restore_top_accuracy_change_mean\endcsname{-0.63}
\expandafter\def\csname result@rev_qwen_milora_14_rotation_band_restore_top_accuracy_change_sd\endcsname{0.40}
\expandafter\def\csname result@rev_qwen_milora_14_rotation_band_restore_top_nll_change_mean\endcsname{-0.00290}
\expandafter\def\csname result@rev_qwen_milora_14_rotation_band_restore_top_nll_change_sd\endcsname{0.00014}
\expandafter\def\csname result@rev_qwen_milora_14_trained_accuracy_mean\endcsname{92.60}
\expandafter\def\csname result@rev_qwen_milora_14_trained_accuracy_sd\endcsname{0.30}
\expandafter\def\csname result@rev_qwen_milora_14_trained_d_W_mean\endcsname{0.144}
\expandafter\def\csname result@rev_qwen_milora_14_trained_d_W_sd\endcsname{0.000}
\expandafter\def\csname result@rev_qwen_milora_14_trained_d_S_mean\endcsname{0.00461}
\expandafter\def\csname result@rev_qwen_milora_14_trained_d_S_sd\endcsname{0.00001}
\expandafter\def\csname result@rev_qwen_milora_14_trained_ratio_mean\endcsname{3.20}
\expandafter\def\csname result@rev_qwen_milora_14_trained_ratio_sd\endcsname{0.01}
\expandafter\def\csname result@rev_qwen_milora_28_reconstruct_accuracy_mean\endcsname{92.50}
\expandafter\def\csname result@rev_qwen_milora_28_reconstruct_accuracy_sd\endcsname{0.26}
\expandafter\def\csname result@rev_qwen_milora_28_restore_accuracy_mean\endcsname{92.30}
\expandafter\def\csname result@rev_qwen_milora_28_restore_accuracy_sd\endcsname{0.10}
\expandafter\def\csname result@rev_qwen_milora_28_restore_nll_vs_reconstruct_mean\endcsname{-0.00010}
\expandafter\def\csname result@rev_qwen_milora_28_restore_nll_vs_reconstruct_sd\endcsname{0.00015}
\expandafter\def\csname result@rev_qwen_milora_28_rotation_band_restore_bottom_accuracy_change_mean\endcsname{-1.17}
\expandafter\def\csname result@rev_qwen_milora_28_rotation_band_restore_bottom_accuracy_change_sd\endcsname{0.40}
\expandafter\def\csname result@rev_qwen_milora_28_rotation_band_restore_mid_accuracy_change_mean\endcsname{0.20}
\expandafter\def\csname result@rev_qwen_milora_28_rotation_band_restore_mid_accuracy_change_sd\endcsname{0.10}
\expandafter\def\csname result@rev_qwen_milora_28_rotation_band_restore_top_accuracy_change_mean\endcsname{-0.03}
\expandafter\def\csname result@rev_qwen_milora_28_rotation_band_restore_top_accuracy_change_sd\endcsname{0.45}
\expandafter\def\csname result@rev_qwen_milora_28_rotation_band_restore_top_nll_change_mean\endcsname{-0.00405}
\expandafter\def\csname result@rev_qwen_milora_28_rotation_band_restore_top_nll_change_sd\endcsname{0.00029}
\expandafter\def\csname result@rev_qwen_milora_28_trained_accuracy_mean\endcsname{92.20}
\expandafter\def\csname result@rev_qwen_milora_28_trained_accuracy_sd\endcsname{0.17}
\expandafter\def\csname result@rev_qwen_milora_28_trained_d_W_mean\endcsname{0.346}
\expandafter\def\csname result@rev_qwen_milora_28_trained_d_W_sd\endcsname{0.001}
\expandafter\def\csname result@rev_qwen_milora_28_trained_d_S_mean\endcsname{0.01094}
\expandafter\def\csname result@rev_qwen_milora_28_trained_d_S_sd\endcsname{0.00006}
\expandafter\def\csname result@rev_qwen_milora_28_trained_ratio_mean\endcsname{3.16}
\expandafter\def\csname result@rev_qwen_milora_28_trained_ratio_sd\endcsname{0.01}
\expandafter\def\csname result@rev_qwen_oft_32_reconstruct_accuracy_mean\endcsname{91.67}
\expandafter\def\csname result@rev_qwen_oft_32_reconstruct_accuracy_sd\endcsname{0.75}
\expandafter\def\csname result@rev_qwen_oft_32_restore_accuracy_mean\endcsname{91.73}
\expandafter\def\csname result@rev_qwen_oft_32_restore_accuracy_sd\endcsname{0.68}
\expandafter\def\csname result@rev_qwen_oft_32_restore_nll_vs_reconstruct_mean\endcsname{0.00001}
\expandafter\def\csname result@rev_qwen_oft_32_restore_nll_vs_reconstruct_sd\endcsname{0.00009}
\expandafter\def\csname result@rev_qwen_oft_32_rotation_band_restore_bottom_accuracy_change_mean\endcsname{-0.17}
\expandafter\def\csname result@rev_qwen_oft_32_rotation_band_restore_bottom_accuracy_change_sd\endcsname{0.06}
\expandafter\def\csname result@rev_qwen_oft_32_rotation_band_restore_mid_accuracy_change_mean\endcsname{0.23}
\expandafter\def\csname result@rev_qwen_oft_32_rotation_band_restore_mid_accuracy_change_sd\endcsname{0.84}
\expandafter\def\csname result@rev_qwen_oft_32_rotation_band_restore_top_accuracy_change_mean\endcsname{0.13}
\expandafter\def\csname result@rev_qwen_oft_32_rotation_band_restore_top_accuracy_change_sd\endcsname{0.75}
\expandafter\def\csname result@rev_qwen_oft_32_rotation_band_restore_top_nll_change_mean\endcsname{-0.00772}
\expandafter\def\csname result@rev_qwen_oft_32_rotation_band_restore_top_nll_change_sd\endcsname{0.00056}
\expandafter\def\csname result@rev_qwen_oft_32_trained_accuracy_mean\endcsname{91.77}
\expandafter\def\csname result@rev_qwen_oft_32_trained_accuracy_sd\endcsname{0.71}
\expandafter\def\csname result@rev_qwen_oft_32_trained_d_W_mean\endcsname{4.213}
\expandafter\def\csname result@rev_qwen_oft_32_trained_d_W_sd\endcsname{0.006}
\expandafter\def\csname result@rev_qwen_oft_32_trained_d_S_mean\endcsname{0.00250}
\expandafter\def\csname result@rev_qwen_oft_32_trained_d_S_sd\endcsname{0.00062}
\expandafter\def\csname result@rev_qwen_oft_32_trained_ratio_mean\endcsname{0.06}
\expandafter\def\csname result@rev_qwen_oft_32_trained_ratio_sd\endcsname{0.01}
\expandafter\def\csname result@rev_qwen_oft_64_reconstruct_accuracy_mean\endcsname{90.17}
\expandafter\def\csname result@rev_qwen_oft_64_reconstruct_accuracy_sd\endcsname{0.21}
\expandafter\def\csname result@rev_qwen_oft_64_restore_accuracy_mean\endcsname{89.63}
\expandafter\def\csname result@rev_qwen_oft_64_restore_accuracy_sd\endcsname{0.42}
\expandafter\def\csname result@rev_qwen_oft_64_restore_nll_vs_reconstruct_mean\endcsname{-0.00014}
\expandafter\def\csname result@rev_qwen_oft_64_restore_nll_vs_reconstruct_sd\endcsname{0.00023}
\expandafter\def\csname result@rev_qwen_oft_64_rotation_band_restore_bottom_accuracy_change_mean\endcsname{0.30}
\expandafter\def\csname result@rev_qwen_oft_64_rotation_band_restore_bottom_accuracy_change_sd\endcsname{0.36}
\expandafter\def\csname result@rev_qwen_oft_64_rotation_band_restore_mid_accuracy_change_mean\endcsname{1.37}
\expandafter\def\csname result@rev_qwen_oft_64_rotation_band_restore_mid_accuracy_change_sd\endcsname{0.55}
\expandafter\def\csname result@rev_qwen_oft_64_rotation_band_restore_top_accuracy_change_mean\endcsname{2.27}
\expandafter\def\csname result@rev_qwen_oft_64_rotation_band_restore_top_accuracy_change_sd\endcsname{0.81}
\expandafter\def\csname result@rev_qwen_oft_64_rotation_band_restore_top_nll_change_mean\endcsname{-0.01298}
\expandafter\def\csname result@rev_qwen_oft_64_rotation_band_restore_top_nll_change_sd\endcsname{0.00104}
\expandafter\def\csname result@rev_qwen_oft_64_trained_accuracy_mean\endcsname{89.87}
\expandafter\def\csname result@rev_qwen_oft_64_trained_accuracy_sd\endcsname{0.42}
\expandafter\def\csname result@rev_qwen_oft_64_trained_d_W_mean\endcsname{6.368}
\expandafter\def\csname result@rev_qwen_oft_64_trained_d_W_sd\endcsname{0.022}
\expandafter\def\csname result@rev_qwen_oft_64_trained_d_S_mean\endcsname{0.00340}
\expandafter\def\csname result@rev_qwen_oft_64_trained_d_S_sd\endcsname{0.00085}
\expandafter\def\csname result@rev_qwen_oft_64_trained_ratio_mean\endcsname{0.05}
\expandafter\def\csname result@rev_qwen_oft_64_trained_ratio_sd\endcsname{0.01}
\expandafter\def\csname result@rev_qwen_oft_128_reconstruct_accuracy_mean\endcsname{91.07}
\expandafter\def\csname result@rev_qwen_oft_128_reconstruct_accuracy_sd\endcsname{0.61}
\expandafter\def\csname result@rev_qwen_oft_128_restore_accuracy_mean\endcsname{90.80}
\expandafter\def\csname result@rev_qwen_oft_128_restore_accuracy_sd\endcsname{0.53}
\expandafter\def\csname result@rev_qwen_oft_128_restore_nll_vs_reconstruct_mean\endcsname{0.00001}
\expandafter\def\csname result@rev_qwen_oft_128_restore_nll_vs_reconstruct_sd\endcsname{0.00030}
\expandafter\def\csname result@rev_qwen_oft_128_rotation_band_restore_bottom_accuracy_change_mean\endcsname{0.60}
\expandafter\def\csname result@rev_qwen_oft_128_rotation_band_restore_bottom_accuracy_change_sd\endcsname{0.70}
\expandafter\def\csname result@rev_qwen_oft_128_rotation_band_restore_mid_accuracy_change_mean\endcsname{1.07}
\expandafter\def\csname result@rev_qwen_oft_128_rotation_band_restore_mid_accuracy_change_sd\endcsname{0.91}
\expandafter\def\csname result@rev_qwen_oft_128_rotation_band_restore_top_accuracy_change_mean\endcsname{1.03}
\expandafter\def\csname result@rev_qwen_oft_128_rotation_band_restore_top_accuracy_change_sd\endcsname{0.45}
\expandafter\def\csname result@rev_qwen_oft_128_rotation_band_restore_top_nll_change_mean\endcsname{-0.00669}
\expandafter\def\csname result@rev_qwen_oft_128_rotation_band_restore_top_nll_change_sd\endcsname{0.00015}
\expandafter\def\csname result@rev_qwen_oft_128_trained_accuracy_mean\endcsname{90.70}
\expandafter\def\csname result@rev_qwen_oft_128_trained_accuracy_sd\endcsname{0.36}
\expandafter\def\csname result@rev_qwen_oft_128_trained_d_W_mean\endcsname{3.090}
\expandafter\def\csname result@rev_qwen_oft_128_trained_d_W_sd\endcsname{0.009}
\expandafter\def\csname result@rev_qwen_oft_128_trained_d_S_mean\endcsname{0.00183}
\expandafter\def\csname result@rev_qwen_oft_128_trained_d_S_sd\endcsname{0.00011}
\expandafter\def\csname result@rev_qwen_oft_128_trained_ratio_mean\endcsname{0.06}
\expandafter\def\csname result@rev_qwen_oft_128_trained_ratio_sd\endcsname{0.00}
\expandafter\def\csname result@rev_qwen_pissa_7_reconstruct_accuracy_mean\endcsname{92.30}
\expandafter\def\csname result@rev_qwen_pissa_7_reconstruct_accuracy_sd\endcsname{0.17}
\expandafter\def\csname result@rev_qwen_pissa_7_restore_accuracy_mean\endcsname{92.33}
\expandafter\def\csname result@rev_qwen_pissa_7_restore_accuracy_sd\endcsname{0.25}
\expandafter\def\csname result@rev_qwen_pissa_7_restore_nll_vs_reconstruct_mean\endcsname{-0.00034}
\expandafter\def\csname result@rev_qwen_pissa_7_restore_nll_vs_reconstruct_sd\endcsname{0.00008}
\expandafter\def\csname result@rev_qwen_pissa_7_rotation_band_restore_bottom_accuracy_change_mean\endcsname{-0.17}
\expandafter\def\csname result@rev_qwen_pissa_7_rotation_band_restore_bottom_accuracy_change_sd\endcsname{0.15}
\expandafter\def\csname result@rev_qwen_pissa_7_rotation_band_restore_mid_accuracy_change_mean\endcsname{-0.07}
\expandafter\def\csname result@rev_qwen_pissa_7_rotation_band_restore_mid_accuracy_change_sd\endcsname{0.31}
\expandafter\def\csname result@rev_qwen_pissa_7_rotation_band_restore_top_accuracy_change_mean\endcsname{-86.50}
\expandafter\def\csname result@rev_qwen_pissa_7_rotation_band_restore_top_accuracy_change_sd\endcsname{0.52}
\expandafter\def\csname result@rev_qwen_pissa_7_rotation_band_restore_top_nll_change_mean\endcsname{-0.01057}
\expandafter\def\csname result@rev_qwen_pissa_7_rotation_band_restore_top_nll_change_sd\endcsname{0.00106}
\expandafter\def\csname result@rev_qwen_pissa_7_trained_accuracy_mean\endcsname{92.40}
\expandafter\def\csname result@rev_qwen_pissa_7_trained_accuracy_sd\endcsname{0.10}
\expandafter\def\csname result@rev_qwen_pissa_7_trained_d_W_mean\endcsname{0.522}
\expandafter\def\csname result@rev_qwen_pissa_7_trained_d_W_sd\endcsname{0.000}
\expandafter\def\csname result@rev_qwen_pissa_7_trained_d_S_mean\endcsname{0.02039}
\expandafter\def\csname result@rev_qwen_pissa_7_trained_d_S_sd\endcsname{0.00132}
\expandafter\def\csname result@rev_qwen_pissa_7_trained_ratio_mean\endcsname{3.91}
\expandafter\def\csname result@rev_qwen_pissa_7_trained_ratio_sd\endcsname{0.25}
\expandafter\def\csname result@rev_qwen_pissa_14_reconstruct_accuracy_mean\endcsname{92.07}
\expandafter\def\csname result@rev_qwen_pissa_14_reconstruct_accuracy_sd\endcsname{0.59}
\expandafter\def\csname result@rev_qwen_pissa_14_restore_accuracy_mean\endcsname{92.07}
\expandafter\def\csname result@rev_qwen_pissa_14_restore_accuracy_sd\endcsname{0.40}
\expandafter\def\csname result@rev_qwen_pissa_14_restore_nll_vs_reconstruct_mean\endcsname{0.00014}
\expandafter\def\csname result@rev_qwen_pissa_14_restore_nll_vs_reconstruct_sd\endcsname{0.00041}
\expandafter\def\csname result@rev_qwen_pissa_14_rotation_band_restore_bottom_accuracy_change_mean\endcsname{-0.00}
\expandafter\def\csname result@rev_qwen_pissa_14_rotation_band_restore_bottom_accuracy_change_sd\endcsname{0.36}
\expandafter\def\csname result@rev_qwen_pissa_14_rotation_band_restore_mid_accuracy_change_mean\endcsname{-0.17}
\expandafter\def\csname result@rev_qwen_pissa_14_rotation_band_restore_mid_accuracy_change_sd\endcsname{0.15}
\expandafter\def\csname result@rev_qwen_pissa_14_rotation_band_restore_top_accuracy_change_mean\endcsname{-86.40}
\expandafter\def\csname result@rev_qwen_pissa_14_rotation_band_restore_top_accuracy_change_sd\endcsname{1.51}
\expandafter\def\csname result@rev_qwen_pissa_14_rotation_band_restore_top_nll_change_mean\endcsname{-0.01119}
\expandafter\def\csname result@rev_qwen_pissa_14_rotation_band_restore_top_nll_change_sd\endcsname{0.00137}
\expandafter\def\csname result@rev_qwen_pissa_14_trained_accuracy_mean\endcsname{92.10}
\expandafter\def\csname result@rev_qwen_pissa_14_trained_accuracy_sd\endcsname{0.52}
\expandafter\def\csname result@rev_qwen_pissa_14_trained_d_W_mean\endcsname{0.643}
\expandafter\def\csname result@rev_qwen_pissa_14_trained_d_W_sd\endcsname{0.001}
\expandafter\def\csname result@rev_qwen_pissa_14_trained_d_S_mean\endcsname{0.02227}
\expandafter\def\csname result@rev_qwen_pissa_14_trained_d_S_sd\endcsname{0.00119}
\expandafter\def\csname result@rev_qwen_pissa_14_trained_ratio_mean\endcsname{3.46}
\expandafter\def\csname result@rev_qwen_pissa_14_trained_ratio_sd\endcsname{0.19}
\expandafter\def\csname result@rev_qwen_pissa_28_reconstruct_accuracy_mean\endcsname{92.20}
\expandafter\def\csname result@rev_qwen_pissa_28_reconstruct_accuracy_sd\endcsname{1.15}
\expandafter\def\csname result@rev_qwen_pissa_28_restore_accuracy_mean\endcsname{92.10}
\expandafter\def\csname result@rev_qwen_pissa_28_restore_accuracy_sd\endcsname{1.00}
\expandafter\def\csname result@rev_qwen_pissa_28_restore_nll_vs_reconstruct_mean\endcsname{-0.00009}
\expandafter\def\csname result@rev_qwen_pissa_28_restore_nll_vs_reconstruct_sd\endcsname{0.00060}
\expandafter\def\csname result@rev_qwen_pissa_28_rotation_band_restore_bottom_accuracy_change_mean\endcsname{0.20}
\expandafter\def\csname result@rev_qwen_pissa_28_rotation_band_restore_bottom_accuracy_change_sd\endcsname{0.66}
\expandafter\def\csname result@rev_qwen_pissa_28_rotation_band_restore_mid_accuracy_change_mean\endcsname{0.43}
\expandafter\def\csname result@rev_qwen_pissa_28_rotation_band_restore_mid_accuracy_change_sd\endcsname{0.47}
\expandafter\def\csname result@rev_qwen_pissa_28_rotation_band_restore_top_accuracy_change_mean\endcsname{-87.27}
\expandafter\def\csname result@rev_qwen_pissa_28_rotation_band_restore_top_accuracy_change_sd\endcsname{1.07}
\expandafter\def\csname result@rev_qwen_pissa_28_rotation_band_restore_top_nll_change_mean\endcsname{-0.01063}
\expandafter\def\csname result@rev_qwen_pissa_28_rotation_band_restore_top_nll_change_sd\endcsname{0.00117}
\expandafter\def\csname result@rev_qwen_pissa_28_trained_accuracy_mean\endcsname{91.57}
\expandafter\def\csname result@rev_qwen_pissa_28_trained_accuracy_sd\endcsname{1.10}
\expandafter\def\csname result@rev_qwen_pissa_28_trained_d_W_mean\endcsname{0.817}
\expandafter\def\csname result@rev_qwen_pissa_28_trained_d_W_sd\endcsname{0.001}
\expandafter\def\csname result@rev_qwen_pissa_28_trained_d_S_mean\endcsname{0.02752}
\expandafter\def\csname result@rev_qwen_pissa_28_trained_d_S_sd\endcsname{0.00050}
\expandafter\def\csname result@rev_qwen_pissa_28_trained_ratio_mean\endcsname{3.37}
\expandafter\def\csname result@rev_qwen_pissa_28_trained_ratio_sd\endcsname{0.06}
\expandafter\def\csname result@math_grid_llama_dora_7_0p0003_params\endcsname{19.73}
\expandafter\def\csname result@math_grid_llama_dora_7_0p0003_dev_mean\endcsname{71.50}
\expandafter\def\csname result@math_grid_llama_dora_7_0p0003_dev_sd\endcsname{0.78}
\expandafter\def\csname result@math_grid_llama_dora_7_0p0003_forgetting_mean\endcsname{0.0314}
\expandafter\def\csname result@math_grid_llama_dora_7_0p0003_forgetting_sd\endcsname{0.0013}
\expandafter\def\csname result@math_grid_llama_dora_7_0p0003_test_mean\endcsname{69.95}
\expandafter\def\csname result@math_grid_llama_dora_7_0p0003_test_sd\endcsname{0.12}
\expandafter\def\csname result@math_grid_llama_dora_15_0p0003_params\endcsname{40.70}
\expandafter\def\csname result@math_grid_llama_dora_15_0p0003_dev_mean\endcsname{72.97}
\expandafter\def\csname result@math_grid_llama_dora_15_0p0003_dev_sd\endcsname{1.01}
\expandafter\def\csname result@math_grid_llama_dora_15_0p0003_forgetting_mean\endcsname{0.0394}
\expandafter\def\csname result@math_grid_llama_dora_15_0p0003_forgetting_sd\endcsname{0.0014}
\expandafter\def\csname result@math_grid_llama_dora_15_0p0003_test_mean\endcsname{70.91}
\expandafter\def\csname result@math_grid_llama_dora_15_0p0003_test_sd\endcsname{0.44}
\expandafter\def\csname result@math_grid_llama_dora_30_0p0003_params\endcsname{80.02}
\expandafter\def\csname result@math_grid_llama_dora_30_0p0003_dev_mean\endcsname{73.27}
\expandafter\def\csname result@math_grid_llama_dora_30_0p0003_dev_sd\endcsname{2.18}
\expandafter\def\csname result@math_grid_llama_dora_30_0p0003_forgetting_mean\endcsname{0.0512}
\expandafter\def\csname result@math_grid_llama_dora_30_0p0003_forgetting_sd\endcsname{0.0033}
\expandafter\def\csname result@math_grid_llama_dora_30_0p0003_test_mean\endcsname{71.97}
\expandafter\def\csname result@math_grid_llama_dora_30_0p0003_test_sd\endcsname{1.16}
\expandafter\def\csname result@math_grid_llama_hra_16_0p001_params\endcsname{19.92}
\expandafter\def\csname result@math_grid_llama_hra_16_0p001_dev_mean\endcsname{69.03}
\expandafter\def\csname result@math_grid_llama_hra_16_0p001_dev_sd\endcsname{0.61}
\expandafter\def\csname result@math_grid_llama_hra_16_0p001_forgetting_mean\endcsname{0.0340}
\expandafter\def\csname result@math_grid_llama_hra_16_0p001_forgetting_sd\endcsname{0.0017}
\expandafter\def\csname result@math_grid_llama_hra_16_0p001_test_mean\endcsname{67.63}
\expandafter\def\csname result@math_grid_llama_hra_16_0p001_test_sd\endcsname{0.42}
\expandafter\def\csname result@math_grid_llama_hra_32_0p001_params\endcsname{39.85}
\expandafter\def\csname result@math_grid_llama_hra_32_0p001_dev_mean\endcsname{70.17}
\expandafter\def\csname result@math_grid_llama_hra_32_0p001_dev_sd\endcsname{1.10}
\expandafter\def\csname result@math_grid_llama_hra_32_0p001_forgetting_mean\endcsname{0.0521}
\expandafter\def\csname result@math_grid_llama_hra_32_0p001_forgetting_sd\endcsname{0.0022}
\expandafter\def\csname result@math_grid_llama_hra_32_0p001_test_mean\endcsname{70.31}
\expandafter\def\csname result@math_grid_llama_hra_32_0p001_test_sd\endcsname{1.37}
\expandafter\def\csname result@math_grid_llama_hra_64_0p001_params\endcsname{79.69}
\expandafter\def\csname result@math_grid_llama_hra_64_0p001_dev_mean\endcsname{71.47}
\expandafter\def\csname result@math_grid_llama_hra_64_0p001_dev_sd\endcsname{1.01}
\expandafter\def\csname result@math_grid_llama_hra_64_0p001_forgetting_mean\endcsname{0.0988}
\expandafter\def\csname result@math_grid_llama_hra_64_0p001_forgetting_sd\endcsname{0.0031}
\expandafter\def\csname result@math_grid_llama_hra_64_0p001_test_mean\endcsname{71.04}
\expandafter\def\csname result@math_grid_llama_hra_64_0p001_test_sd\endcsname{0.20}
\expandafter\def\csname result@math_grid_llama_lora_7_0p0003_params\endcsname{18.35}
\expandafter\def\csname result@math_grid_llama_lora_7_0p0003_dev_mean\endcsname{71.17}
\expandafter\def\csname result@math_grid_llama_lora_7_0p0003_dev_sd\endcsname{0.29}
\expandafter\def\csname result@math_grid_llama_lora_7_0p0003_forgetting_mean\endcsname{0.0285}
\expandafter\def\csname result@math_grid_llama_lora_7_0p0003_forgetting_sd\endcsname{0.0018}
\expandafter\def\csname result@math_grid_llama_lora_7_0p0003_test_mean\endcsname{70.84}
\expandafter\def\csname result@math_grid_llama_lora_7_0p0003_test_sd\endcsname{1.09}
\expandafter\def\csname result@math_grid_llama_lora_15_0p0003_params\endcsname{39.32}
\expandafter\def\csname result@math_grid_llama_lora_15_0p0003_dev_mean\endcsname{72.30}
\expandafter\def\csname result@math_grid_llama_lora_15_0p0003_dev_sd\endcsname{1.04}
\expandafter\def\csname result@math_grid_llama_lora_15_0p0003_forgetting_mean\endcsname{0.0372}
\expandafter\def\csname result@math_grid_llama_lora_15_0p0003_forgetting_sd\endcsname{0.0023}
\expandafter\def\csname result@math_grid_llama_lora_15_0p0003_test_mean\endcsname{71.65}
\expandafter\def\csname result@math_grid_llama_lora_15_0p0003_test_sd\endcsname{0.35}
\expandafter\def\csname result@math_grid_llama_lora_30_0p0003_params\endcsname{78.64}
\expandafter\def\csname result@math_grid_llama_lora_30_0p0003_dev_mean\endcsname{71.93}
\expandafter\def\csname result@math_grid_llama_lora_30_0p0003_dev_sd\endcsname{1.12}
\expandafter\def\csname result@math_grid_llama_lora_30_0p0003_forgetting_mean\endcsname{0.0486}
\expandafter\def\csname result@math_grid_llama_lora_30_0p0003_forgetting_sd\endcsname{0.0024}
\expandafter\def\csname result@math_grid_llama_lora_30_0p0003_test_mean\endcsname{71.90}
\expandafter\def\csname result@math_grid_llama_lora_30_0p0003_test_sd\endcsname{0.12}
\expandafter\def\csname result@math_grid_llama_milora_7_0p0003_params\endcsname{18.35}
\expandafter\def\csname result@math_grid_llama_milora_7_0p0003_dev_mean\endcsname{71.53}
\expandafter\def\csname result@math_grid_llama_milora_7_0p0003_dev_sd\endcsname{0.25}
\expandafter\def\csname result@math_grid_llama_milora_7_0p0003_forgetting_mean\endcsname{0.0486}
\expandafter\def\csname result@math_grid_llama_milora_7_0p0003_forgetting_sd\endcsname{0.0017}
\expandafter\def\csname result@math_grid_llama_milora_7_0p0003_test_mean\endcsname{70.43}
\expandafter\def\csname result@math_grid_llama_milora_7_0p0003_test_sd\endcsname{0.42}
\expandafter\def\csname result@math_grid_llama_milora_15_0p0003_params\endcsname{39.32}
\expandafter\def\csname result@math_grid_llama_milora_15_0p0003_dev_mean\endcsname{72.03}
\expandafter\def\csname result@math_grid_llama_milora_15_0p0003_dev_sd\endcsname{0.86}
\expandafter\def\csname result@math_grid_llama_milora_15_0p0003_forgetting_mean\endcsname{0.0685}
\expandafter\def\csname result@math_grid_llama_milora_15_0p0003_forgetting_sd\endcsname{0.0021}
\expandafter\def\csname result@math_grid_llama_milora_15_0p0003_test_mean\endcsname{71.06}
\expandafter\def\csname result@math_grid_llama_milora_15_0p0003_test_sd\endcsname{0.61}
\expandafter\def\csname result@math_grid_llama_milora_30_0p0003_params\endcsname{78.64}
\expandafter\def\csname result@math_grid_llama_milora_30_0p0003_dev_mean\endcsname{72.60}
\expandafter\def\csname result@math_grid_llama_milora_30_0p0003_dev_sd\endcsname{0.52}
\expandafter\def\csname result@math_grid_llama_milora_30_0p0003_forgetting_mean\endcsname{0.0942}
\expandafter\def\csname result@math_grid_llama_milora_30_0p0003_forgetting_sd\endcsname{0.0024}
\expandafter\def\csname result@math_grid_llama_milora_30_0p0003_test_mean\endcsname{70.68}
\expandafter\def\csname result@math_grid_llama_milora_30_0p0003_test_sd\endcsname{0.23}
\expandafter\def\csname result@math_grid_llama_oft_32_0p0003_params\endcsname{19.30}
\expandafter\def\csname result@math_grid_llama_oft_32_0p0003_dev_mean\endcsname{72.10}
\expandafter\def\csname result@math_grid_llama_oft_32_0p0003_dev_sd\endcsname{1.31}
\expandafter\def\csname result@math_grid_llama_oft_32_0p0003_forgetting_mean\endcsname{0.0371}
\expandafter\def\csname result@math_grid_llama_oft_32_0p0003_forgetting_sd\endcsname{0.0015}
\expandafter\def\csname result@math_grid_llama_oft_32_0p0003_test_mean\endcsname{71.42}
\expandafter\def\csname result@math_grid_llama_oft_32_0p0003_test_sd\endcsname{1.45}
\expandafter\def\csname result@math_grid_llama_oft_64_0p0003_params\endcsname{39.22}
\expandafter\def\csname result@math_grid_llama_oft_64_0p0003_dev_mean\endcsname{73.10}
\expandafter\def\csname result@math_grid_llama_oft_64_0p0003_dev_sd\endcsname{0.26}
\expandafter\def\csname result@math_grid_llama_oft_64_0p0003_forgetting_mean\endcsname{0.0740}
\expandafter\def\csname result@math_grid_llama_oft_64_0p0003_forgetting_sd\endcsname{0.0017}
\expandafter\def\csname result@math_grid_llama_oft_64_0p0003_test_mean\endcsname{71.34}
\expandafter\def\csname result@math_grid_llama_oft_64_0p0003_test_sd\endcsname{0.13}
\expandafter\def\csname result@math_grid_llama_oft_128_0p0001_params\endcsname{79.07}
\expandafter\def\csname result@math_grid_llama_oft_128_0p0001_dev_mean\endcsname{72.57}
\expandafter\def\csname result@math_grid_llama_oft_128_0p0001_dev_sd\endcsname{0.71}
\expandafter\def\csname result@math_grid_llama_oft_128_0p0001_forgetting_mean\endcsname{0.0366}
\expandafter\def\csname result@math_grid_llama_oft_128_0p0001_forgetting_sd\endcsname{0.0011}
\expandafter\def\csname result@math_grid_llama_oft_128_0p0001_test_mean\endcsname{71.75}
\expandafter\def\csname result@math_grid_llama_oft_128_0p0001_test_sd\endcsname{1.18}
\expandafter\def\csname result@math_grid_llama_pissa_7_0p0001_params\endcsname{18.35}
\expandafter\def\csname result@math_grid_llama_pissa_7_0p0001_dev_mean\endcsname{70.57}
\expandafter\def\csname result@math_grid_llama_pissa_7_0p0001_dev_sd\endcsname{0.21}
\expandafter\def\csname result@math_grid_llama_pissa_7_0p0001_forgetting_mean\endcsname{0.1084}
\expandafter\def\csname result@math_grid_llama_pissa_7_0p0001_forgetting_sd\endcsname{0.0025}
\expandafter\def\csname result@math_grid_llama_pissa_7_0p0001_test_mean\endcsname{70.33}
\expandafter\def\csname result@math_grid_llama_pissa_7_0p0001_test_sd\endcsname{1.49}
\expandafter\def\csname result@math_grid_llama_pissa_15_0p0001_params\endcsname{39.32}
\expandafter\def\csname result@math_grid_llama_pissa_15_0p0001_dev_mean\endcsname{70.87}
\expandafter\def\csname result@math_grid_llama_pissa_15_0p0001_dev_sd\endcsname{0.61}
\expandafter\def\csname result@math_grid_llama_pissa_15_0p0001_forgetting_mean\endcsname{0.1521}
\expandafter\def\csname result@math_grid_llama_pissa_15_0p0001_forgetting_sd\endcsname{0.0030}
\expandafter\def\csname result@math_grid_llama_pissa_15_0p0001_test_mean\endcsname{69.98}
\expandafter\def\csname result@math_grid_llama_pissa_15_0p0001_test_sd\endcsname{0.27}
\expandafter\def\csname result@math_grid_llama_pissa_30_3em05_params\endcsname{78.64}
\expandafter\def\csname result@math_grid_llama_pissa_30_3em05_dev_mean\endcsname{71.13}
\expandafter\def\csname result@math_grid_llama_pissa_30_3em05_dev_sd\endcsname{0.68}
\expandafter\def\csname result@math_grid_llama_pissa_30_3em05_forgetting_mean\endcsname{0.0650}
\expandafter\def\csname result@math_grid_llama_pissa_30_3em05_forgetting_sd\endcsname{0.0014}
\expandafter\def\csname result@math_grid_llama_pissa_30_3em05_test_mean\endcsname{70.26}
\expandafter\def\csname result@math_grid_llama_pissa_30_3em05_test_sd\endcsname{0.98}
\expandafter\def\csname result@math_grid_qwen_dora_7_0p0003_params\endcsname{19.05}
\expandafter\def\csname result@math_grid_qwen_dora_7_0p0003_dev_mean\endcsname{92.23}
\expandafter\def\csname result@math_grid_qwen_dora_7_0p0003_dev_sd\endcsname{0.40}
\expandafter\def\csname result@math_grid_qwen_dora_7_0p0003_forgetting_mean\endcsname{0.0055}
\expandafter\def\csname result@math_grid_qwen_dora_7_0p0003_forgetting_sd\endcsname{0.0007}
\expandafter\def\csname result@math_grid_qwen_dora_7_0p0003_test_mean\endcsname{83.27}
\expandafter\def\csname result@math_grid_qwen_dora_7_0p0003_test_sd\endcsname{0.04}
\expandafter\def\csname result@math_grid_qwen_dora_14_0p0003_params\endcsname{36.71}
\expandafter\def\csname result@math_grid_qwen_dora_14_0p0003_dev_mean\endcsname{92.77}
\expandafter\def\csname result@math_grid_qwen_dora_14_0p0003_dev_sd\endcsname{0.58}
\expandafter\def\csname result@math_grid_qwen_dora_14_0p0003_forgetting_mean\endcsname{0.0054}
\expandafter\def\csname result@math_grid_qwen_dora_14_0p0003_forgetting_sd\endcsname{0.0005}
\expandafter\def\csname result@math_grid_qwen_dora_14_0p0003_test_mean\endcsname{82.64}
\expandafter\def\csname result@math_grid_qwen_dora_14_0p0003_test_sd\endcsname{0.59}
\expandafter\def\csname result@math_grid_qwen_dora_28_0p0001_params\endcsname{72.04}
\expandafter\def\csname result@math_grid_qwen_dora_28_0p0001_dev_mean\endcsname{92.63}
\expandafter\def\csname result@math_grid_qwen_dora_28_0p0001_dev_sd\endcsname{0.06}
\expandafter\def\csname result@math_grid_qwen_dora_28_0p0001_forgetting_mean\endcsname{0.0048}
\expandafter\def\csname result@math_grid_qwen_dora_28_0p0001_forgetting_sd\endcsname{0.0006}
\expandafter\def\csname result@math_grid_qwen_dora_28_0p0001_test_mean\endcsname{83.12}
\expandafter\def\csname result@math_grid_qwen_dora_28_0p0001_test_sd\endcsname{0.53}
\expandafter\def\csname result@math_grid_qwen_hra_16_0p0001_params\endcsname{18.12}
\expandafter\def\csname result@math_grid_qwen_hra_16_0p0001_dev_mean\endcsname{91.97}
\expandafter\def\csname result@math_grid_qwen_hra_16_0p0001_dev_sd\endcsname{0.31}
\expandafter\def\csname result@math_grid_qwen_hra_16_0p0001_forgetting_mean\endcsname{0.0054}
\expandafter\def\csname result@math_grid_qwen_hra_16_0p0001_forgetting_sd\endcsname{0.0007}
\expandafter\def\csname result@math_grid_qwen_hra_16_0p0001_test_mean\endcsname{83.60}
\expandafter\def\csname result@math_grid_qwen_hra_16_0p0001_test_sd\endcsname{0.12}
\expandafter\def\csname result@math_grid_qwen_hra_32_0p0001_params\endcsname{36.24}
\expandafter\def\csname result@math_grid_qwen_hra_32_0p0001_dev_mean\endcsname{92.07}
\expandafter\def\csname result@math_grid_qwen_hra_32_0p0001_dev_sd\endcsname{0.32}
\expandafter\def\csname result@math_grid_qwen_hra_32_0p0001_forgetting_mean\endcsname{0.0073}
\expandafter\def\csname result@math_grid_qwen_hra_32_0p0001_forgetting_sd\endcsname{0.0001}
\expandafter\def\csname result@math_grid_qwen_hra_32_0p0001_test_mean\endcsname{83.65}
\expandafter\def\csname result@math_grid_qwen_hra_32_0p0001_test_sd\endcsname{0.49}
\expandafter\def\csname result@math_grid_qwen_hra_64_1em05_params\endcsname{72.48}
\expandafter\def\csname result@math_grid_qwen_hra_64_1em05_dev_mean\endcsname{92.00}
\expandafter\def\csname result@math_grid_qwen_hra_64_1em05_dev_sd\endcsname{0.26}
\expandafter\def\csname result@math_grid_qwen_hra_64_1em05_forgetting_mean\endcsname{0.0038}
\expandafter\def\csname result@math_grid_qwen_hra_64_1em05_forgetting_sd\endcsname{0.0002}
\expandafter\def\csname result@math_grid_qwen_hra_64_1em05_test_mean\endcsname{83.22}
\expandafter\def\csname result@math_grid_qwen_hra_64_1em05_test_sd\endcsname{0.16}
\expandafter\def\csname result@math_grid_qwen_lora_7_0p001_params\endcsname{17.66}
\expandafter\def\csname result@math_grid_qwen_lora_7_0p001_dev_mean\endcsname{90.73}
\expandafter\def\csname result@math_grid_qwen_lora_7_0p001_dev_sd\endcsname{0.35}
\expandafter\def\csname result@math_grid_qwen_lora_7_0p001_forgetting_mean\endcsname{0.0237}
\expandafter\def\csname result@math_grid_qwen_lora_7_0p001_forgetting_sd\endcsname{0.0012}
\expandafter\def\csname result@math_grid_qwen_lora_7_0p001_test_mean\endcsname{83.07}
\expandafter\def\csname result@math_grid_qwen_lora_7_0p001_test_sd\endcsname{0.54}
\expandafter\def\csname result@math_grid_qwen_lora_14_1em05_params\endcsname{35.32}
\expandafter\def\csname result@math_grid_qwen_lora_14_1em05_dev_mean\endcsname{91.40}
\expandafter\def\csname result@math_grid_qwen_lora_14_1em05_dev_sd\endcsname{0.44}
\expandafter\def\csname result@math_grid_qwen_lora_14_1em05_forgetting_mean\endcsname{0.0020}
\expandafter\def\csname result@math_grid_qwen_lora_14_1em05_forgetting_sd\endcsname{0.0001}
\expandafter\def\csname result@math_grid_qwen_lora_14_1em05_test_mean\endcsname{83.22}
\expandafter\def\csname result@math_grid_qwen_lora_14_1em05_test_sd\endcsname{0.16}
\expandafter\def\csname result@math_grid_qwen_lora_28_3em05_params\endcsname{70.65}
\expandafter\def\csname result@math_grid_qwen_lora_28_3em05_dev_mean\endcsname{92.30}
\expandafter\def\csname result@math_grid_qwen_lora_28_3em05_dev_sd\endcsname{0.46}
\expandafter\def\csname result@math_grid_qwen_lora_28_3em05_forgetting_mean\endcsname{0.0038}
\expandafter\def\csname result@math_grid_qwen_lora_28_3em05_forgetting_sd\endcsname{0.0002}
\expandafter\def\csname result@math_grid_qwen_lora_28_3em05_test_mean\endcsname{83.65}
\expandafter\def\csname result@math_grid_qwen_lora_28_3em05_test_sd\endcsname{0.42}
\expandafter\def\csname result@math_grid_qwen_milora_7_1em05_params\endcsname{17.66}
\expandafter\def\csname result@math_grid_qwen_milora_7_1em05_dev_mean\endcsname{92.60}
\expandafter\def\csname result@math_grid_qwen_milora_7_1em05_dev_sd\endcsname{0.17}
\expandafter\def\csname result@math_grid_qwen_milora_7_1em05_forgetting_mean\endcsname{0.0045}
\expandafter\def\csname result@math_grid_qwen_milora_7_1em05_forgetting_sd\endcsname{0.0002}
\expandafter\def\csname result@math_grid_qwen_milora_7_1em05_test_mean\endcsname{83.35}
\expandafter\def\csname result@math_grid_qwen_milora_7_1em05_test_sd\endcsname{0.24}
\expandafter\def\csname result@math_grid_qwen_milora_14_1em05_params\endcsname{35.32}
\expandafter\def\csname result@math_grid_qwen_milora_14_1em05_dev_mean\endcsname{92.60}
\expandafter\def\csname result@math_grid_qwen_milora_14_1em05_dev_sd\endcsname{0.10}
\expandafter\def\csname result@math_grid_qwen_milora_14_1em05_forgetting_mean\endcsname{0.0050}
\expandafter\def\csname result@math_grid_qwen_milora_14_1em05_forgetting_sd\endcsname{0.0003}
\expandafter\def\csname result@math_grid_qwen_milora_14_1em05_test_mean\endcsname{83.67}
\expandafter\def\csname result@math_grid_qwen_milora_14_1em05_test_sd\endcsname{0.46}
\expandafter\def\csname result@math_grid_qwen_milora_28_3em05_params\endcsname{70.65}
\expandafter\def\csname result@math_grid_qwen_milora_28_3em05_dev_mean\endcsname{92.60}
\expandafter\def\csname result@math_grid_qwen_milora_28_3em05_dev_sd\endcsname{0.30}
\expandafter\def\csname result@math_grid_qwen_milora_28_3em05_forgetting_mean\endcsname{0.0083}
\expandafter\def\csname result@math_grid_qwen_milora_28_3em05_forgetting_sd\endcsname{0.0010}
\expandafter\def\csname result@math_grid_qwen_milora_28_3em05_test_mean\endcsname{83.73}
\expandafter\def\csname result@math_grid_qwen_milora_28_3em05_test_sd\endcsname{0.12}
\expandafter\def\csname result@math_grid_qwen_oft_32_0p0003_params\endcsname{17.55}
\expandafter\def\csname result@math_grid_qwen_oft_32_0p0003_dev_mean\endcsname{91.37}
\expandafter\def\csname result@math_grid_qwen_oft_32_0p0003_dev_sd\endcsname{0.25}
\expandafter\def\csname result@math_grid_qwen_oft_32_0p0003_forgetting_mean\endcsname{0.0105}
\expandafter\def\csname result@math_grid_qwen_oft_32_0p0003_forgetting_sd\endcsname{0.0009}
\expandafter\def\csname result@math_grid_qwen_oft_32_0p0003_test_mean\endcsname{83.35}
\expandafter\def\csname result@math_grid_qwen_oft_32_0p0003_test_sd\endcsname{0.36}
\expandafter\def\csname result@math_grid_qwen_oft_64_0p0003_params\endcsname{35.68}
\expandafter\def\csname result@math_grid_qwen_oft_64_0p0003_dev_mean\endcsname{89.97}
\expandafter\def\csname result@math_grid_qwen_oft_64_0p0003_dev_sd\endcsname{0.38}
\expandafter\def\csname result@math_grid_qwen_oft_64_0p0003_forgetting_mean\endcsname{0.0182}
\expandafter\def\csname result@math_grid_qwen_oft_64_0p0003_forgetting_sd\endcsname{0.0008}
\expandafter\def\csname result@math_grid_qwen_oft_64_0p0003_test_mean\endcsname{82.92}
\expandafter\def\csname result@math_grid_qwen_oft_64_0p0003_test_sd\endcsname{0.46}
\expandafter\def\csname result@math_grid_qwen_oft_128_0p0001_params\endcsname{71.92}
\expandafter\def\csname result@math_grid_qwen_oft_128_0p0001_dev_mean\endcsname{90.90}
\expandafter\def\csname result@math_grid_qwen_oft_128_0p0001_dev_sd\endcsname{0.69}
\expandafter\def\csname result@math_grid_qwen_oft_128_0p0001_forgetting_mean\endcsname{0.0088}
\expandafter\def\csname result@math_grid_qwen_oft_128_0p0001_forgetting_sd\endcsname{0.0007}
\expandafter\def\csname result@math_grid_qwen_oft_128_0p0001_test_mean\endcsname{83.50}
\expandafter\def\csname result@math_grid_qwen_oft_128_0p0001_test_sd\endcsname{0.27}
\expandafter\def\csname result@math_grid_qwen_pissa_7_3em05_params\endcsname{17.66}
\expandafter\def\csname result@math_grid_qwen_pissa_7_3em05_dev_mean\endcsname{92.53}
\expandafter\def\csname result@math_grid_qwen_pissa_7_3em05_dev_sd\endcsname{0.32}
\expandafter\def\csname result@math_grid_qwen_pissa_7_3em05_forgetting_mean\endcsname{0.0108}
\expandafter\def\csname result@math_grid_qwen_pissa_7_3em05_forgetting_sd\endcsname{0.0009}
\expandafter\def\csname result@math_grid_qwen_pissa_7_3em05_test_mean\endcsname{83.85}
\expandafter\def\csname result@math_grid_qwen_pissa_7_3em05_test_sd\endcsname{0.47}
\expandafter\def\csname result@math_grid_qwen_pissa_14_3em05_params\endcsname{35.32}
\expandafter\def\csname result@math_grid_qwen_pissa_14_3em05_dev_mean\endcsname{92.40}
\expandafter\def\csname result@math_grid_qwen_pissa_14_3em05_dev_sd\endcsname{0.20}
\expandafter\def\csname result@math_grid_qwen_pissa_14_3em05_forgetting_mean\endcsname{0.0110}
\expandafter\def\csname result@math_grid_qwen_pissa_14_3em05_forgetting_sd\endcsname{0.0012}
\expandafter\def\csname result@math_grid_qwen_pissa_14_3em05_test_mean\endcsname{83.67}
\expandafter\def\csname result@math_grid_qwen_pissa_14_3em05_test_sd\endcsname{0.23}
\expandafter\def\csname result@math_grid_qwen_pissa_28_3em05_params\endcsname{70.65}
\expandafter\def\csname result@math_grid_qwen_pissa_28_3em05_dev_mean\endcsname{91.73}
\expandafter\def\csname result@math_grid_qwen_pissa_28_3em05_dev_sd\endcsname{0.83}
\expandafter\def\csname result@math_grid_qwen_pissa_28_3em05_forgetting_mean\endcsname{0.0108}
\expandafter\def\csname result@math_grid_qwen_pissa_28_3em05_forgetting_sd\endcsname{0.0010}
\expandafter\def\csname result@math_grid_qwen_pissa_28_3em05_test_mean\endcsname{83.17}
\expandafter\def\csname result@math_grid_qwen_pissa_28_3em05_test_sd\endcsname{0.20}
\expandafter\def\csname result@math_extension_llama_hra_16_0p003_dev_mean\endcsname{70.47}
\expandafter\def\csname result@math_extension_llama_hra_16_0p003_dev_sd\endcsname{0.42}
\expandafter\def\csname result@math_extension_llama_hra_16_0p003_forgetting_mean\endcsname{0.0553}
\expandafter\def\csname result@math_extension_llama_hra_16_0p003_forgetting_sd\endcsname{0.0028}
\expandafter\def\csname result@math_extension_llama_hra_16_0p003_test_mean\endcsname{70.15}
\expandafter\def\csname result@math_extension_llama_hra_16_0p003_test_sd\endcsname{0.27}
\expandafter\def\csname result@math_extension_llama_hra_32_0p003_dev_mean\endcsname{70.87}
\expandafter\def\csname result@math_extension_llama_hra_32_0p003_dev_sd\endcsname{0.90}
\expandafter\def\csname result@math_extension_llama_hra_32_0p003_forgetting_mean\endcsname{0.1132}
\expandafter\def\csname result@math_extension_llama_hra_32_0p003_forgetting_sd\endcsname{0.0060}
\expandafter\def\csname result@math_extension_llama_hra_32_0p003_test_mean\endcsname{71.37}
\expandafter\def\csname result@math_extension_llama_hra_32_0p003_test_sd\endcsname{1.20}
\expandafter\def\csname result@obj28_lora_4_1em04_test_mean\endcsname{72.95}
\expandafter\def\csname result@obj28_lora_4_1em04_test_sd\endcsname{0.42}
\expandafter\def\csname result@obj28_lora_4_1em04_flow_mean\endcsname{0.8877}
\expandafter\def\csname result@obj28_lora_4_1em04_flow_sd\endcsname{0.0035}
\expandafter\def\csname result@obj28_lora_4_1em04_drift_mean\endcsname{12.11}
\expandafter\def\csname result@obj28_lora_4_1em04_drift_sd\endcsname{0.86}
\expandafter\def\csname result@obj28_lora_4_1em04_clip_mean\endcsname{25.63}
\expandafter\def\csname result@obj28_lora_4_1em04_clip_sd\endcsname{0.52}
\expandafter\def\csname result@obj28_lora_8_5em05_test_mean\endcsname{73.94}
\expandafter\def\csname result@obj28_lora_8_5em05_test_sd\endcsname{0.85}
\expandafter\def\csname result@obj28_lora_8_5em05_flow_mean\endcsname{0.8739}
\expandafter\def\csname result@obj28_lora_8_5em05_flow_sd\endcsname{0.0015}
\expandafter\def\csname result@obj28_lora_8_5em05_drift_mean\endcsname{11.38}
\expandafter\def\csname result@obj28_lora_8_5em05_drift_sd\endcsname{0.60}
\expandafter\def\csname result@obj28_lora_8_5em05_clip_mean\endcsname{25.74}
\expandafter\def\csname result@obj28_lora_8_5em05_clip_sd\endcsname{0.24}
\expandafter\def\csname result@obj28_lora_16_3em05_test_mean\endcsname{71.88}
\expandafter\def\csname result@obj28_lora_16_3em05_test_sd\endcsname{1.42}
\expandafter\def\csname result@obj28_lora_16_3em05_flow_mean\endcsname{0.8687}
\expandafter\def\csname result@obj28_lora_16_3em05_flow_sd\endcsname{0.0017}
\expandafter\def\csname result@obj28_lora_16_3em05_drift_mean\endcsname{10.67}
\expandafter\def\csname result@obj28_lora_16_3em05_drift_sd\endcsname{0.62}
\expandafter\def\csname result@obj28_lora_16_3em05_clip_mean\endcsname{25.98}
\expandafter\def\csname result@obj28_lora_16_3em05_clip_sd\endcsname{0.43}
\expandafter\def\csname result@obj28_lora_32_1em04_test_mean\endcsname{72.84}
\expandafter\def\csname result@obj28_lora_32_1em04_test_sd\endcsname{1.18}
\expandafter\def\csname result@obj28_lora_32_1em04_flow_mean\endcsname{1.0164}
\expandafter\def\csname result@obj28_lora_32_1em04_flow_sd\endcsname{0.0058}
\expandafter\def\csname result@obj28_lora_32_1em04_drift_mean\endcsname{16.10}
\expandafter\def\csname result@obj28_lora_32_1em04_drift_sd\endcsname{1.16}
\expandafter\def\csname result@obj28_lora_32_1em04_clip_mean\endcsname{24.62}
\expandafter\def\csname result@obj28_lora_32_1em04_clip_sd\endcsname{0.93}
\expandafter\def\csname result@obj28_oft_32_1em04_test_mean\endcsname{73.44}
\expandafter\def\csname result@obj28_oft_32_1em04_test_sd\endcsname{2.27}
\expandafter\def\csname result@obj28_oft_32_1em04_flow_mean\endcsname{0.8588}
\expandafter\def\csname result@obj28_oft_32_1em04_flow_sd\endcsname{0.0009}
\expandafter\def\csname result@obj28_oft_32_1em04_drift_mean\endcsname{17.29}
\expandafter\def\csname result@obj28_oft_32_1em04_drift_sd\endcsname{0.81}
\expandafter\def\csname result@obj28_oft_32_1em04_clip_mean\endcsname{23.99}
\expandafter\def\csname result@obj28_oft_32_1em04_clip_sd\endcsname{0.05}
\expandafter\def\csname result@obj28_oft_64_5em05_test_mean\endcsname{73.60}
\expandafter\def\csname result@obj28_oft_64_5em05_test_sd\endcsname{1.95}
\expandafter\def\csname result@obj28_oft_64_5em05_flow_mean\endcsname{0.8596}
\expandafter\def\csname result@obj28_oft_64_5em05_flow_sd\endcsname{0.0005}
\expandafter\def\csname result@obj28_oft_64_5em05_drift_mean\endcsname{19.21}
\expandafter\def\csname result@obj28_oft_64_5em05_drift_sd\endcsname{0.52}
\expandafter\def\csname result@obj28_oft_64_5em05_clip_mean\endcsname{23.41}
\expandafter\def\csname result@obj28_oft_64_5em05_clip_sd\endcsname{0.72}
\expandafter\def\csname result@obj28_oft_128_1em04_test_mean\endcsname{73.39}
\expandafter\def\csname result@obj28_oft_128_1em04_test_sd\endcsname{1.76}
\expandafter\def\csname result@obj28_oft_128_1em04_flow_mean\endcsname{0.9692}
\expandafter\def\csname result@obj28_oft_128_1em04_flow_sd\endcsname{0.0030}
\expandafter\def\csname result@obj28_oft_128_1em04_drift_mean\endcsname{43.59}
\expandafter\def\csname result@obj28_oft_128_1em04_drift_sd\endcsname{0.88}
\expandafter\def\csname result@obj28_oft_128_1em04_clip_mean\endcsname{12.24}
\expandafter\def\csname result@obj28_oft_128_1em04_clip_sd\endcsname{0.49}
\expandafter\def\csname result@obj28_oft_256_5em05_test_mean\endcsname{74.13}
\expandafter\def\csname result@obj28_oft_256_5em05_test_sd\endcsname{1.36}
\expandafter\def\csname result@obj28_oft_256_5em05_flow_mean\endcsname{0.9886}
\expandafter\def\csname result@obj28_oft_256_5em05_flow_sd\endcsname{0.0038}
\expandafter\def\csname result@obj28_oft_256_5em05_drift_mean\endcsname{44.91}
\expandafter\def\csname result@obj28_oft_256_5em05_drift_sd\endcsname{0.10}
\expandafter\def\csname result@obj28_oft_256_5em05_clip_mean\endcsname{11.74}
\expandafter\def\csname result@obj28_oft_256_5em05_clip_sd\endcsname{0.41}
\expandafter\def\csname result@spectrum_qwen_r7_ratio\endcsname{20.53}
\expandafter\def\csname result@spectrum_qwen_r7_dS_percent\endcsname{1.67}
\expandafter\def\csname result@rev_capacity_llama_dora_7_valid\endcsname{3}
\expandafter\def\csname result@rev_capacity_llama_dora_7_top\endcsname{1}
\expandafter\def\csname result@rev_capacity_llama_dora_7_complement\endcsname{0}
\expandafter\def\csname result@rev_capacity_llama_dora_15_valid\endcsname{3}
\expandafter\def\csname result@rev_capacity_llama_dora_15_top\endcsname{0}
\expandafter\def\csname result@rev_capacity_llama_dora_15_complement\endcsname{0}
\expandafter\def\csname result@rev_capacity_llama_dora_30_valid\endcsname{3}
\expandafter\def\csname result@rev_capacity_llama_dora_30_top\endcsname{0}
\expandafter\def\csname result@rev_capacity_llama_dora_30_complement\endcsname{0}
\expandafter\def\csname result@rev_capacity_llama_hra_16_valid\endcsname{3}
\expandafter\def\csname result@rev_capacity_llama_hra_16_top\endcsname{1}
\expandafter\def\csname result@rev_capacity_llama_hra_16_complement\endcsname{0}
\expandafter\def\csname result@rev_capacity_llama_hra_32_valid\endcsname{3}
\expandafter\def\csname result@rev_capacity_llama_hra_32_top\endcsname{2}
\expandafter\def\csname result@rev_capacity_llama_hra_32_complement\endcsname{0}
\expandafter\def\csname result@rev_capacity_llama_hra_64_valid\endcsname{3}
\expandafter\def\csname result@rev_capacity_llama_hra_64_top\endcsname{2}
\expandafter\def\csname result@rev_capacity_llama_hra_64_complement\endcsname{0}
\expandafter\def\csname result@rev_capacity_llama_lora_7_valid\endcsname{3}
\expandafter\def\csname result@rev_capacity_llama_lora_7_top\endcsname{2}
\expandafter\def\csname result@rev_capacity_llama_lora_7_complement\endcsname{0}
\expandafter\def\csname result@rev_capacity_llama_lora_15_valid\endcsname{3}
\expandafter\def\csname result@rev_capacity_llama_lora_15_top\endcsname{1}
\expandafter\def\csname result@rev_capacity_llama_lora_15_complement\endcsname{0}
\expandafter\def\csname result@rev_capacity_llama_lora_30_valid\endcsname{3}
\expandafter\def\csname result@rev_capacity_llama_lora_30_top\endcsname{3}
\expandafter\def\csname result@rev_capacity_llama_lora_30_complement\endcsname{0}
\expandafter\def\csname result@rev_capacity_llama_milora_7_valid\endcsname{3}
\expandafter\def\csname result@rev_capacity_llama_milora_7_top\endcsname{0}
\expandafter\def\csname result@rev_capacity_llama_milora_7_complement\endcsname{0}
\expandafter\def\csname result@rev_capacity_llama_milora_15_valid\endcsname{3}
\expandafter\def\csname result@rev_capacity_llama_milora_15_top\endcsname{0}
\expandafter\def\csname result@rev_capacity_llama_milora_15_complement\endcsname{0}
\expandafter\def\csname result@rev_capacity_llama_milora_30_valid\endcsname{3}
\expandafter\def\csname result@rev_capacity_llama_milora_30_top\endcsname{0}
\expandafter\def\csname result@rev_capacity_llama_milora_30_complement\endcsname{0}
\expandafter\def\csname result@rev_capacity_llama_oft_32_valid\endcsname{3}
\expandafter\def\csname result@rev_capacity_llama_oft_32_top\endcsname{1}
\expandafter\def\csname result@rev_capacity_llama_oft_32_complement\endcsname{0}
\expandafter\def\csname result@rev_capacity_llama_oft_64_valid\endcsname{3}
\expandafter\def\csname result@rev_capacity_llama_oft_64_top\endcsname{0}
\expandafter\def\csname result@rev_capacity_llama_oft_64_complement\endcsname{0}
\expandafter\def\csname result@rev_capacity_llama_oft_128_valid\endcsname{3}
\expandafter\def\csname result@rev_capacity_llama_oft_128_top\endcsname{2}
\expandafter\def\csname result@rev_capacity_llama_oft_128_complement\endcsname{0}
\expandafter\def\csname result@rev_capacity_llama_pissa_7_valid\endcsname{3}
\expandafter\def\csname result@rev_capacity_llama_pissa_7_top\endcsname{3}
\expandafter\def\csname result@rev_capacity_llama_pissa_7_complement\endcsname{0}
\expandafter\def\csname result@rev_capacity_llama_pissa_15_valid\endcsname{3}
\expandafter\def\csname result@rev_capacity_llama_pissa_15_top\endcsname{3}
\expandafter\def\csname result@rev_capacity_llama_pissa_15_complement\endcsname{0}
\expandafter\def\csname result@rev_capacity_llama_pissa_30_valid\endcsname{3}
\expandafter\def\csname result@rev_capacity_llama_pissa_30_top\endcsname{3}
\expandafter\def\csname result@rev_capacity_llama_pissa_30_complement\endcsname{0}
\expandafter\def\csname result@rev_capacity_qwen_dora_7_valid\endcsname{3}
\expandafter\def\csname result@rev_capacity_qwen_dora_7_top\endcsname{3}
\expandafter\def\csname result@rev_capacity_qwen_dora_7_complement\endcsname{3}
\expandafter\def\csname result@rev_capacity_qwen_dora_14_valid\endcsname{3}
\expandafter\def\csname result@rev_capacity_qwen_dora_14_top\endcsname{3}
\expandafter\def\csname result@rev_capacity_qwen_dora_14_complement\endcsname{1}
\expandafter\def\csname result@rev_capacity_qwen_dora_28_valid\endcsname{3}
\expandafter\def\csname result@rev_capacity_qwen_dora_28_top\endcsname{3}
\expandafter\def\csname result@rev_capacity_qwen_dora_28_complement\endcsname{3}
\expandafter\def\csname result@rev_capacity_qwen_hra_16_valid\endcsname{3}
\expandafter\def\csname result@rev_capacity_qwen_hra_16_top\endcsname{0}
\expandafter\def\csname result@rev_capacity_qwen_hra_16_complement\endcsname{0}
\expandafter\def\csname result@rev_capacity_qwen_hra_32_valid\endcsname{3}
\expandafter\def\csname result@rev_capacity_qwen_hra_32_top\endcsname{2}
\expandafter\def\csname result@rev_capacity_qwen_hra_32_complement\endcsname{0}
\expandafter\def\csname result@rev_capacity_qwen_hra_64_valid\endcsname{3}
\expandafter\def\csname result@rev_capacity_qwen_hra_64_top\endcsname{0}
\expandafter\def\csname result@rev_capacity_qwen_hra_64_complement\endcsname{0}
\expandafter\def\csname result@rev_capacity_qwen_lora_7_valid\endcsname{3}
\expandafter\def\csname result@rev_capacity_qwen_lora_7_top\endcsname{3}
\expandafter\def\csname result@rev_capacity_qwen_lora_7_complement\endcsname{0}
\expandafter\def\csname result@rev_capacity_qwen_lora_14_valid\endcsname{3}
\expandafter\def\csname result@rev_capacity_qwen_lora_14_top\endcsname{0}
\expandafter\def\csname result@rev_capacity_qwen_lora_14_complement\endcsname{0}
\expandafter\def\csname result@rev_capacity_qwen_lora_28_valid\endcsname{3}
\expandafter\def\csname result@rev_capacity_qwen_lora_28_top\endcsname{3}
\expandafter\def\csname result@rev_capacity_qwen_lora_28_complement\endcsname{3}
\expandafter\def\csname result@rev_capacity_qwen_milora_7_valid\endcsname{3}
\expandafter\def\csname result@rev_capacity_qwen_milora_7_top\endcsname{0}
\expandafter\def\csname result@rev_capacity_qwen_milora_7_complement\endcsname{0}
\expandafter\def\csname result@rev_capacity_qwen_milora_14_valid\endcsname{3}
\expandafter\def\csname result@rev_capacity_qwen_milora_14_top\endcsname{0}
\expandafter\def\csname result@rev_capacity_qwen_milora_14_complement\endcsname{3}
\expandafter\def\csname result@rev_capacity_qwen_milora_28_valid\endcsname{3}
\expandafter\def\csname result@rev_capacity_qwen_milora_28_top\endcsname{0}
\expandafter\def\csname result@rev_capacity_qwen_milora_28_complement\endcsname{3}
\expandafter\def\csname result@rev_capacity_qwen_oft_32_valid\endcsname{3}
\expandafter\def\csname result@rev_capacity_qwen_oft_32_top\endcsname{3}
\expandafter\def\csname result@rev_capacity_qwen_oft_32_complement\endcsname{3}
\expandafter\def\csname result@rev_capacity_qwen_oft_64_valid\endcsname{3}
\expandafter\def\csname result@rev_capacity_qwen_oft_64_top\endcsname{3}
\expandafter\def\csname result@rev_capacity_qwen_oft_64_complement\endcsname{3}
\expandafter\def\csname result@rev_capacity_qwen_oft_128_valid\endcsname{3}
\expandafter\def\csname result@rev_capacity_qwen_oft_128_top\endcsname{3}
\expandafter\def\csname result@rev_capacity_qwen_oft_128_complement\endcsname{3}
\expandafter\def\csname result@rev_capacity_qwen_pissa_7_valid\endcsname{3}
\expandafter\def\csname result@rev_capacity_qwen_pissa_7_top\endcsname{3}
\expandafter\def\csname result@rev_capacity_qwen_pissa_7_complement\endcsname{0}
\expandafter\def\csname result@rev_capacity_qwen_pissa_14_valid\endcsname{3}
\expandafter\def\csname result@rev_capacity_qwen_pissa_14_top\endcsname{3}
\expandafter\def\csname result@rev_capacity_qwen_pissa_14_complement\endcsname{0}
\expandafter\def\csname result@rev_capacity_qwen_pissa_28_valid\endcsname{3}
\expandafter\def\csname result@rev_capacity_qwen_pissa_28_top\endcsname{3}
\expandafter\def\csname result@rev_capacity_qwen_pissa_28_complement\endcsname{0}
\expandafter\def\csname result@rev_qwen_milora_7_restore_accuracy_mean\endcsname{92.77}
\expandafter\def\csname result@rev_qwen_milora_7_restore_accuracy_sd\endcsname{0.32}
\expandafter\def\csname result@rev_qwen_milora_7_restore_nll_vs_reconstruct_mean\endcsname{0.00000}
\expandafter\def\csname result@rev_qwen_milora_7_restore_nll_vs_reconstruct_sd\endcsname{0.00012}
\expandafter\def\csname result@rev_qwen_milora_14_restore_accuracy_mean\endcsname{92.57}
\expandafter\def\csname result@rev_qwen_milora_14_restore_accuracy_sd\endcsname{0.29}
\expandafter\def\csname result@rev_qwen_milora_14_restore_nll_vs_reconstruct_mean\endcsname{0.00006}
\expandafter\def\csname result@rev_qwen_milora_14_restore_nll_vs_reconstruct_sd\endcsname{0.00021}

\expandafter\def\csname result@spectrum_qwen_r7_ratio\endcsname{20.53}
\expandafter\def\csname result@spectrum_qwen_r7_dS_percent\endcsname{1.67}

\expandafter\def\csname result@he_llama_pissa_forgetting\endcsname{0.0646}
\expandafter\def\csname result@he_llama_lora_forgetting\endcsname{0.0487}
\expandafter\def\csname result@he_max_checkpoint_percent\endcsname{0.00980}

\begin{document}
\renewcommand{\floatpagefraction}{.85}
\maketitle
\begin{abstract}
Parameter-efficient fine-tuning (PEFT) offers many parameterizations, yet their methodological and functional differences remain unclear. We compare six methods in language and diffusion models to examine how their parameterizations relate to task performance, forgetting, and changes in pretrained weight geometry. Motivated by the spectrum-preserving design of orthogonal fine-tuning (OFT), we first ask whether spectral preservation is itself important for adaptation and retention. We find that the selected LoRA-family methods also approximately preserve pretrained geometry, and that restoring their slightly drifted singular-value spectra largely preserves task performance, questioning the necessity of explicit geometric preservation. Beyond this, we observe that some methods exhibit distinct adaptation--retention trade-offs that vary across settings: LoRA most consistently limits forgetting at competitive performance, DoRA achieves higher mean task scores than LoRA in most comparisons, while PiSSA often incurs greater retention costs. Further intervention experiments suggest that while performance gains from different PEFT methods can be attributed to modifications in different groups of spectral components, we consistently find that restoring dominant rather than intermediate or trailing components produces the largest mean reduction in general-text NLL or base-image drift. Together, these results motivate evaluating geometric constraints through their functional consequences rather than preservation alone.
Code is available at \url{https://github.com/Kuaaannn/PEFT_methods}.
\end{abstract}
\section{Introduction}

Parameter-efficient fine-tuning (PEFT) \citep{houlsby2019adapters,li2021prefix,hu2022lora}
has become a standard approach for adapting large pretrained models.
Over the past several years, a large number of PEFT algorithms have been proposed,
often by modifying a small number of recurring parameterizations.

One major paradigm is the LoRA family, which adapts pretrained weights through
\emph{additive} low-rank updates \citep{hu2022lora}.
Subsequent variants modify the optimization, scaling, initialization, or
magnitude--direction decomposition of these updates
\citep{hayou2024loraplus,kalajdzievski2023rslora,
wang2024loraga,meng2024pissa,liu2024dora}.
A second paradigm is the OFT family, which adapts weights
\emph{multiplicatively} through structured orthogonal transformations and their
variants \citep{qiu2023oft,qiu2025oftv2,qiu2025poet,yuan2024hra}.
These methods are motivated by seemingly different principles, and many are
presented as principled improvements over vanilla LoRA.
Yet it remains unclear which of these differences survive careful tuning and,
more importantly, which differences matter functionally. This raises a basic question:

\begin{center}
\textbf{\emph{When do typical PEFT methods really differ, and which differences matter?}}
\end{center}

\begin{figure}[t]
\centering
\includegraphics[width=\linewidth]{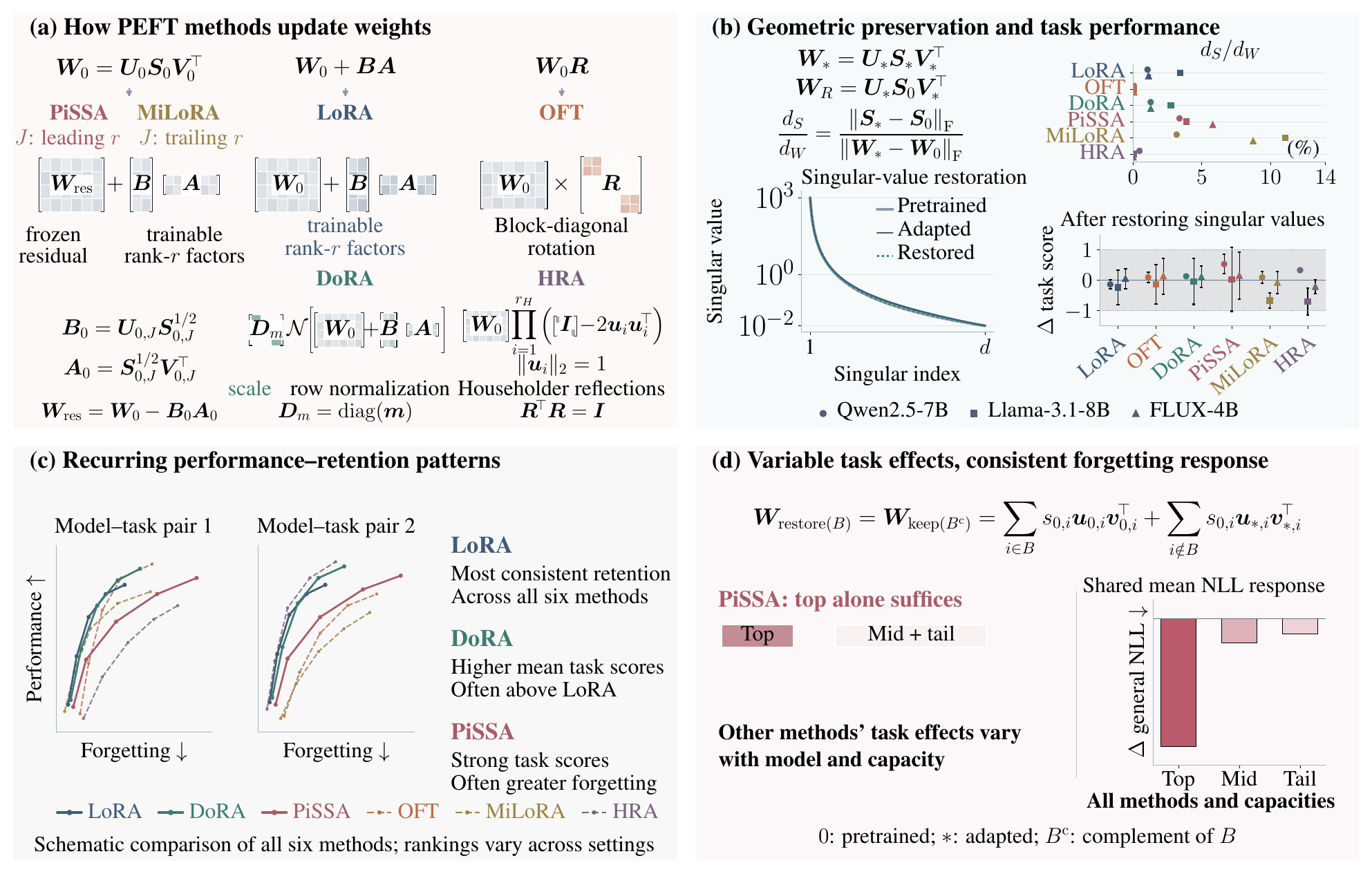}
\caption{
Overview of the questions studied in this work.
(a) Six representative PEFT parameterizations.
(b) Changes in pretrained singular spectra and the singular-value restoration
intervention.
(c) Performance--retention comparisons across PEFT methods.
(d) Spectral interventions that selectively retain or restore dominant,
intermediate, and trailing components.
Singular-value decay curves and bar heights are schematic.
}
\label{fig:overview}
\end{figure}

Recent work has begun to question whether apparent methodological improvements
reflect genuinely different operating behavior.
For the LoRA family, \citet{schulman2025lora} identify rescaling invariances
linking initialization scale, update scaling, and factor learning rates, and
show that vanilla LoRA can be highly competitive when properly configured.
\citet{lee2026learningrate} likewise find similar peak performance between
LoRA and its variants after method-specific learning-rate tuning, yet 46 of
64 surveyed studies report a fixed learning rate.
Hugging Face PEFT's mathematical-reasoning and image-generation benchmarks
\citep[v0.21.0]{mangrulkar2022peft} also report a single training seed and,
for most methods, only one learning-rate setting.
Comparison settings also differ across studies. On GSM8K and MATH, PiSSA's
main comparison adapts all linear layers across three 7B backbones for one
epoch \citep{meng2024pissa}, whereas HRA \citep{yuan2024hra} adapts only
LLaMA2-7B's query and value projections for two epochs.
These observations motivate common experimental settings, method-specific
tuning and multiple training seeds to separate configuration effects from
methodological gains.

Performance alone, however, may not reveal the full distinction between methods.
Two adapters can achieve similar downstream scores while differing substantially
in how much pretrained behavior they disrupt.
This issue is particularly relevant to orthogonal adaptation.
OFT explicitly preserves hyperspherical geometry
\citep{qiu2023oft}, while HRA connects orthogonal transformations and
low-rank adaptation through Householder reflections \citep{yuan2024hra}.
Such geometric constraints are motivated in part by improved generalization
and preservation of pretrained knowledge
\citep{qiu2023oft,yuan2024hra,huang2026peftarena}.
But two questions remain unresolved:
do unconstrained LoRA-family methods actually depart substantially from these
geometries in practice, and does tighter geometric preservation translate into
better adaptation or retention?

\subsection{Contributions and findings}

We address these questions by combining controlled empirical comparisons with
direct interventions on pretrained weight geometry.
We also review experimental controls in published PEFT comparisons to guide our study design (\Cref{sec:peft-comparisons}).
We study six representative methods, including LoRA, DoRA, PiSSA, MiLoRA, OFT, and
HRA, across language-model adaptation and diffusion personalization.
Rather than comparing parameterizations only through final task scores, we
examine three levels of behavior:
their operating performance after method-specific tuning,
their retention of pretrained capabilities, and the functional consequences of
the geometric changes they induce.
As summarized in \Cref{fig:overview}, this analysis reveals a recurring
mismatch:
\emph{parameterization labels can overstate geometric differences, whereas
task scores can understate functional differences.}
Our study leads to three main findings.

\noindent {\bf Explicit geometric preservation is not what primarily separates PEFT methods.}
We first characterize adaptation relative to the two-sided orthogonal orbit of
the pretrained weights.
For small isotropically oriented additive updates, the fraction of weight
displacement appearing as singular-value displacement scales as
$1/\sqrt{\max(m,n)}$, showing that limited spectral drift can arise even
without an explicit orthogonality constraint.
Consistent with this prediction, the LoRA-family methods we study exhibit only
modest spectral drift and very small changes in hyperspherical energy.
More importantly, restoring pretrained singular values while retaining the
adapted singular bases largely preserves task performance.
Thus, explicit geometric preservation is neither unique to successful PEFT nor,
in our tested settings, necessary for retaining its task gains.

\noindent {\bf Similar task performance can conceal more persistent differences in retention.}
After method-specific learning-rate tuning, the six methods often achieve
competitive downstream performance, but their retention behavior differs more
systematically.
Across the tested settings, LoRA most consistently limits forgetting while
maintaining competitive performance.
DoRA achieves higher mean task scores than LoRA in most comparisons, typically
with some additional forgetting, whereas PiSSA often incurs larger retention
costs.
Crucially, tighter preservation of pretrained geometry does not reliably
predict better retention.
The more persistent distinctions between methods therefore appear in their
functional consequences, rather than simply in the geometric invariants imposed
by their parameterizations.

\noindent {\bf Adaptation is spectrally heterogeneous, whereas forgetting is spectrally structured.}
We next intervene on dominant, intermediate, and trailing spectral components,
ordered by pretrained singular value.
For task adaptation, there is no universal spectral location of useful change:
PiSSA consistently concentrates task-relevant changes in the dominant
components, whereas for the other methods the sufficient components vary with
the model, capacity, and task.
For forgetting, however, a markedly more consistent pattern emerges:
restoring the dominant components produces the largest mean reduction in
general-text NLL or base-image drift across all six methods and tested settings.
This ordering persists even in cases where the dominant-component edit is
smaller in norm than alternative edits.
These results suggest that downstream adaptation can be realized through
multiple spectral routes, while forgetting is more consistently coupled to
changes in dominant pretrained components.

Taken together, our results suggest that PEFT methods are better distinguished
by their operating regimes and the functional consequences of learned weight
geometry than by parameterization labels or explicit geometric preservation alone.

\subsection{Related work}
\paragraph{Adaptation, retention and parameterization.}
Within the LoRA family, several methods use singular directions of pretrained weights or gradients to initialize adaptation \citep{meng2024pissa,wang2024loraga,zhang2025loraone,ponkshe2024lorasb}. 
Optimization and scaling studies explain how parameterization affects adaptation \citep{hayou2024loraplus,kalajdzievski2023rslora,wang2024loraga}; \citet{lee2026learningrate} link preferred learning rates to loss curvature. \citet{biderman2024loraforgetting} compare LoRA's adaptation and forgetting with full fine-tuning. PEFT-Arena compares PEFT methods using a shared SFT learning rate and one run per configuration, leaving method-specific tuning and seed uncertainty unresolved \citep{huang2026peftarena}. ETHER questions the practical necessity of hyperspherical-energy preservation \citep{bini2024ether}. We examine preservation alongside tuned retention comparisons and spectral interventions.

\paragraph{Spectral structure and functional interventions.}
\citet{shuttleworth2024illusion} identify intruder singular directions after LoRA adaptation. \citet{he2025posttrainingstructure} study spectrum and vector replacement after full fine-tuning. \citet{jin2026rlheals} connect forgetting and recovery during full-parameter SFT/RL to singular-vector changes. CASA studies spectral preservation in diffusion-LoRA transfer \citep{wang2026casa}, while SVFT learns combinations of pretrained singular-vector pairs \citep{lingam2024svft}. Our contribution is to connect tuning-controlled comparisons with complementary keep/reset interventions, distinguishing recurring retention responses from setting-dependent contributions to task performance.

\section{Experimental comparisons in PEFT}\label{sec:peft-comparisons}

\citet{lee2026learningrate} survey whether vanilla LoRA was evaluated across multiple hyperparameter settings.
We extend this assessment to uncertainty across training runs, validation-based learning-rate selection, disclosure of numerical search spaces and comparable experimental conditions, considering both the proposed method and its principal baseline.

We also review method papers associated with Hugging Face PEFT v0.21.0 \citep{mangrulkar2022peft}, using sources available by September 2026.
The inventory comprises 71 supported entries, 70 identified papers and 64 papers with eligible comparisons between trained methods.
Appendix~\ref{app:peft-review} details the criteria, review procedure and limitations.

Most papers do not clearly establish key experimental controls (\Cref{fig:peft-controls}).
Although shared checkpoints and data are documented in at least one main comparison in 63 and 62 of the 64 papers, respectively, only 25 establish all applicable comparison conditions together, including data exposure, adapted modules and trainable budgets.
Evidence for repeated training and learning-rate selection is even scarcer: only nine document repeated training of both methods with uncertainty attributable to those runs, and seven establish separate validation-based selection from at least three learning rates per method.
Most remaining cases are unclear in the reviewed sources.

\begin{figure}[t]
\centering
\includegraphics[width=\linewidth]{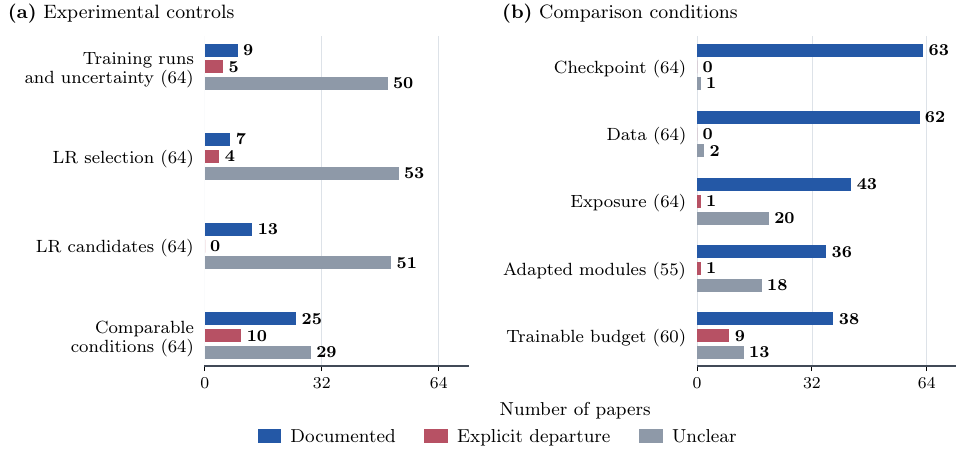}
\captionsetup{labelsep=period}
\caption{Experimental controls documented in PEFT method papers.
(a) Four controls in the 64 eligible papers.
(b) Components of comparable conditions, excluding inapplicable cases.
Each criterion counts a paper once. Denominators appear in parentheses. Bar lengths and labels show paper counts.
A control is documented when one eligible main comparison meets all applicable requirements.
An explicit departure requires evidence that every eligible comparison fails the definition.
Unclear covers missing or conflicting evidence and unresolved comparison coverage.
Component positives may come from different comparisons.
Definitions and coverage summaries appear in Appendix~\ref{app:peft-review}.}
\label{fig:peft-controls}
\end{figure}

These gaps matter because experimental conditions shape what a comparison can establish.
For example, gains with fewer trainable parameters can support an efficiency claim even when budgets differ.
Comparisons should therefore specify which conditions are comparable, select settings separately for each method and report uncertainty attributable to training runs.

For modest adaptation datasets, independent supervised PEFT runs can be much cheaper than pretraining or rollout-intensive post-training such as reinforcement learning with verifiable rewards (RLVR).
Method-specific learning-rate selection over an exponentially spaced grid and replication across training seeds should therefore be standard practice, with costs that are often manageable.
In RL, higher training costs often limit replication, but DataFlex-RL illustrates why stronger controls matter: its primary comparison of 13 configurations over 12 matched seeds under a common training recipe finds no statistically reliable gains from the tested data policies over their respective baselines \citep{liang2026dataflexrlevaluationplatformrlvr}.
These findings strengthen the case for controlled comparisons in PEFT, where replication is more affordable.
Accordingly, our study combines common experimental settings, method-specific tuning and multiple training seeds to examine task performance and retention.

\FloatBarrier
\section{Methods and study design}\label{sec:protocol}

We consider six representative PEFT algorithms: LoRA \citep{hu2022lora}, DoRA \citep{liu2024dora}, PiSSA \citep{meng2024pissa}, MiLoRA \citep{wang2025milora}, OFT \citep{qiu2023oft}, and HRA \citep{yuan2024hra}.

\paragraph{Parameterizations.}
The six methods span low-rank addition, magnitude--direction decomposition, spectral initialization and orthogonal transformations.
For pretrained $\bm W_0\in\mathbb R^{m\times n}$, \Cref{tab:parameterizations} summarizes the six methods. The low-rank factors are $\bm B\in\mathbb R^{m\times r}$ and $\bm A\in\mathbb R^{r\times n}$, with scale $s=\alpha/r$. PiSSA and MiLoRA freeze $\bm W_{\mathrm{res}}=\bm W_0-s\bm B_0\bm A_0$, preserving the starting model. For DoRA, $\mathcal N$ normalizes each row and $\bm m$ contains learned row magnitudes.
HRA uses $\bm u_i$ with $\|\bm u_i\|_2=1$.
OFT uses Cayley blocks $\bm R_j=(\bm I+\bm Q_j)(\bm I-\bm Q_j)^{-1}$ with skew-symmetric $\bm Q_j$.
Our experiments use the OFTv2 implementation, which approximates the Cayley inverse with a truncated Neumann series \citep{qiu2025oftv2}. The resulting transform is only approximately orthogonal. Appendix~\ref{app:native-orthogonality} quantifies this deviation, and all functional evaluations use the native implementation.

\begin{table}[!htbp]
\caption{Selected PEFT methods and their parameterizations.}
\label{tab:parameterizations}
\begin{center}
\begin{tabular}{@{}p{.12\linewidth}p{.39\linewidth}p{.43\linewidth}@{}}
\toprule
Method & Adapted matrix $\bm W_*$ & Initialization or structure \\
\midrule
\textcolor{LoRAColor}{LoRA} & $\bm W_0+s\bm B\bm A$ & Zero initial update \\
\textcolor{DoRAColor}{DoRA} & $\operatorname{diag}(\bm m)\mathcal N(\bm W_0+s\bm B\bm A)$ & Learned magnitudes and directions \\
\textcolor{PiSSAColor}{PiSSA} & $\bm W_{\mathrm{res}}+s\bm B\bm A$ & Leading singular components \\
\textcolor{MiLoRAColor}{MiLoRA} & $\bm W_{\mathrm{res}}+s\bm B\bm A$ & Trailing singular components \\
\textcolor{OFTColor}{OFT} & $\bm W_0\bm R$ & $\bm R=\operatorname{blkdiag}(\bm R_j)$ \\
\textcolor{HRAColor}{HRA} & $\bm W_0\bm R$ & $\bm R=\prod_{i=1}^{r_H}(\bm I-2\bm u_i\bm u_i^{\!\top})$ \\
\bottomrule
\end{tabular}
\end{center}
\end{table}

\paragraph{Experiment setup.}
We evaluate mathematics, coding and diffusion personalization, reporting means and sample SDs over three training seeds unless otherwise specified. Qwen2.5-7B \citep{qwen2024model} and Llama-3.1-8B \citep{dubey2024llama3} each train on 20,000 examples per task from MetaMath \citep{yu2024metamath} for mathematics and CodeFeedback \citep{zheng2024opencodeinterpreter} for coding. Methods share data exposure and adapted modules, with approximately matched parameter budgets. We select LRs by mean GSM8K \citep{cobbe2021gsm8k} development accuracy for mathematics or held-out response NLL for coding, then report held-out GSM8K accuracy and greedy HumanEval+/MBPP+ pass@1 \citep{liu2023evalplus}. FineWiki \citep{penedo2025finewiki} $\Delta\mathrm{NLL}=\mathrm{NLL}_*-\mathrm{NLL}_0$ measures forgetting, with lower values indicating better retention. For FLUX.2-klein-base-4B personalization \citep{bfl2026klein}, we compare all six methods on the benchmark cat at four approximately matched budgets and extend the LoRA/OFT comparison to 28 objects \citep{ruiz2023dreambooth}. Validation DINO \citep{oquab2023dinov2} selects LRs, while held-out test DINO measures fidelity. Base-image drift, CLIP alignment \citep{radford2021clip} and COCO flow loss \citep{lin2014coco} assess complementary retention outcomes. Spectral interventions use the same development/validation selection criteria across all six methods on mathematics and diffusion and jointly edit all adapted matrices. Each Pareto point represents one capacity/LR configuration.

\FloatBarrier
\section{Geometric preservation and functional consequences}\label{sec:mechanism}\label{sec:restoration}

We examine how PEFT methods change pretrained weight geometry and whether these changes are necessary for adaptation. We first explain singular-value and singular-basis changes theoretically, then measure spectral and hyperspherical energy preservation in checkpoints. Finally, we restore pretrained singular values to test whether adapted singular bases alone retain task gains.

\paragraph{Theoretical perspective.}
For $\bm W_0,\bm W_*\in\mathbb R^{m\times n}$, let $\bm W_j=\bm U_j\operatorname{diag}(\bm s_j)\bm V_j^{\!\top}$ denote thin SVDs with decreasing singular values, and let $O(k)$ denote the orthogonal group.
Relative weight and spectral displacement are defined by
$d_W={\|\bm W_*-\bm W_0\|_{\mathrm{F}}}/{\|\bm W_0\|_{\mathrm{F}}}$
and
$d_S={\|\bm s_*-\bm s_0\|_2}/{\|\bm s_0\|_2}$.

\begin{proposition}[Distance to the orthogonal orbit]\label{prop:orthogonal-distance}
The restored matrix $\bm W_R=\bm U_*\operatorname{diag}(\bm s_0)\bm V_*^{\!\top}$ satisfies
\begin{equation}\label{eq:orthogonal-distance}
\min_{\bm Q_L\in O(m),\,\bm Q_R\in O(n)}
\|\bm W_*-\bm Q_L\bm W_0\bm Q_R^{\!\top}\|_{\mathrm F}
=\|\bm W_*-\bm W_R\|_{\mathrm F}=\|\bm s_*-\bm s_0\|_2,
\end{equation}
and attains this minimum \citep{mirsky1975trace}. For nonzero base and update norms, $d_S/d_W$ is the distance to this orbit relative to the update norm.
\end{proposition}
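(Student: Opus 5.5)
The plan is to reduce the minimization over the orbit to a trace maximization and then invoke von Neumann's trace inequality, in the form given by \citet{mirsky1975trace}. For any $\bm Q_L\in O(m)$ and $\bm Q_R\in O(n)$, orthogonal invariance of the Frobenius norm gives $\|\bm Q_L\bm W_0\bm Q_R^{\!\top}\|_{\mathrm F}=\|\bm W_0\|_{\mathrm F}=\|\bm s_0\|_2$. Expanding the square then yields
\begin{equation*}
\|\bm W_*-\bm Q_L\bm W_0\bm Q_R^{\!\top}\|_{\mathrm F}^2=\|\bm s_*\|_2^2+\|\bm s_0\|_2^2-2\operatorname{tr}\bigl(\bm W_*^{\!\top}\bm Q_L\bm W_0\bm Q_R^{\!\top}\bigr).
\end{equation*}
Minimizing the distance over the orbit is therefore equivalent to maximizing the trace term. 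Every matrix in the orbit has singular values exactly $\bm s_0$, so von Neumann's inequality $\operatorname{tr}(\bm X^{\!\top}\bm Y)\le\sum_i\sigma_i(\bm X)\sigma_i(\bm Y)$ bounds the trace by $\bm s_*^{\!\top}\bm s_0$. Both vectors are sorted decreasingly and padded with zeros to length $\min(m,n)$. This gives the lower bound $\|\bm s_*\|_2^2+\|\bm s_0\|_2^2-2\bm s_*^{\!\top}\bm s_0=\|\bm s_*-\bm s_0\|_2^2$.

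Attainment comes next, and it is where the only real care is needed. The thin SVD factors $\bm U_j$ and $\bm V_j$ are not square, so I cannot plug them in directly as $\bm Q_L,\bm Q_R$. I will first extend the SVDs to full ones, $\bm W_j=\tilde{\bm U}_j\bm\Sigma_j\tilde{\bm V}_j^{\!\top}$ with $\tilde{\bm U}_j\in O(m)$, $\tilde{\bm V}_j\in O(n)$, and rectangular diagonal $\bm\Sigma_j$ holding $\bm s_j$. Setting $\bm Q_L=\tilde{\bm U}_*\tilde{\bm U}_0^{\!\top}$ and $\bm Q_R=\tilde{\bm V}_*\tilde{\bm V}_0^{\!\top}$ gives
\begin{equation*}
\bm Q_L\bm W_0\bm Q_R^{\!\top}=\tilde{\bm U}_*\bm\Sigma_0\tilde{\bm V}_*^{\!\top}=\bm U_*\operatorname{diag}(\bm s_0)\bm V_*^{\!\top}=\bm W_R.
\end{equation*}
The middle equality holds because only the first $\min(m,n)$ columns of each factor meet nonzero diagonal entries. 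This shows $\bm W_R$ lies in the orbit.

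It remains to check that $\bm W_R$ achieves the bound. Directly, $\bm W_*-\bm W_R=\bm U_*\operatorname{diag}(\bm s_*-\bm s_0)\bm V_*^{\!\top}$, and since $\bm U_*$ and $\bm V_*$ have orthonormal columns, $\|\bm W_*-\bm W_R\|_{\mathrm F}=\|\bm s_*-\bm s_0\|_2$. Thus the lower bound is attained, the minimum equals $\|\bm s_*-\bm s_0\|_2$, and $\bm W_R$ is a minimizer. This argument works even when $\bm W_*$ has repeated or zero singular values, because any valid choice of SVD factors works.

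Finally, I divide by $\|\bm W_*-\bm W_0\|_{\mathrm F}$, which is assumed nonzero, along with $\|\bm W_0\|_{\mathrm F}=\|\bm s_0\|_2$. The two normalizations cancel, giving
\begin{equation*}
\frac{d_S}{d_W}=\frac{\|\bm s_*-\bm s_0\|_2}{\|\bm W_*-\bm W_0\|_{\mathrm F}}.
\end{equation*}
By the first part, this is the orbit distance relative to the update norm.

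The main obstacle is the von Neumann/Mirsky step itself. I will cite it rather than reprove it; if a self-contained argument were needed, I would use the standard route of reducing to doubly substochastic matrices and applying the rearrangement inequality. The other point needing attention is the rectangular case $m\ne n$, which the full-SVD extension handles.
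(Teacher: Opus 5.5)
Your proposal is correct and follows the paper's proof essentially step for step. The von Neumann--Mirsky trace inequality gives the lower bound $\|\bm s_*-\bm s_0\|_2$; the full-SVD choice $\bm Q_L=\tilde{\bm U}_*\tilde{\bm U}_0^{\!\top}$, $\bm Q_R=\tilde{\bm V}_*\tilde{\bm V}_0^{\!\top}$ places $\bm W_R$ in the orbit and attains the bound; and cancelling $\|\bm W_0\|_{\mathrm F}=\|\bm s_0\|_2$ gives the $d_S/d_W$ interpretation. The one cosmetic difference is that you compute $\|\bm W_*-\bm W_R\|_{\mathrm F}$ from the thin factors rather than as $\|\bm\Sigma_*-\bm\Sigma_0\|_{\mathrm F}$, which is equivalent.
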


A small ratio means that changing the singular bases while keeping pretrained singular values closely approximates the adapted weights, with error measured relative to the update norm.

\begin{proposition}[First-order spectral and basis motion]\label{prop:first-order-motion}
Let $q=\min(m,n)$ and suppose $\bm W_0$ has $q$ distinct positive singular values. For $\bm W_*(\epsilon)=\bm W_0+\epsilon\bm E$, set $\dot{\bm s}=\left.\mathrm d\bm s(\epsilon)/\mathrm d\epsilon\right|_{\epsilon=0}$. Then
\begin{equation}\label{eq:tangent-normal}
\dot s_i=\bm u_{0,i}^{\!\top}\bm E\bm v_{0,i},\qquad
\bm E_N=\bm U_0\operatorname{diag}(\dot{\bm s})\bm V_0^{\!\top},\qquad
\bm E_T=\bm E-\bm E_N
\end{equation}
are the spectral derivative \citep{stewart1990}, the normal component and the tangent component to the two-sided orthogonal orbit, respectively. In particular, $\bm E_T$ changes only singular bases to first order, and $\|\bm E\|_{\mathrm F}^2=\|\dot{\bm s}\|_2^2+\|\bm E_T\|_{\mathrm F}^2$. If $\|\bm E\|_{\mathrm F}=1$ and $\mathbb E[\operatorname{vec}(\bm E)\operatorname{vec}(\bm E)^{\!\top}]=\bm I_{mn}/(mn)$, then
\begin{equation}\label{eq:relative-spectral-scale}
\mathbb E\|\dot{\bm s}\|_2^2=\frac{1}{\max(m,n)},\qquad
\lim_{\epsilon\to0}\sqrt{\mathbb E\left[\left(\frac{d_S}{d_W}\right)^2\right]}=\frac{1}{\sqrt{\max(m,n)}}.
\end{equation}
\end{proposition}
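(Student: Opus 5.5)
The plan is to work entirely in the pretrained singular frame and use standard first-order perturbation theory for simple singular values. Extend $\bm U_0,\bm V_0$ to full orthogonal matrices $\tilde{\bm U}_0\in O(m)$, $\tilde{\bm V}_0\in O(n)$ and write $\tilde{\bm E}=\tilde{\bm U}_0^{\!\top}\bm E\tilde{\bm V}_0$. Since the $q$ singular values are distinct and positive, each $s_i(\epsilon)$ is analytic near $\epsilon=0$ and has analytic singular vectors $\bm u_i(\epsilon),\bm v_i(\epsilon)$. Differentiating $s_i=\bm u_i^{\!\top}\bm W\bm v_i$ and using $\bm u_i^{\!\top}\dot{\bm u}_i=0=\bm v_i^{\!\top}\dot{\bm v}_i$ together with $\bm W_0\bm v_{0,i}=s_{0,i}\bm u_{0,i}$ and $\bm W_0^{\!\top}\bm u_{0,i}=s_{0,i}\bm v_{0,i}$ gives $\dot s_i=\bm u_{0,i}^{\!\top}\bm E\bm v_{0,i}=\tilde E_{ii}$, as in \citet{stewart1990}.

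Next I will identify the tangent space of the orbit $\{\bm Q_L\bm W_0\bm Q_R^{\!\top}\}$ at $\bm W_0$. It is $T=\{\bm A\bm W_0-\bm W_0\bm B:\bm A,\bm B\ \text{skew}\}$. In the rotated frame, $\bm W_0$ becomes $\bm\Sigma_0$, the $m\times n$ matrix with diagonal $\bm s_0$, and tangent vectors take the form $\tilde{\bm A}\bm\Sigma_0-\bm\Sigma_0\tilde{\bm B}$. Their diagonal entries are $\tilde A_{ii}s_i-s_i\tilde B_{ii}=0$, so $T$ is contained in the off-diagonal matrices. For the reverse inclusion, the $2\times2$ block on indices $(i,j)$ with $i,j\le q$ has entries $\tilde A_{ij}s_j-s_i\tilde B_{ij}$ and $-\tilde A_{ij}s_i+s_j\tilde B_{ij}$. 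Its determinant in $(\tilde A_{ij},\tilde B_{ij})$ is $s_j^2-s_i^2\neq0$, so every pair of off-diagonal entries is reachable. Entries in rows $i>q$ (or columns $j>q$) are reached through $\tilde A_{ij}s_j$ (or $\tilde B$) because $s_j>0$. Hence $T$ equals the off-diagonal matrices and its orthogonal complement $N$ equals the diagonal ones. Consequently $\bm E_N=\bm U_0\operatorname{diag}(\dot{\bm s})\bm V_0^{\!\top}$ is exactly the orthogonal projection of $\bm E$ onto $N$, $\bm E_T$ is the projection onto $T$, and Pythagoras gives $\|\bm E\|_{\mathrm F}^2=\|\dot{\bm s}\|_2^2+\|\bm E_T\|_{\mathrm F}^2$. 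Since $\dot s_i$ depends only on diagonal entries, $\bm E_T$ leaves the singular values unchanged to first order. Equivalently, $\bm E_T$ is the velocity of a curve $e^{\epsilon\bm A}\bm W_0e^{-\epsilon\bm B}$ in the orbit and therefore moves only the bases.

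For the expectation, $\|\dot{\bm s}\|_2^2=\sum_{i\le q}\tilde E_{ii}^2$. The isotropic second-moment assumption is invariant under the orthogonal map $\bm E\mapsto\tilde{\bm U}_0^{\!\top}\bm E\tilde{\bm V}_0$, which is a Kronecker orthogonal transform on $\operatorname{vec}$. So $\mathbb E\tilde E_{ii}^2=1/(mn)$, and the sum over $q$ indices gives $q/(mn)=1/\max(m,n)$. For the limit, $d_W=\epsilon\|\bm E\|_{\mathrm F}/\|\bm W_0\|_{\mathrm F}=\epsilon/\|\bm s_0\|_2$. Also $\|\bm s(\epsilon)-\bm s_0\|_2=\epsilon\|\dot{\bm s}\|_2+O(\epsilon^2)$, with $d_S$ normalized by the same $\|\bm s_0\|_2=\|\bm W_0\|_{\mathrm F}$. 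Hence $d_S/d_W\to\|\dot{\bm s}\|_2$, and taking the root of the expected square gives $1/\sqrt{\max(m,n)}$.

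The main obstacle is justifying that limit exchange. The $O(\epsilon^2)$ remainder depends on $\bm E$, but $\bm E$ ranges over the compact unit sphere, and the singular values are analytic jointly on a neighborhood where they stay simple. So the remainder is uniformly $O(\epsilon^2)$ and the ratio $d_S/d_W$ is uniformly bounded, since singular values are $1$-Lipschitz by Mirsky, which gives $d_S\le d_W$. Bounded convergence then allows passing the limit through the expectation. A secondary care point is the rectangular surplus rows when characterizing $T$, which positivity of the $s_i$ handles.
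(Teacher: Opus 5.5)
Your proposal is correct and follows essentially the same route as the paper: the derivative $\dot s_i=\bm u_{0,i}^{\!\top}\bm E\bm v_{0,i}$, the identification of the orbit's tangent space in the pretrained singular frame with the off-diagonal matrices via the $2\times2$ system of determinant $s_{0,j}^2-s_{0,i}^2$ (with positivity covering the surplus rows or columns), Pythagoras, the isotropic computation $q/(mn)=1/\max(m,n)$, and an $O(\epsilon^2)$ remainder that is uniform over the unit sphere to pass to the limit. The differences are cosmetic: you obtain $\dot s_i$ by differentiating $s_i=\bm u_i^{\!\top}\bm W\bm v_i$ rather than by reading the diagonal of $\bm U_0^{\!\top}\bm E\bm V_0=\bm\Omega_U\bm\Sigma_0+\dot{\bm\Sigma}-\bm\Sigma_0\bm\Omega_V$, and your Mirsky bound $d_S\le d_W$ for bounded convergence is harmless but redundant once the remainder is uniform.
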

Small isotropic updates therefore have RMS $d_S/d_W\approx1/\sqrt{\max(m,n)}$: spectral displacement can be a small share of weight displacement without an orthogonality constraint. This conditional prediction also holds for uniformly oriented low-rank updates (proofs and pooling in Appendix~\ref{app:geometry}).

\paragraph{Empirical evaluation.}
At the largest capacities, spectral displacement is a modest fraction of weight displacement even for unconstrained methods (\Cref{fig:overview}b): about 11\% for Llama MiLoRA versus 3--4\% for LoRA, DoRA, and PiSSA, and 5.8\% and 8.7\% for FLUX PiSSA and MiLoRA, respectively.
We also measure hyperspherical energy (HE), whose preservation motivates OFT's approach to adaptation \citep{qiu2023oft}. For nonzero rows with distinct normalized directions, $H(\bm W)=\sum_{i\ne j}\|\widehat{\bm w}_i-\widehat{\bm w}_j\|_2^{-1}$, where $\widehat{\bm w}_i=\bm w_i/\|\bm w_i\|_2$. Right-orthogonal transformations preserve hyperspherical energy exactly, whereas preserving singular values alone does not. Nevertheless, all six methods approximately preserve HE: every checkpoint's mean absolute relative change across adapted matrices is below $0.01\%$ at the largest capacities (Appendix~\ref{app:hyperspherical}).
LoRA-family methods therefore approximately preserve both quantities without enforcing orthogonality. Their small spectral displacement further supports a two-sided orthogonal approximation of adapted weights (Proposition~\ref{prop:orthogonal-distance}). To assess the functional importance of singular-basis changes relative to singular-value scaling, we restore pretrained singular values while retaining adapted bases \citep{he2025posttrainingstructure,jin2026rlheals}.
Across LLM and FLUX checkpoints, all points at the largest capacities remain within one percentage point of reconstruction (\Cref{fig:overview}b). See Appendix~\ref{app:complete-interventions} for additional results at other capacities.
\begin{figure}[t]\centering
\includegraphics[width=\linewidth]{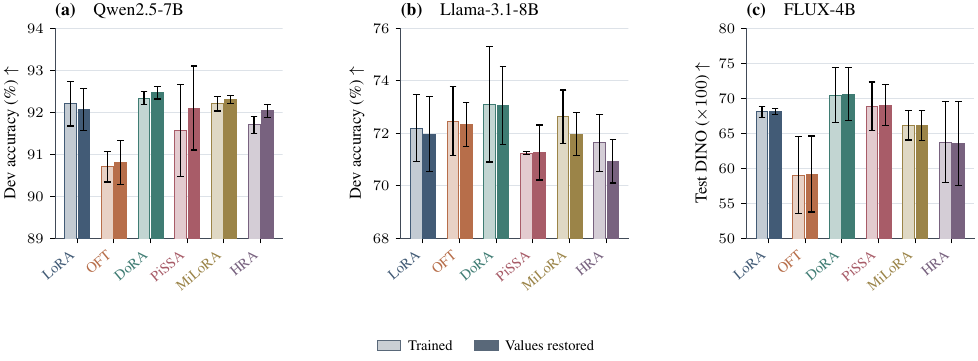}
\caption{Task performance largely survives singular-value restoration. At the largest capacities and development/validation-selected LRs, pale and saturated bars show trained and restored scores with sample SD error bars. LLM panels report development accuracy, and FLUX reports same-run test DINO for all six methods.}
\label{fig:restoration}
\end{figure}

\paragraph{Summary.}
In our settings, LoRA-family methods approximately preserve the spectra and hyperspherical energy that OFT is designed to preserve exactly. Singular-value restoration largely retains task performance. Tighter preservation must be judged by its task and retention outcomes.

\section{Competitive performance, recurring retention differences}\label{sec:comparisons}

In this section, we study task performance and forgetting across PEFT methods and ask whether closer geometric preservation leads to better adaptation--retention trade-offs. We also examine method-specific patterns across settings. 

\subsection{Language models: competitive performance, different retention}
We first examine whether comparable task performance in mathematics and coding is accompanied by comparable retention of general language modeling. We consider both development-selected test results and the mathematics performance--retention frontiers across learning rates and capacities.

\begin{table}[t]
\caption{Development-selected LLM test scores (\%) and FineWiki forgetting ($10^3\Delta\mathrm{NLL}$). Budgets are approximately matched: S/M/L for GSM8K; one for CodeFeedback. HE+ denotes HumanEval+. Base is one unadapted evaluation. Shading marks each column\textquotesingle s best adapted mean, including ties.}
\label{tab:math-test}\label{tab:coding-test}
\begin{center}
\textbf{(a) Mathematics: GSM8K test accuracy (\%, $\uparrow$)}\par\smallskip
\setlength{\tabcolsep}{5.9pt}\begin{tabular*}{\linewidth}{@{\extracolsep{\fill}}lcccccc}\toprule
& \multicolumn{3}{c}{Qwen2.5-7B} & \multicolumn{3}{c}{Llama-3.1-8B} \\ \cmidrule(lr){2-4}\cmidrule(l){5-7} Method & S & M & L & S & M & L \\ \midrule
\textcolor{LoRAColor}{LoRA} & $\result{math_grid_qwen_lora_7_0p001_test_mean}_{\pm\result{math_grid_qwen_lora_7_0p001_test_sd}}$ & $\result{math_grid_qwen_lora_14_1em05_test_mean}_{\pm\result{math_grid_qwen_lora_14_1em05_test_sd}}$ & $\result{math_grid_qwen_lora_28_3em05_test_mean}_{\pm\result{math_grid_qwen_lora_28_3em05_test_sd}}$ & $\result{math_grid_llama_lora_7_0p0003_test_mean}_{\pm\result{math_grid_llama_lora_7_0p0003_test_sd}}$ & \bestcell{LoRAColor}{$\bestmean{\result{math_grid_llama_lora_15_0p0003_test_mean}}_{\pm\result{math_grid_llama_lora_15_0p0003_test_sd}}$} & $\result{math_grid_llama_lora_30_0p0003_test_mean}_{\pm\result{math_grid_llama_lora_30_0p0003_test_sd}}$ \\
\textcolor{OFTColor}{OFT} & $\result{math_grid_qwen_oft_32_0p0003_test_mean}_{\pm\result{math_grid_qwen_oft_32_0p0003_test_sd}}$ & $\result{math_grid_qwen_oft_64_0p0003_test_mean}_{\pm\result{math_grid_qwen_oft_64_0p0003_test_sd}}$ & $\result{math_grid_qwen_oft_128_0p0001_test_mean}_{\pm\result{math_grid_qwen_oft_128_0p0001_test_sd}}$ & \bestcell{OFTColor}{$\bestmean{\result{math_grid_llama_oft_32_0p0003_test_mean}}_{\pm\result{math_grid_llama_oft_32_0p0003_test_sd}}$} & $\result{math_grid_llama_oft_64_0p0003_test_mean}_{\pm\result{math_grid_llama_oft_64_0p0003_test_sd}}$ & $\result{math_grid_llama_oft_128_0p0001_test_mean}_{\pm\result{math_grid_llama_oft_128_0p0001_test_sd}}$ \\
\textcolor{DoRAColor}{DoRA} & $\result{math_grid_qwen_dora_7_0p0003_test_mean}_{\pm\result{math_grid_qwen_dora_7_0p0003_test_sd}}$ & $\result{math_grid_qwen_dora_14_0p0003_test_mean}_{\pm\result{math_grid_qwen_dora_14_0p0003_test_sd}}$ & $\result{math_grid_qwen_dora_28_0p0001_test_mean}_{\pm\result{math_grid_qwen_dora_28_0p0001_test_sd}}$ & $\result{math_grid_llama_dora_7_0p0003_test_mean}_{\pm\result{math_grid_llama_dora_7_0p0003_test_sd}}$ & $\result{math_grid_llama_dora_15_0p0003_test_mean}_{\pm\result{math_grid_llama_dora_15_0p0003_test_sd}}$ & \bestcell{DoRAColor}{$\bestmean{\result{math_grid_llama_dora_30_0p0003_test_mean}}_{\pm\result{math_grid_llama_dora_30_0p0003_test_sd}}$} \\
\textcolor{PiSSAColor}{PiSSA} & \bestcell{PiSSAColor}{$\bestmean{\result{math_grid_qwen_pissa_7_3em05_test_mean}}_{\pm\result{math_grid_qwen_pissa_7_3em05_test_sd}}$} & \bestcell{PiSSAColor}{$\bestmean{\result{math_grid_qwen_pissa_14_3em05_test_mean}}_{\pm\result{math_grid_qwen_pissa_14_3em05_test_sd}}$} & $\result{math_grid_qwen_pissa_28_3em05_test_mean}_{\pm\result{math_grid_qwen_pissa_28_3em05_test_sd}}$ & $\result{math_grid_llama_pissa_7_0p0001_test_mean}_{\pm\result{math_grid_llama_pissa_7_0p0001_test_sd}}$ & $\result{math_grid_llama_pissa_15_0p0001_test_mean}_{\pm\result{math_grid_llama_pissa_15_0p0001_test_sd}}$ & $\result{math_grid_llama_pissa_30_3em05_test_mean}_{\pm\result{math_grid_llama_pissa_30_3em05_test_sd}}$ \\
\textcolor{MiLoRAColor}{MiLoRA} & $\result{math_grid_qwen_milora_7_1em05_test_mean}_{\pm\result{math_grid_qwen_milora_7_1em05_test_sd}}$ & \bestcell{MiLoRAColor}{$\bestmean{\result{math_grid_qwen_milora_14_1em05_test_mean}}_{\pm\result{math_grid_qwen_milora_14_1em05_test_sd}}$} & \bestcell{MiLoRAColor}{$\bestmean{\result{math_grid_qwen_milora_28_3em05_test_mean}}_{\pm\result{math_grid_qwen_milora_28_3em05_test_sd}}$} & $\result{math_grid_llama_milora_7_0p0003_test_mean}_{\pm\result{math_grid_llama_milora_7_0p0003_test_sd}}$ & $\result{math_grid_llama_milora_15_0p0003_test_mean}_{\pm\result{math_grid_llama_milora_15_0p0003_test_sd}}$ & $\result{math_grid_llama_milora_30_0p0003_test_mean}_{\pm\result{math_grid_llama_milora_30_0p0003_test_sd}}$ \\
\textcolor{HRAColor}{HRA} & $\result{math_grid_qwen_hra_16_0p0001_test_mean}_{\pm\result{math_grid_qwen_hra_16_0p0001_test_sd}}$ & $\result{math_grid_qwen_hra_32_0p0001_test_mean}_{\pm\result{math_grid_qwen_hra_32_0p0001_test_sd}}$ & $\result{math_grid_qwen_hra_64_1em05_test_mean}_{\pm\result{math_grid_qwen_hra_64_1em05_test_sd}}$ & $\result{math_grid_llama_hra_16_0p001_test_mean}_{\pm\result{math_grid_llama_hra_16_0p001_test_sd}}$ & $\result{math_grid_llama_hra_32_0p001_test_mean}_{\pm\result{math_grid_llama_hra_32_0p001_test_sd}}$ & $\result{math_grid_llama_hra_64_0p001_test_mean}_{\pm\result{math_grid_llama_hra_64_0p001_test_sd}}$ \\
\bottomrule\end{tabular*}
\par\medskip\textbf{(b) Coding: test pass@1 and general-text forgetting}\par\smallskip
\setlength{\tabcolsep}{3pt}\begin{tabular*}{\linewidth}{@{\extracolsep{\fill}}lcccccc}\toprule
& \multicolumn{3}{c}{Qwen2.5-7B} & \multicolumn{3}{c}{Llama-3.1-8B} \\
\cmidrule(lr){2-4}\cmidrule(l){5-7}
Method & HE+ $\uparrow$ & MBPP+ $\uparrow$ & $10^3\Delta$NLL $\downarrow$ & HE+ $\uparrow$ & MBPP+ $\uparrow$ & $10^3\Delta$NLL $\downarrow$ \\ \midrule
Base & $\result{ext_code_qwen_base_hep}$ & $\result{ext_code_qwen_base_mbp}$ & 0 & $\result{ext_code_llama_base_hep}$ & $\result{ext_code_llama_base_mbp}$ & 0 \\ \midrule
\textcolor{LoRAColor}{LoRA} & $\result{ext_code_qwen_lora_0p0001_hep_mean}_{\pm\result{ext_code_qwen_lora_0p0001_hep_sd}}$ & $\result{ext_code_qwen_lora_0p0001_mbp_mean}_{\pm\result{ext_code_qwen_lora_0p0001_mbp_sd}}$ & \bestcell{LoRAColor}{$\bestmean{\result{ext_code_qwen_lora_0p0001_nll_mean}}_{\pm\result{ext_code_qwen_lora_0p0001_nll_sd}}$} & $\result{ext_code_llama_lora_0p0003_hep_mean}_{\pm\result{ext_code_llama_lora_0p0003_hep_sd}}$ & $\result{ext_code_llama_lora_0p0003_mbp_mean}_{\pm\result{ext_code_llama_lora_0p0003_mbp_sd}}$ & \bestcell{LoRAColor}{$\bestmean{\result{ext_code_llama_lora_0p0003_nll_mean}}_{\pm\result{ext_code_llama_lora_0p0003_nll_sd}}$} \\
\textcolor{OFTColor}{OFT} & $\result{ext_code_qwen_oft_0p0001_hep_mean}_{\pm\result{ext_code_qwen_oft_0p0001_hep_sd}}$ & \bestcell{OFTColor}{$\bestmean{\result{ext_code_qwen_oft_0p0001_mbp_mean}}_{\pm\result{ext_code_qwen_oft_0p0001_mbp_sd}}$} & $\result{ext_code_qwen_oft_0p0001_nll_mean}_{\pm\result{ext_code_qwen_oft_0p0001_nll_sd}}$ & $\result{ext_code_llama_oft_0p0003_hep_mean}_{\pm\result{ext_code_llama_oft_0p0003_hep_sd}}$ & $\result{ext_code_llama_oft_0p0003_mbp_mean}_{\pm\result{ext_code_llama_oft_0p0003_mbp_sd}}$ & $\result{ext_code_llama_oft_0p0003_nll_mean}_{\pm\result{ext_code_llama_oft_0p0003_nll_sd}}$ \\
\textcolor{DoRAColor}{DoRA} & \bestcell{DoRAColor}{$\bestmean{\result{ext_code_qwen_dora_0p0001_hep_mean}}_{\pm\result{ext_code_qwen_dora_0p0001_hep_sd}}$} & $\result{ext_code_qwen_dora_0p0001_mbp_mean}_{\pm\result{ext_code_qwen_dora_0p0001_mbp_sd}}$ & $\result{ext_code_qwen_dora_0p0001_nll_mean}_{\pm\result{ext_code_qwen_dora_0p0001_nll_sd}}$ & $\result{ext_code_llama_dora_0p0003_hep_mean}_{\pm\result{ext_code_llama_dora_0p0003_hep_sd}}$ & \bestcell{DoRAColor}{$\bestmean{\result{ext_code_llama_dora_0p0003_mbp_mean}}_{\pm\result{ext_code_llama_dora_0p0003_mbp_sd}}$} & $\result{ext_code_llama_dora_0p0003_nll_mean}_{\pm\result{ext_code_llama_dora_0p0003_nll_sd}}$ \\
\textcolor{PiSSAColor}{PiSSA} & $\result{ext_code_qwen_pissa_1em05_hep_mean}_{\pm\result{ext_code_qwen_pissa_1em05_hep_sd}}$ & $\result{ext_code_qwen_pissa_1em05_mbp_mean}_{\pm\result{ext_code_qwen_pissa_1em05_mbp_sd}}$ & $\result{ext_code_qwen_pissa_1em05_nll_mean}_{\pm\result{ext_code_qwen_pissa_1em05_nll_sd}}$ & \bestcell{PiSSAColor}{$\bestmean{\result{ext_code_llama_pissa_0p0001_hep_mean}}_{\pm\result{ext_code_llama_pissa_0p0001_hep_sd}}$} & $\result{ext_code_llama_pissa_0p0001_mbp_mean}_{\pm\result{ext_code_llama_pissa_0p0001_mbp_sd}}$ & $\result{ext_code_llama_pissa_0p0001_nll_mean}_{\pm\result{ext_code_llama_pissa_0p0001_nll_sd}}$ \\
\textcolor{MiLoRAColor}{MiLoRA} & $\result{ext_code_qwen_milora_0p0001_hep_mean}_{\pm\result{ext_code_qwen_milora_0p0001_hep_sd}}$ & $\result{ext_code_qwen_milora_0p0001_mbp_mean}_{\pm\result{ext_code_qwen_milora_0p0001_mbp_sd}}$ & $\result{ext_code_qwen_milora_0p0001_nll_mean}_{\pm\result{ext_code_qwen_milora_0p0001_nll_sd}}$ & $\result{ext_code_llama_milora_0p0003_hep_mean}_{\pm\result{ext_code_llama_milora_0p0003_hep_sd}}$ & $\result{ext_code_llama_milora_0p0003_mbp_mean}_{\pm\result{ext_code_llama_milora_0p0003_mbp_sd}}$ & $\result{ext_code_llama_milora_0p0003_nll_mean}_{\pm\result{ext_code_llama_milora_0p0003_nll_sd}}$ \\
\textcolor{HRAColor}{HRA} & $\result{ext_code_qwen_hra_0p001_hep_mean}_{\pm\result{ext_code_qwen_hra_0p001_hep_sd}}$ & $\result{ext_code_qwen_hra_0p001_mbp_mean}_{\pm\result{ext_code_qwen_hra_0p001_mbp_sd}}$ & $\result{ext_code_qwen_hra_0p001_nll_mean}_{\pm\result{ext_code_qwen_hra_0p001_nll_sd}}$ & $\result{ext_code_llama_hra_0p001_hep_mean}_{\pm\result{ext_code_llama_hra_0p001_hep_sd}}$ & $\result{ext_code_llama_hra_0p001_mbp_mean}_{\pm\result{ext_code_llama_hra_0p001_mbp_sd}}$ & $\result{ext_code_llama_hra_0p001_nll_mean}_{\pm\result{ext_code_llama_hra_0p001_nll_sd}}$ \\
\bottomrule\end{tabular*}

\end{center}
\end{table}

\textbf{LoRA most consistently limits forgetting among all six methods.} Its mean NLL increase is lowest in \result{cross_lora_math_lowest} of 6 mathematics settings and both coding backbones (\Cref{tab:math-test}), while task performance remains competitive across settings. See Appendix~\ref{app:retention-patterns} for robustness checks. OFT forgets least for Llama at the largest capacity, with LoRA second. For Qwen at the smallest capacity, LoRA's high-LR selection forgets most and MiLoRA least.

\textbf{DoRA usually exceeds LoRA's mean task scores.} It leads in 10/14 comparisons: all four coding comparisons, all four cat budgets, and two mathematics settings (small Qwen and large Llama; \Cref{tab:math-test}, Appendix~\ref{app:retention-patterns}). DoRA's forgetting ranks among the lowest three methods in every selected LLM setting.

\begin{figure}[!htbp]\centering
\includegraphics[width=\linewidth]{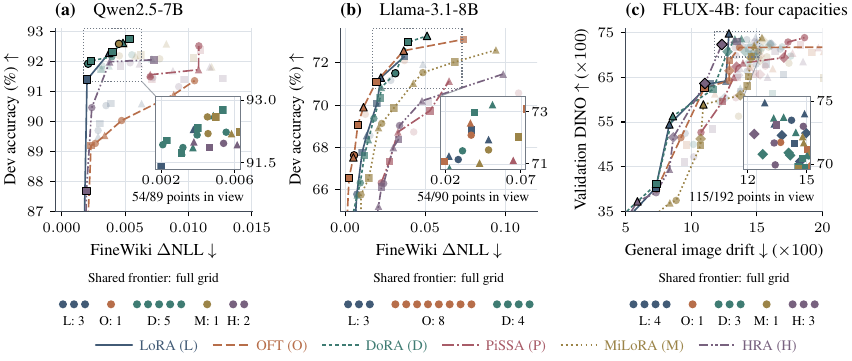}
\caption{Development performance--retention envelopes across LRs and capacities. Shapes denote increasing capacity: circle, square, triangle, then diamond (FLUX only). Faint points are dominated within their method. Line styles identify method frontiers. Black borders mark the shared frontier across methods. Each icon counts one configuration in the full grid (89 Qwen, 90 Llama, 192 FLUX). Insets magnify boxed regions with uniform point styling. (a,b) Dev accuracy versus FineWiki $\Delta$NLL. (c) Cat validation DINO versus general-image drift.}
\label{fig:operating-envelopes}
\end{figure}

\textbf{PiSSA frequently incurs greater retention costs.} It has the largest mean NLL increase on both coding backbones and half the mathematics settings, yet leads Llama HumanEval+. At matched LRs and capacities, PiSSA forgets most in every complete LLM comparison and constituent seed (further comparisons in Appendix~\ref{app:retention-patterns}).

PiSSA also illustrates why closer geometric preservation need not improve retention. At the largest Llama capacity, it has smaller HE and pairwise-cosine changes than LoRA (Appendix~\ref{app:hyperspherical}) but a greater NLL increase in every seed (means $\result{he_llama_pissa_forgetting}$ versus $\result{he_llama_lora_forgetting}$).

On the development frontiers, LoRA and DoRA also appear repeatedly, with DoRA reaching the highest accuracy on both backbones (\Cref{fig:operating-envelopes}a,b). OFT contributes many Llama frontier points, whereas PiSSA contributes none.

\subsection{Diffusion: recurring drift patterns, distinct retention metrics}
We next examine whether the retention patterns observed in language extend to diffusion personalization. We compare all six methods on cat personalization across parameter budgets, then extend the LoRA--OFT comparison across objects using complementary measures of retention.

\begin{table}[!htbp]
\caption{FLUX-4B test/retention scores. DINO and general drift/CLIP are $\times100$; COCO loss is unscaled. Panel (b) averages objects per seed. Shaded bold means mark the best method per budget/metric in (a), or capacity pair in (b). Budgets list rank/block sizes; HRA uses 16/32/62/124 vectors.}
\label{tab:diffusion}\label{tab:benchmark28}
\begin{center}
\textbf{(a) Cat: six methods at four approximately matched budgets}\par\smallskip
\setlength{\tabcolsep}{3pt}
\begin{tabular*}{\linewidth}{@{\extracolsep{\fill}}llcccccc}\toprule
Budget & Metric & \textcolor{LoRAColor}{LoRA} & \textcolor{OFTColor}{OFT} & \textcolor{DoRAColor}{DoRA} & \textcolor{PiSSAColor}{PiSSA} & \textcolor{MiLoRAColor}{MiLoRA} & \textcolor{HRAColor}{HRA} \\ \midrule
r4/b32 & DINO $\uparrow$ & $\result{ext_flux_flux_lora_cap4_0p0003_dino_mean}_{\pm\result{ext_flux_flux_lora_cap4_0p0003_dino_sd}}$ & $\result{ext_flux_flux_oft_cap32_0p0003_dino_mean}_{\pm\result{ext_flux_flux_oft_cap32_0p0003_dino_sd}}$ & \bestcell{DoRAColor}{$\bestmean{\result{ext_flux_flux_dora_cap4_0p0005_dino_mean}}_{\pm\result{ext_flux_flux_dora_cap4_0p0005_dino_sd}}$} & $\result{ext_flux_flux_pissa_cap4_0p0001_dino_mean}_{\pm\result{ext_flux_flux_pissa_cap4_0p0001_dino_sd}}$ & $\result{ext_flux_flux_milora_cap4_0p0001_dino_mean}_{\pm\result{ext_flux_flux_milora_cap4_0p0001_dino_sd}}$ & $\result{ext_flux_flux_hra_cap16_0p0005_dino_mean}_{\pm\result{ext_flux_flux_hra_cap16_0p0005_dino_sd}}$ \\
 & Drift $\downarrow$ & \bestcell{LoRAColor}{$\bestmean{\result{ext_flux_flux_lora_cap4_0p0003_drift_mean}}_{\pm\result{ext_flux_flux_lora_cap4_0p0003_drift_sd}}$} & $\result{ext_flux_flux_oft_cap32_0p0003_drift_mean}_{\pm\result{ext_flux_flux_oft_cap32_0p0003_drift_sd}}$ & $\result{ext_flux_flux_dora_cap4_0p0005_drift_mean}_{\pm\result{ext_flux_flux_dora_cap4_0p0005_drift_sd}}$ & $\result{ext_flux_flux_pissa_cap4_0p0001_drift_mean}_{\pm\result{ext_flux_flux_pissa_cap4_0p0001_drift_sd}}$ & $\result{ext_flux_flux_milora_cap4_0p0001_drift_mean}_{\pm\result{ext_flux_flux_milora_cap4_0p0001_drift_sd}}$ & $\result{ext_flux_flux_hra_cap16_0p0005_drift_mean}_{\pm\result{ext_flux_flux_hra_cap16_0p0005_drift_sd}}$ \\
 & CLIP $\uparrow$ & $\result{ext_flux_flux_lora_cap4_0p0003_clip_mean}_{\pm\result{ext_flux_flux_lora_cap4_0p0003_clip_sd}}$ & $\result{ext_flux_flux_oft_cap32_0p0003_clip_mean}_{\pm\result{ext_flux_flux_oft_cap32_0p0003_clip_sd}}$ & $\result{ext_flux_flux_dora_cap4_0p0005_clip_mean}_{\pm\result{ext_flux_flux_dora_cap4_0p0005_clip_sd}}$ & \bestcell{PiSSAColor}{$\bestmean{\result{ext_flux_flux_pissa_cap4_0p0001_clip_mean}}_{\pm\result{ext_flux_flux_pissa_cap4_0p0001_clip_sd}}$} & $\result{ext_flux_flux_milora_cap4_0p0001_clip_mean}_{\pm\result{ext_flux_flux_milora_cap4_0p0001_clip_sd}}$ & $\result{ext_flux_flux_hra_cap16_0p0005_clip_mean}_{\pm\result{ext_flux_flux_hra_cap16_0p0005_clip_sd}}$ \\
\addlinespace[2pt]
r8/b64 & DINO $\uparrow$ & $\result{ext_flux_flux_lora_0p0001_dino_mean}_{\pm\result{ext_flux_flux_lora_0p0001_dino_sd}}$ & $\result{ext_flux_flux_oft_5em05_dino_mean}_{\pm\result{ext_flux_flux_oft_5em05_dino_sd}}$ & $\result{ext_flux_flux_dora_0p0003_dino_mean}_{\pm\result{ext_flux_flux_dora_0p0003_dino_sd}}$ & $\result{ext_flux_flux_pissa_5em05_dino_mean}_{\pm\result{ext_flux_flux_pissa_5em05_dino_sd}}$ & $\result{ext_flux_flux_milora_5em05_dino_mean}_{\pm\result{ext_flux_flux_milora_5em05_dino_sd}}$ & \bestcell{HRAColor}{$\bestmean{\result{ext_flux_flux_hra_0p0003_dino_mean}}_{\pm\result{ext_flux_flux_hra_0p0003_dino_sd}}$} \\
 & Drift $\downarrow$ & $\result{ext_flux_flux_lora_0p0001_drift_mean}_{\pm\result{ext_flux_flux_lora_0p0001_drift_sd}}$ & $\result{ext_flux_flux_oft_5em05_drift_mean}_{\pm\result{ext_flux_flux_oft_5em05_drift_sd}}$ & $\result{ext_flux_flux_dora_0p0003_drift_mean}_{\pm\result{ext_flux_flux_dora_0p0003_drift_sd}}$ & $\result{ext_flux_flux_pissa_5em05_drift_mean}_{\pm\result{ext_flux_flux_pissa_5em05_drift_sd}}$ & $\result{ext_flux_flux_milora_5em05_drift_mean}_{\pm\result{ext_flux_flux_milora_5em05_drift_sd}}$ & \bestcell{HRAColor}{$\bestmean{\result{ext_flux_flux_hra_0p0003_drift_mean}}_{\pm\result{ext_flux_flux_hra_0p0003_drift_sd}}$} \\
 & CLIP $\uparrow$ & $\result{ext_flux_flux_lora_0p0001_clip_mean}_{\pm\result{ext_flux_flux_lora_0p0001_clip_sd}}$ & $\result{ext_flux_flux_oft_5em05_clip_mean}_{\pm\result{ext_flux_flux_oft_5em05_clip_sd}}$ & $\result{ext_flux_flux_dora_0p0003_clip_mean}_{\pm\result{ext_flux_flux_dora_0p0003_clip_sd}}$ & $\result{ext_flux_flux_pissa_5em05_clip_mean}_{\pm\result{ext_flux_flux_pissa_5em05_clip_sd}}$ & \bestcell{MiLoRAColor}{$\bestmean{\result{ext_flux_flux_milora_5em05_clip_mean}}_{\pm\result{ext_flux_flux_milora_5em05_clip_sd}}$} & $\result{ext_flux_flux_hra_0p0003_clip_mean}_{\pm\result{ext_flux_flux_hra_0p0003_clip_sd}}$ \\
\addlinespace[2pt]
r16/b128 & DINO $\uparrow$ & $\result{ext_flux_flux_lora_cap16_0p0001_dino_mean}_{\pm\result{ext_flux_flux_lora_cap16_0p0001_dino_sd}}$ & $\result{ext_flux_flux_oft_cap128_0p0001_dino_mean}_{\pm\result{ext_flux_flux_oft_cap128_0p0001_dino_sd}}$ & $\result{ext_flux_flux_dora_cap16_5em05_dino_mean}_{\pm\result{ext_flux_flux_dora_cap16_5em05_dino_sd}}$ & \bestcell{PiSSAColor}{$\bestmean{\result{ext_flux_flux_pissa_cap16_5em05_dino_mean}}_{\pm\result{ext_flux_flux_pissa_cap16_5em05_dino_sd}}$} & $\result{ext_flux_flux_milora_cap16_0p0001_dino_mean}_{\pm\result{ext_flux_flux_milora_cap16_0p0001_dino_sd}}$ & $\result{ext_flux_flux_hra_cap62_0p0005_dino_mean}_{\pm\result{ext_flux_flux_hra_cap62_0p0005_dino_sd}}$ \\
 & Drift $\downarrow$ & \bestcell{LoRAColor}{$\bestmean{\result{ext_flux_flux_lora_cap16_0p0001_drift_mean}}_{\pm\result{ext_flux_flux_lora_cap16_0p0001_drift_sd}}$} & $\result{ext_flux_flux_oft_cap128_0p0001_drift_mean}_{\pm\result{ext_flux_flux_oft_cap128_0p0001_drift_sd}}$ & $\result{ext_flux_flux_dora_cap16_5em05_drift_mean}_{\pm\result{ext_flux_flux_dora_cap16_5em05_drift_sd}}$ & $\result{ext_flux_flux_pissa_cap16_5em05_drift_mean}_{\pm\result{ext_flux_flux_pissa_cap16_5em05_drift_sd}}$ & $\result{ext_flux_flux_milora_cap16_0p0001_drift_mean}_{\pm\result{ext_flux_flux_milora_cap16_0p0001_drift_sd}}$ & $\result{ext_flux_flux_hra_cap62_0p0005_drift_mean}_{\pm\result{ext_flux_flux_hra_cap62_0p0005_drift_sd}}$ \\
 & CLIP $\uparrow$ & $\result{ext_flux_flux_lora_cap16_0p0001_clip_mean}_{\pm\result{ext_flux_flux_lora_cap16_0p0001_clip_sd}}$ & $\result{ext_flux_flux_oft_cap128_0p0001_clip_mean}_{\pm\result{ext_flux_flux_oft_cap128_0p0001_clip_sd}}$ & \bestcell{DoRAColor}{$\bestmean{\result{ext_flux_flux_dora_cap16_5em05_clip_mean}}_{\pm\result{ext_flux_flux_dora_cap16_5em05_clip_sd}}$} & $\result{ext_flux_flux_pissa_cap16_5em05_clip_mean}_{\pm\result{ext_flux_flux_pissa_cap16_5em05_clip_sd}}$ & $\result{ext_flux_flux_milora_cap16_0p0001_clip_mean}_{\pm\result{ext_flux_flux_milora_cap16_0p0001_clip_sd}}$ & $\result{ext_flux_flux_hra_cap62_0p0005_clip_mean}_{\pm\result{ext_flux_flux_hra_cap62_0p0005_clip_sd}}$ \\
\addlinespace[2pt]
r32/b256 & DINO $\uparrow$ & $\result{ext_flux_flux_lora_cap32_5em05_dino_mean}_{\pm\result{ext_flux_flux_lora_cap32_5em05_dino_sd}}$ & $\result{ext_flux_flux_oft_cap256_0p0001_dino_mean}_{\pm\result{ext_flux_flux_oft_cap256_0p0001_dino_sd}}$ & \bestcell{DoRAColor}{$\bestmean{\result{ext_flux_flux_dora_cap32_0p0001_dino_mean}}_{\pm\result{ext_flux_flux_dora_cap32_0p0001_dino_sd}}$} & $\result{ext_flux_flux_pissa_cap32_5em05_dino_mean}_{\pm\result{ext_flux_flux_pissa_cap32_5em05_dino_sd}}$ & $\result{ext_flux_flux_milora_cap32_0p0001_dino_mean}_{\pm\result{ext_flux_flux_milora_cap32_0p0001_dino_sd}}$ & $\result{ext_flux_flux_hra_cap124_0p0003_dino_mean}_{\pm\result{ext_flux_flux_hra_cap124_0p0003_dino_sd}}$ \\
 & Drift $\downarrow$ & \bestcell{LoRAColor}{$\bestmean{\result{ext_flux_flux_lora_cap32_5em05_drift_mean}}_{\pm\result{ext_flux_flux_lora_cap32_5em05_drift_sd}}$} & $\result{ext_flux_flux_oft_cap256_0p0001_drift_mean}_{\pm\result{ext_flux_flux_oft_cap256_0p0001_drift_sd}}$ & $\result{ext_flux_flux_dora_cap32_0p0001_drift_mean}_{\pm\result{ext_flux_flux_dora_cap32_0p0001_drift_sd}}$ & $\result{ext_flux_flux_pissa_cap32_5em05_drift_mean}_{\pm\result{ext_flux_flux_pissa_cap32_5em05_drift_sd}}$ & $\result{ext_flux_flux_milora_cap32_0p0001_drift_mean}_{\pm\result{ext_flux_flux_milora_cap32_0p0001_drift_sd}}$ & $\result{ext_flux_flux_hra_cap124_0p0003_drift_mean}_{\pm\result{ext_flux_flux_hra_cap124_0p0003_drift_sd}}$ \\
 & CLIP $\uparrow$ & \bestcell{LoRAColor}{$\bestmean{\result{ext_flux_flux_lora_cap32_5em05_clip_mean}}_{\pm\result{ext_flux_flux_lora_cap32_5em05_clip_sd}}$} & $\result{ext_flux_flux_oft_cap256_0p0001_clip_mean}_{\pm\result{ext_flux_flux_oft_cap256_0p0001_clip_sd}}$ & $\result{ext_flux_flux_dora_cap32_0p0001_clip_mean}_{\pm\result{ext_flux_flux_dora_cap32_0p0001_clip_sd}}$ & $\result{ext_flux_flux_pissa_cap32_5em05_clip_mean}_{\pm\result{ext_flux_flux_pissa_cap32_5em05_clip_sd}}$ & $\result{ext_flux_flux_milora_cap32_0p0001_clip_mean}_{\pm\result{ext_flux_flux_milora_cap32_0p0001_clip_sd}}$ & $\result{ext_flux_flux_hra_cap124_0p0003_clip_mean}_{\pm\result{ext_flux_flux_hra_cap124_0p0003_clip_sd}}$ \\
\bottomrule\end{tabular*}
\par\medskip\textbf{(b) 28 objects: four capacities, three LRs}\par\smallskip
\setlength{\tabcolsep}{3pt}
\begin{tabular*}{\linewidth}{@{\extracolsep{\fill}}lccccc}\toprule
Adapter & LR & Test DINO $\uparrow$ & COCO flow $\downarrow$ & General drift $\downarrow$ & General CLIP $\uparrow$\\\midrule
\loraname{} $r=4$ & $1\!\times\!10^{-4}$ & $\result{obj28_lora_4_1em04_test_mean}_{\pm\result{obj28_lora_4_1em04_test_sd}}$ & $\result{obj28_lora_4_1em04_flow_mean}_{\pm\result{obj28_lora_4_1em04_flow_sd}}$ & \bestcell{LoRAColor}{$\bestmean{\result{obj28_lora_4_1em04_drift_mean}}_{\pm\result{obj28_lora_4_1em04_drift_sd}}$} & \bestcell{LoRAColor}{$\bestmean{\result{obj28_lora_4_1em04_clip_mean}}_{\pm\result{obj28_lora_4_1em04_clip_sd}}$}\\
\loraname{} $r=8$ & $5\!\times\!10^{-5}$ & \bestcell{LoRAColor}{$\bestmean{\result{obj28_lora_8_5em05_test_mean}}_{\pm\result{obj28_lora_8_5em05_test_sd}}$} & $\result{obj28_lora_8_5em05_flow_mean}_{\pm\result{obj28_lora_8_5em05_flow_sd}}$ & \bestcell{LoRAColor}{$\bestmean{\result{obj28_lora_8_5em05_drift_mean}}_{\pm\result{obj28_lora_8_5em05_drift_sd}}$} & \bestcell{LoRAColor}{$\bestmean{\result{obj28_lora_8_5em05_clip_mean}}_{\pm\result{obj28_lora_8_5em05_clip_sd}}$}\\
\loraname{} $r=16$ & $3\!\times\!10^{-5}$ & $\result{obj28_lora_16_3em05_test_mean}_{\pm\result{obj28_lora_16_3em05_test_sd}}$ & \bestcell{LoRAColor}{$\bestmean{\result{obj28_lora_16_3em05_flow_mean}}_{\pm\result{obj28_lora_16_3em05_flow_sd}}$} & \bestcell{LoRAColor}{$\bestmean{\result{obj28_lora_16_3em05_drift_mean}}_{\pm\result{obj28_lora_16_3em05_drift_sd}}$} & \bestcell{LoRAColor}{$\bestmean{\result{obj28_lora_16_3em05_clip_mean}}_{\pm\result{obj28_lora_16_3em05_clip_sd}}$}\\
\loraname{} $r=32$ & $1\!\times\!10^{-4}$ & $\result{obj28_lora_32_1em04_test_mean}_{\pm\result{obj28_lora_32_1em04_test_sd}}$ & $\result{obj28_lora_32_1em04_flow_mean}_{\pm\result{obj28_lora_32_1em04_flow_sd}}$ & \bestcell{LoRAColor}{$\bestmean{\result{obj28_lora_32_1em04_drift_mean}}_{\pm\result{obj28_lora_32_1em04_drift_sd}}$} & \bestcell{LoRAColor}{$\bestmean{\result{obj28_lora_32_1em04_clip_mean}}_{\pm\result{obj28_lora_32_1em04_clip_sd}}$}\\
\midrule
\oftname{} $b=32$ & $1\!\times\!10^{-4}$ & \bestcell{OFTColor}{$\bestmean{\result{obj28_oft_32_1em04_test_mean}}_{\pm\result{obj28_oft_32_1em04_test_sd}}$} & \bestcell{OFTColor}{$\bestmean{\result{obj28_oft_32_1em04_flow_mean}}_{\pm\result{obj28_oft_32_1em04_flow_sd}}$} & $\result{obj28_oft_32_1em04_drift_mean}_{\pm\result{obj28_oft_32_1em04_drift_sd}}$ & $\result{obj28_oft_32_1em04_clip_mean}_{\pm\result{obj28_oft_32_1em04_clip_sd}}$\\
\oftname{} $b=64$ & $5\!\times\!10^{-5}$ & $\result{obj28_oft_64_5em05_test_mean}_{\pm\result{obj28_oft_64_5em05_test_sd}}$ & \bestcell{OFTColor}{$\bestmean{\result{obj28_oft_64_5em05_flow_mean}}_{\pm\result{obj28_oft_64_5em05_flow_sd}}$} & $\result{obj28_oft_64_5em05_drift_mean}_{\pm\result{obj28_oft_64_5em05_drift_sd}}$ & $\result{obj28_oft_64_5em05_clip_mean}_{\pm\result{obj28_oft_64_5em05_clip_sd}}$\\
\oftname{} $b=128$ & $1\!\times\!10^{-4}$ & \bestcell{OFTColor}{$\bestmean{\result{obj28_oft_128_1em04_test_mean}}_{\pm\result{obj28_oft_128_1em04_test_sd}}$} & $\result{obj28_oft_128_1em04_flow_mean}_{\pm\result{obj28_oft_128_1em04_flow_sd}}$ & $\result{obj28_oft_128_1em04_drift_mean}_{\pm\result{obj28_oft_128_1em04_drift_sd}}$ & $\result{obj28_oft_128_1em04_clip_mean}_{\pm\result{obj28_oft_128_1em04_clip_sd}}$\\
\oftname{} $b=256$ & $5\!\times\!10^{-5}$ & \bestcell{OFTColor}{$\bestmean{\result{obj28_oft_256_5em05_test_mean}}_{\pm\result{obj28_oft_256_5em05_test_sd}}$} & \bestcell{OFTColor}{$\bestmean{\result{obj28_oft_256_5em05_flow_mean}}_{\pm\result{obj28_oft_256_5em05_flow_sd}}$} & $\result{obj28_oft_256_5em05_drift_mean}_{\pm\result{obj28_oft_256_5em05_drift_sd}}$ & $\result{obj28_oft_256_5em05_clip_mean}_{\pm\result{obj28_oft_256_5em05_clip_sd}}$\\
\bottomrule\end{tabular*}

\end{center}
\end{table}

\textbf{The retention pattern recurs across cat budgets.} LoRA has the lowest mean general drift at \result{cross_cat_lora_lowest} of \result{cross_cat_selected_settings} validation-selected budgets and the second lowest at the other (\Cref{tab:diffusion}a). DoRA has higher mean test DINO than LoRA at every budget, with additional drift, and leads fidelity at the smallest and largest budgets. PiSSA exceeds LoRA's drift in all \result{cross_cat_pissa_above_lora_seeds} paired seeds; its medium-budget fidelity lead again accompanies a retention cost.

HRA has the highest mean test fidelity and lowest drift at the small budget and extends the pooled frontier, but its advantage does not persist at larger capacities. LoRA and DoRA also contribute; PiSSA contributes no frontier point (\Cref{fig:operating-envelopes}c). MiLoRA has lower mean fidelity and higher drift than LoRA at every selected budget.

Across 28 objects, LoRA and OFT have close test fidelity. LoRA has lower image drift and higher CLIP alignment at all four selected capacity pairs, while OFT has lower COCO flow loss at three (\Cref{tab:diffusion}b). These metrics distinguish preservation of generated behavior from preservation of denoising predictions at fixed noisy states.

\paragraph{Summary.}
Although task scores are often close, LoRA most consistently limits forgetting among all six methods at competitive performance. DoRA usually exceeds LoRA's mean task scores, while PiSSA often forgets more. Other methods' performance and forgetting depend on the setting. The orthogonal methods OFT and HRA offer no consistent performance--retention advantage over the four LoRA-family methods across the tested settings.

\section{Spectrally Heterogeneous Adaptation, Structured Forgetting}\label{sec:sensitivity}
We examine how dominant, intermediate, and trailing singular components contribute to adaptation and forgetting. Using pretrained singular values throughout, we selectively restore or retain adapted singular vectors to measure each group's contribution.

\subsection{Restoring groups of spectral components}
For a component group $B$, we restore its pretrained singular vectors and retain adapted vectors elsewhere:
\begin{equation}\label{eq:band-restore}
\bm W_{\mathrm{restore}(B)}=\bm W_{\mathrm{keep}(B^{\mathrm c})}
=\sum_{i\in B}s_{0,i}\bm u_{0,i}\bm v_{0,i}^{\!\top}
 +\sum_{i\notin B}s_{0,i}\bm u_{*,i}\bm v_{*,i}^{\!\top}.
\end{equation}

\begin{figure}[!htbp]\centering
\includegraphics[width=\linewidth]{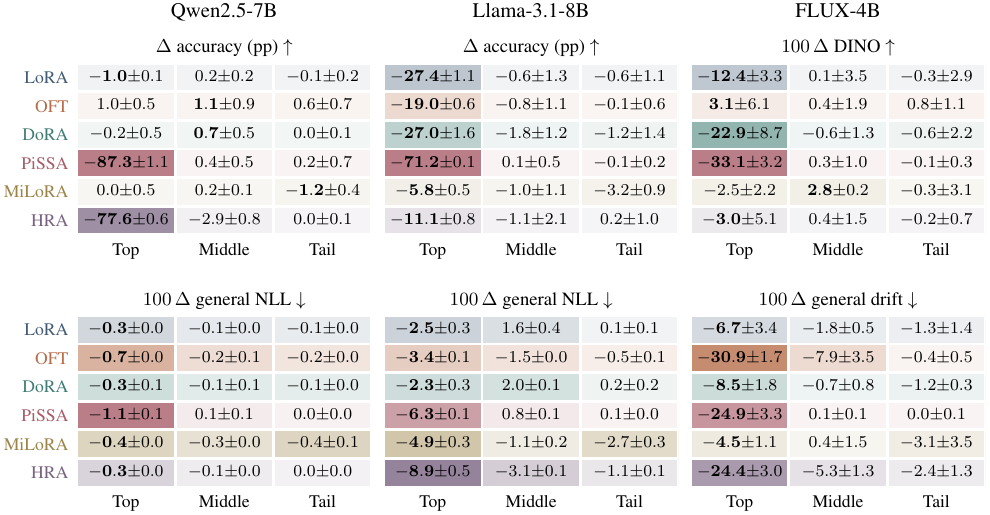}
\caption{Dominant-component restoration consistently reduces forgetting, with varying effects on task performance. Cells show mean $\pm$ SD changes from trained models at the largest capacities and selected LRs. Higher task scores and lower NLL/drift are better. FLUX DINO/drift are scaled by 100. Bold marks the largest absolute mean change across component groups per method.}
\label{fig:band-response}
\end{figure}

\paragraph{Restoring dominant components consistently reduces forgetting.}
At the largest capacities, restoring dominant components gives the largest mean reduction in general-text NLL or general-image drift for every method (\Cref{fig:band-response}). This ordering holds across language capacities (Appendix~\ref{app:complete-interventions}). MiLoRA shares this pattern despite greater spectral drift concentration in trailing components.

\paragraph{The same restoration has different effects on task performance.}
At the largest language capacities, restoring dominant components largely preserves Qwen accuracy for LoRA, OFT, DoRA and MiLoRA, whereas PiSSA and HRA suffer large losses. The same edit reduces LoRA and DoRA accuracy by about 27 percentage points on Llama. PiSSA nearly loses all recorded accuracy on both models. In FLUX, PiSSA, LoRA and DoRA all incur substantial losses. OFT improves on average, with substantial seed variation. HRA and MiLoRA incur smaller mean losses. Thus, restoring dominant components consistently reduces forgetting, but its effect on task performance shows no consistent pattern across methods and models.

\subsection{Which components retain task performance on their own?}
Retaining only adapted dominant components preserves PiSSA's language accuracy within 1 percentage point of the full value-restored model $\bm W_R$ across every tested setting, whereas retaining only the remaining components does not. Other methods show no consistent sufficiency pattern across settings (Appendix~\ref{app:complete-interventions}, \Cref{fig:component-sufficiency,tab:capacity-sufficiency}).

\paragraph{Summary.}
Task dependence varies across settings: PiSSA’s adapted dominant components suffice across the tested settings, whereas other methods may require intermediate and trailing components. Restoring the dominant components consistently yields the largest mean reduction in NLL and drift, suggesting that forgetting is systematically associated with changes in dominant components.

\section{Discussion}
Across the tested settings, our comparison of selected PEFT methods finds no consistent performance--retention advantage from explicit geometric preservation. With method-specific learning-rate tuning, LoRA most consistently limits forgetting while maintaining competitive task performance. Restoring pretrained singular values largely preserves task performance, while restoring dominant components consistently yields the largest mean reduction in forgetting, although its effects on task performance vary. These findings highlight the importance of evaluating adaptation and retention jointly. In addition, shared learning rates and single-seed results can bias comparisons, underscoring the need for method-specific tuning and evaluation across multiple seeds. We hope this study supports more reliable evaluations, informs method selection, and guides PEFT designs that better balance adaptation with the retention of pretrained capabilities.
\label{paper:mainend}
\section*{Acknowledgments}
We thank \href{https://zulip.com/}{Zulip} for the project organization tool and \href{https://warwick.ac.uk/research/rtp/sc/sulis/}{Sulis} for computational resources.

\bibliography{references}

@article{huang2026peftarena,
      title={{PEFT-Arena}: Understanding Parameter-Efficient Finetuning from a Stability-Plasticity Perspective},
      author={Huang, Yangyi and Peng, Ruotian and Qiu, Zeju and Kang, Jiale and Wen, Yandong and Sch\"olkopf, Bernhard and Liu, Weiyang},
      journal={arXiv preprint arXiv:2605.28819},
      year={2026}}

@inproceedings{
hu2022lora,
title={Lo{RA}: Low-Rank Adaptation of Large Language Models},
author={Edward J Hu and Yelong Shen and Phillip Wallis and Zeyuan Allen-Zhu and Yuanzhi Li and Shean Wang and Lu Wang and Weizhu Chen},
booktitle={International Conference on Learning Representations},
year={2022},
url={https://openreview.net/forum?id=nZeVKeeFYf9}
}

@inproceedings{qiu2023oft,
 author = {Qiu, Zeju and Liu, Weiyang and Feng, Haiwen and Xue, Yuxuan and Feng, Yao and Liu, Zhen and Zhang, Dan and Weller, Adrian and Sch\"{o}lkopf, Bernhard},
 booktitle = {Advances in Neural Information Processing Systems},
 doi = {10.52202/075280-3472},
 editor = {A. Oh and T. Naumann and A. Globerson and K. Saenko and M. Hardt and S. Levine},
 pages = {79320--79362},
 publisher = {Curran Associates, Inc.},
 title = {Controlling Text-to-Image Diffusion by Orthogonal Finetuning},
 url = {https://proceedings.neurips.cc/paper_files/paper/2023/file/faacb7a4827b4d51e201666b93ab5fa7-Paper-Conference.pdf},
 volume = {36},
 year = {2023}
}

@InProceedings{liu2024dora,
  title = 	 {{D}o{RA}: Weight-Decomposed Low-Rank Adaptation},
  author =       {Liu, Shih-Yang and Wang, Chien-Yi and Yin, Hongxu and Molchanov, Pavlo and Wang, Yu-Chiang Frank and Cheng, Kwang-Ting and Chen, Min-Hung},
  booktitle = 	 {Proceedings of the 41st International Conference on Machine Learning},
  pages = 	 {32100--32121},
  year = 	 {2024},
  editor = 	 {Salakhutdinov, Ruslan and Kolter, Zico and Heller, Katherine and Weller, Adrian and Oliver, Nuria and Scarlett, Jonathan and Berkenkamp, Felix},
  volume = 	 {235},
  series = 	 {Proceedings of Machine Learning Research},
  month = 	 {21--27 Jul},
  publisher =    {PMLR},
  url = 	 {https://proceedings.mlr.press/v235/liu24bn.html}
}

@inproceedings{qiu2025oftv2,
    title = "Orthogonal Finetuning Made Scalable",
    author = {Qiu, Zeju  and
      Liu, Weiyang  and
      Weller, Adrian  and
      Sch{\"o}lkopf, Bernhard},
    editor = "Christodoulopoulos, Christos  and
      Chakraborty, Tanmoy  and
      Rose, Carolyn  and
      Peng, Violet",
    booktitle = "Proceedings of the 2025 Conference on Empirical Methods in Natural Language Processing",
    month = nov,
    year = "2025",
    address = "Suzhou, China",
    publisher = "Association for Computational Linguistics",
    url = "https://aclanthology.org/2025.emnlp-main.1627/",
    doi = "10.18653/v1/2025.emnlp-main.1627",
    pages = "31946--31963",
    ISBN = "979-8-89176-332-6"
}

@InProceedings{hayou2024loraplus,
  title = 	 {{L}o{RA}+: Efficient Low Rank Adaptation of Large Models},
  author =       {Hayou, Soufiane and Ghosh, Nikhil and Yu, Bin},
  booktitle = 	 {Proceedings of the 41st International Conference on Machine Learning},
  pages = 	 {17783--17806},
  year = 	 {2024},
  editor = 	 {Salakhutdinov, Ruslan and Kolter, Zico and Heller, Katherine and Weller, Adrian and Oliver, Nuria and Scarlett, Jonathan and Berkenkamp, Felix},
  volume = 	 {235},
  series = 	 {Proceedings of Machine Learning Research},
  month = 	 {21--27 Jul},
  publisher =    {PMLR},
  url = 	 {https://proceedings.mlr.press/v235/hayou24a.html}
}

@misc{kalajdzievski2023rslora,
  doi = {10.48550/ARXIV.2312.03732},
  url = {https://arxiv.org/abs/2312.03732},
  author = {Kalajdzievski, Damjan},
  title = {A Rank Stabilization Scaling Factor for Fine-Tuning with {LoRA}},
  publisher = {arXiv},
  year = {2023},
  copyright = {arXiv.org perpetual, non-exclusive license}
}

@inproceedings{wang2024loraga,
 author = {Wang, Shaowen and Yu, Linxi and Li, Jian},
 booktitle = {Advances in Neural Information Processing Systems},
 doi = {10.52202/079017-1741},
 editor = {A. Globerson and L. Mackey and D. Belgrave and A. Fan and U. Paquet and J. Tomczak and C. Zhang},
 pages = {54905--54931},
 publisher = {Curran Associates, Inc.},
 title = {{LoRA-GA}: Low-Rank Adaptation with Gradient Approximation},
 url = {https://proceedings.neurips.cc/paper_files/paper/2024/file/62c4718cc334f6a0a62fb81c4a2095a1-Paper-Conference.pdf},
 volume = {37},
 year = {2024}
}

@article{
biderman2024loraforgetting,
title={Lo{RA} Learns Less and Forgets Less},
author={Dan Biderman and Jacob Portes and Jose Javier Gonzalez Ortiz and Mansheej Paul and Philip Greengard and Connor Jennings and Daniel King and Sam Havens and Vitaliy Chiley and Jonathan Frankle and Cody Blakeney and John Patrick Cunningham},
journal={Transactions on Machine Learning Research},
issn={2835-8856},
year={2024},
url={https://openreview.net/forum?id=aloEru2qCG},
note={Featured Certification}
}

@InProceedings{ruiz2023dreambooth,
    author    = {Ruiz, Nataniel and Li, Yuanzhen and Jampani, Varun and Pritch, Yael and Rubinstein, Michael and Aberman, Kfir},
    title     = {{DreamBooth}: Fine Tuning Text-to-Image Diffusion Models for Subject-Driven Generation},
    booktitle = {Proceedings of the IEEE/CVF Conference on Computer Vision and Pattern Recognition (CVPR)},
    month     = {June},
    year      = {2023},
    pages     = {22500-22510}
}

@article{
oquab2023dinov2,
title={{DINO}v2: Learning Robust Visual Features without Supervision},
author={Maxime Oquab and Timoth{\'e}e Darcet and Th{\'e}o Moutakanni and Huy V. Vo and Marc Szafraniec and Vasil Khalidov and Pierre Fernandez and Daniel HAZIZA and Francisco Massa and Alaaeldin El-Nouby and Mido Assran and Nicolas Ballas and Wojciech Galuba and Russell Howes and Po-Yao Huang and Shang-Wen Li and Ishan Misra and Michael Rabbat and Vasu Sharma and Gabriel Synnaeve and Hu Xu and Herve Jegou and Julien Mairal and Patrick Labatut and Armand Joulin and Piotr Bojanowski},
journal={Transactions on Machine Learning Research},
issn={2835-8856},
year={2024},
url={https://openreview.net/forum?id=a68SUt6zFt},
note={Featured Certification}
}

@inproceedings{yu2024metamath,
 author = {Yu, Longhui and JIANG, Weisen and Shi, Han and YU, Jincheng and Liu, Zhengying and Zhang, Yu and Kwok, James and Li, Zhenguo and Weller, Adrian and Liu, Weiyang},
 booktitle = {International Conference on Learning Representations},
 editor = {B. Kim and Y. Yue and S. Chaudhuri and K. Fragkiadaki and M. Khan and Y. Sun},
 pages = {45040--45061},
 title = {{MetaMath}: Bootstrap Your Own Mathematical Questions for Large Language Models},
 url = {https://proceedings.iclr.cc/paper_files/paper/2024/file/c400474e8a36d0812fdee52739288b12-Paper-Conference.pdf},
 volume = {2024},
 year = {2024}
}

@misc{bfl2026klein,
 title={{FLUX.2-klein-base-4B} Model Card},author={{Black Forest Labs}},year={2026},
 howpublished={Official model release},url={https://huggingface.co/black-forest-labs/FLUX.2-klein-base-4B}
}

@InProceedings{radford2021clip,
  title = 	 {Learning Transferable Visual Models From Natural Language Supervision},
  author =       {Radford, Alec and Kim, Jong Wook and Hallacy, Chris and Ramesh, Aditya and Goh, Gabriel and Agarwal, Sandhini and Sastry, Girish and Askell, Amanda and Mishkin, Pamela and Clark, Jack and Krueger, Gretchen and Sutskever, Ilya},
  booktitle = 	 {Proceedings of the 38th International Conference on Machine Learning},
  pages = 	 {8748--8763},
  year = 	 {2021},
  editor = 	 {Meila, Marina and Zhang, Tong},
  volume = 	 {139},
  series = 	 {Proceedings of Machine Learning Research},
  month = 	 {18--24 Jul},
  publisher =    {PMLR},
  url = 	 {https://proceedings.mlr.press/v139/radford21a.html}
}

@misc{qwen2024model,
    title = {Qwen2.5: A Party of Foundation Models},
    url = {https://qwenlm.github.io/blog/qwen2.5/},
    author = {{Qwen Team}},
    month = {September},
    year = {2024}
}

@InProceedings{lin2014coco,
author="Lin, Tsung-Yi
and Maire, Michael
and Belongie, Serge
and Hays, James
and Perona, Pietro
and Ramanan, Deva
and Doll{\'a}r, Piotr
and Zitnick, C. Lawrence",
editor="Fleet, David
and Pajdla, Tomas
and Schiele, Bernt
and Tuytelaars, Tinne",
title="Microsoft {COCO}: Common Objects in Context",
booktitle="Computer Vision -- ECCV 2014",
year="2014",
publisher="Springer International Publishing",
address="Cham",
pages="740--755",
isbn="978-3-319-10602-1"
}

@misc{penedo2025finewiki,
  author    = {Guilherme Penedo},
  title     = {{FineWiki}},
  year      = {2025},
  publisher = {Hugging Face Datasets},
  url       = {https://huggingface.co/datasets/HuggingFaceFW/finewiki},
  urldate   = {2025-10-20},
  note      = {Source: Wikimedia Enterprise Snapshot API (https://api.enterprise.wikimedia.com/v2/snapshots). Text licensed under CC BY-SA 4.0 with attribution to Wikipedia contributors.}
}

@inproceedings{lingam2024svft,
 author = {Lingam, Vijay and Tejaswi, Atula and Vavre, Aditya and Shetty, Aneesh and Gudur, Gautham Krishna and Ghosh, Joydeep and Dimakis, Alex and Choi, Eunsol and Bojchevski, Aleksandar and Sanghavi, Sujay},
 booktitle = {Advances in Neural Information Processing Systems},
 doi = {10.52202/079017-1311},
 editor = {A. Globerson and L. Mackey and D. Belgrave and A. Fan and U. Paquet and J. Tomczak and C. Zhang},
 pages = {41425--41446},
 publisher = {Curran Associates, Inc.},
 title = {{SVFT}: Parameter-Efficient Fine-Tuning with Singular Vectors},
 url = {https://proceedings.neurips.cc/paper_files/paper/2024/file/48c368f105e8145b945227b73255635a-Paper-Conference.pdf},
 volume = {37},
 year = {2024}
}

@inproceedings{shuttleworth2024illusion,
 author = {Shuttleworth, Reece and Andreas, Jacob and Torralba, Antonio and Sharma, Pratyusha},
 booktitle = {Advances in Neural Information Processing Systems},
 doi = {10.52202/085713-5808},
 editor = {D. Belgrave and C. Zhang and H. Lin and R. Pascanu and P. Koniusz and M. Ghassemi and N. Chen},
 pages = {174627--174662},
 publisher = {Curran Associates, Inc.},
 title = {{LoRA} vs Full Fine-tuning: An Illusion of Equivalence},
 url = {https://proceedings.neurips.cc/paper_files/paper/2025/file/ff541950d1e885af90f523571564a401-Paper-Conference.pdf},
 volume = {38, Main Conference},
 year = {2025}
}

@misc{he2025posttrainingstructure,
  doi = {10.48550/ARXIV.2509.17866},
  url = {https://arxiv.org/abs/2509.17866},
  author = {He, Xinyu and Cao, Xianghui},
  title = {Understanding Post-Training Structural Changes in Large Language Models},
  publisher = {arXiv},
  year = {2025},
  copyright = {arXiv.org perpetual, non-exclusive license}
}

@inproceedings{wang2026casa,
  title = {Exploring Data-Free {LoRA} Transferability for Video Diffusion Models},
  author = {Wang, Yuchen and Zhong, Wenliang and Bai, Lichen and Zhou, Zikai and Shao, Shitong and Cheng, Bojun and Chen, Shuo and Yang, Shuo and Xie, Zeke},
  booktitle = {International Conference on Machine Learning},
  year = {2026},
  url = {https://icml.cc/virtual/2026/poster/64894}
}

@inproceedings{qiu2025poet,
 author = {Qiu, Zeju and Buchholz, Simon and Xiao, Tim and Dax, Maximilian and Sch\"{o}lkopf, Bernhard and Liu, Weiyang},
 booktitle = {Advances in Neural Information Processing Systems},
 doi = {10.52202/085713-4704},
 editor = {D. Belgrave and C. Zhang and H. Lin and R. Pascanu and P. Koniusz and M. Ghassemi and N. Chen},
 pages = {140775--140821},
 publisher = {Curran Associates, Inc.},
 title = {Reparameterized {LLM} Training via Orthogonal Equivalence Transformation},
 url = {https://proceedings.neurips.cc/paper_files/paper/2025/file/cde43c5d698076ae17f3838a80f5a1bc-Paper-Conference.pdf},
 volume = {38, Main Conference},
 year = {2025}
}

@inproceedings{meng2024pissa,
 author = {Meng, Fanxu and Wang, Zhaohui and Zhang, Muhan},
 booktitle = {Advances in Neural Information Processing Systems},
 doi = {10.52202/079017-3846},
 editor = {A. Globerson and L. Mackey and D. Belgrave and A. Fan and U. Paquet and J. Tomczak and C. Zhang},
 pages = {121038--121072},
 publisher = {Curran Associates, Inc.},
 title = {{PiSSA}: Principal Singular Values and Singular Vectors Adaptation of Large Language Models},
 url = {https://proceedings.neurips.cc/paper_files/paper/2024/file/db36f4d603cc9e3a2a5e10b93e6428f2-Paper-Conference.pdf},
 volume = {37},
 year = {2024}
}

@techreport{stewart1990,
  author = {G. W. Stewart},
  title = {Perturbation Theory for the Singular Value Decomposition},
  institution = {University of Maryland, Institute for Advanced Computer Studies},
  number = {UMIACS-TR-90-124, CS-TR-2539},
  year = {1990},
  url = {https://api.drum.lib.umd.edu/server/api/core/bitstreams/a38cc6db-9bb7-4f3b-ae43-55b5827bb9d4/content}
}

@inproceedings{yuan2024hra,
 author = {Yuan, Shen and Liu, Haotian and Xu, Hongteng},
 booktitle = {Advances in Neural Information Processing Systems},
 doi = {10.52202/079017-3606},
 editor = {A. Globerson and L. Mackey and D. Belgrave and A. Fan and U. Paquet and J. Tomczak and C. Zhang},
 pages = {113484--113518},
 publisher = {Curran Associates, Inc.},
 title = {Bridging The Gap between Low-rank and Orthogonal Adaptation via {Householder} Reflection Adaptation},
 url = {https://proceedings.neurips.cc/paper_files/paper/2024/file/cdd0640218a27e9e2c0e52e324e25db0-Paper-Conference.pdf},
 volume = {37},
 year = {2024}
}

@inproceedings{wang2025milora,
    title = "{M}i{L}o{RA}: Harnessing Minor Singular Components for Parameter-Efficient {LLM} Finetuning",
    author = "Wang, Hanqing  and
      Li, Yixia  and
      Wang, Shuo  and
      Chen, Guanhua  and
      Chen, Yun",
    editor = "Chiruzzo, Luis  and
      Ritter, Alan  and
      Wang, Lu",
    booktitle = "Proceedings of the 2025 Conference of the Nations of the Americas Chapter of the Association for Computational Linguistics: Human Language Technologies (Volume 1: Long Papers)",
    month = apr,
    year = "2025",
    address = "Albuquerque, New Mexico",
    publisher = "Association for Computational Linguistics",
    url = "https://aclanthology.org/2025.naacl-long.248/",
    doi = "10.18653/v1/2025.naacl-long.248",
    pages = "4823--4836",
    ISBN = "979-8-89176-189-6"
}

@article{schulman2025lora,
  author  = {Schulman, John and {Thinking Machines Lab}},
  title   = {{LoRA} Without Regret},
  journal = {Thinking Machines Lab: Connectionism},
  year    = {2025},
  month   = sep,
  doi     = {10.64434/tml.20250929},
  url     = {https://thinkingmachines.ai/blog/lora/}
}

@inproceedings{liu2023evalplus,
 author = {Liu, Jiawei and Xia, Chunqiu Steven and Wang, Yuyao and ZHANG, LINGMING},
 booktitle = {Advances in Neural Information Processing Systems},
 doi = {10.52202/075280-0943},
 editor = {A. Oh and T. Naumann and A. Globerson and K. Saenko and M. Hardt and S. Levine},
 pages = {21558--21572},
 publisher = {Curran Associates, Inc.},
 title = {Is Your Code Generated by {ChatGPT} Really Correct? {Rigorous} Evaluation of Large Language Models for Code Generation},
 url = {https://proceedings.neurips.cc/paper_files/paper/2023/file/43e9d647ccd3e4b7b5baab53f0368686-Paper-Conference.pdf},
 volume = {36},
 year = {2023}
}

@inproceedings{zheng2024opencodeinterpreter,
    title = "{O}pen{C}ode{I}nterpreter: Integrating Code Generation with Execution and Refinement",
    author = "Zheng, Tianyu  and
      Zhang, Ge  and
      Shen, Tianhao  and
      Liu, Xueling  and
      Lin, Bill Yuchen  and
      Fu, Jie  and
      Chen, Wenhu  and
      Yue, Xiang",
    editor = "Ku, Lun-Wei  and
      Martins, Andre  and
      Srikumar, Vivek",
    booktitle = "Findings of the Association for Computational Linguistics: ACL 2024",
    month = aug,
    year = "2024",
    address = "Bangkok, Thailand",
    publisher = "Association for Computational Linguistics",
    url = "https://aclanthology.org/2024.findings-acl.762/",
    doi = "10.18653/v1/2024.findings-acl.762",
    pages = "12834--12859"
}

@misc{lee2026learningrate,
  title={Learning Rate Matters: Vanilla {LoRA} May Suffice for {LLM} Fine-tuning},
  author={Yu-Ang Lee and Ching-Yun Ko and Pin-Yu Chen and Mi-Yen Yeh},
  year={2026},
  eprint={2602.04998},
  archivePrefix={arXiv},
  primaryClass={cs.LG},
  url={https://arxiv.org/abs/2602.04998}
}

@misc{jin2026rlheals,
      title={{RL} Fine-Tuning Heals {OOD} Forgetting in {SFT}},
      author={Hangzhan Jin and Sitao Luan and Tianwei Ni and Sicheng Lyu and Guillaume Rabusseau and Reihaneh Rabbany and Doina Precup and Mohammad Hamdaqa},
      year={2026},
      eprint={2509.12235},
      archivePrefix={arXiv},
      primaryClass={cs.LG},
      url={https://arxiv.org/abs/2509.12235},
}

@InProceedings{bini2024ether,
  title = {{ETHER}: Efficient Finetuning of Large-Scale Models with Hyperplane Reflections},
  author = {Bini, Massimo and Roth, Karsten and Akata, Zeynep and Khoreva, Anna},
  booktitle = {Proceedings of the 41st International Conference on Machine Learning},
  pages = {4007--4026},
  year = {2024},
  editor = {Salakhutdinov, Ruslan and Kolter, Zico and Heller, Katherine and Weller, Adrian and Oliver, Nuria and Scarlett, Jonathan and Berkenkamp, Felix},
  volume = {235},
  series = {Proceedings of Machine Learning Research},
  month = {21--27 Jul},
  publisher = {PMLR},
  url = {https://proceedings.mlr.press/v235/bini24a.html}
}

@InProceedings{zhang2025loraone,
  title = 	 {{L}o{RA}-{One}: One-Step Full Gradient Could Suffice for Fine-Tuning Large Language Models, Provably and Efficiently},
  author =       {Zhang, Yuanhe and Liu, Fanghui and Chen, Yudong},
  booktitle = 	 {Proceedings of the 42nd International Conference on Machine Learning},
  pages = 	 {75513--75574},
  year = 	 {2025},
  editor = 	 {Singh, Aarti and Fazel, Maryam and Hsu, Daniel and Lacoste-Julien, Simon and Berkenkamp, Felix and Maharaj, Tegan and Wagstaff, Kiri and Zhu, Jerry},
  volume = 	 {267},
  series = 	 {Proceedings of Machine Learning Research},
  month = 	 {13--19 Jul},
  publisher =    {PMLR},
  url = 	 {https://proceedings.mlr.press/v267/zhang25ax.html}
}

@article{ponkshe2024lorasb,
  title={Initialization using update approximation is a silver bullet for extremely efficient low-rank fine-tuning},
  author={Ponkshe, Kaustubh and Singhal, Raghav and Gorbunov, Eduard and Tumanov, Alexey and Horvath, Samuel and Vepakomma, Praneeth},
  journal={arXiv preprint arXiv:2411.19557},
  year={2024}
}

@article{schreiber1989wy, title={A Storage-Efficient {$WY$} Representation for Products of {Householder} Transformations}, volume={10}, ISSN={2168-3417}, url={http://dx.doi.org/10.1137/0910005}, DOI={10.1137/0910005}, number={1}, journal={SIAM Journal on Scientific and Statistical Computing}, publisher={Society for Industrial & Applied Mathematics (SIAM)}, author={Schreiber, Robert and Van Loan, Charles}, year={1989}, month=Jan, pages={53–57} }

@article{mirsky1975trace, title={A trace inequality of {John von Neumann}}, volume={79}, ISSN={1436-5081}, url={http://dx.doi.org/10.1007/BF01647331}, DOI={10.1007/bf01647331}, number={4}, journal={Monatshefte für Mathematik}, publisher={Springer Science and Business Media LLC}, author={Mirsky, L.}, year={1975}, month=Dec, pages={303–306} }

@article{higham1986polar, title={Computing the Polar Decomposition—with Applications}, volume={7}, ISSN={2168-3417}, url={http://dx.doi.org/10.1137/0907079}, DOI={10.1137/0907079}, number={4}, journal={SIAM Journal on Scientific and Statistical Computing}, publisher={Society for Industrial & Applied Mathematics (SIAM)}, author={Higham, Nicholas J.}, year={1986}, month=Oct, pages={1160–1174} }

@inproceedings{liu2018mhe,
 author = {Liu, Weiyang and Lin, Rongmei and Liu, Zhen and Liu, Lixin and Yu, Zhiding and Dai, Bo and Song, Le},
 booktitle = {Advances in Neural Information Processing Systems},
 editor = {S. Bengio and H. Wallach and H. Larochelle and K. Grauman and N. Cesa-Bianchi and R. Garnett},
 pages = {},
 publisher = {Curran Associates, Inc.},
 title = {Learning towards Minimum Hyperspherical Energy},
 url = {https://proceedings.neurips.cc/paper_files/paper/2018/file/177540c7bcb8db31697b601642eac8d4-Paper.pdf},
 volume = {31},
 year = {2018}
}

@InProceedings{likhosherstov2021cwy,
  title = 	 { {CWY} Parametrization: a Solution for Parallelized Optimization of Orthogonal and {Stiefel} Matrices },
  author =       {Likhosherstov, Valerii and Davis, Jared and Choromanski, Krzysztof and Weller, Adrian},
  booktitle = 	 {Proceedings of The 24th International Conference on Artificial Intelligence and Statistics},
  pages = 	 {55--63},
  year = 	 {2021},
  editor = 	 {Banerjee, Arindam and Fukumizu, Kenji},
  volume = 	 {130},
  series = 	 {Proceedings of Machine Learning Research},
  month = 	 {13--15 Apr},
  publisher =    {PMLR},
  url = 	 {https://proceedings.mlr.press/v130/likhosherstov21a.html}
}

@misc{dubey2024llama3,
      title={The {Llama 3} Herd of Models}, 
      author={Aaron Grattafiori and Abhimanyu Dubey and Abhinav Jauhri and Abhinav Pandey and Abhishek Kadian and Ahmad Al-Dahle and Aiesha Letman and Akhil Mathur and Alan Schelten and Alex Vaughan and others},
      year={2024},
      eprint={2407.21783},
      archivePrefix={arXiv},
      primaryClass={cs.AI},
      url={https://arxiv.org/abs/2407.21783}, 
}

@misc{cobbe2021gsm8k,
  title = {Training Verifiers to Solve Math Word Problems},
  author = {Cobbe, Karl and Kosaraju, Vineet and Bavarian, Mohammad and Chen, Mark and Jun, Heewoo and Kaiser, Lukasz and Plappert, Matthias and Tworek, Jerry and Hilton, Jacob and Nakano, Reiichiro and Hesse, Christopher and Schulman, John},
  year = {2021},
  eprint = {2110.14168},
  archivePrefix = {arXiv},
  primaryClass = {cs.LG},
  url = {https://arxiv.org/abs/2110.14168}
}

@misc{chen2021humaneval,
  title = {Evaluating Large Language Models Trained on Code},
  author = {Chen, Mark and Tworek, Jerry and Jun, Heewoo and Yuan, Qiming and Pinto, Henrique Ponde de Oliveira and Kaplan, Jared and Edwards, Harri and Burda, Yuri and Joseph, Nicholas and Brockman, Greg and others},
  year = {2021},
  eprint = {2107.03374},
  archivePrefix = {arXiv},
  primaryClass = {cs.LG},
  url = {https://arxiv.org/abs/2107.03374}
}

@misc{austin2021mbpp,
  title = {Program Synthesis with Large Language Models},
  author = {Austin, Jacob and Odena, Augustus and Nye, Maxwell and Bosma, Maarten and Michalewski, Henryk and Dohan, David and Jiang, Ellen and Cai, Carrie and Terry, Michael and Le, Quoc and Sutton, Charles},
  year = {2021},
  eprint = {2108.07732},
  archivePrefix = {arXiv},
  primaryClass = {cs.PL},
  url = {https://arxiv.org/abs/2108.07732}
}

@inproceedings{loshchilov2019adamw,
  title = {Decoupled Weight Decay Regularization},
  author = {Loshchilov, Ilya and Hutter, Frank},
  booktitle = {International Conference on Learning Representations},
  year = {2019},
  url = {https://openreview.net/forum?id=Bkg6RiCqY7}
}

@inproceedings{liu2023rectifiedflow,
  title = {Flow Straight and Fast: Learning to Generate and Transfer Data with Rectified Flow},
  author = {Liu, Xingchao and Gong, Chengyue and Liu, Qiang},
  booktitle = {International Conference on Learning Representations},
  year = {2023},
  url = {https://openreview.net/forum?id=XVjTT1nw5z}
}

@InProceedings{houlsby2019adapters,
  title = 	 {Parameter-Efficient Transfer Learning for {NLP}},
  author =       {Houlsby, Neil and Giurgiu, Andrei and Jastrzebski, Stanislaw and Morrone, Bruna and De Laroussilhe, Quentin and Gesmundo, Andrea and Attariyan, Mona and Gelly, Sylvain},
  booktitle = 	 {Proceedings of the 36th International Conference on Machine Learning},
  pages = 	 {2790--2799},
  year = 	 {2019},
  editor = 	 {Chaudhuri, Kamalika and Salakhutdinov, Ruslan},
  volume = 	 {97},
  series = 	 {Proceedings of Machine Learning Research},
  month = 	 {09--15 Jun},
  publisher =    {PMLR},
  url = 	 {https://proceedings.mlr.press/v97/houlsby19a.html}
}

@inproceedings{li2021prefix,
    title = "Prefix-Tuning: Optimizing Continuous Prompts for Generation",
    author = "Li, Xiang Lisa  and
      Liang, Percy",
    editor = "Zong, Chengqing  and
      Xia, Fei  and
      Li, Wenjie  and
      Navigli, Roberto",
    booktitle = "Proceedings of the 59th Annual Meeting of the Association for Computational Linguistics and the 11th International Joint Conference on Natural Language Processing (Volume 1: Long Papers)",
    month = aug,
    year = "2021",
    address = "Online",
    publisher = "Association for Computational Linguistics",
    url = "https://aclanthology.org/2021.acl-long.353/",
    doi = "10.18653/v1/2021.acl-long.353",
    pages = "4582--4597"
}

@Misc{mangrulkar2022peft,
  title =        {{PEFT}: State-of-the-art Parameter-Efficient Fine-Tuning methods},
  author =       {Sourab Mangrulkar and Sylvain Gugger and Lysandre Debut and Younes Belkada and Sayak Paul and Benjamin Bossan and Marian Tietz},
  howpublished = {\url{https://github.com/huggingface/peft}},
  year =         {2022}
}

@misc{liang2026dataflexrlevaluationplatformrlvr,
  title={{DataFlex-RL}: An Evaluation Platform for {RLVR} Data Policies},
  author={Hao Liang and Mingrui Chen and Hengyi Feng and Meiyi Qiang and Wentao Zhang},
  year={2026},
  eprint={2609.06107},
  archivePrefix={arXiv},
  primaryClass={cs.LG},
  url={https://arxiv.org/abs/2609.06107}
}
\bibliographystyle{reference}
\clearpage
\begingroup
\setcounter{tocdepth}{-1}
\hypersetup{linkcolor=ContentsBlue,linktoc=section}
\phantomsection\label{paper:contentsstart}
\tableofcontents
\label{paper:contentsend}
\endgroup
\hypersetup{linkcolor=LinkColor,linktoc=section}

\clearpage
\appendix
\addtocontents{toc}{\protect\setcounter{tocdepth}{2}}
\section{Experimental settings and metrics}\label{app:settings}
Training and functional evaluations used a cluster of NVIDIA A100 GPUs. Each task used a single GPU, with at most 30 GPUs in use simultaneously. The post-hoc hyperspherical-energy measurements used NVIDIA L40 GPUs.

\begin{table}[!htbp]
\caption{Study overview. Each comparison uses a shared pretrained backbone. Divergent runs are reported with the corresponding results.}
\label{tab:studies}
\begin{center}
\setlength{\tabcolsep}{3pt}
\begin{tabular*}{\linewidth}{@{\extracolsep{\fill}}lll@{}}\toprule
Backbone & Adaptation & Primary retention view \\\midrule
Qwen2.5-7B & MetaMath / six methods & General-text NLL \\
Llama-3.1-8B & MetaMath / six methods & General-text NLL \\
Qwen2.5-7B & CodeFeedback / six methods & General-text NLL \\
Llama-3.1-8B & CodeFeedback / six methods & General-text NLL \\
FLUX-4B & Cat / six methods & Image drift / CLIP \\
FLUX-4B & 28 objects & Flow loss / drift / CLIP \\\bottomrule
\end{tabular*}
\end{center}
\end{table}

\subsection{Language adaptation}
Qwen2.5-7B and Llama-3.1-8B use 20,000 MetaMath examples \citep{yu2024metamath}, microbatch size two, gradient accumulation over 16 steps, sequence length 768, 625 optimizer updates, cosine decay, 3\% warmup, gradient clipping at one and zero weight decay. LoRA uses $\alpha=2r$ and zero dropout. OFT uses unshared coordinate blocks and OFTv2's Cayley--Neumann parameterization with five terms \citep{qiu2025oftv2}. The seven projection types are query, key, value, output, up, gate and down, giving 196 adapted matrices in Qwen and 224 in Llama. LRs are $\{1,3,10,30,100\}\times10^{-5}$ and seeds are 13, 37 and 73. 

All six methods use AdamW \citep{loshchilov2019adamw}. LoRA-family ranks are 7/14/28 for Qwen and 7/15/30 for Llama. OFT block sizes are 32/64/128, and HRA uses 16/32/64 Householder vectors. For each model/method/capacity, mean accuracy on the fixed 1,000-item GSM8K development panel selects one LR from fully completed configurations, with ties favoring the lower LR. The resulting 108 checkpoints are evaluated on the official 1,319-item test set. The diverged Qwen PiSSA configuration is excluded from LR selection and complete-configuration frontiers. Appendix~\ref{app:language} reports all selected outcomes and the later HRA LR extension separately.

Coding uses the same two backbones, six methods and five-rate grid at one capacity per method. Its held-out response-NLL selection, completion-based testing and full settings are specified in Appendix~\ref{app:coding}.

Standard and intervention retention use FineWiki-200 \citep{penedo2025finewiki}, with contexts truncated to 768 tokens. The intervention panel evaluates GSM8K development accuracy on 1,000 items. General NLL and full-vocabulary KL use all 200 FineWiki documents, with token-pooled aggregation. General NLL scores valid next-token positions. All intervention comparisons use the same evaluation inputs and checkpoint-local references.

\subsection{Diffusion personalization}
FLUX.2-klein-base-4B uses AdamW, batch size two, weight decay $10^{-4}$, gradient clipping at one, constant LR and 750 updates. The benchmark-cat set has 14 training, two validation and four test images. Four approximately matched parameter pairs are r4/b32, r8/b64, r16/b128 and r32/b256. LoRA uses $\alpha=r$. OFT uses unshared coordinate blocks with OFTv2's Cayley--Neumann parameterization \citep{qiu2025oftv2}. The LR grid is $\{0.5,1,3,5,10,30,50,100\}\times10^{-5}$, with seeds 0, 1 and 2. Generation uses resolution 512, 20 inference steps and guidance 3.5. For each method/capacity, we select a shared LR by the mean validation DINO at step 700, the last validation evaluation, then report test and retention scores at final step 750. The two validation and four test images are disjoint.

The complete six-method cat comparison uses LoRA/DoRA/PiSSA/MiLoRA ranks 4/8/16/32, OFT blocks 32/64/128/256, and HRA 16/32/62/124 Householder vectors. All 576 runs across six methods, four capacities and eight LRs are complete. 

Target DINOv2 similarity averages the paired generated/reference test-image cosines over four examples and three generation repeats. General and class drift compare paired base/adapted generations using $(1-\mathrm{cosine})/2$, over five prompts in each bank. General CLIP is normalized text/image cosine averaged over five general prompts. Training and generation randomness contribute to the reported seed variation.

The 28-object study uses three training images, one validation image and one or two held-out test images per object. It crosses the same four capacity pairs with LRs $\{3,5,10\}\times10^{-5}$ and seeds 0, 1 and 2, for 2,016 runs. For each method/capacity, we select one LR by mean step-700 validation DINO across all 28 objects and seeds, breaking ties toward the smaller LR. That LR is shared across objects and seeds. Test DINO, averaged over three generation repeats, and retention are measured at step 750. The retention bank uses 200 COCO image/caption pairs and four scheduler-index fractions, $\{0.125,0.375,0.625,0.875\}$. Image drift and CLIP use the same four general prompts for every object, depicting an angry Mona Lisa, the Eiffel Tower at night, an astronaut riding a horse, and an avocado-shaped chair.

\subsection{Metrics and uncertainty}
\paragraph{Language likelihood.}
For document $j$ and its valid next-token positions $T_j$, general-text NLL is
\begin{equation*}
 \mathrm{NLL}(\bm\Theta)=-\frac{1}{\sum_j |T_j|}\sum_j\sum_{t\in T_j}
 \log p_{\bm\Theta}\!\left(x_{j,t}\middle|x_{j,<t}\right).
\end{equation*}
All models use identical teacher-forced contexts and masks, excluding padding. The score weights tokens equally and measures fit to the general-text bank.

\paragraph{Diffusion outputs and predictive change.}
Write $\cos(\bm a,\bm b)=\frac{\bm a^{\!\top}\bm b}{\left\|\bm a\right\|_2\left\|\bm b\right\|_2}$, with normalized DINOv2 features $\bm{\mathrm{f}}(I)$ and CLIP image/text features $\bm{\mathrm{g}}(I),\bm{\mathrm{h}}(c)$. For generated image $I_j^{\bm\Theta}$, paired base image $I_j^0$, target reference image $I_j^{\rm ref}$ and prompt $c_j$, the reported averages are
\begin{align*}
 \mathrm{DINO}&=\frac{1}{J}\sum_j\cos\!\left(\bm{\mathrm{f}}(I_j^{\bm\Theta}),\bm{\mathrm{f}}(I_j^{\rm ref})\right),
 &D_{\rm image}&=\frac{1}{J}\sum_j\frac{1-\cos\!\left(\bm{\mathrm{f}}(I_j^{\bm\Theta}),\bm{\mathrm{f}}(I_j^0)\right)}{2},\\
 \mathrm{CLIP}&=\frac{1}{J}\sum_j\cos\!\left(\bm{\mathrm{g}}(I_j^{\bm\Theta}),\bm{\mathrm{h}}(c_j)\right).
\end{align*}
Each metric uses its own stated bank of $J$ pairs. General and class drift use different prompt banks. Paired images share generation seeds. Repeats are averaged before training-seed summaries. The 28-object tables report absolute target DINO, COCO flow loss and CLIP alignment, together with base-relative image drift.

For the 28-object COCO flow loss, $\bm y_{j,k}$ contains the fixed noisy latent, scheduler time and conditioning for image/caption pair $j$ in time bucket $k$. With $d_j$ velocity coordinates, $K$ buckets, flow target $\bm u_{j,k}$ and loss weight $w_{j,k}$,
\begin{equation*}
 \mathcal L_{\rm flow}(\bm\Theta)=\frac{1}{NK}\sum_{j,k}\frac{w_{j,k}}{d_j}
 \left\|\bm v_{\bm\Theta}(\bm y_{j,k})-\bm u_{j,k}\right\|_2^2.
\end{equation*}
The rectified-flow target is noise minus clean latent \citep{liu2023rectifiedflow}. This fixed-state prediction loss is distinct from generated-image drift.

\paragraph{Uncertainty.}
For $S$ training seeds with scores $z_s$, we report $\bar z=S^{-1}\sum_s z_s$ and sample $\mathrm{SD}=\left(\sum_s(z_s-\bar z)^2/(S-1)\right)^{1/2}$. Perturbation and generation repetitions are averaged within a checkpoint first. For each 28-object method/capacity/LR setting, $z_s=28^{-1}\sum_{o=1}^{28}z_{s,o}$. The SD describes variation across training seeds.

\FloatBarrier
\section{Complete selected comparisons and retention checks}\label{app:comparisons}
\subsection{Six-method mathematics comparison and held-out evaluation}\label{app:math-six}\label{app:language}
We extend the same Qwen2.5-7B/Llama-3.1-8B mathematics setup with four methods that vary adaptation structure. DoRA separates weight-vector magnitudes and directions, learning magnitudes alongside a low-rank directional update \citep{liu2024dora}. PiSSA and MiLoRA initialize the trainable factors from leading and trailing singular components, respectively \citep{meng2024pissa,wang2025milora}. In both cases, subtracting the initial low-rank component from the frozen base preserves the starting model. HRA learns a product of Householder reflections \citep{yuan2024hra}. In an input-side convention,
\begin{equation*}
 \bm W=\bm W_0\prod_{j=1}^{r_H}\bm H_j,
 \qquad \bm H_j=\bm I-2\frac{\bm v_j\bm v_j^{\!\top}}{\bm v_j^{\!\top}\bm v_j}.
\end{equation*}
The implementation uses the compact WY representation \citep{schreiber1989wy,likhosherstov2021cwy}, paired identity initialization and no Gram--Schmidt orthogonalization. Here $r_H$ counts Householder vectors. DoRA/PiSSA/MiLoRA use training ranks 7/14/28 for Qwen and 7/15/30 for Llama, with $\alpha=2r$. HRA uses 16/32/64 vectors. The capacity tiers approximately match trainable parameter counts. PiSSA/MiLoRA serialization expresses their final differences against the original base at rank $2r$. The tables report the actual training rank and trainable parameter count.

All methods use the same seven attention/MLP projection types, AdamW, batch 32, sequence length 768, 625 updates, cosine decay, warmup fraction 0.03, zero weight decay and gradient clipping at one. The original grid uses LRs $\{1,3,10,30,100\}\times10^{-5}$ and seeds 13/37/73. It contains 540 training outcomes, comprising 180 LoRA/OFT and 360 added-method runs. Qwen PiSSA r7 at LR $10^{-3}$, seed 73, diverged at step 55, leaving 539 final development evaluations. The group with two successful seeds is excluded from LR selection and complete-configuration frontiers.

\paragraph{Selection and test separation.}
One LR per model/method/capacity maximizes the exact total correct count across seeds on the fixed 1,000-problem, group-held-out GSM8K development panel. Ties choose the smaller LR. The resulting 36 configurations and 108 checkpoints are fixed before test evaluation, with one LR shared across seeds. Each checkpoint is evaluated on all 1,319 official GSM8K test problems. Prior test-set use for evaluation-harness development was separate from this selection. Generation uses the same native BF16 merged-adapter path, prompt and answer parser, greedy decoding, a 300-new-token cap, an 800-total-token cap and a 1,024-token engine context. The six-method intervention figures use development scores.

Retention reuses each run's recorded standard step-625 FineWiki-200 measurement, at length 768 and batch two. We report $\Delta\mathrm{NLL}=\mathrm{NLL}_{625}-\mathrm{NLL}_0$, using the corresponding pretrained reference. Smaller values indicate better general-text retention. We report sample SD across training seeds.

\begin{table}[!htbp]
\caption{Qwen2.5-7B results for all development-selected configurations from the original five-LR grid. Parameters are in millions. Ranks are training ranks (HRA counts Householder vectors). Accuracy is in percent, and $\Delta$NLL is the recorded FineWiki-200 NLL minus its matching pretrained value. LRs were selected using development accuracy only.}
\label{tab:math-selected-qwen}
\begin{center}
\setlength{\tabcolsep}{3pt}\renewcommand{\arraystretch}{1.12}
\begin{tabular*}{\linewidth}{@{\extracolsep{\fill}}lrcccc@{}}\toprule
Adapter & Params (M) & LR & Dev $\uparrow$ & Test $\uparrow$ & $\Delta$NLL $\downarrow$ \\ \midrule
\textcolor{LoRAColor}{LoRA} r7 & \result{math_grid_qwen_lora_7_0p001_params} & $1\times10^{-3}$ & $\result{math_grid_qwen_lora_7_0p001_dev_mean}_{\pm\result{math_grid_qwen_lora_7_0p001_dev_sd}}$ & $\result{math_grid_qwen_lora_7_0p001_test_mean}_{\pm\result{math_grid_qwen_lora_7_0p001_test_sd}}$ & $\result{math_grid_qwen_lora_7_0p001_forgetting_mean}_{\pm\result{math_grid_qwen_lora_7_0p001_forgetting_sd}}$ \\
\textcolor{LoRAColor}{LoRA} r14 & \result{math_grid_qwen_lora_14_1em05_params} & $1\times10^{-5}$ & $\result{math_grid_qwen_lora_14_1em05_dev_mean}_{\pm\result{math_grid_qwen_lora_14_1em05_dev_sd}}$ & $\result{math_grid_qwen_lora_14_1em05_test_mean}_{\pm\result{math_grid_qwen_lora_14_1em05_test_sd}}$ & $\result{math_grid_qwen_lora_14_1em05_forgetting_mean}_{\pm\result{math_grid_qwen_lora_14_1em05_forgetting_sd}}$ \\
\textcolor{LoRAColor}{LoRA} r28 & \result{math_grid_qwen_lora_28_3em05_params} & $3\times10^{-5}$ & $\result{math_grid_qwen_lora_28_3em05_dev_mean}_{\pm\result{math_grid_qwen_lora_28_3em05_dev_sd}}$ & $\result{math_grid_qwen_lora_28_3em05_test_mean}_{\pm\result{math_grid_qwen_lora_28_3em05_test_sd}}$ & $\result{math_grid_qwen_lora_28_3em05_forgetting_mean}_{\pm\result{math_grid_qwen_lora_28_3em05_forgetting_sd}}$ \\
\addlinespace[2pt]
\addlinespace[2pt]
\textcolor{OFTColor}{OFT} b32 & \result{math_grid_qwen_oft_32_0p0003_params} & $3\times10^{-4}$ & $\result{math_grid_qwen_oft_32_0p0003_dev_mean}_{\pm\result{math_grid_qwen_oft_32_0p0003_dev_sd}}$ & $\result{math_grid_qwen_oft_32_0p0003_test_mean}_{\pm\result{math_grid_qwen_oft_32_0p0003_test_sd}}$ & $\result{math_grid_qwen_oft_32_0p0003_forgetting_mean}_{\pm\result{math_grid_qwen_oft_32_0p0003_forgetting_sd}}$ \\
\textcolor{OFTColor}{OFT} b64 & \result{math_grid_qwen_oft_64_0p0003_params} & $3\times10^{-4}$ & $\result{math_grid_qwen_oft_64_0p0003_dev_mean}_{\pm\result{math_grid_qwen_oft_64_0p0003_dev_sd}}$ & $\result{math_grid_qwen_oft_64_0p0003_test_mean}_{\pm\result{math_grid_qwen_oft_64_0p0003_test_sd}}$ & $\result{math_grid_qwen_oft_64_0p0003_forgetting_mean}_{\pm\result{math_grid_qwen_oft_64_0p0003_forgetting_sd}}$ \\
\textcolor{OFTColor}{OFT} b128 & \result{math_grid_qwen_oft_128_0p0001_params} & $1\times10^{-4}$ & $\result{math_grid_qwen_oft_128_0p0001_dev_mean}_{\pm\result{math_grid_qwen_oft_128_0p0001_dev_sd}}$ & $\result{math_grid_qwen_oft_128_0p0001_test_mean}_{\pm\result{math_grid_qwen_oft_128_0p0001_test_sd}}$ & $\result{math_grid_qwen_oft_128_0p0001_forgetting_mean}_{\pm\result{math_grid_qwen_oft_128_0p0001_forgetting_sd}}$ \\
\addlinespace[2pt]
\addlinespace[2pt]
\textcolor{DoRAColor}{DoRA} r7 & \result{math_grid_qwen_dora_7_0p0003_params} & $3\times10^{-4}$ & $\result{math_grid_qwen_dora_7_0p0003_dev_mean}_{\pm\result{math_grid_qwen_dora_7_0p0003_dev_sd}}$ & $\result{math_grid_qwen_dora_7_0p0003_test_mean}_{\pm\result{math_grid_qwen_dora_7_0p0003_test_sd}}$ & $\result{math_grid_qwen_dora_7_0p0003_forgetting_mean}_{\pm\result{math_grid_qwen_dora_7_0p0003_forgetting_sd}}$ \\
\textcolor{DoRAColor}{DoRA} r14 & \result{math_grid_qwen_dora_14_0p0003_params} & $3\times10^{-4}$ & $\result{math_grid_qwen_dora_14_0p0003_dev_mean}_{\pm\result{math_grid_qwen_dora_14_0p0003_dev_sd}}$ & $\result{math_grid_qwen_dora_14_0p0003_test_mean}_{\pm\result{math_grid_qwen_dora_14_0p0003_test_sd}}$ & $\result{math_grid_qwen_dora_14_0p0003_forgetting_mean}_{\pm\result{math_grid_qwen_dora_14_0p0003_forgetting_sd}}$ \\
\textcolor{DoRAColor}{DoRA} r28 & \result{math_grid_qwen_dora_28_0p0001_params} & $1\times10^{-4}$ & $\result{math_grid_qwen_dora_28_0p0001_dev_mean}_{\pm\result{math_grid_qwen_dora_28_0p0001_dev_sd}}$ & $\result{math_grid_qwen_dora_28_0p0001_test_mean}_{\pm\result{math_grid_qwen_dora_28_0p0001_test_sd}}$ & $\result{math_grid_qwen_dora_28_0p0001_forgetting_mean}_{\pm\result{math_grid_qwen_dora_28_0p0001_forgetting_sd}}$ \\
\addlinespace[2pt]
\addlinespace[2pt]
\textcolor{PiSSAColor}{PiSSA} r7 & \result{math_grid_qwen_pissa_7_3em05_params} & $3\times10^{-5}$ & $\result{math_grid_qwen_pissa_7_3em05_dev_mean}_{\pm\result{math_grid_qwen_pissa_7_3em05_dev_sd}}$ & $\result{math_grid_qwen_pissa_7_3em05_test_mean}_{\pm\result{math_grid_qwen_pissa_7_3em05_test_sd}}$ & $\result{math_grid_qwen_pissa_7_3em05_forgetting_mean}_{\pm\result{math_grid_qwen_pissa_7_3em05_forgetting_sd}}$ \\
\textcolor{PiSSAColor}{PiSSA} r14 & \result{math_grid_qwen_pissa_14_3em05_params} & $3\times10^{-5}$ & $\result{math_grid_qwen_pissa_14_3em05_dev_mean}_{\pm\result{math_grid_qwen_pissa_14_3em05_dev_sd}}$ & $\result{math_grid_qwen_pissa_14_3em05_test_mean}_{\pm\result{math_grid_qwen_pissa_14_3em05_test_sd}}$ & $\result{math_grid_qwen_pissa_14_3em05_forgetting_mean}_{\pm\result{math_grid_qwen_pissa_14_3em05_forgetting_sd}}$ \\
\textcolor{PiSSAColor}{PiSSA} r28 & \result{math_grid_qwen_pissa_28_3em05_params} & $3\times10^{-5}$ & $\result{math_grid_qwen_pissa_28_3em05_dev_mean}_{\pm\result{math_grid_qwen_pissa_28_3em05_dev_sd}}$ & $\result{math_grid_qwen_pissa_28_3em05_test_mean}_{\pm\result{math_grid_qwen_pissa_28_3em05_test_sd}}$ & $\result{math_grid_qwen_pissa_28_3em05_forgetting_mean}_{\pm\result{math_grid_qwen_pissa_28_3em05_forgetting_sd}}$ \\
\addlinespace[2pt]
\addlinespace[2pt]
\textcolor{MiLoRAColor}{MiLoRA} r7 & \result{math_grid_qwen_milora_7_1em05_params} & $1\times10^{-5}$ & $\result{math_grid_qwen_milora_7_1em05_dev_mean}_{\pm\result{math_grid_qwen_milora_7_1em05_dev_sd}}$ & $\result{math_grid_qwen_milora_7_1em05_test_mean}_{\pm\result{math_grid_qwen_milora_7_1em05_test_sd}}$ & $\result{math_grid_qwen_milora_7_1em05_forgetting_mean}_{\pm\result{math_grid_qwen_milora_7_1em05_forgetting_sd}}$ \\
\textcolor{MiLoRAColor}{MiLoRA} r14 & \result{math_grid_qwen_milora_14_1em05_params} & $1\times10^{-5}$ & $\result{math_grid_qwen_milora_14_1em05_dev_mean}_{\pm\result{math_grid_qwen_milora_14_1em05_dev_sd}}$ & $\result{math_grid_qwen_milora_14_1em05_test_mean}_{\pm\result{math_grid_qwen_milora_14_1em05_test_sd}}$ & $\result{math_grid_qwen_milora_14_1em05_forgetting_mean}_{\pm\result{math_grid_qwen_milora_14_1em05_forgetting_sd}}$ \\
\textcolor{MiLoRAColor}{MiLoRA} r28 & \result{math_grid_qwen_milora_28_3em05_params} & $3\times10^{-5}$ & $\result{math_grid_qwen_milora_28_3em05_dev_mean}_{\pm\result{math_grid_qwen_milora_28_3em05_dev_sd}}$ & $\result{math_grid_qwen_milora_28_3em05_test_mean}_{\pm\result{math_grid_qwen_milora_28_3em05_test_sd}}$ & $\result{math_grid_qwen_milora_28_3em05_forgetting_mean}_{\pm\result{math_grid_qwen_milora_28_3em05_forgetting_sd}}$ \\
\addlinespace[2pt]
\addlinespace[2pt]
\textcolor{HRAColor}{HRA} r16 & \result{math_grid_qwen_hra_16_0p0001_params} & $1\times10^{-4}$ & $\result{math_grid_qwen_hra_16_0p0001_dev_mean}_{\pm\result{math_grid_qwen_hra_16_0p0001_dev_sd}}$ & $\result{math_grid_qwen_hra_16_0p0001_test_mean}_{\pm\result{math_grid_qwen_hra_16_0p0001_test_sd}}$ & $\result{math_grid_qwen_hra_16_0p0001_forgetting_mean}_{\pm\result{math_grid_qwen_hra_16_0p0001_forgetting_sd}}$ \\
\textcolor{HRAColor}{HRA} r32 & \result{math_grid_qwen_hra_32_0p0001_params} & $1\times10^{-4}$ & $\result{math_grid_qwen_hra_32_0p0001_dev_mean}_{\pm\result{math_grid_qwen_hra_32_0p0001_dev_sd}}$ & $\result{math_grid_qwen_hra_32_0p0001_test_mean}_{\pm\result{math_grid_qwen_hra_32_0p0001_test_sd}}$ & $\result{math_grid_qwen_hra_32_0p0001_forgetting_mean}_{\pm\result{math_grid_qwen_hra_32_0p0001_forgetting_sd}}$ \\
\textcolor{HRAColor}{HRA} r64 & \result{math_grid_qwen_hra_64_1em05_params} & $1\times10^{-5}$ & $\result{math_grid_qwen_hra_64_1em05_dev_mean}_{\pm\result{math_grid_qwen_hra_64_1em05_dev_sd}}$ & $\result{math_grid_qwen_hra_64_1em05_test_mean}_{\pm\result{math_grid_qwen_hra_64_1em05_test_sd}}$ & $\result{math_grid_qwen_hra_64_1em05_forgetting_mean}_{\pm\result{math_grid_qwen_hra_64_1em05_forgetting_sd}}$ \\
\bottomrule\end{tabular*}

\end{center}
\end{table}

\begin{table}[!htbp]
\caption{Llama-3.1-8B results for all development-selected configurations from the original five-LR grid. Parameters are in millions. Ranks are training ranks (HRA counts Householder vectors). Accuracy is in percent, and $\Delta$NLL is the recorded FineWiki-200 NLL minus its matching pretrained value. LRs were selected using development accuracy only.}
\label{tab:math-selected-llama}
\begin{center}
\setlength{\tabcolsep}{3pt}\renewcommand{\arraystretch}{1.12}
\begin{tabular*}{\linewidth}{@{\extracolsep{\fill}}lrcccc@{}}\toprule
Adapter & Params (M) & LR & Dev $\uparrow$ & Test $\uparrow$ & $\Delta$NLL $\downarrow$ \\ \midrule
\textcolor{LoRAColor}{LoRA} r7 & \result{math_grid_llama_lora_7_0p0003_params} & $3\times10^{-4}$ & $\result{math_grid_llama_lora_7_0p0003_dev_mean}_{\pm\result{math_grid_llama_lora_7_0p0003_dev_sd}}$ & $\result{math_grid_llama_lora_7_0p0003_test_mean}_{\pm\result{math_grid_llama_lora_7_0p0003_test_sd}}$ & $\result{math_grid_llama_lora_7_0p0003_forgetting_mean}_{\pm\result{math_grid_llama_lora_7_0p0003_forgetting_sd}}$ \\
\textcolor{LoRAColor}{LoRA} r15 & \result{math_grid_llama_lora_15_0p0003_params} & $3\times10^{-4}$ & $\result{math_grid_llama_lora_15_0p0003_dev_mean}_{\pm\result{math_grid_llama_lora_15_0p0003_dev_sd}}$ & $\result{math_grid_llama_lora_15_0p0003_test_mean}_{\pm\result{math_grid_llama_lora_15_0p0003_test_sd}}$ & $\result{math_grid_llama_lora_15_0p0003_forgetting_mean}_{\pm\result{math_grid_llama_lora_15_0p0003_forgetting_sd}}$ \\
\textcolor{LoRAColor}{LoRA} r30 & \result{math_grid_llama_lora_30_0p0003_params} & $3\times10^{-4}$ & $\result{math_grid_llama_lora_30_0p0003_dev_mean}_{\pm\result{math_grid_llama_lora_30_0p0003_dev_sd}}$ & $\result{math_grid_llama_lora_30_0p0003_test_mean}_{\pm\result{math_grid_llama_lora_30_0p0003_test_sd}}$ & $\result{math_grid_llama_lora_30_0p0003_forgetting_mean}_{\pm\result{math_grid_llama_lora_30_0p0003_forgetting_sd}}$ \\
\addlinespace[2pt]
\addlinespace[2pt]
\textcolor{OFTColor}{OFT} b32 & \result{math_grid_llama_oft_32_0p0003_params} & $3\times10^{-4}$ & $\result{math_grid_llama_oft_32_0p0003_dev_mean}_{\pm\result{math_grid_llama_oft_32_0p0003_dev_sd}}$ & $\result{math_grid_llama_oft_32_0p0003_test_mean}_{\pm\result{math_grid_llama_oft_32_0p0003_test_sd}}$ & $\result{math_grid_llama_oft_32_0p0003_forgetting_mean}_{\pm\result{math_grid_llama_oft_32_0p0003_forgetting_sd}}$ \\
\textcolor{OFTColor}{OFT} b64 & \result{math_grid_llama_oft_64_0p0003_params} & $3\times10^{-4}$ & $\result{math_grid_llama_oft_64_0p0003_dev_mean}_{\pm\result{math_grid_llama_oft_64_0p0003_dev_sd}}$ & $\result{math_grid_llama_oft_64_0p0003_test_mean}_{\pm\result{math_grid_llama_oft_64_0p0003_test_sd}}$ & $\result{math_grid_llama_oft_64_0p0003_forgetting_mean}_{\pm\result{math_grid_llama_oft_64_0p0003_forgetting_sd}}$ \\
\textcolor{OFTColor}{OFT} b128 & \result{math_grid_llama_oft_128_0p0001_params} & $1\times10^{-4}$ & $\result{math_grid_llama_oft_128_0p0001_dev_mean}_{\pm\result{math_grid_llama_oft_128_0p0001_dev_sd}}$ & $\result{math_grid_llama_oft_128_0p0001_test_mean}_{\pm\result{math_grid_llama_oft_128_0p0001_test_sd}}$ & $\result{math_grid_llama_oft_128_0p0001_forgetting_mean}_{\pm\result{math_grid_llama_oft_128_0p0001_forgetting_sd}}$ \\
\addlinespace[2pt]
\addlinespace[2pt]
\textcolor{DoRAColor}{DoRA} r7 & \result{math_grid_llama_dora_7_0p0003_params} & $3\times10^{-4}$ & $\result{math_grid_llama_dora_7_0p0003_dev_mean}_{\pm\result{math_grid_llama_dora_7_0p0003_dev_sd}}$ & $\result{math_grid_llama_dora_7_0p0003_test_mean}_{\pm\result{math_grid_llama_dora_7_0p0003_test_sd}}$ & $\result{math_grid_llama_dora_7_0p0003_forgetting_mean}_{\pm\result{math_grid_llama_dora_7_0p0003_forgetting_sd}}$ \\
\textcolor{DoRAColor}{DoRA} r15 & \result{math_grid_llama_dora_15_0p0003_params} & $3\times10^{-4}$ & $\result{math_grid_llama_dora_15_0p0003_dev_mean}_{\pm\result{math_grid_llama_dora_15_0p0003_dev_sd}}$ & $\result{math_grid_llama_dora_15_0p0003_test_mean}_{\pm\result{math_grid_llama_dora_15_0p0003_test_sd}}$ & $\result{math_grid_llama_dora_15_0p0003_forgetting_mean}_{\pm\result{math_grid_llama_dora_15_0p0003_forgetting_sd}}$ \\
\textcolor{DoRAColor}{DoRA} r30 & \result{math_grid_llama_dora_30_0p0003_params} & $3\times10^{-4}$ & $\result{math_grid_llama_dora_30_0p0003_dev_mean}_{\pm\result{math_grid_llama_dora_30_0p0003_dev_sd}}$ & $\result{math_grid_llama_dora_30_0p0003_test_mean}_{\pm\result{math_grid_llama_dora_30_0p0003_test_sd}}$ & $\result{math_grid_llama_dora_30_0p0003_forgetting_mean}_{\pm\result{math_grid_llama_dora_30_0p0003_forgetting_sd}}$ \\
\addlinespace[2pt]
\addlinespace[2pt]
\textcolor{PiSSAColor}{PiSSA} r7 & \result{math_grid_llama_pissa_7_0p0001_params} & $1\times10^{-4}$ & $\result{math_grid_llama_pissa_7_0p0001_dev_mean}_{\pm\result{math_grid_llama_pissa_7_0p0001_dev_sd}}$ & $\result{math_grid_llama_pissa_7_0p0001_test_mean}_{\pm\result{math_grid_llama_pissa_7_0p0001_test_sd}}$ & $\result{math_grid_llama_pissa_7_0p0001_forgetting_mean}_{\pm\result{math_grid_llama_pissa_7_0p0001_forgetting_sd}}$ \\
\textcolor{PiSSAColor}{PiSSA} r15 & \result{math_grid_llama_pissa_15_0p0001_params} & $1\times10^{-4}$ & $\result{math_grid_llama_pissa_15_0p0001_dev_mean}_{\pm\result{math_grid_llama_pissa_15_0p0001_dev_sd}}$ & $\result{math_grid_llama_pissa_15_0p0001_test_mean}_{\pm\result{math_grid_llama_pissa_15_0p0001_test_sd}}$ & $\result{math_grid_llama_pissa_15_0p0001_forgetting_mean}_{\pm\result{math_grid_llama_pissa_15_0p0001_forgetting_sd}}$ \\
\textcolor{PiSSAColor}{PiSSA} r30 & \result{math_grid_llama_pissa_30_3em05_params} & $3\times10^{-5}$ & $\result{math_grid_llama_pissa_30_3em05_dev_mean}_{\pm\result{math_grid_llama_pissa_30_3em05_dev_sd}}$ & $\result{math_grid_llama_pissa_30_3em05_test_mean}_{\pm\result{math_grid_llama_pissa_30_3em05_test_sd}}$ & $\result{math_grid_llama_pissa_30_3em05_forgetting_mean}_{\pm\result{math_grid_llama_pissa_30_3em05_forgetting_sd}}$ \\
\addlinespace[2pt]
\addlinespace[2pt]
\textcolor{MiLoRAColor}{MiLoRA} r7 & \result{math_grid_llama_milora_7_0p0003_params} & $3\times10^{-4}$ & $\result{math_grid_llama_milora_7_0p0003_dev_mean}_{\pm\result{math_grid_llama_milora_7_0p0003_dev_sd}}$ & $\result{math_grid_llama_milora_7_0p0003_test_mean}_{\pm\result{math_grid_llama_milora_7_0p0003_test_sd}}$ & $\result{math_grid_llama_milora_7_0p0003_forgetting_mean}_{\pm\result{math_grid_llama_milora_7_0p0003_forgetting_sd}}$ \\
\textcolor{MiLoRAColor}{MiLoRA} r15 & \result{math_grid_llama_milora_15_0p0003_params} & $3\times10^{-4}$ & $\result{math_grid_llama_milora_15_0p0003_dev_mean}_{\pm\result{math_grid_llama_milora_15_0p0003_dev_sd}}$ & $\result{math_grid_llama_milora_15_0p0003_test_mean}_{\pm\result{math_grid_llama_milora_15_0p0003_test_sd}}$ & $\result{math_grid_llama_milora_15_0p0003_forgetting_mean}_{\pm\result{math_grid_llama_milora_15_0p0003_forgetting_sd}}$ \\
\textcolor{MiLoRAColor}{MiLoRA} r30 & \result{math_grid_llama_milora_30_0p0003_params} & $3\times10^{-4}$ & $\result{math_grid_llama_milora_30_0p0003_dev_mean}_{\pm\result{math_grid_llama_milora_30_0p0003_dev_sd}}$ & $\result{math_grid_llama_milora_30_0p0003_test_mean}_{\pm\result{math_grid_llama_milora_30_0p0003_test_sd}}$ & $\result{math_grid_llama_milora_30_0p0003_forgetting_mean}_{\pm\result{math_grid_llama_milora_30_0p0003_forgetting_sd}}$ \\
\addlinespace[2pt]
\addlinespace[2pt]
\textcolor{HRAColor}{HRA} r16 & \result{math_grid_llama_hra_16_0p001_params} & $1\times10^{-3}$ & $\result{math_grid_llama_hra_16_0p001_dev_mean}_{\pm\result{math_grid_llama_hra_16_0p001_dev_sd}}$ & $\result{math_grid_llama_hra_16_0p001_test_mean}_{\pm\result{math_grid_llama_hra_16_0p001_test_sd}}$ & $\result{math_grid_llama_hra_16_0p001_forgetting_mean}_{\pm\result{math_grid_llama_hra_16_0p001_forgetting_sd}}$ \\
\textcolor{HRAColor}{HRA} r32 & \result{math_grid_llama_hra_32_0p001_params} & $1\times10^{-3}$ & $\result{math_grid_llama_hra_32_0p001_dev_mean}_{\pm\result{math_grid_llama_hra_32_0p001_dev_sd}}$ & $\result{math_grid_llama_hra_32_0p001_test_mean}_{\pm\result{math_grid_llama_hra_32_0p001_test_sd}}$ & $\result{math_grid_llama_hra_32_0p001_forgetting_mean}_{\pm\result{math_grid_llama_hra_32_0p001_forgetting_sd}}$ \\
\textcolor{HRAColor}{HRA} r64 & \result{math_grid_llama_hra_64_0p001_params} & $1\times10^{-3}$ & $\result{math_grid_llama_hra_64_0p001_dev_mean}_{\pm\result{math_grid_llama_hra_64_0p001_dev_sd}}$ & $\result{math_grid_llama_hra_64_0p001_test_mean}_{\pm\result{math_grid_llama_hra_64_0p001_test_sd}}$ & $\result{math_grid_llama_hra_64_0p001_forgetting_mean}_{\pm\result{math_grid_llama_hra_64_0p001_forgetting_sd}}$ \\
\bottomrule\end{tabular*}

\end{center}
\end{table}

\begin{figure}[!htbp]\centering
\includegraphics[width=\linewidth]{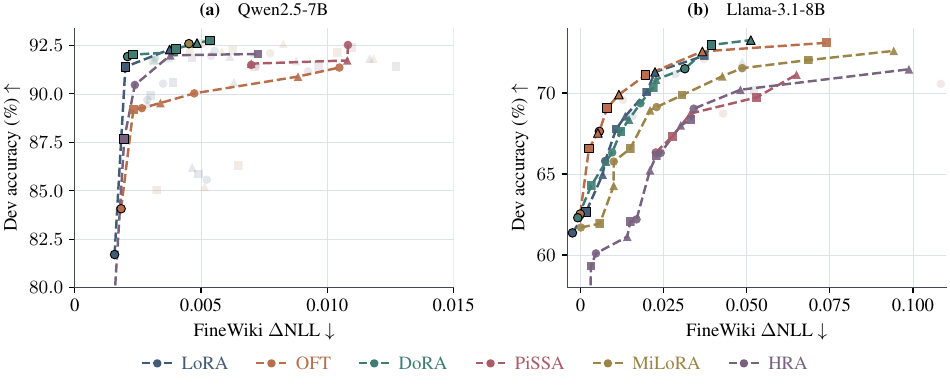}
\caption{Six-method mathematics trade-offs, focused on the frontier regions. Each point is a complete capacity/LR configuration, with shapes denoting increasing capacity. Dashed lines and solid points mark within-method frontiers, and black borders mark the pooled frontier. Dominated points are faint. Frontiers use all 89 Qwen and 90 Llama configurations before axis clipping. The later HRA extension and diverged PiSSA configuration are excluded.}
\label{fig:pareto-full}
\end{figure}
\paragraph{Performance and retention across settings.}
Qwen's small test-accuracy differences coexist with different retention costs. Its high-LR, small-rank LoRA selection is an outlier with greater forgetting than the other methods. LoRA's larger-rank selections have less forgetting. HRA's Qwen scores are competitive, and its original-grid Llama scores improve with capacity alongside increasing retention cost. These results support the main finding that competitive task scores can accompany different forgetting costs, with outcomes depending on the selected LR and capacity. Appendix~\ref{app:retention-patterns} summarizes the trends across tasks.
\subsection{Separate HRA learning-rate extension}\label{app:hra-extension}
All three original Llama HRA winners occur at the largest LR, $10^{-3}$. A follow-up therefore adds $3\times10^{-3}$ and $10^{-2}$ at each capacity and seed. The extension was proposed after the original test report and is reported separately. Of 18 training runs, 15 complete both development and official-test evaluations. All three r64 runs at $10^{-2}$ diverge. Selection over the seven-LR development grid chooses $3\times10^{-3}$ at r16/r32 and retains $10^{-3}$ at r64. The main comparison retains the original five-LR selections.
\begin{table}[!htbp]
\caption{Llama-3.1-8B HRA development-selected outcomes after the separate seven-LR follow-up. The r64 selection is unchanged from the original grid. $n$ is the successful seed count. LR selection uses development scores.}
\label{tab:math-hra-extension}
\begin{center}
\setlength{\tabcolsep}{3pt}
\begin{tabular*}{\linewidth}{@{\extracolsep{\fill}}lccccc@{}}\toprule
Adapter & LR & $n$ & Dev (\%) $\uparrow$ & Test (\%) $\uparrow$ & $\Delta$NLL $\downarrow$ \\\midrule
\textcolor{HRAColor}{HRA} r16 & $3\times10^{-3}$ & 3 & $\result{math_extension_llama_hra_16_0p003_dev_mean}_{\pm\result{math_extension_llama_hra_16_0p003_dev_sd}}$ & $\result{math_extension_llama_hra_16_0p003_test_mean}_{\pm\result{math_extension_llama_hra_16_0p003_test_sd}}$ & $\result{math_extension_llama_hra_16_0p003_forgetting_mean}_{\pm\result{math_extension_llama_hra_16_0p003_forgetting_sd}}$ \\
\textcolor{HRAColor}{HRA} r32 & $3\times10^{-3}$ & 3 & $\result{math_extension_llama_hra_32_0p003_dev_mean}_{\pm\result{math_extension_llama_hra_32_0p003_dev_sd}}$ & $\result{math_extension_llama_hra_32_0p003_test_mean}_{\pm\result{math_extension_llama_hra_32_0p003_test_sd}}$ & $\result{math_extension_llama_hra_32_0p003_forgetting_mean}_{\pm\result{math_extension_llama_hra_32_0p003_forgetting_sd}}$ \\
\textcolor{HRAColor}{HRA} r64 & $1\times10^{-3}$ & 3 & $\result{math_grid_llama_hra_64_0p001_dev_mean}_{\pm\result{math_grid_llama_hra_64_0p001_dev_sd}}$ & $\result{math_grid_llama_hra_64_0p001_test_mean}_{\pm\result{math_grid_llama_hra_64_0p001_test_sd}}$ & $\result{math_grid_llama_hra_64_0p001_forgetting_mean}_{\pm\result{math_grid_llama_hra_64_0p001_forgetting_sd}}$ \\
\bottomrule\end{tabular*}
\end{center}
\end{table}
\FloatBarrier
The development-selected $3\times10^{-3}$ extension raises mean test accuracy at both r16 and r32, while increasing their retention costs. At r64, $3\times10^{-3}$ reduces accuracy and increases NLL, so development selection retains the original LR. HRA therefore illustrates the setting-dependent trade-offs in Section~\ref{sec:comparisons}, with the useful LR range changing across capacities.
\FloatBarrier

\FloatBarrier
\subsection{CodeFeedback selection and evaluation}\label{app:coding}
We use the Python subset of CodeFeedback \citep{zheng2024opencodeinterpreter}, distributed for PiSSA \citep{meng2024pissa}. After exact prompt deduplication, a fixed split contains 20,000 training and 2,000 development examples. Exact prompt matching against the test sets removes no additional examples. All methods use one epoch, effective batch size 32, maximum training length 1,024, a cosine schedule with 3\% warmup, and the same attention/MLP projections. LoRA, DoRA, PiSSA and MiLoRA train at rank 7. OFT uses block size 32 and HRA uses 16 Householder vectors. PiSSA/MiLoRA's converted rank-14 difference adapters retain the same training capacity.

For each method/backbone, the mean development response NLL across seeds 13, 37 and 73 selects one of $\{10^{-5},3\times10^{-5},10^{-4},3\times10^{-4},10^{-3}\}$, with ties favoring the lower LR. NLL pools valid response-token losses, with a 4,096-token evaluation limit. Thus the 180 development runs yield 36 selected test checkpoints, each using the best LR within the tested grid. 

Test evaluation uses EvalPlus base completion prompts, greedy decoding, and one completion per problem, capped at 1,024 new tokens. We evaluate 164 HumanEval \citep{chen2021humaneval} and 378 MBPP \citep{austin2021mbpp} problems. Original and strengthened tests grade the same completion. \Cref{tab:coding-full} reports both versions. The base model has one fixed evaluation, so its row has no training-seed SD.

General retention uses the fixed FineWiki-200 corpus, a 768-token limit, and valid-token-pooled NLL with float32 logits. Each method's forgetting is its NLL minus the corresponding unadapted model's NLL. 

\begin{table}[!htbp]
\caption{CodeFeedback results for all original and strengthened-test pass@1 scores (\%). A single greedy completion is graded under both test suites. LR minimizes mean held-out response NLL. Only selected checkpoints have test evaluations. The Base row has no training-seed SD.}
\label{tab:coding-full}
\begin{center}
\setlength{\tabcolsep}{3pt}
\begin{tabular*}{\linewidth}{@{\extracolsep{\fill}}llccccc@{}}\toprule
Model & Method & Selected LR & HumanEval & HE+ & MBPP & MBPP+ \\ \midrule
Qwen7B & Base & --- & $\result{ext_code_qwen_base_he}$ & $\result{ext_code_qwen_base_hep}$ & $\result{ext_code_qwen_base_mb}$ & $\result{ext_code_qwen_base_mbp}$ \\
 & \textcolor{LoRAColor}{LoRA} & $1\times10^{-4}$ & $\result{ext_code_qwen_lora_0p0001_he_mean}_{\pm\result{ext_code_qwen_lora_0p0001_he_sd}}$ & $\result{ext_code_qwen_lora_0p0001_hep_mean}_{\pm\result{ext_code_qwen_lora_0p0001_hep_sd}}$ & $\result{ext_code_qwen_lora_0p0001_mb_mean}_{\pm\result{ext_code_qwen_lora_0p0001_mb_sd}}$ & $\result{ext_code_qwen_lora_0p0001_mbp_mean}_{\pm\result{ext_code_qwen_lora_0p0001_mbp_sd}}$ \\
 & \textcolor{OFTColor}{OFT} & $1\times10^{-4}$ & $\result{ext_code_qwen_oft_0p0001_he_mean}_{\pm\result{ext_code_qwen_oft_0p0001_he_sd}}$ & $\result{ext_code_qwen_oft_0p0001_hep_mean}_{\pm\result{ext_code_qwen_oft_0p0001_hep_sd}}$ & $\result{ext_code_qwen_oft_0p0001_mb_mean}_{\pm\result{ext_code_qwen_oft_0p0001_mb_sd}}$ & $\result{ext_code_qwen_oft_0p0001_mbp_mean}_{\pm\result{ext_code_qwen_oft_0p0001_mbp_sd}}$ \\
 & \textcolor{DoRAColor}{DoRA} & $1\times10^{-4}$ & $\result{ext_code_qwen_dora_0p0001_he_mean}_{\pm\result{ext_code_qwen_dora_0p0001_he_sd}}$ & $\result{ext_code_qwen_dora_0p0001_hep_mean}_{\pm\result{ext_code_qwen_dora_0p0001_hep_sd}}$ & $\result{ext_code_qwen_dora_0p0001_mb_mean}_{\pm\result{ext_code_qwen_dora_0p0001_mb_sd}}$ & $\result{ext_code_qwen_dora_0p0001_mbp_mean}_{\pm\result{ext_code_qwen_dora_0p0001_mbp_sd}}$ \\
 & \textcolor{PiSSAColor}{PiSSA} & $1\times10^{-5}$ & $\result{ext_code_qwen_pissa_1em05_he_mean}_{\pm\result{ext_code_qwen_pissa_1em05_he_sd}}$ & $\result{ext_code_qwen_pissa_1em05_hep_mean}_{\pm\result{ext_code_qwen_pissa_1em05_hep_sd}}$ & $\result{ext_code_qwen_pissa_1em05_mb_mean}_{\pm\result{ext_code_qwen_pissa_1em05_mb_sd}}$ & $\result{ext_code_qwen_pissa_1em05_mbp_mean}_{\pm\result{ext_code_qwen_pissa_1em05_mbp_sd}}$ \\
 & \textcolor{MiLoRAColor}{MiLoRA} & $1\times10^{-4}$ & $\result{ext_code_qwen_milora_0p0001_he_mean}_{\pm\result{ext_code_qwen_milora_0p0001_he_sd}}$ & $\result{ext_code_qwen_milora_0p0001_hep_mean}_{\pm\result{ext_code_qwen_milora_0p0001_hep_sd}}$ & $\result{ext_code_qwen_milora_0p0001_mb_mean}_{\pm\result{ext_code_qwen_milora_0p0001_mb_sd}}$ & $\result{ext_code_qwen_milora_0p0001_mbp_mean}_{\pm\result{ext_code_qwen_milora_0p0001_mbp_sd}}$ \\
 & \textcolor{HRAColor}{HRA} & $1\times10^{-3}$ & $\result{ext_code_qwen_hra_0p001_he_mean}_{\pm\result{ext_code_qwen_hra_0p001_he_sd}}$ & $\result{ext_code_qwen_hra_0p001_hep_mean}_{\pm\result{ext_code_qwen_hra_0p001_hep_sd}}$ & $\result{ext_code_qwen_hra_0p001_mb_mean}_{\pm\result{ext_code_qwen_hra_0p001_mb_sd}}$ & $\result{ext_code_qwen_hra_0p001_mbp_mean}_{\pm\result{ext_code_qwen_hra_0p001_mbp_sd}}$ \\
\addlinespace[3pt]
Llama8B & Base & --- & $\result{ext_code_llama_base_he}$ & $\result{ext_code_llama_base_hep}$ & $\result{ext_code_llama_base_mb}$ & $\result{ext_code_llama_base_mbp}$ \\
 & \textcolor{LoRAColor}{LoRA} & $3\times10^{-4}$ & $\result{ext_code_llama_lora_0p0003_he_mean}_{\pm\result{ext_code_llama_lora_0p0003_he_sd}}$ & $\result{ext_code_llama_lora_0p0003_hep_mean}_{\pm\result{ext_code_llama_lora_0p0003_hep_sd}}$ & $\result{ext_code_llama_lora_0p0003_mb_mean}_{\pm\result{ext_code_llama_lora_0p0003_mb_sd}}$ & $\result{ext_code_llama_lora_0p0003_mbp_mean}_{\pm\result{ext_code_llama_lora_0p0003_mbp_sd}}$ \\
 & \textcolor{OFTColor}{OFT} & $3\times10^{-4}$ & $\result{ext_code_llama_oft_0p0003_he_mean}_{\pm\result{ext_code_llama_oft_0p0003_he_sd}}$ & $\result{ext_code_llama_oft_0p0003_hep_mean}_{\pm\result{ext_code_llama_oft_0p0003_hep_sd}}$ & $\result{ext_code_llama_oft_0p0003_mb_mean}_{\pm\result{ext_code_llama_oft_0p0003_mb_sd}}$ & $\result{ext_code_llama_oft_0p0003_mbp_mean}_{\pm\result{ext_code_llama_oft_0p0003_mbp_sd}}$ \\
 & \textcolor{DoRAColor}{DoRA} & $3\times10^{-4}$ & $\result{ext_code_llama_dora_0p0003_he_mean}_{\pm\result{ext_code_llama_dora_0p0003_he_sd}}$ & $\result{ext_code_llama_dora_0p0003_hep_mean}_{\pm\result{ext_code_llama_dora_0p0003_hep_sd}}$ & $\result{ext_code_llama_dora_0p0003_mb_mean}_{\pm\result{ext_code_llama_dora_0p0003_mb_sd}}$ & $\result{ext_code_llama_dora_0p0003_mbp_mean}_{\pm\result{ext_code_llama_dora_0p0003_mbp_sd}}$ \\
 & \textcolor{PiSSAColor}{PiSSA} & $1\times10^{-4}$ & $\result{ext_code_llama_pissa_0p0001_he_mean}_{\pm\result{ext_code_llama_pissa_0p0001_he_sd}}$ & $\result{ext_code_llama_pissa_0p0001_hep_mean}_{\pm\result{ext_code_llama_pissa_0p0001_hep_sd}}$ & $\result{ext_code_llama_pissa_0p0001_mb_mean}_{\pm\result{ext_code_llama_pissa_0p0001_mb_sd}}$ & $\result{ext_code_llama_pissa_0p0001_mbp_mean}_{\pm\result{ext_code_llama_pissa_0p0001_mbp_sd}}$ \\
 & \textcolor{MiLoRAColor}{MiLoRA} & $3\times10^{-4}$ & $\result{ext_code_llama_milora_0p0003_he_mean}_{\pm\result{ext_code_llama_milora_0p0003_he_sd}}$ & $\result{ext_code_llama_milora_0p0003_hep_mean}_{\pm\result{ext_code_llama_milora_0p0003_hep_sd}}$ & $\result{ext_code_llama_milora_0p0003_mb_mean}_{\pm\result{ext_code_llama_milora_0p0003_mb_sd}}$ & $\result{ext_code_llama_milora_0p0003_mbp_mean}_{\pm\result{ext_code_llama_milora_0p0003_mbp_sd}}$ \\
 & \textcolor{HRAColor}{HRA} & $1\times10^{-3}$ & $\result{ext_code_llama_hra_0p001_he_mean}_{\pm\result{ext_code_llama_hra_0p001_he_sd}}$ & $\result{ext_code_llama_hra_0p001_hep_mean}_{\pm\result{ext_code_llama_hra_0p001_hep_sd}}$ & $\result{ext_code_llama_hra_0p001_mb_mean}_{\pm\result{ext_code_llama_hra_0p001_mb_sd}}$ & $\result{ext_code_llama_hra_0p001_mbp_mean}_{\pm\result{ext_code_llama_hra_0p001_mbp_sd}}$ \\
\addlinespace[3pt]
\bottomrule\end{tabular*}

\end{center}
\end{table}

\FloatBarrier
\paragraph{Task gains and retention costs.}
All six methods improve Llama's HumanEval+ and MBPP+ means over the base model, with rankings that differ across benchmarks and retention. PiSSA leads on HumanEval+, alongside higher retention costs. Together, these results extend the distinction between task performance and forgetting to coding, with the size of the task gain depending on the backbone.

\FloatBarrier
\subsection{Six-method cat comparison across learning rates and capacities}\label{app:flux-four}\label{app:diffusion}\label{app:extendedimage}
The complete cat study crosses six methods, four approximately matched parameter budgets, eight LRs and seeds 0, 1 and 2, giving 576 successful runs and 192 configuration means. All methods share the cat split (14 training, two validation and four test images), adapted modules, training exposure and evaluation steps. LoRA, DoRA, PiSSA and MiLoRA use ranks 4/8/16/32, OFT uses block sizes 32/64/128/256, and HRA uses 16/32/62/124 Householder vectors. PiSSA/MiLoRA ranks refer to training, with doubled ranks used to serialize their differences from the base model.

The LR grid is $\{5\times10^{-6},10^{-5},3\times10^{-5},5\times10^{-5},10^{-4},3\times10^{-4},5\times10^{-4},10^{-3}\}$. Within each method/capacity, mean validation DINO at step 700 selects one LR shared across seeds, with ties favoring smaller LR. Test and retention use step 750. \Cref{tab:flux-selected-all} gives all 24 selections and parameter counts, with sample seed SD. 

\begin{table}[!htbp]
\caption{Complete validation-selected FLUX cat results. Budgets are approximately matched, with HRA using 16/32/62/124 vectors and DoRA adding magnitude parameters. Parameters are in millions. Validation DINO at step 700 selects LR separately for each method/capacity. Step 750 supplies test DINO, general drift and general CLIP. Scores are $\times100$.}
\label{tab:flux-selected-all}
\begin{center}
\setlength{\tabcolsep}{3pt}
\begin{tabular*}{\linewidth}{@{\extracolsep{\fill}}lccccccc@{}}\toprule
Method & Capacity & Params & LR & Validation & Test DINO & Gen. drift & Gen. CLIP \\ \midrule
\textcolor{LoRAColor}{LoRA} & r4 & \result{ext_flux_flux_lora_cap4_0p0003_params} & $3\times10^{-4}$ & $\result{ext_flux_flux_lora_cap4_0p0003_validation_mean}_{\pm\result{ext_flux_flux_lora_cap4_0p0003_validation_sd}}$ & $\result{ext_flux_flux_lora_cap4_0p0003_dino_mean}_{\pm\result{ext_flux_flux_lora_cap4_0p0003_dino_sd}}$ & $\result{ext_flux_flux_lora_cap4_0p0003_drift_mean}_{\pm\result{ext_flux_flux_lora_cap4_0p0003_drift_sd}}$ & $\result{ext_flux_flux_lora_cap4_0p0003_clip_mean}_{\pm\result{ext_flux_flux_lora_cap4_0p0003_clip_sd}}$ \\
\textcolor{OFTColor}{OFT} & b32 & \result{ext_flux_flux_oft_cap32_0p0003_params} & $3\times10^{-4}$ & $\result{ext_flux_flux_oft_cap32_0p0003_validation_mean}_{\pm\result{ext_flux_flux_oft_cap32_0p0003_validation_sd}}$ & $\result{ext_flux_flux_oft_cap32_0p0003_dino_mean}_{\pm\result{ext_flux_flux_oft_cap32_0p0003_dino_sd}}$ & $\result{ext_flux_flux_oft_cap32_0p0003_drift_mean}_{\pm\result{ext_flux_flux_oft_cap32_0p0003_drift_sd}}$ & $\result{ext_flux_flux_oft_cap32_0p0003_clip_mean}_{\pm\result{ext_flux_flux_oft_cap32_0p0003_clip_sd}}$ \\
\textcolor{DoRAColor}{DoRA} & r4 & \result{ext_flux_flux_dora_cap4_0p0005_params} & $5\times10^{-4}$ & $\result{ext_flux_flux_dora_cap4_0p0005_validation_mean}_{\pm\result{ext_flux_flux_dora_cap4_0p0005_validation_sd}}$ & $\result{ext_flux_flux_dora_cap4_0p0005_dino_mean}_{\pm\result{ext_flux_flux_dora_cap4_0p0005_dino_sd}}$ & $\result{ext_flux_flux_dora_cap4_0p0005_drift_mean}_{\pm\result{ext_flux_flux_dora_cap4_0p0005_drift_sd}}$ & $\result{ext_flux_flux_dora_cap4_0p0005_clip_mean}_{\pm\result{ext_flux_flux_dora_cap4_0p0005_clip_sd}}$ \\
\textcolor{PiSSAColor}{PiSSA} & r4 & \result{ext_flux_flux_pissa_cap4_0p0001_params} & $1\times10^{-4}$ & $\result{ext_flux_flux_pissa_cap4_0p0001_validation_mean}_{\pm\result{ext_flux_flux_pissa_cap4_0p0001_validation_sd}}$ & $\result{ext_flux_flux_pissa_cap4_0p0001_dino_mean}_{\pm\result{ext_flux_flux_pissa_cap4_0p0001_dino_sd}}$ & $\result{ext_flux_flux_pissa_cap4_0p0001_drift_mean}_{\pm\result{ext_flux_flux_pissa_cap4_0p0001_drift_sd}}$ & $\result{ext_flux_flux_pissa_cap4_0p0001_clip_mean}_{\pm\result{ext_flux_flux_pissa_cap4_0p0001_clip_sd}}$ \\
\textcolor{MiLoRAColor}{MiLoRA} & r4 & \result{ext_flux_flux_milora_cap4_0p0001_params} & $1\times10^{-4}$ & $\result{ext_flux_flux_milora_cap4_0p0001_validation_mean}_{\pm\result{ext_flux_flux_milora_cap4_0p0001_validation_sd}}$ & $\result{ext_flux_flux_milora_cap4_0p0001_dino_mean}_{\pm\result{ext_flux_flux_milora_cap4_0p0001_dino_sd}}$ & $\result{ext_flux_flux_milora_cap4_0p0001_drift_mean}_{\pm\result{ext_flux_flux_milora_cap4_0p0001_drift_sd}}$ & $\result{ext_flux_flux_milora_cap4_0p0001_clip_mean}_{\pm\result{ext_flux_flux_milora_cap4_0p0001_clip_sd}}$ \\
\textcolor{HRAColor}{HRA} & h16 & \result{ext_flux_flux_hra_cap16_0p0005_params} & $5\times10^{-4}$ & $\result{ext_flux_flux_hra_cap16_0p0005_validation_mean}_{\pm\result{ext_flux_flux_hra_cap16_0p0005_validation_sd}}$ & $\result{ext_flux_flux_hra_cap16_0p0005_dino_mean}_{\pm\result{ext_flux_flux_hra_cap16_0p0005_dino_sd}}$ & $\result{ext_flux_flux_hra_cap16_0p0005_drift_mean}_{\pm\result{ext_flux_flux_hra_cap16_0p0005_drift_sd}}$ & $\result{ext_flux_flux_hra_cap16_0p0005_clip_mean}_{\pm\result{ext_flux_flux_hra_cap16_0p0005_clip_sd}}$ \\
\midrule
\textcolor{LoRAColor}{LoRA} & r8 & \result{ext_flux_flux_lora_0p0001_params} & $1\times10^{-4}$ & $\result{ext_flux_flux_lora_0p0001_validation_mean}_{\pm\result{ext_flux_flux_lora_0p0001_validation_sd}}$ & $\result{ext_flux_flux_lora_0p0001_dino_mean}_{\pm\result{ext_flux_flux_lora_0p0001_dino_sd}}$ & $\result{ext_flux_flux_lora_0p0001_drift_mean}_{\pm\result{ext_flux_flux_lora_0p0001_drift_sd}}$ & $\result{ext_flux_flux_lora_0p0001_clip_mean}_{\pm\result{ext_flux_flux_lora_0p0001_clip_sd}}$ \\
\textcolor{OFTColor}{OFT} & b64 & \result{ext_flux_flux_oft_5em05_params} & $5\times10^{-5}$ & $\result{ext_flux_flux_oft_5em05_validation_mean}_{\pm\result{ext_flux_flux_oft_5em05_validation_sd}}$ & $\result{ext_flux_flux_oft_5em05_dino_mean}_{\pm\result{ext_flux_flux_oft_5em05_dino_sd}}$ & $\result{ext_flux_flux_oft_5em05_drift_mean}_{\pm\result{ext_flux_flux_oft_5em05_drift_sd}}$ & $\result{ext_flux_flux_oft_5em05_clip_mean}_{\pm\result{ext_flux_flux_oft_5em05_clip_sd}}$ \\
\textcolor{DoRAColor}{DoRA} & r8 & \result{ext_flux_flux_dora_0p0003_params} & $3\times10^{-4}$ & $\result{ext_flux_flux_dora_0p0003_validation_mean}_{\pm\result{ext_flux_flux_dora_0p0003_validation_sd}}$ & $\result{ext_flux_flux_dora_0p0003_dino_mean}_{\pm\result{ext_flux_flux_dora_0p0003_dino_sd}}$ & $\result{ext_flux_flux_dora_0p0003_drift_mean}_{\pm\result{ext_flux_flux_dora_0p0003_drift_sd}}$ & $\result{ext_flux_flux_dora_0p0003_clip_mean}_{\pm\result{ext_flux_flux_dora_0p0003_clip_sd}}$ \\
\textcolor{PiSSAColor}{PiSSA} & r8 & \result{ext_flux_flux_pissa_5em05_params} & $5\times10^{-5}$ & $\result{ext_flux_flux_pissa_5em05_validation_mean}_{\pm\result{ext_flux_flux_pissa_5em05_validation_sd}}$ & $\result{ext_flux_flux_pissa_5em05_dino_mean}_{\pm\result{ext_flux_flux_pissa_5em05_dino_sd}}$ & $\result{ext_flux_flux_pissa_5em05_drift_mean}_{\pm\result{ext_flux_flux_pissa_5em05_drift_sd}}$ & $\result{ext_flux_flux_pissa_5em05_clip_mean}_{\pm\result{ext_flux_flux_pissa_5em05_clip_sd}}$ \\
\textcolor{MiLoRAColor}{MiLoRA} & r8 & \result{ext_flux_flux_milora_5em05_params} & $5\times10^{-5}$ & $\result{ext_flux_flux_milora_5em05_validation_mean}_{\pm\result{ext_flux_flux_milora_5em05_validation_sd}}$ & $\result{ext_flux_flux_milora_5em05_dino_mean}_{\pm\result{ext_flux_flux_milora_5em05_dino_sd}}$ & $\result{ext_flux_flux_milora_5em05_drift_mean}_{\pm\result{ext_flux_flux_milora_5em05_drift_sd}}$ & $\result{ext_flux_flux_milora_5em05_clip_mean}_{\pm\result{ext_flux_flux_milora_5em05_clip_sd}}$ \\
\textcolor{HRAColor}{HRA} & h32 & \result{ext_flux_flux_hra_0p0003_params} & $3\times10^{-4}$ & $\result{ext_flux_flux_hra_0p0003_validation_mean}_{\pm\result{ext_flux_flux_hra_0p0003_validation_sd}}$ & $\result{ext_flux_flux_hra_0p0003_dino_mean}_{\pm\result{ext_flux_flux_hra_0p0003_dino_sd}}$ & $\result{ext_flux_flux_hra_0p0003_drift_mean}_{\pm\result{ext_flux_flux_hra_0p0003_drift_sd}}$ & $\result{ext_flux_flux_hra_0p0003_clip_mean}_{\pm\result{ext_flux_flux_hra_0p0003_clip_sd}}$ \\
\midrule
\textcolor{LoRAColor}{LoRA} & r16 & \result{ext_flux_flux_lora_cap16_0p0001_params} & $1\times10^{-4}$ & $\result{ext_flux_flux_lora_cap16_0p0001_validation_mean}_{\pm\result{ext_flux_flux_lora_cap16_0p0001_validation_sd}}$ & $\result{ext_flux_flux_lora_cap16_0p0001_dino_mean}_{\pm\result{ext_flux_flux_lora_cap16_0p0001_dino_sd}}$ & $\result{ext_flux_flux_lora_cap16_0p0001_drift_mean}_{\pm\result{ext_flux_flux_lora_cap16_0p0001_drift_sd}}$ & $\result{ext_flux_flux_lora_cap16_0p0001_clip_mean}_{\pm\result{ext_flux_flux_lora_cap16_0p0001_clip_sd}}$ \\
\textcolor{OFTColor}{OFT} & b128 & \result{ext_flux_flux_oft_cap128_0p0001_params} & $1\times10^{-4}$ & $\result{ext_flux_flux_oft_cap128_0p0001_validation_mean}_{\pm\result{ext_flux_flux_oft_cap128_0p0001_validation_sd}}$ & $\result{ext_flux_flux_oft_cap128_0p0001_dino_mean}_{\pm\result{ext_flux_flux_oft_cap128_0p0001_dino_sd}}$ & $\result{ext_flux_flux_oft_cap128_0p0001_drift_mean}_{\pm\result{ext_flux_flux_oft_cap128_0p0001_drift_sd}}$ & $\result{ext_flux_flux_oft_cap128_0p0001_clip_mean}_{\pm\result{ext_flux_flux_oft_cap128_0p0001_clip_sd}}$ \\
\textcolor{DoRAColor}{DoRA} & r16 & \result{ext_flux_flux_dora_cap16_5em05_params} & $5\times10^{-5}$ & $\result{ext_flux_flux_dora_cap16_5em05_validation_mean}_{\pm\result{ext_flux_flux_dora_cap16_5em05_validation_sd}}$ & $\result{ext_flux_flux_dora_cap16_5em05_dino_mean}_{\pm\result{ext_flux_flux_dora_cap16_5em05_dino_sd}}$ & $\result{ext_flux_flux_dora_cap16_5em05_drift_mean}_{\pm\result{ext_flux_flux_dora_cap16_5em05_drift_sd}}$ & $\result{ext_flux_flux_dora_cap16_5em05_clip_mean}_{\pm\result{ext_flux_flux_dora_cap16_5em05_clip_sd}}$ \\
\textcolor{PiSSAColor}{PiSSA} & r16 & \result{ext_flux_flux_pissa_cap16_5em05_params} & $5\times10^{-5}$ & $\result{ext_flux_flux_pissa_cap16_5em05_validation_mean}_{\pm\result{ext_flux_flux_pissa_cap16_5em05_validation_sd}}$ & $\result{ext_flux_flux_pissa_cap16_5em05_dino_mean}_{\pm\result{ext_flux_flux_pissa_cap16_5em05_dino_sd}}$ & $\result{ext_flux_flux_pissa_cap16_5em05_drift_mean}_{\pm\result{ext_flux_flux_pissa_cap16_5em05_drift_sd}}$ & $\result{ext_flux_flux_pissa_cap16_5em05_clip_mean}_{\pm\result{ext_flux_flux_pissa_cap16_5em05_clip_sd}}$ \\
\textcolor{MiLoRAColor}{MiLoRA} & r16 & \result{ext_flux_flux_milora_cap16_0p0001_params} & $1\times10^{-4}$ & $\result{ext_flux_flux_milora_cap16_0p0001_validation_mean}_{\pm\result{ext_flux_flux_milora_cap16_0p0001_validation_sd}}$ & $\result{ext_flux_flux_milora_cap16_0p0001_dino_mean}_{\pm\result{ext_flux_flux_milora_cap16_0p0001_dino_sd}}$ & $\result{ext_flux_flux_milora_cap16_0p0001_drift_mean}_{\pm\result{ext_flux_flux_milora_cap16_0p0001_drift_sd}}$ & $\result{ext_flux_flux_milora_cap16_0p0001_clip_mean}_{\pm\result{ext_flux_flux_milora_cap16_0p0001_clip_sd}}$ \\
\textcolor{HRAColor}{HRA} & h62 & \result{ext_flux_flux_hra_cap62_0p0005_params} & $5\times10^{-4}$ & $\result{ext_flux_flux_hra_cap62_0p0005_validation_mean}_{\pm\result{ext_flux_flux_hra_cap62_0p0005_validation_sd}}$ & $\result{ext_flux_flux_hra_cap62_0p0005_dino_mean}_{\pm\result{ext_flux_flux_hra_cap62_0p0005_dino_sd}}$ & $\result{ext_flux_flux_hra_cap62_0p0005_drift_mean}_{\pm\result{ext_flux_flux_hra_cap62_0p0005_drift_sd}}$ & $\result{ext_flux_flux_hra_cap62_0p0005_clip_mean}_{\pm\result{ext_flux_flux_hra_cap62_0p0005_clip_sd}}$ \\
\midrule
\textcolor{LoRAColor}{LoRA} & r32 & \result{ext_flux_flux_lora_cap32_5em05_params} & $5\times10^{-5}$ & $\result{ext_flux_flux_lora_cap32_5em05_validation_mean}_{\pm\result{ext_flux_flux_lora_cap32_5em05_validation_sd}}$ & $\result{ext_flux_flux_lora_cap32_5em05_dino_mean}_{\pm\result{ext_flux_flux_lora_cap32_5em05_dino_sd}}$ & $\result{ext_flux_flux_lora_cap32_5em05_drift_mean}_{\pm\result{ext_flux_flux_lora_cap32_5em05_drift_sd}}$ & $\result{ext_flux_flux_lora_cap32_5em05_clip_mean}_{\pm\result{ext_flux_flux_lora_cap32_5em05_clip_sd}}$ \\
\textcolor{OFTColor}{OFT} & b256 & \result{ext_flux_flux_oft_cap256_0p0001_params} & $1\times10^{-4}$ & $\result{ext_flux_flux_oft_cap256_0p0001_validation_mean}_{\pm\result{ext_flux_flux_oft_cap256_0p0001_validation_sd}}$ & $\result{ext_flux_flux_oft_cap256_0p0001_dino_mean}_{\pm\result{ext_flux_flux_oft_cap256_0p0001_dino_sd}}$ & $\result{ext_flux_flux_oft_cap256_0p0001_drift_mean}_{\pm\result{ext_flux_flux_oft_cap256_0p0001_drift_sd}}$ & $\result{ext_flux_flux_oft_cap256_0p0001_clip_mean}_{\pm\result{ext_flux_flux_oft_cap256_0p0001_clip_sd}}$ \\
\textcolor{DoRAColor}{DoRA} & r32 & \result{ext_flux_flux_dora_cap32_0p0001_params} & $1\times10^{-4}$ & $\result{ext_flux_flux_dora_cap32_0p0001_validation_mean}_{\pm\result{ext_flux_flux_dora_cap32_0p0001_validation_sd}}$ & $\result{ext_flux_flux_dora_cap32_0p0001_dino_mean}_{\pm\result{ext_flux_flux_dora_cap32_0p0001_dino_sd}}$ & $\result{ext_flux_flux_dora_cap32_0p0001_drift_mean}_{\pm\result{ext_flux_flux_dora_cap32_0p0001_drift_sd}}$ & $\result{ext_flux_flux_dora_cap32_0p0001_clip_mean}_{\pm\result{ext_flux_flux_dora_cap32_0p0001_clip_sd}}$ \\
\textcolor{PiSSAColor}{PiSSA} & r32 & \result{ext_flux_flux_pissa_cap32_5em05_params} & $5\times10^{-5}$ & $\result{ext_flux_flux_pissa_cap32_5em05_validation_mean}_{\pm\result{ext_flux_flux_pissa_cap32_5em05_validation_sd}}$ & $\result{ext_flux_flux_pissa_cap32_5em05_dino_mean}_{\pm\result{ext_flux_flux_pissa_cap32_5em05_dino_sd}}$ & $\result{ext_flux_flux_pissa_cap32_5em05_drift_mean}_{\pm\result{ext_flux_flux_pissa_cap32_5em05_drift_sd}}$ & $\result{ext_flux_flux_pissa_cap32_5em05_clip_mean}_{\pm\result{ext_flux_flux_pissa_cap32_5em05_clip_sd}}$ \\
\textcolor{MiLoRAColor}{MiLoRA} & r32 & \result{ext_flux_flux_milora_cap32_0p0001_params} & $1\times10^{-4}$ & $\result{ext_flux_flux_milora_cap32_0p0001_validation_mean}_{\pm\result{ext_flux_flux_milora_cap32_0p0001_validation_sd}}$ & $\result{ext_flux_flux_milora_cap32_0p0001_dino_mean}_{\pm\result{ext_flux_flux_milora_cap32_0p0001_dino_sd}}$ & $\result{ext_flux_flux_milora_cap32_0p0001_drift_mean}_{\pm\result{ext_flux_flux_milora_cap32_0p0001_drift_sd}}$ & $\result{ext_flux_flux_milora_cap32_0p0001_clip_mean}_{\pm\result{ext_flux_flux_milora_cap32_0p0001_clip_sd}}$ \\
\textcolor{HRAColor}{HRA} & h124 & \result{ext_flux_flux_hra_cap124_0p0003_params} & $3\times10^{-4}$ & $\result{ext_flux_flux_hra_cap124_0p0003_validation_mean}_{\pm\result{ext_flux_flux_hra_cap124_0p0003_validation_sd}}$ & $\result{ext_flux_flux_hra_cap124_0p0003_dino_mean}_{\pm\result{ext_flux_flux_hra_cap124_0p0003_dino_sd}}$ & $\result{ext_flux_flux_hra_cap124_0p0003_drift_mean}_{\pm\result{ext_flux_flux_hra_cap124_0p0003_drift_sd}}$ & $\result{ext_flux_flux_hra_cap124_0p0003_clip_mean}_{\pm\result{ext_flux_flux_hra_cap124_0p0003_clip_sd}}$ \\
\bottomrule\end{tabular*}

\end{center}
\end{table}

\FloatBarrier

\paragraph{Low selected drift and capacity-dependent fidelity.}
LoRA has the lowest selected mean general drift at \result{cross_cat_lora_lowest} of \result{cross_cat_selected_settings} budgets and ranks among the lowest two at all \result{cross_cat_lora_lowest_two}. PiSSA has greater drift than LoRA in all \result{cross_cat_pissa_above_lora_seeds} paired seeds. PiSSA has the highest selected drift at the small budget, OFT at the smallest and largest budgets, and HRA at the medium budget. DoRA exceeds LoRA's mean test DINO at every selected budget and leads at the smallest and largest. MiLoRA has lower mean test DINO than LoRA at all four budgets, with a mean CLIP advantage only at the small budget. These results support LoRA's recurring retention advantage and DoRA's higher mean task scores, alongside setting-dependent outcomes for the other methods.

\begin{figure}[!htbp]\centering
\includegraphics[width=\linewidth]{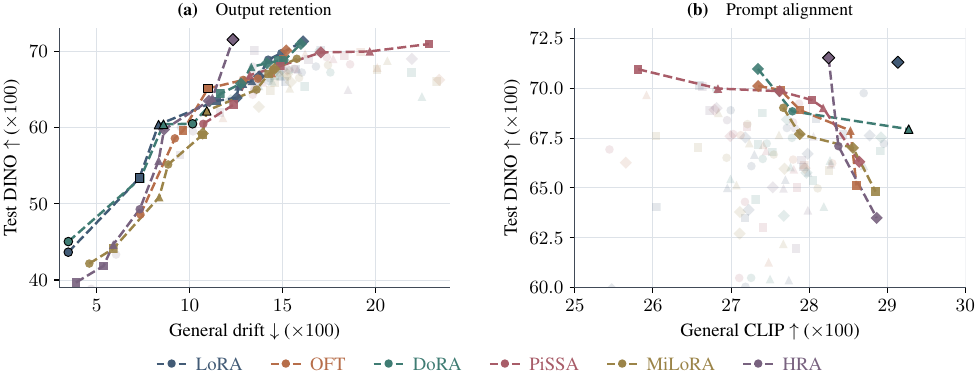}
\caption{Six-method FLUX cat test trade-offs, focused on the frontier regions. Shapes denote increasing capacity. Dashed lines and solid points mark within-method frontiers, and black borders mark the pooled frontier. Dominated points are faint. Frontiers use all 192 configurations before axis clipping, with separate y ranges for each panel. Test DINO describes final-step outcomes, while validation DINO selects LR.}
\label{fig:flux-four-pareto}
\end{figure}
\FloatBarrier

\paragraph{Retention metrics.}
General drift and CLIP use five unrelated prompts, while class drift uses five cat prompts. All six methods have these measurements. Drift measures preservation of generated outputs, and CLIP measures prompt alignment. Appendix~\ref{app:flux22} examines the role of spectral components through six-method interventions at the largest budget.

\FloatBarrier
\subsection{28-object comparison}
\Cref{tab:diffusion}b reports every validation-selected capacity pair. Each score first averages the 28 objects within a seed, then reports the mean and sample SD across seeds. The shared-LR selection and evaluation banks are specified in Appendix~\ref{app:settings}.

\paragraph{Retention across objects and metrics.}
At the validation-selected settings, OFT has lower COCO flow loss at three of four capacities. LoRA has lower general-image drift and higher prompt alignment at every capacity. The same distinction appears at a shared LR of $5\times10^{-5}$, where OFT b64 has lower COCO loss than LoRA r8, but higher image drift and lower CLIP. Flow loss evaluates velocity predictions on fixed noisy latents, whereas drift and CLIP evaluate the final generated images. The object comparison supports LoRA's retention advantage in generated outputs and shows why the metrics capture different aspects of preservation.

\subsection{Retention across all six methods}\label{app:retention-patterns}
\expandafter\def\csname result@rob_lora_first_original\endcsname{9}
\expandafter\def\csname result@rob_lora_first_other_seeds\endcsname{8}
\expandafter\def\csname result@rob_original_rank\endcsname{1.58}
\expandafter\def\csname result@rob_other_seeds_rank\endcsname{2.08}

We compare LoRA's forgetting with \emph{each of the other five methods} across twelve settings, comprising six mathematics model/capacity pairs, two coding backbones and four cat budgets. Each method's LR is chosen by task performance on the development set. We compare forgetting within each setting, so language NLL and image drift remain on their own scales.

\paragraph{LoRA most consistently limits forgetting.}
LoRA has the lowest mean forgetting in \result{rob_lora_first_original}/12 settings. OFT, MiLoRA and HRA each have the lowest forgetting in one setting, and DoRA and PiSSA in none (\Cref{tab:retention-all-methods}). LoRA also has the lowest average rank, \result{rob_original_rank}, followed by DoRA at 2.75. The three exceptions are small-capacity Qwen, where LoRA's outlying high-LR selection forgets most, large-capacity Llama, where OFT has the lowest forgetting, and cat rank 8, where HRA has the lowest forgetting.

\begin{table}[!htbp]
\caption{Forgetting across all six methods. The lowest-forgetting columns count settings in which a method has the lowest mean forgetting. A rank of 1 means least forgetting, and lower average rank is better. The right columns choose LR using the other two seeds, evaluate the remaining seed, then average those evaluations. LoRA has the lowest forgetting in the most settings and the lowest average rank under both selections.}
\label{tab:retention-all-methods}
\begin{center}
\begin{tabular*}{\linewidth}{@{\extracolsep{\fill}}lrrrr@{}}\toprule
& \multicolumn{2}{c}{Original selection} & \multicolumn{2}{c}{LR chosen on other seeds}\\Method & \shortstack{Lowest\\forgetting} & Average rank & \shortstack{Lowest\\forgetting} & Average rank\\\midrule
\textcolor{LoRAColor}{LoRA} & \textbf{9/12} & \textbf{1.58} & \textbf{8/12} & \textbf{2.08} \\
\textcolor{OFTColor}{OFT} & 1/12 & 4.58 & 1/12 & 4.33 \\
\textcolor{DoRAColor}{DoRA} & 0/12 & 2.75 & 1/12 & 2.50 \\
\textcolor{PiSSAColor}{PiSSA} & 0/12 & 5.25 & 0/12 & 5.42 \\
\textcolor{MiLoRAColor}{MiLoRA} & 1/12 & 3.42 & 1/12 & 3.25 \\
\textcolor{HRAColor}{HRA} & 1/12 & 3.42 & 1/12 & 3.42 \\
\bottomrule\end{tabular*}
\end{center}
\end{table}

\paragraph{Consistency across selection and weighting choices.}
First, we choose LR using two seeds and measure forgetting on the remaining seed, repeating for every seed. The original grids, eligibility rules and smaller-LR tie rule remain unchanged. LoRA still has the lowest mean forgetting in \result{rob_lora_first_other_seeds}/12 settings. Each other method does so in at most one. Its average rank remains lowest (\result{rob_other_seeds_rank}, versus 2.50 for DoRA).
Second, we give each model/task group equal importance, preventing groups with more capacities from dominating the summary.
Third, we remove each backbone or individual setting in turn.
Finally, we compare results using one language seed and one image seed at a time, covering all nine combinations rather than matching unrelated seed numbers.

Across all these checks, LoRA has the lowest forgetting in more settings than any other method (\Cref{tab:retention-robustness}). This supports the claim that LoRA most consistently limits forgetting across the tested methods. After LR selection on other seeds, DoRA has a better average rank in three seed combinations and when FLUX is removed.

\begin{table}[!htbp]
\caption{Robustness of the retention comparison. Frequency columns count versions in which LoRA has the lowest forgetting in more settings than any other method. Average-rank columns count versions in which LoRA has the lowest average rank. Equal weighting averages within the five model/task groups before averaging groups. The nine seed combinations cross all language seeds with all image seeds.}
\label{tab:retention-robustness}
\begin{center}
\begin{tabular*}{\linewidth}{@{\extracolsep{\fill}}lrrrr@{}}\toprule
& \multicolumn{2}{c}{Original selection} & \multicolumn{2}{c}{LR chosen on other seeds}\\Check & Frequency & Average rank & Frequency & Average rank\\\midrule
Equal model/task weights & 1/1 & 1/1 & 1/1 & 1/1 \\
Remove one backbone & 3/3 & 3/3 & 3/3 & 2/3 \\
Remove one setting & 12/12 & 12/12 & 12/12 & 12/12 \\
Use individual seeds & 9/9 & 9/9 & 9/9 & 6/9 \\
\bottomrule\end{tabular*}
\end{center}
\end{table}

\paragraph{Task scores and PiSSA's retention costs.}
DoRA has higher mean task scores than LoRA in 10/14 comparisons, covering two of six mathematics settings, all four coding comparisons and all four cat budgets. This supports the reported trend in mean task scores. Its coding and cat gains accompany additional forgetting.

With backbone, capacity and LR fixed across all six methods, PiSSA forgets most in all \result{cross_math_matched_pissa_highest}/\result{cross_math_matched_blocks} complete mathematics comparisons and \result{cross_coding_matched_pissa_highest}/\result{cross_coding_matched_blocks} coding comparisons, including all \result{cross_math_matched_pissa_seed_highest} and \result{cross_coding_matched_pissa_seed_highest} seed comparisons. In cat, it ranks highest in \result{cross_cat_matched_pissa_highest}/\result{cross_cat_matched_blocks} comparisons and among the highest two in all \result{cross_cat_matched_pissa_highest_two}. HRA ranks lowest in \result{cross_cat_hra_matched_lowest}/\result{cross_cat_matched_blocks}. These matched-LR comparisons reinforce the finding that PiSSA often incurs greater retention costs and show that HRA can retain well in specific settings.

\paragraph{LoRA versus OFT across objects.}
After averaging seeds within each object, LoRA has lower drift than OFT for 23, 27, 28 and 28 of the 28 objects across increasing budgets. This extends LoRA's image-drift advantage beyond cat personalization.

\FloatBarrier
\section{Geometric measurements and language spectral interventions}\label{app:complete-interventions}
\FloatBarrier
\subsection{Checkpoints, controls and reference models}
The intervention study covers 108 checkpoints selected by mean development accuracy within each model/method/capacity, with 22 interventions or controls per checkpoint. Llama HRA uses the extended-grid selections in Appendix~\ref{app:hra-extension}. All edits use the same GSM8K development and general-text banks. The tables below extend the largest-capacity main figures to all capacities and selected LRs.

Task and NLL changes use the trained model as reference. Value-restoration controls compare against reconstruction, while component sufficiency and factorial analyses use full value restoration. Forgetting recovery is the fraction of the trained model's NLL increase recovered by an edit, computed within each checkpoint before seed averaging.

Edits jointly cover all adapted matrices. SVD computations use FP32, followed by BF16 storage and evaluation. The dominant, intermediate and trailing component groups divide the pretrained spectrum into approximate thirds, with boundaries adjusted to nearby spectral gaps. Plots label these groups top, middle and tail.

For the collection $\mathcal M$ of adapted matrices, the normalized displacements are
\begin{equation*}
 d_W^2=\frac{\sum_{M\in\mathcal M}\left\|\bm W_{*,M}-\bm W_{0,M}\right\|_{\mathrm F}^2}{\sum_{M\in\mathcal M}\left\|\bm W_{0,M}\right\|_{\mathrm F}^2},\qquad
 d_S^2=\frac{\sum_{M\in\mathcal M}\left\|\bm s_{*,M}-\bm s_{0,M}\right\|_2^2}{\sum_{M\in\mathcal M}\left\|\bm s_{0,M}\right\|_2^2}.
\end{equation*}

Each checkpoint pools all 196 Qwen or 224 Llama adapted matrices (80 for the FLUX results in Appendix~\ref{app:flux22}), weighting their contributions through the squared norms.

\begin{table}[!htbp]
\caption{Qwen2.5-7B spectral displacement at every selected capacity and LR.}
\label{tab:peft-spectrum-qwen}
\begin{center}
\setlength{\tabcolsep}{3pt}
\begin{tabular*}{\linewidth}{@{\extracolsep{\fill}}lrrrrr@{}}\toprule
Method & Capacity & LR & $100d_W$ & $100d_S$ & $100d_S/d_W$\\\midrule
\textcolor{LoRAColor}{LoRA} & 7 & $1\times10^{-3}$ & $\result{rev_qwen_lora_7_trained_d_W_mean}_{\pm \result{rev_qwen_lora_7_trained_d_W_sd}}$ & $\result{rev_qwen_lora_7_trained_d_S_mean}_{\pm \result{rev_qwen_lora_7_trained_d_S_sd}}$ & $\result{rev_qwen_lora_7_trained_ratio_mean}_{\pm \result{rev_qwen_lora_7_trained_ratio_sd}}$\\
\textcolor{LoRAColor}{LoRA} & 14 & $1\times10^{-5}$ & $\result{rev_qwen_lora_14_trained_d_W_mean}_{\pm \result{rev_qwen_lora_14_trained_d_W_sd}}$ & $\result{rev_qwen_lora_14_trained_d_S_mean}_{\pm \result{rev_qwen_lora_14_trained_d_S_sd}}$ & $\result{rev_qwen_lora_14_trained_ratio_mean}_{\pm \result{rev_qwen_lora_14_trained_ratio_sd}}$\\
\textcolor{LoRAColor}{LoRA} & 28 & $3\times10^{-5}$ & $\result{rev_qwen_lora_28_trained_d_W_mean}_{\pm \result{rev_qwen_lora_28_trained_d_W_sd}}$ & $\result{rev_qwen_lora_28_trained_d_S_mean}_{\pm \result{rev_qwen_lora_28_trained_d_S_sd}}$ & $\result{rev_qwen_lora_28_trained_ratio_mean}_{\pm \result{rev_qwen_lora_28_trained_ratio_sd}}$\\
\addlinespace[2pt]
\textcolor{OFTColor}{OFT} & 32 & $3\times10^{-4}$ & $\result{rev_qwen_oft_32_trained_d_W_mean}_{\pm \result{rev_qwen_oft_32_trained_d_W_sd}}$ & $\result{rev_qwen_oft_32_trained_d_S_mean}_{\pm \result{rev_qwen_oft_32_trained_d_S_sd}}$ & $\result{rev_qwen_oft_32_trained_ratio_mean}_{\pm \result{rev_qwen_oft_32_trained_ratio_sd}}$\\
\textcolor{OFTColor}{OFT} & 64 & $3\times10^{-4}$ & $\result{rev_qwen_oft_64_trained_d_W_mean}_{\pm \result{rev_qwen_oft_64_trained_d_W_sd}}$ & $\result{rev_qwen_oft_64_trained_d_S_mean}_{\pm \result{rev_qwen_oft_64_trained_d_S_sd}}$ & $\result{rev_qwen_oft_64_trained_ratio_mean}_{\pm \result{rev_qwen_oft_64_trained_ratio_sd}}$\\
\bottomrule\end{tabular*}
\end{center}
\end{table}
\FloatBarrier
\begin{table}[!htbp]
\caption{Qwen2.5-7B singular-value restoration and reconstruction controls, with accuracy in percent.}
\label{tab:peft-restoration-qwen}
\begin{center}
\setlength{\tabcolsep}{3pt}
\begin{tabular*}{\linewidth}{@{\extracolsep{\fill}}lrrrrrr@{}}\toprule
Method & Capacity & LR & Trained acc. & Rebuilt acc. & Restored acc. & $\Delta$NLL vs. rebuilt\\\midrule
\textcolor{LoRAColor}{LoRA} & 7 & $1\times10^{-3}$ & $\result{rev_qwen_lora_7_trained_accuracy_mean}_{\pm \result{rev_qwen_lora_7_trained_accuracy_sd}}$ & $\result{rev_qwen_lora_7_reconstruct_accuracy_mean}_{\pm \result{rev_qwen_lora_7_reconstruct_accuracy_sd}}$ & $\result{rev_qwen_lora_7_restore_accuracy_mean}_{\pm \result{rev_qwen_lora_7_restore_accuracy_sd}}$ & $\result{rev_qwen_lora_7_restore_nll_vs_reconstruct_mean}_{\pm \result{rev_qwen_lora_7_restore_nll_vs_reconstruct_sd}}$\\
\textcolor{LoRAColor}{LoRA} & 14 & $1\times10^{-5}$ & $\result{rev_qwen_lora_14_trained_accuracy_mean}_{\pm \result{rev_qwen_lora_14_trained_accuracy_sd}}$ & $\result{rev_qwen_lora_14_reconstruct_accuracy_mean}_{\pm \result{rev_qwen_lora_14_reconstruct_accuracy_sd}}$ & $\result{rev_qwen_lora_14_restore_accuracy_mean}_{\pm \result{rev_qwen_lora_14_restore_accuracy_sd}}$ & $\result{rev_qwen_lora_14_restore_nll_vs_reconstruct_mean}_{\pm \result{rev_qwen_lora_14_restore_nll_vs_reconstruct_sd}}$\\
\textcolor{LoRAColor}{LoRA} & 28 & $3\times10^{-5}$ & $\result{rev_qwen_lora_28_trained_accuracy_mean}_{\pm \result{rev_qwen_lora_28_trained_accuracy_sd}}$ & $\result{rev_qwen_lora_28_reconstruct_accuracy_mean}_{\pm \result{rev_qwen_lora_28_reconstruct_accuracy_sd}}$ & $\result{rev_qwen_lora_28_restore_accuracy_mean}_{\pm \result{rev_qwen_lora_28_restore_accuracy_sd}}$ & $\result{rev_qwen_lora_28_restore_nll_vs_reconstruct_mean}_{\pm \result{rev_qwen_lora_28_restore_nll_vs_reconstruct_sd}}$\\
\bottomrule\end{tabular*}
\end{center}
\end{table}
\FloatBarrier
\begin{table}[!htbp]
\caption{Qwen2.5-7B component-restoration effects relative to the trained model, with accuracy changes in percentage points.}
\label{tab:peft-bands-qwen}
\begin{center}
\setlength{\tabcolsep}{3pt}
\begin{tabular*}{\linewidth}{@{\extracolsep{\fill}}lrrrrrr@{}}\toprule
Method & Capacity & LR & Top $\Delta$acc. & Middle $\Delta$acc. & Tail $\Delta$acc. & Top $\Delta$NLL\\\midrule
\textcolor{LoRAColor}{LoRA} & 7 & $1\times10^{-3}$ & $\result{rev_qwen_lora_7_rotation_band_restore_top_accuracy_change_mean}_{\pm \result{rev_qwen_lora_7_rotation_band_restore_top_accuracy_change_sd}}$ & $\result{rev_qwen_lora_7_rotation_band_restore_mid_accuracy_change_mean}_{\pm \result{rev_qwen_lora_7_rotation_band_restore_mid_accuracy_change_sd}}$ & $\result{rev_qwen_lora_7_rotation_band_restore_bottom_accuracy_change_mean}_{\pm \result{rev_qwen_lora_7_rotation_band_restore_bottom_accuracy_change_sd}}$ & $\result{rev_qwen_lora_7_rotation_band_restore_top_nll_change_mean}_{\pm \result{rev_qwen_lora_7_rotation_band_restore_top_nll_change_sd}}$\\
\textcolor{LoRAColor}{LoRA} & 14 & $1\times10^{-5}$ & $\result{rev_qwen_lora_14_rotation_band_restore_top_accuracy_change_mean}_{\pm \result{rev_qwen_lora_14_rotation_band_restore_top_accuracy_change_sd}}$ & $\result{rev_qwen_lora_14_rotation_band_restore_mid_accuracy_change_mean}_{\pm \result{rev_qwen_lora_14_rotation_band_restore_mid_accuracy_change_sd}}$ & $\result{rev_qwen_lora_14_rotation_band_restore_bottom_accuracy_change_mean}_{\pm \result{rev_qwen_lora_14_rotation_band_restore_bottom_accuracy_change_sd}}$ & $\result{rev_qwen_lora_14_rotation_band_restore_top_nll_change_mean}_{\pm \result{rev_qwen_lora_14_rotation_band_restore_top_nll_change_sd}}$\\
\bottomrule\end{tabular*}
\end{center}
\end{table}
\FloatBarrier
\begin{table}[!htbp]
\caption{Llama-3.1-8B spectral displacement at every selected capacity and LR.}
\label{tab:peft-spectrum-llama}
\begin{center}
\setlength{\tabcolsep}{3pt}
\begin{tabular*}{\linewidth}{@{\extracolsep{\fill}}lrrrrr@{}}\toprule
Method & Capacity & LR & $100d_W$ & $100d_S$ & $100d_S/d_W$\\\midrule
\textcolor{LoRAColor}{LoRA} & 7 & $3\times10^{-4}$ & $\result{rev_llama_lora_7_trained_d_W_mean}_{\pm \result{rev_llama_lora_7_trained_d_W_sd}}$ & $\result{rev_llama_lora_7_trained_d_S_mean}_{\pm \result{rev_llama_lora_7_trained_d_S_sd}}$ & $\result{rev_llama_lora_7_trained_ratio_mean}_{\pm \result{rev_llama_lora_7_trained_ratio_sd}}$\\
\textcolor{LoRAColor}{LoRA} & 15 & $3\times10^{-4}$ & $\result{rev_llama_lora_15_trained_d_W_mean}_{\pm \result{rev_llama_lora_15_trained_d_W_sd}}$ & $\result{rev_llama_lora_15_trained_d_S_mean}_{\pm \result{rev_llama_lora_15_trained_d_S_sd}}$ & $\result{rev_llama_lora_15_trained_ratio_mean}_{\pm \result{rev_llama_lora_15_trained_ratio_sd}}$\\
\textcolor{LoRAColor}{LoRA} & 30 & $3\times10^{-4}$ & $\result{rev_llama_lora_30_trained_d_W_mean}_{\pm \result{rev_llama_lora_30_trained_d_W_sd}}$ & $\result{rev_llama_lora_30_trained_d_S_mean}_{\pm \result{rev_llama_lora_30_trained_d_S_sd}}$ & $\result{rev_llama_lora_30_trained_ratio_mean}_{\pm \result{rev_llama_lora_30_trained_ratio_sd}}$\\
\addlinespace[2pt]
\textcolor{OFTColor}{OFT} & 32 & $3\times10^{-4}$ & $\result{rev_llama_oft_32_trained_d_W_mean}_{\pm \result{rev_llama_oft_32_trained_d_W_sd}}$ & $\result{rev_llama_oft_32_trained_d_S_mean}_{\pm \result{rev_llama_oft_32_trained_d_S_sd}}$ & $\result{rev_llama_oft_32_trained_ratio_mean}_{\pm \result{rev_llama_oft_32_trained_ratio_sd}}$\\
\textcolor{OFTColor}{OFT} & 64 & $3\times10^{-4}$ & $\result{rev_llama_oft_64_trained_d_W_mean}_{\pm \result{rev_llama_oft_64_trained_d_W_sd}}$ & $\result{rev_llama_oft_64_trained_d_S_mean}_{\pm \result{rev_llama_oft_64_trained_d_S_sd}}$ & $\result{rev_llama_oft_64_trained_ratio_mean}_{\pm \result{rev_llama_oft_64_trained_ratio_sd}}$\\
\bottomrule\end{tabular*}
\end{center}
\end{table}
\FloatBarrier
\begin{table}[!htbp]
\caption{Llama-3.1-8B singular-value restoration and reconstruction controls, with accuracy in percent.}
\label{tab:peft-restoration-llama}
\begin{center}
\setlength{\tabcolsep}{3pt}
\begin{tabular*}{\linewidth}{@{\extracolsep{\fill}}lrrrrrr@{}}\toprule
Method & Capacity & LR & Trained acc. & Rebuilt acc. & Restored acc. & $\Delta$NLL vs. rebuilt\\\midrule
\textcolor{LoRAColor}{LoRA} & 7 & $3\times10^{-4}$ & $\result{rev_llama_lora_7_trained_accuracy_mean}_{\pm \result{rev_llama_lora_7_trained_accuracy_sd}}$ & $\result{rev_llama_lora_7_reconstruct_accuracy_mean}_{\pm \result{rev_llama_lora_7_reconstruct_accuracy_sd}}$ & $\result{rev_llama_lora_7_restore_accuracy_mean}_{\pm \result{rev_llama_lora_7_restore_accuracy_sd}}$ & $\result{rev_llama_lora_7_restore_nll_vs_reconstruct_mean}_{\pm \result{rev_llama_lora_7_restore_nll_vs_reconstruct_sd}}$\\
\textcolor{LoRAColor}{LoRA} & 15 & $3\times10^{-4}$ & $\result{rev_llama_lora_15_trained_accuracy_mean}_{\pm \result{rev_llama_lora_15_trained_accuracy_sd}}$ & $\result{rev_llama_lora_15_reconstruct_accuracy_mean}_{\pm \result{rev_llama_lora_15_reconstruct_accuracy_sd}}$ & $\result{rev_llama_lora_15_restore_accuracy_mean}_{\pm \result{rev_llama_lora_15_restore_accuracy_sd}}$ & $\result{rev_llama_lora_15_restore_nll_vs_reconstruct_mean}_{\pm \result{rev_llama_lora_15_restore_nll_vs_reconstruct_sd}}$\\
\textcolor{LoRAColor}{LoRA} & 30 & $3\times10^{-4}$ & $\result{rev_llama_lora_30_trained_accuracy_mean}_{\pm \result{rev_llama_lora_30_trained_accuracy_sd}}$ & $\result{rev_llama_lora_30_reconstruct_accuracy_mean}_{\pm \result{rev_llama_lora_30_reconstruct_accuracy_sd}}$ & $\result{rev_llama_lora_30_restore_accuracy_mean}_{\pm \result{rev_llama_lora_30_restore_accuracy_sd}}$ & $\result{rev_llama_lora_30_restore_nll_vs_reconstruct_mean}_{\pm \result{rev_llama_lora_30_restore_nll_vs_reconstruct_sd}}$\\
\bottomrule\end{tabular*}
\end{center}
\end{table}
\FloatBarrier
\begin{table}[!htbp]
\caption{Llama-3.1-8B component-restoration effects relative to the trained model, with accuracy changes in percentage points.}
\label{tab:peft-bands-llama}
\begin{center}
\setlength{\tabcolsep}{3pt}
\begin{tabular*}{\linewidth}{@{\extracolsep{\fill}}lrrrrrr@{}}\toprule
Method & Capacity & LR & Top $\Delta$acc. & Middle $\Delta$acc. & Tail $\Delta$acc. & Top $\Delta$NLL\\\midrule
\textcolor{LoRAColor}{LoRA} & 7 & $3\times10^{-4}$ & $\result{rev_llama_lora_7_rotation_band_restore_top_accuracy_change_mean}_{\pm \result{rev_llama_lora_7_rotation_band_restore_top_accuracy_change_sd}}$ & $\result{rev_llama_lora_7_rotation_band_restore_mid_accuracy_change_mean}_{\pm \result{rev_llama_lora_7_rotation_band_restore_mid_accuracy_change_sd}}$ & $\result{rev_llama_lora_7_rotation_band_restore_bottom_accuracy_change_mean}_{\pm \result{rev_llama_lora_7_rotation_band_restore_bottom_accuracy_change_sd}}$ & $\result{rev_llama_lora_7_rotation_band_restore_top_nll_change_mean}_{\pm \result{rev_llama_lora_7_rotation_band_restore_top_nll_change_sd}}$\\
\textcolor{LoRAColor}{LoRA} & 15 & $3\times10^{-4}$ & $\result{rev_llama_lora_15_rotation_band_restore_top_accuracy_change_mean}_{\pm \result{rev_llama_lora_15_rotation_band_restore_top_accuracy_change_sd}}$ & $\result{rev_llama_lora_15_rotation_band_restore_mid_accuracy_change_mean}_{\pm \result{rev_llama_lora_15_rotation_band_restore_mid_accuracy_change_sd}}$ & $\result{rev_llama_lora_15_rotation_band_restore_bottom_accuracy_change_mean}_{\pm \result{rev_llama_lora_15_rotation_band_restore_bottom_accuracy_change_sd}}$ & $\result{rev_llama_lora_15_rotation_band_restore_top_nll_change_mean}_{\pm \result{rev_llama_lora_15_rotation_band_restore_top_nll_change_sd}}$\\
\bottomrule\end{tabular*}
\end{center}
\end{table}
\FloatBarrier

\FloatBarrier
\Needspace{8\baselineskip}
\subsection{Singular-value displacement and restoration}
At largest capacity, PiSSA concentrates over 99\% of squared singular-value displacement in dominant components on all three models. MiLoRA concentrates about 99\% on Qwen/FLUX and 95\% on Llama in trailing components. These profiles follow their initialization choices, while restoration measures how much the spectral changes contribute to task performance.

Across all 108 checkpoints, singular-value restoration changes general NLL by $-0.00535$ to $+0.01365$ relative to reconstruction, alongside a maximum absolute task-accuracy change of 1.3 percentage points. These small task effects extend the restoration finding across language capacities. 

\paragraph{A high-LR LoRA outlier.}
Qwen LoRA r7 selects the highest tested LR, $10^{-3}$. Its spectral displacement is $\result{spectrum_qwen_r7_dS_percent}\%$ of the base norm and $\result{spectrum_qwen_r7_ratio}\%$ of the weight update, exceeding the ratios in the low-drift settings. Restoration still largely preserves task performance, with a mean accuracy change of $+\result{rev_qwen_lora_7_restore_accuracy_vs_reconstruct_mean}$ pp relative to reconstruction (\Cref{tab:peft-restoration-qwen}). The restoration finding therefore also holds in this larger-ratio setting.
\begin{figure}[!htbp]\centering
\includegraphics[width=\linewidth,height=.72\textheight,keepaspectratio]{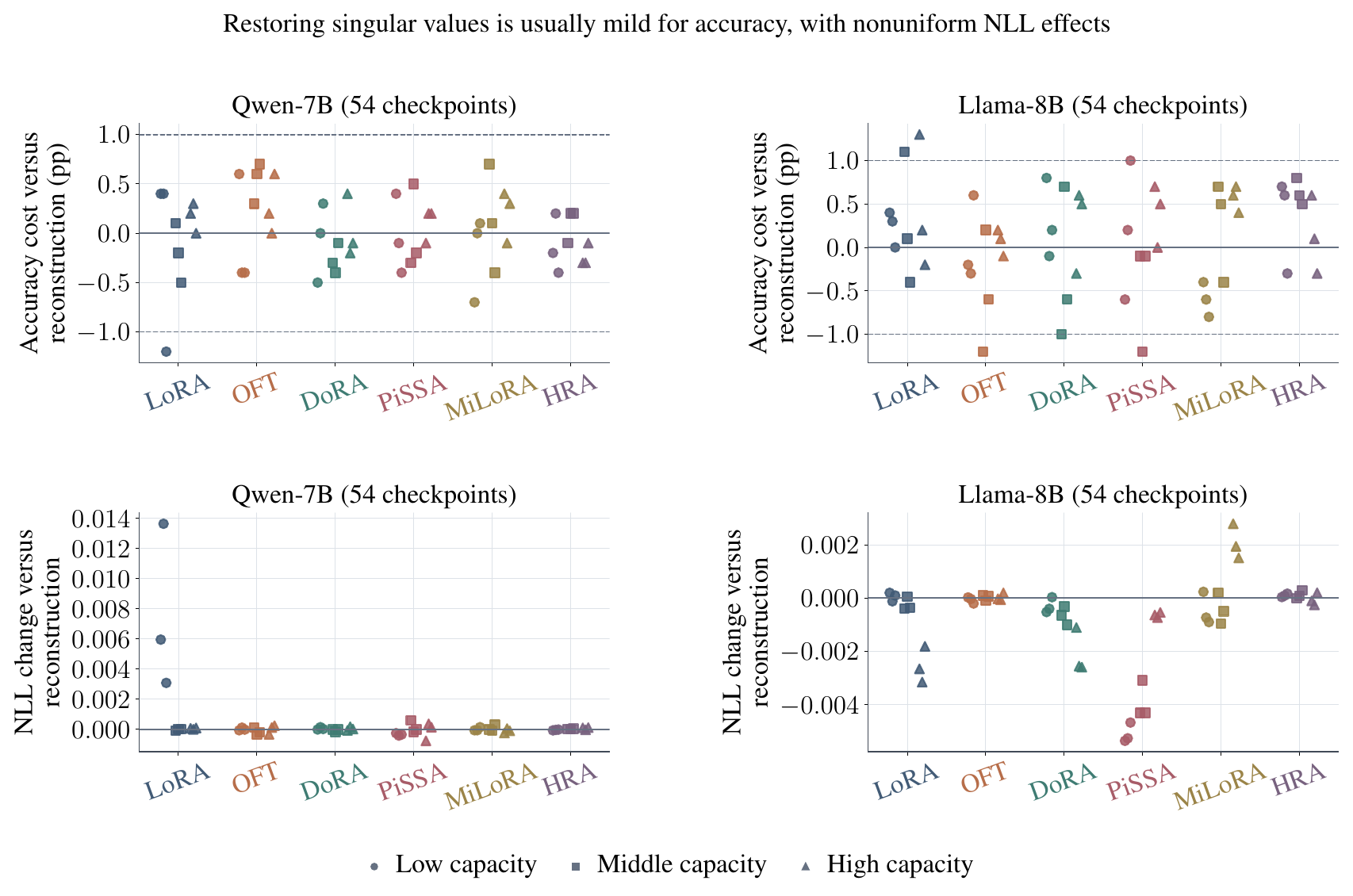}
\caption{Singular-value restoration across capacities and seeds. Trained, reconstructed and restored models are shown separately, extending the main accuracy comparison with reconstruction controls and retention measurements.}
\label{fig:complete-values}
\end{figure}
\begin{figure}[!htbp]\centering
\includegraphics[width=\linewidth,height=.72\textheight,keepaspectratio]{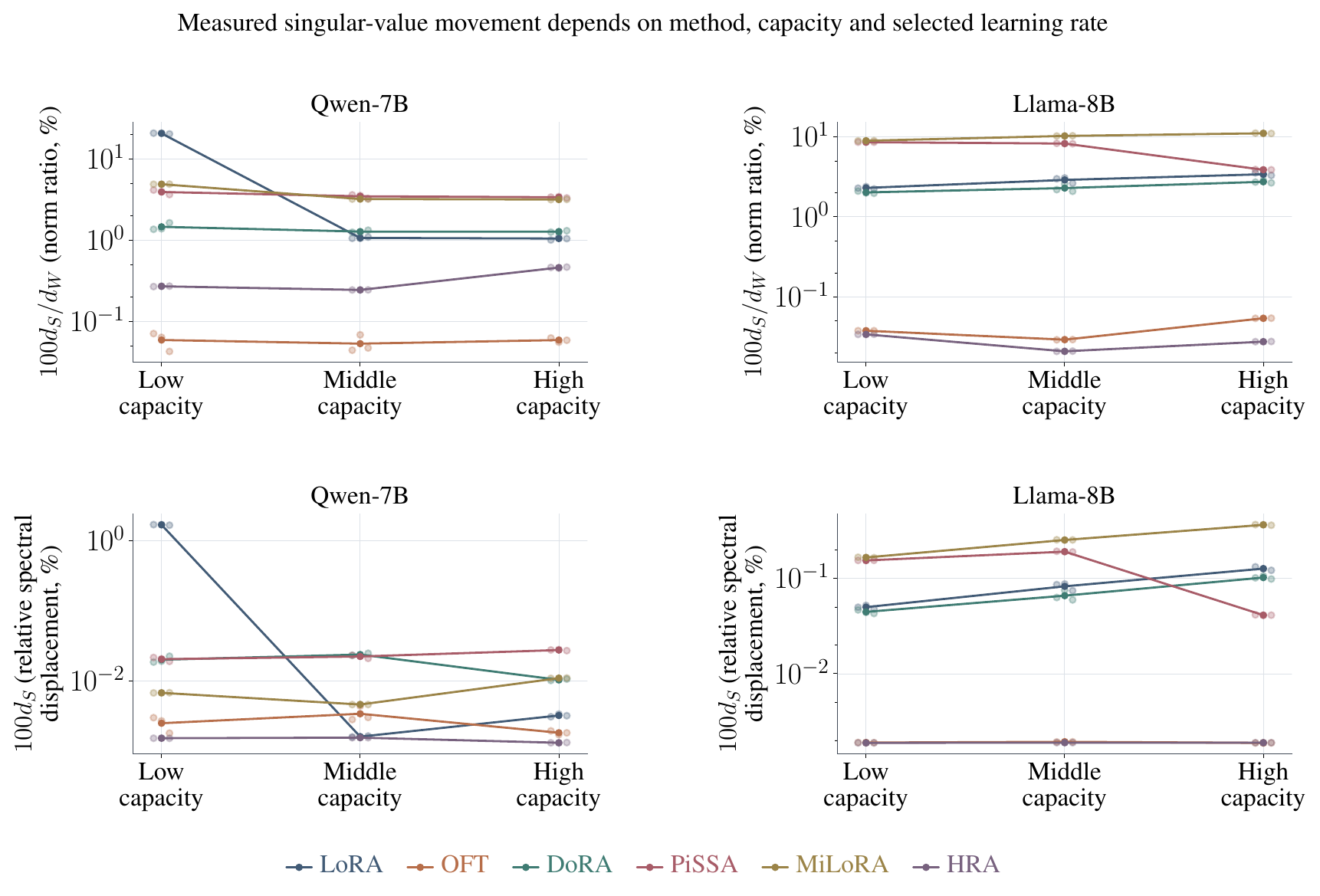}
\caption{Measured weight and singular-value displacement across all six methods and both models at each selected capacity and LR. Ratios express spectral displacement relative to weight displacement.}
\end{figure}
\begin{figure}[!htbp]\centering
\includegraphics[width=\linewidth,height=.72\textheight,keepaspectratio]{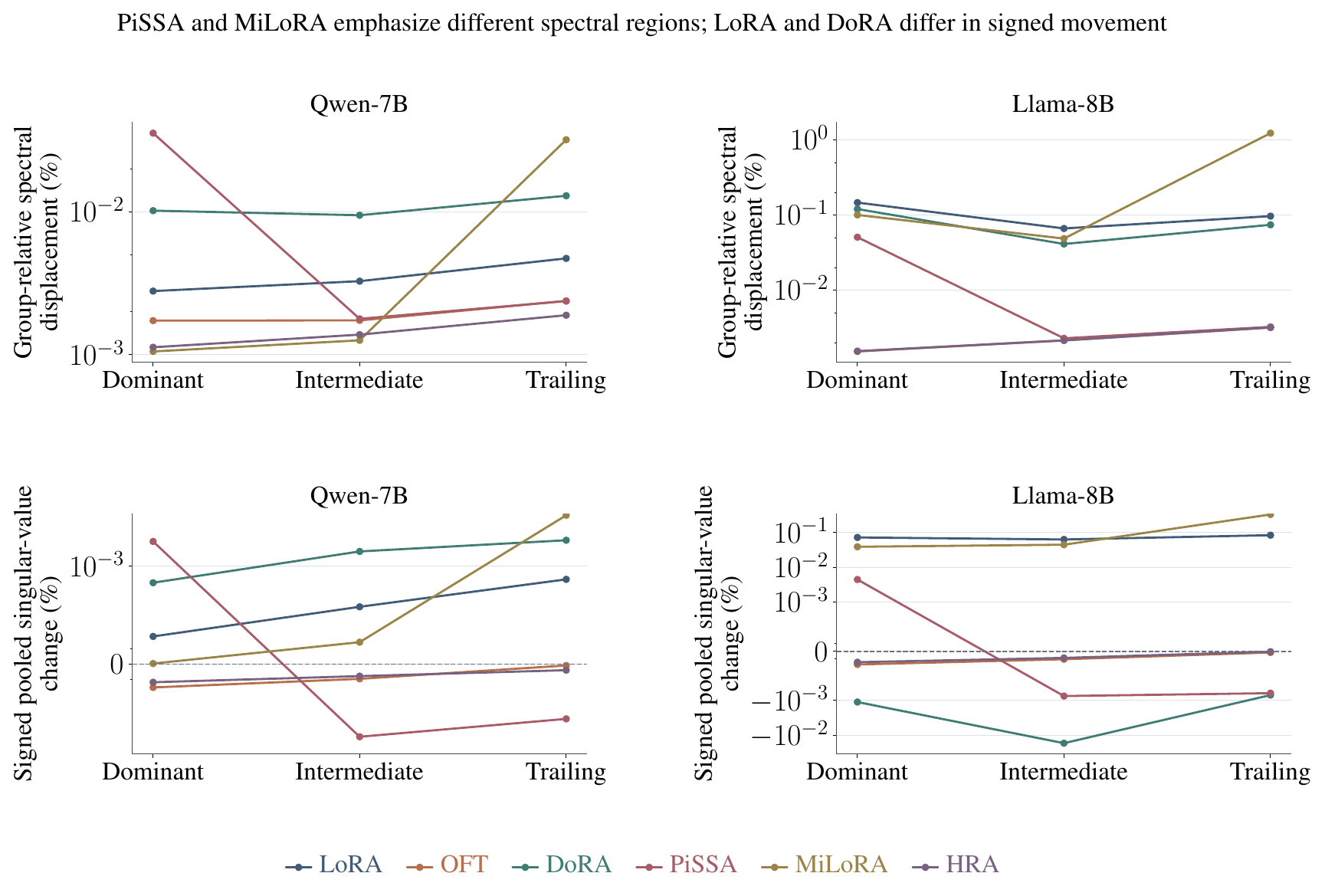}
\caption{Singular-value changes across spectral components at largest capacity. PiSSA's displacement concentrates near the leading end, whereas MiLoRA's concentrates near the tail, consistent with their initialization choices. The pretrained reference and method colors are retained.}
\end{figure}

\Needspace{20\baselineskip}
\subsection{Hyperspherical energy and pairwise geometry}\label{app:hyperspherical}
We measure all adapted matrices in the largest-capacity intervention checkpoints, with 196 per Qwen model, 224 per Llama model and 80 per FLUX cat model. The 54 checkpoints and 9,000 matrix comparisons use the same selected LRs as the spectral and functional analyses. This tests whether the approximate geometric preservation observed for LoRA-family methods also extends to HE.

\paragraph{Definitions and aggregation.}
For stored weights $\bm W\in\mathbb R^{n\times d}$, rows represent output neurons. Let $\widehat{\bm w}_i=\bm w_i/\|\bm w_i\|_2$. Following \citet{liu2018mhe} and \citet{qiu2023oft}, we compute
\begin{equation*}
H(\bm W)=\sum_{i\ne j}\frac{1}{\|\widehat{\bm w}_i-\widehat{\bm w}_j\|_2},\qquad
\delta_H=\frac{|H(\bm W_*)-H(\bm W_0)|}{H(\bm W_0)}.
\end{equation*}
The sum covers every ordered distinct row pair. To measure pairwise changes without signed cancellation, we also compute
\begin{equation*}
\delta_C=\left[\frac{1}{n(n-1)}\sum_{i\ne j}
\left(\widehat{\bm w}_{*,i}^{\!\top}\widehat{\bm w}_{*,j}-\widehat{\bm w}_{0,i}^{\!\top}\widehat{\bm w}_{0,j}\right)^2\right]^{1/2}.
\end{equation*}
We average each metric equally over matrices within a checkpoint, then summarize training seeds. The absolute HE change is taken before matrix averaging. A common right-orthogonal transformation preserves row inner products and HE.

\begin{table}[!htbp]
\caption{Geometric preservation at largest selected capacities. Each checkpoint averages matrices equally, followed by the seed mean and sample SD. HE changes are in units of $10^{-4}\%$, and cosine changes are in units of $10^{-4}$.}
\label{tab:hyperspherical}
\begin{center}
\begin{tabular*}{\linewidth}{@{\extracolsep{\fill}}lccc@{}}\toprule
Method & Qwen2.5-7B & Llama-3.1-8B & FLUX-4B cat \\\midrule
\multicolumn{4}{l}{(a) Absolute relative HE change ($\times10^{-4}\%$)} \\
\textcolor{LoRAColor}{LoRA} & $14.8_{\pm 0.96}$ & $1.72_{\pm 0.17}$ & $0.281_{\pm 0.025}$ \\
\textcolor{OFTColor}{OFT} & $1.5_{\pm 0.2}$ & $0.0215_{\pm 0.0014}$ & $0.533_{\pm 0.065}$ \\
\textcolor{DoRAColor}{DoRA} & $80.3_{\pm 15}$ & $1.7_{\pm 0.069}$ & $0.714_{\pm 0.072}$ \\
\textcolor{PiSSAColor}{PiSSA} & $91.9_{\pm 4.6}$ & $1.53_{\pm 0.099}$ & $3.02_{\pm 0.24}$ \\
\textcolor{MiLoRAColor}{MiLoRA} & $1.35_{\pm 0.37}$ & $4.36_{\pm 0.17}$ & $0.987_{\pm 0.032}$ \\
\textcolor{HRAColor}{HRA} & $0.625_{\pm 0.12}$ & $0.0264_{\pm 0.0013}$ & $0.00804_{\pm 0.0008}$ \\
\midrule
\multicolumn{4}{l}{(b) Pairwise-cosine RMS change ($\times10^{-4}$)} \\
\textcolor{LoRAColor}{LoRA} & $0.906_{\pm 0.015}$ & $10.7_{\pm 0.15}$ & $2.51_{\pm 0.11}$ \\
\textcolor{OFTColor}{OFT} & $0.372_{\pm 0.00052}$ & $0.35_{\pm 2.3\times10^{-5}}$ & $0.48_{\pm 0.022}$ \\
\textcolor{DoRAColor}{DoRA} & $2.26_{\pm 0.034}$ & $10.6_{\pm 0.069}$ & $4.65_{\pm 0.08}$ \\
\textcolor{PiSSAColor}{PiSSA} & $5.33_{\pm 0.045}$ & $5.98_{\pm 0.011}$ & $17_{\pm 0.1}$ \\
\textcolor{MiLoRAColor}{MiLoRA} & $1.12_{\pm 0.0044}$ & $10.6_{\pm 0.1}$ & $4.11_{\pm 0.19}$ \\
\textcolor{HRAColor}{HRA} & $0.307_{\pm 0.001}$ & $0.35_{\pm 1.6\times10^{-5}}$ & $0.407_{\pm 5.9\times10^{-6}}$ \\
\bottomrule
\end{tabular*}
\end{center}
\end{table}

\begin{samepage}
\paragraph{Results.}
All checkpoint means of $\delta_H$ are below $0.01\%$, with maximum $\result{he_max_checkpoint_percent}\%$ (\Cref{tab:hyperspherical}). This supports approximate HE preservation across the selected LoRA-family methods as well as the orthogonal methods. The pairwise measure shows how closely individual neuron relationships are preserved alongside aggregate HE.
\par
\end{samepage}

\subsection{Component effects, interactions and task performance}
Restoring dominant components lowers general-text NLL at every capacity and gives the largest reduction among the three groups in 106/108 checkpoints. The exceptions to the largest reduction are Qwen DoRA r14/seed 73 and MiLoRA r28/seed 13. Dominant restoration remains strongest in every configuration mean, supporting the consistent forgetting response across language capacities.

\begin{figure}[!htbp]\centering
\includegraphics[width=\linewidth,height=.76\textheight,keepaspectratio]{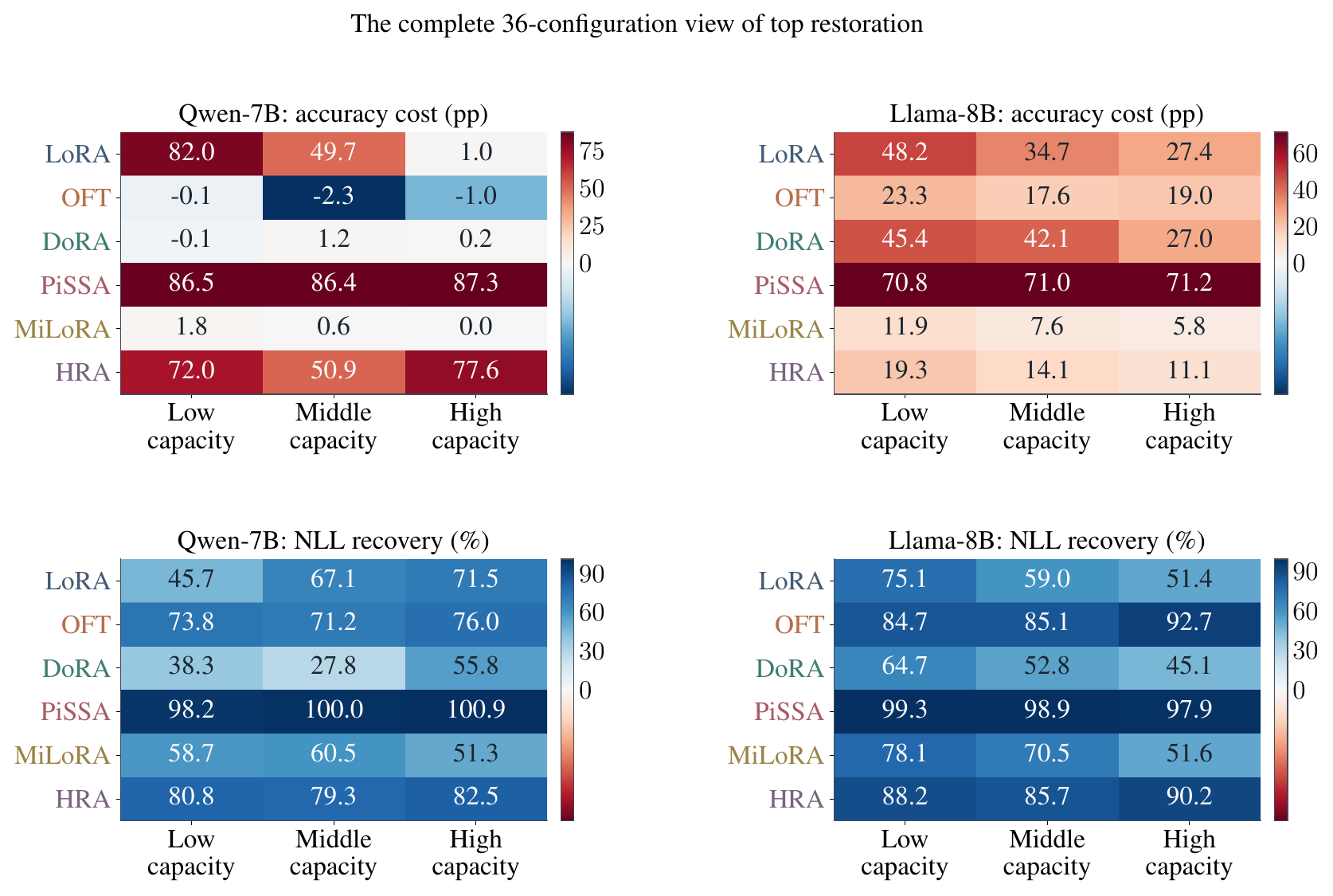}
\caption{Dominant-component restoration at every selected configuration and seed. The retention response persists across the grid, while task-accuracy responses vary. Recovery is normalized by each checkpoint's trained-model NLL increase.}
\end{figure}

For a set $A$ of adapted component groups, define
\begin{equation*}
\bm W(A)=\sum_{B\in A}\sum_{i\in B}s_{0,i}\bm u_{*,i}\bm v_{*,i}^{\!\top}
+\sum_{B\notin A}\sum_{i\in B}s_{0,i}\bm u_{0,i}\bm v_{0,i}^{\!\top}.
\end{equation*}
Its endpoints are the pretrained-component reconstruction and the full value-restored model. For general NLL $f(A)$, the effect of adapting group $B$ is $f(A\cup\{B\})-f(A)$. We evaluate it under all four combinations of pretrained or adapted components in the other groups. Adapting dominant components increases NLL in every combination across all 108 checkpoints. Intermediate and trailing effects change sign across combinations in 38 and 42 checkpoints, using a $10^{-4}$ tolerance. Thus, the dominant-component forgetting response persists as the other components change.

All 36 configurations include evaluations of every component combination for all three seeds.
\begin{figure}[!htbp]\centering
\includegraphics[width=\linewidth,height=.75\textheight,keepaspectratio]{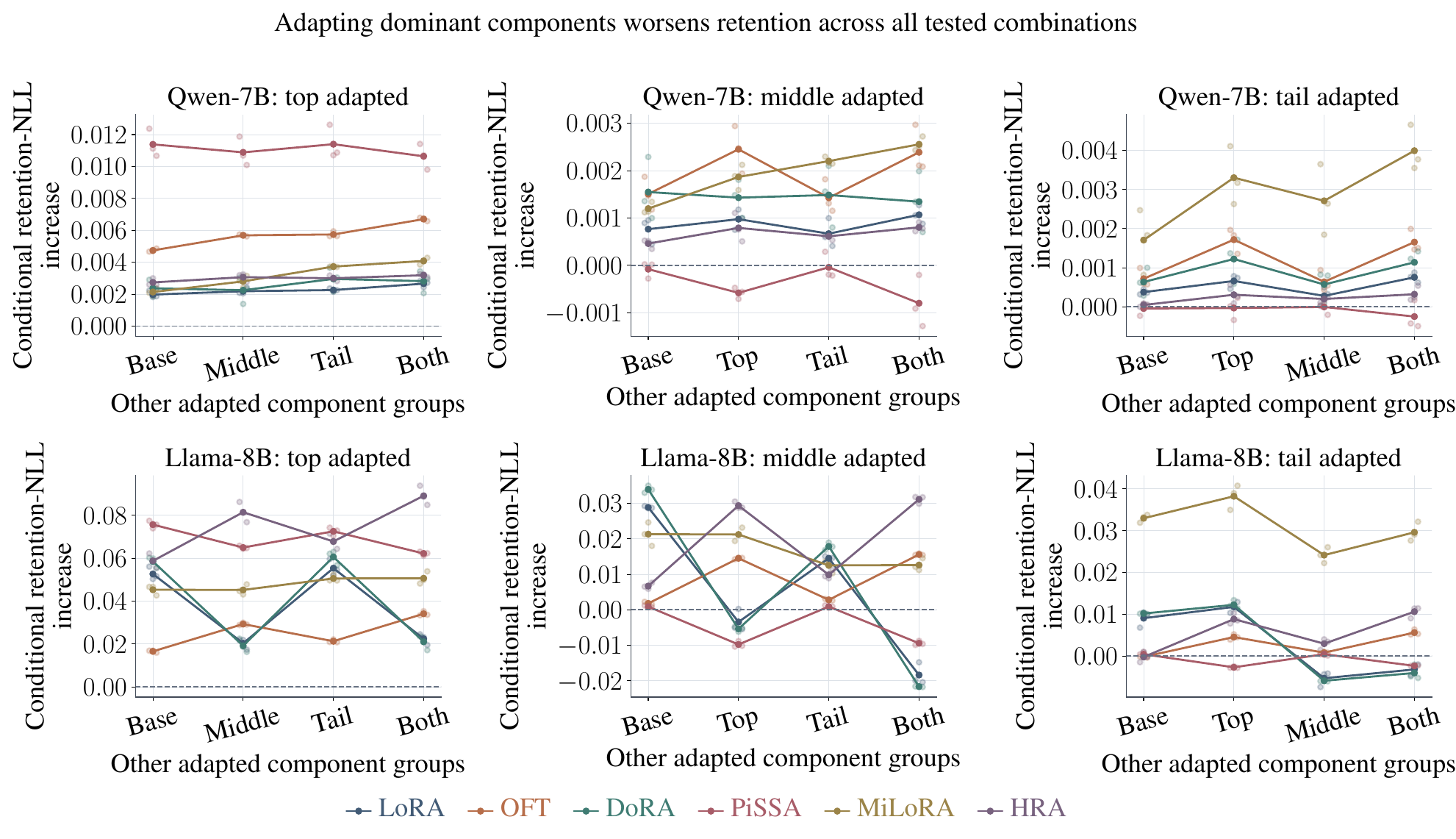}
\caption{Effects of adapting each component group under all combinations of the other groups. Dominant-component adaptation consistently increases general NLL, while intermediate and trailing effects can change sign. Both NLL evaluations agree on the dominant-component direction.}
\end{figure}
\begin{figure}[t]\centering
\includegraphics[width=\linewidth]{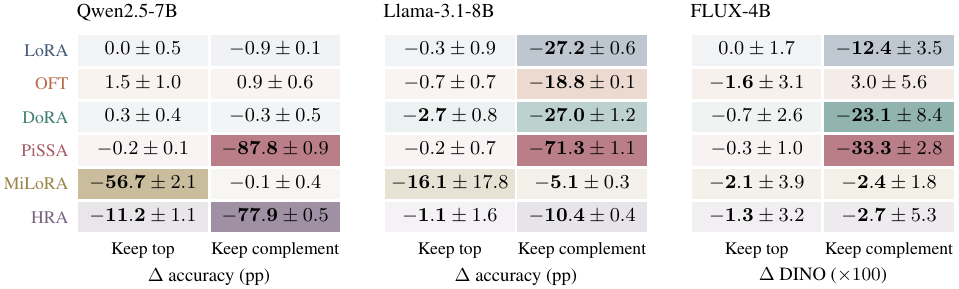}
\caption{Task performance when retaining only some adapted components. Cells show mean $\pm$ SD changes from $\bm W_R$ (all singular values restored) at the largest capacities. Columns retain the adapted dominant components alone or the intermediate and trailing components together. Bold means mark losses exceeding 1 pp for LLMs or 1 DINO$\times100$ unit for FLUX.}
\label{fig:component-sufficiency}
\end{figure}

\paragraph{Task performance across selected capacities.}
\Cref{tab:capacity-sufficiency} extends the largest-capacity comparison in \Cref{fig:component-sufficiency}. Qwen LoRA retains task performance with dominant components alone at small capacity, with neither component set alone at medium capacity, and with either set alone at large capacity. Llama LoRA's outcomes also vary across capacities and seeds. PiSSA retains the dominant-only pattern at every language capacity, supporting its consistent concentration of task gains in dominant components. The other methods illustrate the setting-dependent component contributions described in the main text, alongside the consistent forgetting response.
\begin{table}[!htbp]
\caption{Task sufficiency across all selected language capacities. Entries count seeds with accuracy loss $\leq1$ pp relative to full value restoration, out of three seeds per configuration. Top and complement retain those adapted components on a pretrained background. Tables~\ref{tab:peft-spectrum-qwen} and~\ref{tab:peft-spectrum-llama} give capacities and selected LRs.}
\label{tab:capacity-sufficiency}
\begin{center}
\setlength{\tabcolsep}{4pt}
\begin{tabular*}{\linewidth}{@{\extracolsep{\fill}}llcccccc@{}}\toprule
Model & Method & \multicolumn{2}{c}{Small} & \multicolumn{2}{c}{Medium} & \multicolumn{2}{c}{Large}\\
\cmidrule(lr){3-4}\cmidrule(lr){5-6}\cmidrule(l){7-8}
& & Top & Complement & Top & Complement & Top & Complement\\\midrule
Qwen2.5-7B & \textcolor{LoRAColor}{LoRA} & \result{rev_capacity_qwen_lora_7_top}/\result{rev_capacity_qwen_lora_7_valid} & \result{rev_capacity_qwen_lora_7_complement}/\result{rev_capacity_qwen_lora_7_valid} & \result{rev_capacity_qwen_lora_14_top}/\result{rev_capacity_qwen_lora_14_valid} & \result{rev_capacity_qwen_lora_14_complement}/\result{rev_capacity_qwen_lora_14_valid} & \result{rev_capacity_qwen_lora_28_top}/\result{rev_capacity_qwen_lora_28_valid} & \result{rev_capacity_qwen_lora_28_complement}/\result{rev_capacity_qwen_lora_28_valid}\\
 & \textcolor{OFTColor}{OFT} & \result{rev_capacity_qwen_oft_32_top}/\result{rev_capacity_qwen_oft_32_valid} & \result{rev_capacity_qwen_oft_32_complement}/\result{rev_capacity_qwen_oft_32_valid} & \result{rev_capacity_qwen_oft_64_top}/\result{rev_capacity_qwen_oft_64_valid} & \result{rev_capacity_qwen_oft_64_complement}/\result{rev_capacity_qwen_oft_64_valid} & \result{rev_capacity_qwen_oft_128_top}/\result{rev_capacity_qwen_oft_128_valid} & \result{rev_capacity_qwen_oft_128_complement}/\result{rev_capacity_qwen_oft_128_valid}\\
 & \textcolor{DoRAColor}{DoRA} & \result{rev_capacity_qwen_dora_7_top}/\result{rev_capacity_qwen_dora_7_valid} & \result{rev_capacity_qwen_dora_7_complement}/\result{rev_capacity_qwen_dora_7_valid} & \result{rev_capacity_qwen_dora_14_top}/\result{rev_capacity_qwen_dora_14_valid} & \result{rev_capacity_qwen_dora_14_complement}/\result{rev_capacity_qwen_dora_14_valid} & \result{rev_capacity_qwen_dora_28_top}/\result{rev_capacity_qwen_dora_28_valid} & \result{rev_capacity_qwen_dora_28_complement}/\result{rev_capacity_qwen_dora_28_valid}\\
 & \textcolor{PiSSAColor}{PiSSA} & \result{rev_capacity_qwen_pissa_7_top}/\result{rev_capacity_qwen_pissa_7_valid} & \result{rev_capacity_qwen_pissa_7_complement}/\result{rev_capacity_qwen_pissa_7_valid} & \result{rev_capacity_qwen_pissa_14_top}/\result{rev_capacity_qwen_pissa_14_valid} & \result{rev_capacity_qwen_pissa_14_complement}/\result{rev_capacity_qwen_pissa_14_valid} & \result{rev_capacity_qwen_pissa_28_top}/\result{rev_capacity_qwen_pissa_28_valid} & \result{rev_capacity_qwen_pissa_28_complement}/\result{rev_capacity_qwen_pissa_28_valid}\\
 & \textcolor{MiLoRAColor}{MiLoRA} & \result{rev_capacity_qwen_milora_7_top}/\result{rev_capacity_qwen_milora_7_valid} & \result{rev_capacity_qwen_milora_7_complement}/\result{rev_capacity_qwen_milora_7_valid} & \result{rev_capacity_qwen_milora_14_top}/\result{rev_capacity_qwen_milora_14_valid} & \result{rev_capacity_qwen_milora_14_complement}/\result{rev_capacity_qwen_milora_14_valid} & \result{rev_capacity_qwen_milora_28_top}/\result{rev_capacity_qwen_milora_28_valid} & \result{rev_capacity_qwen_milora_28_complement}/\result{rev_capacity_qwen_milora_28_valid}\\
 & \textcolor{HRAColor}{HRA} & \result{rev_capacity_qwen_hra_16_top}/\result{rev_capacity_qwen_hra_16_valid} & \result{rev_capacity_qwen_hra_16_complement}/\result{rev_capacity_qwen_hra_16_valid} & \result{rev_capacity_qwen_hra_32_top}/\result{rev_capacity_qwen_hra_32_valid} & \result{rev_capacity_qwen_hra_32_complement}/\result{rev_capacity_qwen_hra_32_valid} & \result{rev_capacity_qwen_hra_64_top}/\result{rev_capacity_qwen_hra_64_valid} & \result{rev_capacity_qwen_hra_64_complement}/\result{rev_capacity_qwen_hra_64_valid}\\
\midrule
Llama-3.1-8B & \textcolor{LoRAColor}{LoRA} & \result{rev_capacity_llama_lora_7_top}/\result{rev_capacity_llama_lora_7_valid} & \result{rev_capacity_llama_lora_7_complement}/\result{rev_capacity_llama_lora_7_valid} & \result{rev_capacity_llama_lora_15_top}/\result{rev_capacity_llama_lora_15_valid} & \result{rev_capacity_llama_lora_15_complement}/\result{rev_capacity_llama_lora_15_valid} & \result{rev_capacity_llama_lora_30_top}/\result{rev_capacity_llama_lora_30_valid} & \result{rev_capacity_llama_lora_30_complement}/\result{rev_capacity_llama_lora_30_valid}\\
 & \textcolor{OFTColor}{OFT} & \result{rev_capacity_llama_oft_32_top}/\result{rev_capacity_llama_oft_32_valid} & \result{rev_capacity_llama_oft_32_complement}/\result{rev_capacity_llama_oft_32_valid} & \result{rev_capacity_llama_oft_64_top}/\result{rev_capacity_llama_oft_64_valid} & \result{rev_capacity_llama_oft_64_complement}/\result{rev_capacity_llama_oft_64_valid} & \result{rev_capacity_llama_oft_128_top}/\result{rev_capacity_llama_oft_128_valid} & \result{rev_capacity_llama_oft_128_complement}/\result{rev_capacity_llama_oft_128_valid}\\
 & \textcolor{DoRAColor}{DoRA} & \result{rev_capacity_llama_dora_7_top}/\result{rev_capacity_llama_dora_7_valid} & \result{rev_capacity_llama_dora_7_complement}/\result{rev_capacity_llama_dora_7_valid} & \result{rev_capacity_llama_dora_15_top}/\result{rev_capacity_llama_dora_15_valid} & \result{rev_capacity_llama_dora_15_complement}/\result{rev_capacity_llama_dora_15_valid} & \result{rev_capacity_llama_dora_30_top}/\result{rev_capacity_llama_dora_30_valid} & \result{rev_capacity_llama_dora_30_complement}/\result{rev_capacity_llama_dora_30_valid}\\
 & \textcolor{PiSSAColor}{PiSSA} & \result{rev_capacity_llama_pissa_7_top}/\result{rev_capacity_llama_pissa_7_valid} & \result{rev_capacity_llama_pissa_7_complement}/\result{rev_capacity_llama_pissa_7_valid} & \result{rev_capacity_llama_pissa_15_top}/\result{rev_capacity_llama_pissa_15_valid} & \result{rev_capacity_llama_pissa_15_complement}/\result{rev_capacity_llama_pissa_15_valid} & \result{rev_capacity_llama_pissa_30_top}/\result{rev_capacity_llama_pissa_30_valid} & \result{rev_capacity_llama_pissa_30_complement}/\result{rev_capacity_llama_pissa_30_valid}\\
 & \textcolor{MiLoRAColor}{MiLoRA} & \result{rev_capacity_llama_milora_7_top}/\result{rev_capacity_llama_milora_7_valid} & \result{rev_capacity_llama_milora_7_complement}/\result{rev_capacity_llama_milora_7_valid} & \result{rev_capacity_llama_milora_15_top}/\result{rev_capacity_llama_milora_15_valid} & \result{rev_capacity_llama_milora_15_complement}/\result{rev_capacity_llama_milora_15_valid} & \result{rev_capacity_llama_milora_30_top}/\result{rev_capacity_llama_milora_30_valid} & \result{rev_capacity_llama_milora_30_complement}/\result{rev_capacity_llama_milora_30_valid}\\
 & \textcolor{HRAColor}{HRA} & \result{rev_capacity_llama_hra_16_top}/\result{rev_capacity_llama_hra_16_valid} & \result{rev_capacity_llama_hra_16_complement}/\result{rev_capacity_llama_hra_16_valid} & \result{rev_capacity_llama_hra_32_top}/\result{rev_capacity_llama_hra_32_valid} & \result{rev_capacity_llama_hra_32_complement}/\result{rev_capacity_llama_hra_32_valid} & \result{rev_capacity_llama_hra_64_top}/\result{rev_capacity_llama_hra_64_valid} & \result{rev_capacity_llama_hra_64_complement}/\result{rev_capacity_llama_hra_64_valid}\\
\bottomrule\end{tabular*}
\end{center}
\end{table}

\FloatBarrier
\subsection{Consistency of the dominant-component response}\label{app:component-consistency}
The comparison covers 42 selected configurations and 126 checkpoints across all six methods, all tested language capacities and the largest FLUX budget for each method. Restoring dominant components gives the largest mean reduction in general-text NLL or general-image drift in all 42 configurations and a larger reduction than restoring either intermediate or trailing components in 123/126 checkpoints. These counts support the recurring dominant-component forgetting response across models and tasks.

\paragraph{Consistency with smaller edits.}
We measure edit norm as the square root of the summed squared Frobenius changes from trained weights across all adapted matrices, using the stored weights. We select cases where restoring intermediate or trailing components produces a larger edit norm than restoring dominant components. Configuration comparisons use seed means of norms and forgetting reductions. Dominant restoration still gives the largest mean forgetting reduction among all three restorations in 17/17 eligible configurations, comprising 8 Qwen, 6 Llama and 3 FLUX configurations. At the checkpoint level, the count is 48/51, with 22/24 for Qwen, 18/18 for Llama and 8/9 for FLUX. Each configuration or checkpoint is counted once, including when both alternatives have larger norms. Requiring both alternatives to have larger norms gives 4/4 configurations and 10/12 checkpoints. The eligible cases involve LoRA, DoRA and MiLoRA. These counts show that the dominant-component advantage can persist with smaller edits, so edit magnitude alone does not determine the ordering of forgetting reductions.

\FloatBarrier
\subsection{Native OFT orthogonality and polar projection}\label{app:native-orthogonality}
OFTv2 approximates the Cayley inverse with a truncated Neumann series \citep{qiu2025oftv2}. We measure $\|\bm R\bm R^{\!\top}-\bm I\|_{\mathrm{F}}$, its maximum absolute entry, and $\|\bm R-\widehat{\bm R}\|_{\mathrm{F}}/\|\bm R\|_{\mathrm{F}}$, where $\widehat{\bm R}$ is the polar factor \citep{higham1986polar}. The relative distance expresses the deviation from orthogonality on a scale normalized by the transform norm. These diagnostics cover 21 checkpoints each for Qwen and Llama, and 33 for FLUX. The maximum native-to-polar distances are about 0.154\%, 0.094\% and 11.0\%, respectively (\Cref{tab:peft-orthogonality}). Task and retention evaluations use native transforms. Polar projection measures their deviation from an exact orthogonal transform, clarifying the numerical precision of the geometric-preservation comparison.

\begin{table}[!htbp]\caption{Native OFT-transform diagnostics. Maxima are taken over all matrices and checkpoints in each row. Frobenius and entrywise maxima describe different norms.}\label{tab:peft-orthogonality}
\begin{center}
\begin{tabular}{lrrrr}\toprule Model & Checkpoints & Max. $\|\bm R\bm R^{\!\top}-\bm I\|_{\mathrm{F}}$ & Max. entry & Max. polar residual\\\midrule
FLUX-4B & 33 & $8.04$ & $0.136$ & $0.11$\\
Llama-8B & 21 & $0.118$ & $0.00528$ & $0.000936$\\
Qwen-7B & 21 & $0.42$ & $0.0233$ & $0.00154$\\
\bottomrule\end{tabular}
\end{center}
\end{table}

\FloatBarrier
\section{FLUX spectral interventions and numerical controls}\label{app:interventions}\label{app:spectrum}\label{app:diffspec}
\subsection{Complete six-method FLUX interventions}\label{app:flux22}
The FLUX cat intervention study contains 18 checkpoints and 396 evaluations, covering all six methods at their largest tested budgets. LR maximizes mean step-700 validation DINO, with ties favoring lower LR. Interventions use the final checkpoints and jointly replace all 80 adapted matrices.

\begin{table}[!htbp]
\caption{FLUX intervention operating points selected by mean validation DINO. Parameters are in millions. Displacements pool all adapted matrices before normalization. All edits act jointly on 80 matrices.}
\label{tab:flux22-protocol}
\begin{center}
\setlength{\tabcolsep}{3pt}
\begin{tabular*}{\linewidth}{@{\extracolsep{\fill}}lrrrrrr@{}}\toprule
Method & Capacity & LR & Params. (M) & $100d_W$ & $100d_S$ & $100d_S/d_W$\\\midrule
\textcolor{LoRAColor}{LoRA} & 32 & $5\times10^{-5}$ & $\result{fx22_lora_params}$ & $\result{fx22_lora_trained_dW_mean}_{\pm \result{fx22_lora_trained_dW_sd}}$ & $\result{fx22_lora_trained_dS_mean}_{\pm \result{fx22_lora_trained_dS_sd}}$ & $\result{fx22_lora_trained_ratio_mean}_{\pm \result{fx22_lora_trained_ratio_sd}}$\\
\textcolor{OFTColor}{OFT} & 256 & $1\times10^{-4}$ & $\result{fx22_oft_params}$ & $\result{fx22_oft_trained_dW_mean}_{\pm \result{fx22_oft_trained_dW_sd}}$ & $\result{fx22_oft_trained_dS_mean}_{\pm \result{fx22_oft_trained_dS_sd}}$ & $\result{fx22_oft_trained_ratio_mean}_{\pm \result{fx22_oft_trained_ratio_sd}}$\\
\textcolor{DoRAColor}{DoRA} & 32 & $1\times10^{-4}$ & $\result{fx22_dora_params}$ & $\result{fx22_dora_trained_dW_mean}_{\pm \result{fx22_dora_trained_dW_sd}}$ & $\result{fx22_dora_trained_dS_mean}_{\pm \result{fx22_dora_trained_dS_sd}}$ & $\result{fx22_dora_trained_ratio_mean}_{\pm \result{fx22_dora_trained_ratio_sd}}$\\
\textcolor{PiSSAColor}{PiSSA} & 32 & $5\times10^{-5}$ & $\result{fx22_pissa_params}$ & $\result{fx22_pissa_trained_dW_mean}_{\pm \result{fx22_pissa_trained_dW_sd}}$ & $\result{fx22_pissa_trained_dS_mean}_{\pm \result{fx22_pissa_trained_dS_sd}}$ & $\result{fx22_pissa_trained_ratio_mean}_{\pm \result{fx22_pissa_trained_ratio_sd}}$\\
\textcolor{MiLoRAColor}{MiLoRA} & 32 & $1\times10^{-4}$ & $\result{fx22_milora_params}$ & $\result{fx22_milora_trained_dW_mean}_{\pm \result{fx22_milora_trained_dW_sd}}$ & $\result{fx22_milora_trained_dS_mean}_{\pm \result{fx22_milora_trained_dS_sd}}$ & $\result{fx22_milora_trained_ratio_mean}_{\pm \result{fx22_milora_trained_ratio_sd}}$\\
\textcolor{HRAColor}{HRA} & 124 & $3\times10^{-4}$ & $\result{fx22_hra_params}$ & $\result{fx22_hra_trained_dW_mean}_{\pm \result{fx22_hra_trained_dW_sd}}$ & $\result{fx22_hra_trained_dS_mean}_{\pm \result{fx22_hra_trained_dS_sd}}$ & $\result{fx22_hra_trained_ratio_mean}_{\pm \result{fx22_hra_trained_ratio_sd}}$\\
\bottomrule\end{tabular*}
\end{center}
\end{table}
\FloatBarrier
\begin{table}[!htbp]
\caption{FLUX value-restoration controls. DINO scores and paired differences are multiplied by 100. The last column compares restoration with reconstruction through the same SVD/storage path.}
\label{tab:flux22-controls}
\begin{center}
\setlength{\tabcolsep}{3pt}
\begin{tabular*}{\linewidth}{@{\extracolsep{\fill}}lrrrrr@{}}\toprule
Method & Trained & Reconstructed & Restored & Values only & $\Delta$ vs. recon.\\\midrule
\textcolor{LoRAColor}{LoRA} & $\result{fx22_lora_trained_dino_mean}_{\pm \result{fx22_lora_trained_dino_sd}}$ & $\result{fx22_lora_reconstruct_dino_mean}_{\pm \result{fx22_lora_reconstruct_dino_sd}}$ & $\result{fx22_lora_restore_dino_mean}_{\pm \result{fx22_lora_restore_dino_sd}}$ & $\result{fx22_lora_spectrum_only_dino_mean}_{\pm \result{fx22_lora_spectrum_only_dino_sd}}$ & $\result{fx22_lora_restore_dino_vs_reconstruct_mean}_{\pm \result{fx22_lora_restore_dino_vs_reconstruct_sd}}$\\
\textcolor{OFTColor}{OFT} & $\result{fx22_oft_trained_dino_mean}_{\pm \result{fx22_oft_trained_dino_sd}}$ & $\result{fx22_oft_reconstruct_dino_mean}_{\pm \result{fx22_oft_reconstruct_dino_sd}}$ & $\result{fx22_oft_restore_dino_mean}_{\pm \result{fx22_oft_restore_dino_sd}}$ & $\result{fx22_oft_spectrum_only_dino_mean}_{\pm \result{fx22_oft_spectrum_only_dino_sd}}$ & $\result{fx22_oft_restore_dino_vs_reconstruct_mean}_{\pm \result{fx22_oft_restore_dino_vs_reconstruct_sd}}$\\
\textcolor{DoRAColor}{DoRA} & $\result{fx22_dora_trained_dino_mean}_{\pm \result{fx22_dora_trained_dino_sd}}$ & $\result{fx22_dora_reconstruct_dino_mean}_{\pm \result{fx22_dora_reconstruct_dino_sd}}$ & $\result{fx22_dora_restore_dino_mean}_{\pm \result{fx22_dora_restore_dino_sd}}$ & $\result{fx22_dora_spectrum_only_dino_mean}_{\pm \result{fx22_dora_spectrum_only_dino_sd}}$ & $\result{fx22_dora_restore_dino_vs_reconstruct_mean}_{\pm \result{fx22_dora_restore_dino_vs_reconstruct_sd}}$\\
\textcolor{PiSSAColor}{PiSSA} & $\result{fx22_pissa_trained_dino_mean}_{\pm \result{fx22_pissa_trained_dino_sd}}$ & $\result{fx22_pissa_reconstruct_dino_mean}_{\pm \result{fx22_pissa_reconstruct_dino_sd}}$ & $\result{fx22_pissa_restore_dino_mean}_{\pm \result{fx22_pissa_restore_dino_sd}}$ & $\result{fx22_pissa_spectrum_only_dino_mean}_{\pm \result{fx22_pissa_spectrum_only_dino_sd}}$ & $\result{fx22_pissa_restore_dino_vs_reconstruct_mean}_{\pm \result{fx22_pissa_restore_dino_vs_reconstruct_sd}}$\\
\textcolor{MiLoRAColor}{MiLoRA} & $\result{fx22_milora_trained_dino_mean}_{\pm \result{fx22_milora_trained_dino_sd}}$ & $\result{fx22_milora_reconstruct_dino_mean}_{\pm \result{fx22_milora_reconstruct_dino_sd}}$ & $\result{fx22_milora_restore_dino_mean}_{\pm \result{fx22_milora_restore_dino_sd}}$ & $\result{fx22_milora_spectrum_only_dino_mean}_{\pm \result{fx22_milora_spectrum_only_dino_sd}}$ & $\result{fx22_milora_restore_dino_vs_reconstruct_mean}_{\pm \result{fx22_milora_restore_dino_vs_reconstruct_sd}}$\\
\textcolor{HRAColor}{HRA} & $\result{fx22_hra_trained_dino_mean}_{\pm \result{fx22_hra_trained_dino_sd}}$ & $\result{fx22_hra_reconstruct_dino_mean}_{\pm \result{fx22_hra_reconstruct_dino_sd}}$ & $\result{fx22_hra_restore_dino_mean}_{\pm \result{fx22_hra_restore_dino_sd}}$ & $\result{fx22_hra_spectrum_only_dino_mean}_{\pm \result{fx22_hra_spectrum_only_dino_sd}}$ & $\result{fx22_hra_restore_dino_vs_reconstruct_mean}_{\pm \result{fx22_hra_restore_dino_vs_reconstruct_sd}}$\\
\bottomrule\end{tabular*}
\end{center}
\end{table}
\FloatBarrier

\paragraph{Evaluation and numerical references.}
The BF16 evaluator generates $512\times512$ images with 20 sampling steps and guidance 3.5. Test DINO averages four held-out image/prompt pairs with three generation repeats. The drift bank has five class prompts and five general prompts, with general drift and CLIP using the latter. Drift is $(1-\text{DINO cosine})/2$ relative to base-generated images. 

Every change is paired within a checkpoint before computing the mean and sample SD. Component effects use the same-run merged trained model, restoration controls use reconstruction through the same SVD/storage path, and comparisons of retained task performance use full value restoration. Same-run references account for this difference. All 396 evaluations succeed, with every restored matrix matched and within the recorded numerical tolerance.

\begin{figure}[!htbp]\centering
\includegraphics[width=.98\linewidth]{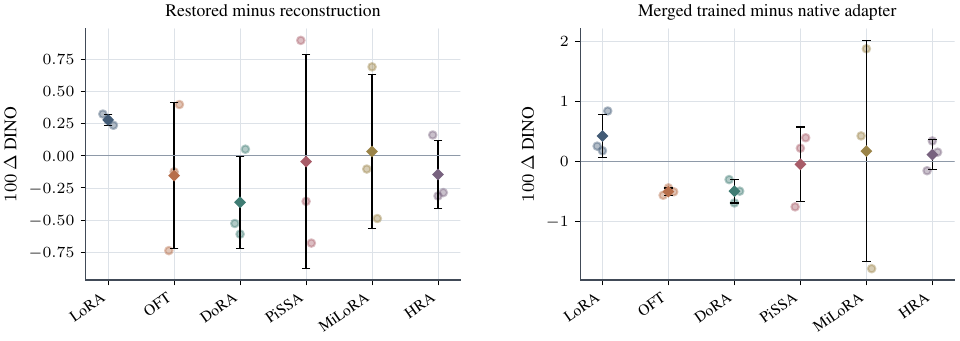}
\caption{FLUX numerical controls. The left panel compares value restoration with reconstruction. The right panel compares the same-run merged trained model with the earlier native-adapter evaluation. Diamonds and bars show means and sample SDs, with faint points for individual checkpoints. Differences are in DINO scores $\times100$.}
\label{fig:flux22-controls}
\end{figure}
\FloatBarrier

\paragraph{Values and adapted bases.}
Value restoration nearly preserves fidelity for every method, whereas learned values on pretrained singular bases yield near-base scores (\Cref{tab:flux22-controls}). This extends the main restoration finding to diffusion and points to the adapted bases as the principal source of task gains. Spectral displacement concentrates in PiSSA's dominant components and MiLoRA's trailing components (\Cref{fig:flux22-spectrum}). 

\begin{figure}[!htbp]\centering
\includegraphics[width=.98\linewidth]{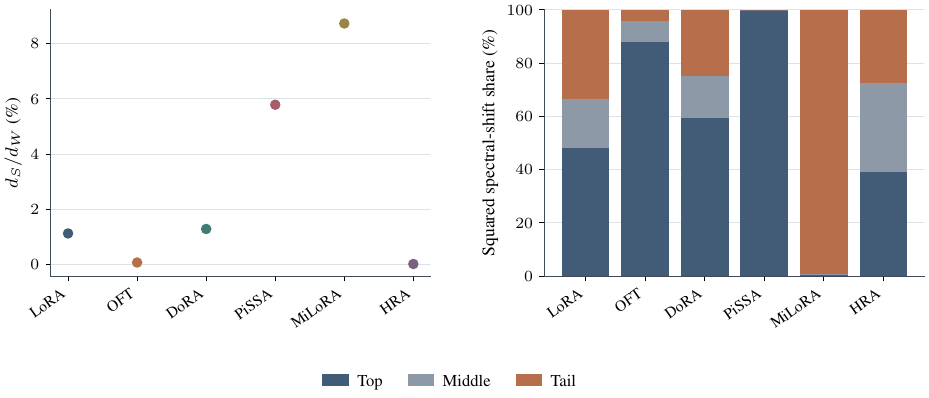}
\caption{FLUX spectral displacement. The left panel shows the pooled spectral-to-weight displacement ratio with sample SDs. The right panel shows mean shares of squared singular-value displacement across component groups, with boundaries adjusted to pretrained spectral gaps. Shares are computed from all measured spectral vectors.}
\label{fig:flux22-spectrum}
\end{figure}
\FloatBarrier

\subsection{Task performance and forgetting across spectral components}
The dominant, intermediate and trailing component groups are approximate thirds adjusted to nearby pretrained spectral gaps. Restore and keep operations replace complete rank-one contributions, using pretrained singular values as in \cref{eq:band-restore}. The measured response includes these interactions between components.

Restoring dominant components reduces general drift in every checkpoint and gives the largest mean reduction for each method. It gives the largest individual reduction in 17 of 18 checkpoints. MiLoRA seed zero has a slightly larger trailing-component response. Class-prompt drift also decreases in every checkpoint, with dominant restoration giving the largest reduction. Both drift measures therefore support the recurring dominant-component response. CLIP responses differ, with LoRA's mean general-prompt alignment decreasing. OFT gains fidelity on average but loses fidelity in one checkpoint.

Keeping only the adapted dominant components nearly preserves mean fidelity for LoRA, DoRA and PiSSA. LoRA and DoRA reduce general drift in every checkpoint, whereas PiSSA retains essentially all its drift. These responses show that similar retained task performance can accompany different retention outcomes. Fidelity losses vary across seeds, including a larger DoRA loss. We report DINO changes directly and use the 1 pp sufficiency rule only for language accuracy. OFT/HRA have greater trained drift than PiSSA in these selected settings.

\begin{figure}[!htbp]\centering
\includegraphics[width=.98\linewidth]{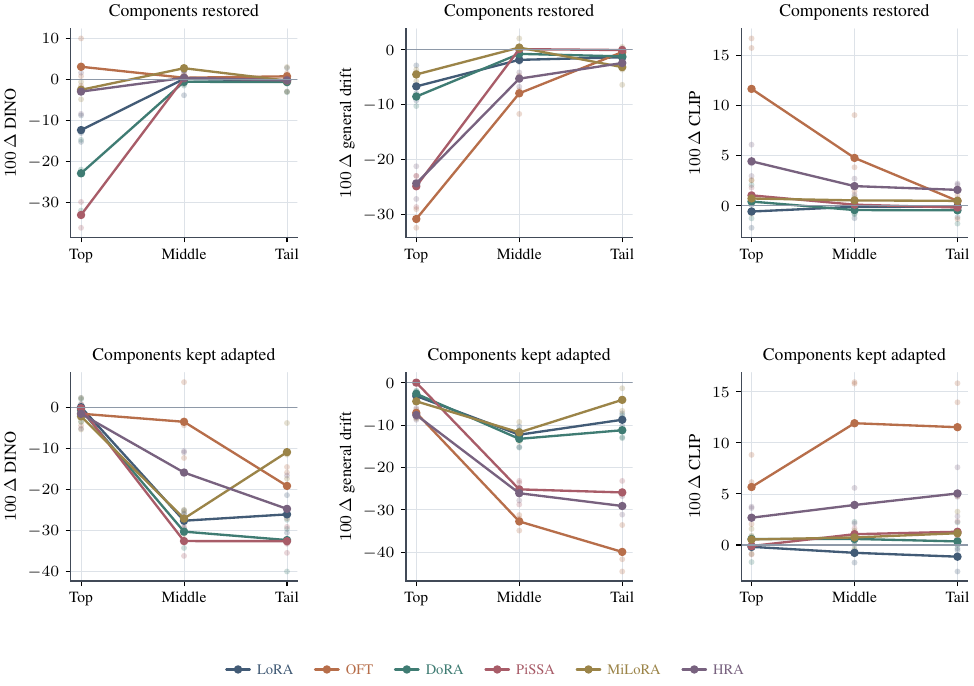}
\caption{FLUX component-restoration responses. The top row restores the named component group, and the bottom row keeps only that adapted group. All changes use the same-run trained model. Lines show method means and faint points show individual checkpoints.} 
\label{fig:flux22-bands}
\end{figure}

\FloatBarrier

\paragraph{Summary.}
The FLUX results extend the singular-value restoration and dominant-component forgetting findings to image personalization.
\FloatBarrier

\clearpage
\section{Geometry of additive updates}\label{app:geometry}
\subsection{Distance to a two-sided orthogonal transformation}
Write $q=\min(m,n)$ and let $O(k)$ denote the real $k\times k$ orthogonal group. The orbit $\{\bm Q_L\bm W_0\bm Q_R^{\!\top}:\bm Q_L\in O(m),\bm Q_R\in O(n)\}$ contains exactly the matrices with the same singular values as $\bm W_0$. It includes reflections as well as rotations. A local smooth change of singular bases is generated by skew-symmetric matrices.

\begin{proof}[Proof of Proposition~\ref{prop:orthogonal-distance}]
\textbf{Step 1. Bound the distance from below.} Let $\bm X=\bm Q_L\bm W_0\bm Q_R^{\!\top}$ be any matrix in the orbit. Orthogonal multiplication preserves singular values, so $\bm X$ has singular values $\bm s_0$ and $\|\bm X\|_{\mathrm F}=\|\bm s_0\|_2$. Write $\langle\bm A,\bm B\rangle_{\mathrm F}=\operatorname{tr}(\bm A^{\!\top}\bm B)$. The singular-value trace inequality \citep{mirsky1975trace} bounds this inner product by the sum of the paired, decreasing singular values, giving
\begin{equation*}
\langle\bm W_*,\bm X\rangle_{\mathrm F}\leq\sum_{i=1}^q s_{*,i}s_{0,i}.
\end{equation*}
Expanding the squared distance therefore gives
\begin{equation*}
\begin{aligned}
\|\bm W_*-\bm X\|_{\mathrm F}^2
&=\|\bm s_*\|_2^2+\|\bm s_0\|_2^2-2\langle\bm W_*,\bm X\rangle_{\mathrm F}\\
&\geq\sum_{i=1}^q(s_{*,i}-s_{0,i})^2=\|\bm s_*-\bm s_0\|_2^2.
\end{aligned}
\end{equation*}
\textbf{Step 2. Attain the bound by aligning the singular bases.} Take full SVDs $\bm W_j=\bm U_j\bm\Sigma_j\bm V_j^{\!\top}$ for $j\in\{0,*\}$. Here $\bm U_j$ and $\bm V_j$ are square orthogonal completions of the thin bases, and $\bm\Sigma_j\in\mathbb R^{m\times n}$ is rectangular diagonal. Choose $\bm Q_L=\bm U_*\bm U_0^{\!\top}$ and $\bm Q_R=\bm V_*\bm V_0^{\!\top}$. These matrices are orthogonal and give
\begin{equation*}
\bm Q_L\bm W_0\bm Q_R^{\!\top}=\bm U_*\bm\Sigma_0\bm V_*^{\!\top}=\bm W_R.
\end{equation*}
Thus $\bm W_R$ keeps the base singular values while using the adapted singular bases. Orthogonal invariance of the Frobenius norm yields
\begin{equation*}
\|\bm W_*-\bm W_R\|_{\mathrm F}
=\|\bm\Sigma_*-\bm\Sigma_0\|_{\mathrm F}
=\|\bm s_*-\bm s_0\|_2.
\end{equation*}
This equals the lower bound from Step 1, proving the minimum. No positivity or gap assumption was used, so the result also covers rank-deficient matrices and repeated singular values.

\textbf{Step 3. Relate the distance to $d_S/d_W$.} For nonzero base and update norms, the common normalization $\|\bm s_0\|_2=\|\bm W_0\|_{\mathrm F}$ cancels, giving
\begin{equation*}
\frac{d_S}{d_W}
=\frac{\|\bm s_*-\bm s_0\|_2}{\|\bm W_*-\bm W_0\|_{\mathrm F}}
=\frac{\|\bm W_*-\bm W_R\|_{\mathrm F}}{\|\bm W_*-\bm W_0\|_{\mathrm F}}.
\end{equation*}
For multiple matrices, summing the squared residuals before taking square roots gives the same interpretation for the pooled ratio.
\end{proof}

\subsection{First-order spectral and basis motion}
\begin{proof}[Proof of Proposition~\ref{prop:first-order-motion}]
\textbf{Step 1. Express the derivative in singular coordinates.} The positive, distinct singular values allow a smooth local choice of singular vectors \citep{stewart1990}. Complete them to smooth square orthogonal bases and write $\bm W_0+\epsilon\bm E=\bm U(\epsilon)\bm\Sigma(\epsilon)\bm V(\epsilon)^{\!\top}$. A dot denotes differentiation at $\epsilon=0$. Define
\begin{equation*}
\bm C=\bm U_0^{\!\top}\bm E\bm V_0,\qquad
\bm\Omega_U=\bm U_0^{\!\top}\dot{\bm U},\qquad
\bm\Omega_V=\bm V_0^{\!\top}\dot{\bm V}.
\end{equation*}
Differentiating $\bm U^{\!\top}\bm U=\bm I$ gives $\bm\Omega_U^{\!\top}=-\bm\Omega_U$. The same argument applies to $\bm\Omega_V$. Now differentiate the SVD and multiply on the left by $\bm U_0^{\!\top}$ and on the right by $\bm V_0$ to obtain
\begin{equation*}
\bm C=\bm\Omega_U\bm\Sigma_0+\dot{\bm\Sigma}-\bm\Sigma_0\bm\Omega_V,
\end{equation*}
where $\dot{\bm\Sigma}$ is rectangular diagonal with entries $\dot s_i$. Since skew-symmetric matrices have zero diagonal, the two basis terms contribute nothing to $C_{ii}$. Hence
\begin{equation*}
\dot s_i=C_{ii}=\bm u_{0,i}^{\!\top}\bm E\bm v_{0,i},\qquad 1\leq i\leq q.
\end{equation*}

\textbf{Step 2. Show that the other coordinates describe basis motion.} Holding the singular values fixed leaves $\bm\Omega_U\bm\Sigma_0-\bm\Sigma_0\bm\Omega_V$, the tangent directions to the orbit in singular coordinates. These directions have zero diagonal. Conversely, for $i\neq j\leq q$, the equations for any pair of off-diagonal entries are
\begin{equation*}
C_{ij}=(\Omega_U)_{ij}s_{0,j}-s_{0,i}(\Omega_V)_{ij},\qquad
C_{ji}=-(\Omega_U)_{ij}s_{0,i}+s_{0,j}(\Omega_V)_{ij}.
\end{equation*}
Solving this two-variable system gives
\begin{equation*}
(\Omega_U)_{ij}=\frac{s_{0,j}C_{ij}+s_{0,i}C_{ji}}{s_{0,j}^2-s_{0,i}^2},\qquad
(\Omega_V)_{ij}=\frac{s_{0,i}C_{ij}+s_{0,j}C_{ji}}{s_{0,j}^2-s_{0,i}^2}.
\end{equation*}
The denominators are nonzero because the singular values are distinct. If $m>n$, the extra rows satisfy $C_{aj}=(\Omega_U)_{aj}s_{0,j}$ for $a>q$. If $n>m$, the extra columns satisfy $C_{ib}=-s_{0,i}(\Omega_V)_{ib}$ for $b>q$. These equations are solvable because each $s_{0,i}>0$. Thus every coordinate outside the first $q$ diagonal entries can be produced by a change of orthogonal bases.

\textbf{Step 3. Separate the normal and tangent components.} The matrices $\bm u_{0,i}\bm v_{0,i}^{\!\top}$ are Frobenius-orthonormal because their pairwise inner products are $(\bm u_{0,i}^{\!\top}\bm u_{0,j})(\bm v_{0,i}^{\!\top}\bm v_{0,j})=\delta_{ij}$. By Step 2 they span the orthogonal complement of the tangent space. The normal projection of $\bm E$ is therefore
\begin{equation*}
\bm E_N=\sum_{i=1}^q\dot s_i\bm u_{0,i}\bm v_{0,i}^{\!\top},\qquad
\bm E_T=\bm E-\bm E_N.
\end{equation*}
The remainder $\bm E_T$ is tangent and changes only the bases to first order. Orthogonality gives
\begin{equation*}
\|\bm E\|_{\mathrm F}^2=\|\bm E_N\|_{\mathrm F}^2+\|\bm E_T\|_{\mathrm F}^2
=\|\dot{\bm s}\|_2^2+\|\bm E_T\|_{\mathrm F}^2.
\end{equation*}

\textbf{Step 4. Average the first-order spectral energy.} Let $\bm z_i=\operatorname{vec}(\bm u_{0,i}\bm v_{0,i}^{\!\top})$, so $\|\bm z_i\|_2=1$ and $\dot s_i=\bm z_i^{\!\top}\operatorname{vec}(\bm E)$. The assumed second moment yields
\begin{equation*}
\mathbb E[\dot s_i^2]
=\bm z_i^{\!\top}\mathbb E[\operatorname{vec}(\bm E)\operatorname{vec}(\bm E)^{\!\top}]\bm z_i
=\frac{\|\bm z_i\|_2^2}{mn}=\frac{1}{mn}.
\end{equation*}
Summing these $q$ equal contributions gives
\begin{equation*}
\mathbb E\|\dot{\bm s}\|_2^2=\frac{q}{mn}=\frac{1}{\max(m,n)}.
\end{equation*}
Since $\|\bm E\|_{\mathrm F}=1$, Step 3 gives $\mathbb E\|\bm E_T\|_{\mathrm F}^2=1-1/\max(m,n)$ for the complementary tangent energy.

\textbf{Step 5. Pass from the derivative to $d_S/d_W$.} The base norms cancel as in Proposition~\ref{prop:orthogonal-distance}, and $\|\epsilon\bm E\|_{\mathrm F}=|\epsilon|$. Thus, for $\epsilon\neq0$,
\begin{equation*}
\left(\frac{d_S}{d_W}\right)^2=\frac{\|\bm s(\bm W_0+\epsilon\bm E)-\bm s_0\|_2^2}{\epsilon^2}.
\end{equation*}
With $\bm W_0$ fixed, positivity and the nonzero singular-value gaps give the expansion
\begin{equation*}
\bm s(\bm W_0+\epsilon\bm E)-\bm s_0=\epsilon\dot{\bm s}+O(\epsilon^2),
\end{equation*}
whose remainder is bounded uniformly over $\|\bm E\|_{\mathrm F}=1$. Substitution into the ratio therefore gives $(d_S/d_W)^2=\|\dot{\bm s}\|_2^2+O(|\epsilon|)$ with a uniform error. Taking expectations and applying Step 4 gives
\begin{equation*}
\lim_{\epsilon\to0}\mathbb E\left[\left(\frac{d_S}{d_W}\right)^2\right]
=\mathbb E\|\dot{\bm s}\|_2^2=\frac{1}{\max(m,n)}.
\end{equation*}
Taking the square root proves the stated RMS limit.
\end{proof}

\begin{corollary}[Isotropic fixed-rank perturbations]\label{cor:isotropic-fraction}
Let $\bm W_0$ satisfy Proposition~\ref{prop:first-order-motion} and let $\bm E=\bm P\bm\Lambda\bm Q^{\!\top}/\|\bm\Lambda\|_{\mathrm F}$, where $\bm\Lambda\in\mathbb R^{r\times r}$ is fixed, nonzero diagonal and $\bm P\in\mathbb R^{m\times r},\bm Q\in\mathbb R^{n\times r}$ are independent uniform orthonormal frames, with $1\leq r\leq q$. Then \Cref{eq:relative-spectral-scale} holds.
\end{corollary}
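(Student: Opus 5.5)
The plan is to reuse Proposition~\ref{prop:first-order-motion} and check only its inputs. The proof of that proposition used isotropy in exactly two places. First, Step~4 needed $\mathbb E[\dot s_i^2]=1/(mn)$. Second, Step~5 needed the deterministic constraint $\|\bm E\|_{\mathrm F}=1$, which gave a uniform $O(|\epsilon|)$ remainder. The second input holds by construction: $\|\bm P\bm\Lambda\bm Q^{\!\top}\|_{\mathrm F}=\|\bm\Lambda\|_{\mathrm F}$ because $\bm P,\bm Q$ have orthonormal columns, so $\|\bm E\|_{\mathrm F}=1$ almost surely. Step~5 then transfers verbatim. What remains is to show $\mathbb E\|\dot{\bm s}\|_2^2=1/\max(m,n)$ for this non-Gaussian, rank-constrained $\bm E$.

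The cleanest route is to verify the full second-moment hypothesis $\mathbb E[\operatorname{vec}(\bm E)\operatorname{vec}(\bm E)^{\!\top}]=\bm I_{mn}/(mn)$, so that \Cref{eq:relative-spectral-scale} follows directly from the proposition. Write $\bm M=\mathbb E[\operatorname{vec}(\bm E)\operatorname{vec}(\bm E)^{\!\top}]$. Since $\bm P$ and $\bm Q$ are independent and Haar-distributed on their Stiefel manifolds, $(\bm O_1\bm P,\bm O_2\bm Q)$ has the same law as $(\bm P,\bm Q)$ for any fixed $\bm O_1\in O(m)$, $\bm O_2\in O(n)$. Hence $\bm E\mapsto \bm O_1\bm E\bm O_2^{\!\top}$ preserves the law of $\bm E$, and $\bm M$ commutes with every $\bm O_2\otimes\bm O_1$.

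The key step is to conclude that $\bm M$ is a scalar multiple of the identity. I expect this to be the main obstacle, and I would avoid general representation theory and argue with coordinates. Using diagonal sign matrices $\bm O_1,\bm O_2$ gives $\mathbb E[E_{ab}E_{cd}]=0$ unless $a=c$ and $b=d$, since otherwise some sign flips the product. Using permutation matrices gives $\mathbb E[E_{ab}^2]$ independent of $(a,b)$. Together these show $\bm M=c\,\bm I_{mn}$, and the trace $\mathbb E\|\bm E\|_{\mathrm F}^2=1$ fixes $c=1/(mn)$.

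For a sanity check, or as an alternative route, one can compute $\mathbb E[\dot s_i^2]=\mathbb E[(\bm u_{0,i}^{\!\top}\bm P\bm\Lambda\bm Q^{\!\top}\bm v_{0,i})^2]/\|\bm\Lambda\|_{\mathrm F}^2$ directly. By independence this equals $\sum_k\lambda_k^2\,\mathbb E[P_{ik}'^2]\,\mathbb E[Q_{ik}'^2]/\|\bm\Lambda\|_{\mathrm F}^2$, where $\bm P'=\bm U_0^{\!\top}\bm P$ and $\bm Q'=\bm V_0^{\!\top}\bm Q$ are again uniform frames. Cross terms vanish by sign symmetry, and each column of a uniform frame is uniform on the sphere, so the expectations are $1/m$ and $1/n$. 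This gives $1/(mn)$ per index $i$, and summing over the $q$ singular values yields $q/(mn)=1/\max(m,n)$. Finally, the limit and square root follow exactly as in Step~5 of the proposition's proof.
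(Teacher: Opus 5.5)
Your proposal is correct and follows the paper's proof essentially step for step. You check $\|\bm E\|_{\mathrm F}=1$ from $\bm P^{\!\top}\bm P=\bm Q^{\!\top}\bm Q=\bm I_r$, use row/column sign flips and permutations (valid by independence and two-sided invariance of the frames) to obtain $\mathbb E[\operatorname{vec}(\bm E)\operatorname{vec}(\bm E)^{\!\top}]=\bm I_{mn}/(mn)$, and then apply Proposition~\ref{prop:first-order-motion}; your direct computation of $\mathbb E[\dot s_i^2]$ via $\bm U_0^{\!\top}\bm P$ and $\bm V_0^{\!\top}\bm Q$ is also valid, but it is not needed.
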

\begin{proof}
\textbf{Step 1. Check the norm.} Since $\bm P^{\!\top}\bm P=\bm Q^{\!\top}\bm Q=\bm I_r$, multiplication by these frames preserves the Frobenius norm of $\bm\Lambda$. Hence $\|\bm E\|_{\mathrm F}=1$.

\textbf{Step 2. Compute the second moment.} Uniformity and independence of the frames make the distribution of $\bm E$ unchanged by any fixed left or right orthogonal multiplication. In particular, a row sign flip shows $\mathbb E[E_{ab}E_{cd}]=0$ when $a\neq c$. When $a=c$ but $b\neq d$, a column sign flip gives the same conclusion. Row and column permutations make all $\mathbb E[E_{ab}^2]$ equal. Their sum is $\mathbb E\|\bm E\|_{\mathrm F}^2=1$, so each equals $1/(mn)$. Consequently,
\begin{equation*}
\mathbb E[\operatorname{vec}(\bm E)\operatorname{vec}(\bm E)^{\!\top}]=\frac{\bm I_{mn}}{mn}.
\end{equation*}
\textbf{Step 3. Apply Proposition~\ref{prop:first-order-motion}.} Steps 1 and 2 verify its two assumptions on $\bm E$, so both conclusions in \Cref{eq:relative-spectral-scale} follow.
\end{proof}

\begin{corollary}[Pooling matrices]\label{cor:pooled-spectral-scale}
For a finite collection of matrices satisfying the isotropic assumptions of Proposition~\ref{prop:first-order-motion}, let $\bm W_{*,j}=\bm W_{0,j}+\epsilon a_j\bm E_j$, with fixed $a_j\geq0$ and $\sum_j a_j^2>0$. The pooled displacements satisfy
\begin{equation*}
\lim_{\epsilon\to0}\mathbb E\left[\left(\frac{d_S}{d_W}\right)^2\right]
=\frac{\sum_j a_j^2/\max(m_j,n_j)}{\sum_j a_j^2}.
\end{equation*}
\end{corollary}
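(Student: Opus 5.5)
The plan is to reduce the pooled ratio to a weighted average of the single-matrix expansions from Proposition~\ref{prop:first-order-motion}. By the pooled definitions of $d_W^2$ and $d_S^2$ in Appendix~\ref{app:complete-interventions}, both numerator and denominator carry the same normalizer $\sum_j\|\bm s_{0,j}\|_2^2=\sum_j\|\bm W_{0,j}\|_{\mathrm F}^2$, which therefore cancels. For $\epsilon\neq0$ this leaves
\begin{equation*}
\left(\frac{d_S}{d_W}\right)^2=\frac{\sum_j\|\bm s(\bm W_{0,j}+\epsilon a_j\bm E_j)-\bm s_{0,j}\|_2^2}{\epsilon^2\sum_j a_j^2},
\end{equation*}
where we use $\|\epsilon a_j\bm E_j\|_{\mathrm F}=|\epsilon|a_j$, since $\|\bm E_j\|_{\mathrm F}=1$. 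The denominator is deterministic and nonzero, because $\sum_j a_j^2>0$.

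For each $j$ we apply the expansion established in Step~5 of the proof of Proposition~\ref{prop:first-order-motion}, with perturbation parameter $\epsilon a_j$:
\begin{equation*}
\bm s(\bm W_{0,j}+\epsilon a_j\bm E_j)-\bm s_{0,j}=\epsilon a_j\dot{\bm s}_j+O(\epsilon^2a_j^2).
\end{equation*}
The remainder is uniform over the unit sphere $\|\bm E_j\|_{\mathrm F}=1$. Terms with $a_j=0$ vanish identically, so they can be dropped. Squaring and dividing by $\epsilon^2$ gives
\begin{equation*}
a_j^2\|\dot{\bm s}_j\|_2^2+O(|\epsilon|),
\end{equation*}
again uniformly in $\bm E_j$. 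Here we use that $\|\dot{\bm s}_j\|_2\le\|\bm E_j\|_{\mathrm F}=1$ by the Pythagorean identity in Proposition~\ref{prop:first-order-motion}.

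Since the collection is finite and each error term is uniformly bounded, we can take expectations termwise and pass to the limit $\epsilon\to0$. This gives
\begin{equation*}
\lim_{\epsilon\to0}\mathbb E\left[\left(\frac{d_S}{d_W}\right)^2\right]=\frac{\sum_j a_j^2\,\mathbb E\|\dot{\bm s}_j\|_2^2}{\sum_j a_j^2}.
\end{equation*}
Substituting $\mathbb E\|\dot{\bm s}_j\|_2^2=1/\max(m_j,n_j)$ from \Cref{eq:relative-spectral-scale} yields the claim. Joint independence of the $\bm E_j$ is not needed, because the expectation of a sum only requires the marginal laws.

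The main point requiring care is the uniformity of the $O(\epsilon^2)$ remainder, which is what justifies exchanging the limit and the expectation. It follows from smoothness of the singular values near each $\bm W_{0,j}$, which has distinct positive singular values, combined with compactness of the unit sphere, exactly as in Step~5 of that proof. The same bound then applies to each of the finitely many matrices.
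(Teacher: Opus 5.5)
Your proposal is correct and follows the paper's proof essentially step for step. Both cancel the common base-norm normalizer, expand each matrix's spectral displacement with the uniform first-order expansion from Proposition~\ref{prop:first-order-motion}, and sum by linearity of expectation, so neither needs independence across matrices. You also note that $\|\dot{\bm s}_j\|_2\le 1$ controls the cross term when squaring, a detail the paper leaves implicit.
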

\begin{proof}
\textbf{Step 1. Write the pooled ratio.} Let $\bm s_{*,j}$ and $\bm s_{0,j}$ be the singular-value vectors of the adapted and base matrices. The common base-norm normalization cancels, while $\|\bm E_j\|_{\mathrm F}=1$ fixes each update norm. Therefore
\begin{equation*}
\left(\frac{d_S}{d_W}\right)^2
=\frac{\sum_j\|\bm s_{*,j}-\bm s_{0,j}\|_2^2}{\sum_j\|\epsilon a_j\bm E_j\|_{\mathrm F}^2}
=\frac{\sum_j\|\bm s_{*,j}-\bm s_{0,j}\|_2^2}{\epsilon^2\sum_j a_j^2}.
\end{equation*}
\textbf{Step 2. Expand each numerator.} Write $\dot{\bm s}_j$ for the spectral derivative at $\bm W_{0,j}$ in direction $\bm E_j$. The uniform expansion from Proposition~\ref{prop:first-order-motion} gives, for fixed $a_j$,
\begin{equation*}
\frac{\mathbb E\|\bm s_{*,j}-\bm s_{0,j}\|_2^2}{\epsilon^2}
=a_j^2\mathbb E\|\dot{\bm s}_j\|_2^2+O(|\epsilon|)
=\frac{a_j^2}{\max(m_j,n_j)}+O(|\epsilon|).
\end{equation*}
When $a_j=0$, the corresponding term is identically zero.

\textbf{Step 3. Sum and take the limit.} There are finitely many matrices and $\sum_j a_j^2$ is fixed and positive. Substituting Step 2 into Step 1 yields
\begin{equation*}
\lim_{\epsilon\to0}\mathbb E\left[\left(\frac{d_S}{d_W}\right)^2\right]
=\frac{\sum_j a_j^2/\max(m_j,n_j)}{\sum_j a_j^2}.
\end{equation*}
Only linearity of expectation is used, so independence between matrices is unnecessary. The result is an average of the individual squared-ratio scales, weighted by their update energies $a_j^2$.
\end{proof}

Under the stated isotropic assumptions, the inverse-dimension scale explains how an additive update can have small relative spectral displacement. This motivates the empirical geometry measurements, while restoration tests the functional role of the spectral changes. 

\clearpage
\section{Review of experimental settings in PEFT studies}\label{app:peft-review}

\paragraph{Papers and comparisons.}
We review method papers associated with Hugging Face PEFT v0.21.0 \citep{mangrulkar2022peft}, using paper versions available by 26 September 2026.
Multiple supported entries from the same paper are counted once, and background citations are excluded.
The inventory contains 71 entries and 70 papers, with one entry lacking an identified method paper.
Sixty-four papers have eligible main comparisons.
The other six concern training from scratch, contributions without adaptation training or results without a trained principal comparator outside auxiliary analyses.

We read the papers, appendices, available supplements and relevant official experimental descriptions.
A comparison pairs the proposed method with its principal trained baseline for a stated pretrained model and training task.
Vanilla LoRA is preferred when present.
Several evaluations of the same adapted model remain one comparison, while separately trained tasks are distinguished.
Additional ablations do not establish controls for the main comparisons.
Capacity strata are retained within each comparison.

\paragraph{Controls and evidence.}
\Cref{tab:peft-review-criteria} defines the four controls.
Both methods must have the required evidence in the same comparison.
Comparable conditions are assessed separately for checkpoint, data, exposure, adapted modules and trainable budget.
Shared stopping rules can establish comparable exposure.
Parameter counts within 10\% of the larger count, an explicit matched design or a controlled comparison across budgets establish budget comparability.
Equal rank alone is insufficient.
Intrinsic module differences and full fine-tuning budgets are treated as inapplicable where direct matching is inappropriate.
A different budget can still support a useful efficiency comparison.

Training runs may vary initialization, data order or sampled training examples.
Their uncertainty is distinguished from variation across generated samples, subjects, evaluation subsets or evaluation resamples.
Undefined error bars and dispersion pooled across training seeds and other units do not establish uncertainty attributable specifically to training.
A numerical search space does not establish equal tuning effort or separate validation selection.
An explicitly exhaustive design with one candidate can document disclosure.
Selection using the reported validation split is recorded separately from selection using an independent split.

Two AI-assisted source reviews were compared, and disagreements were revisited against the primary sources.
Differences in eligibility, reviewed versions, baseline attribution and experimental scope were reconciled using the definitions above.
Ambiguous or contradictory evidence remains unclear.
Relevant official descriptions provide supporting evidence, while generic defaults do not establish which procedures produced reported results.
AI-assisted review can retain correlated errors.

\paragraph{Aggregation and coverage.}
A paper documents a control if at least one eligible main comparison satisfies it.
An explicit departure requires evidence that every eligible comparison fails the definition.
Otherwise, the classification is unclear.
Inapplicable cases are excluded from the relevant denominator.
Missing information is never interpreted as evidence that a procedure was not performed.
For incomplete comparison coverage, known positives are retained, while universal claims are withheld unless a statement explicitly covers all main results.

\Cref{tab:peft-review-coverage,tab:peft-review-domains} report coverage and domain summaries.
Sixteen papers have unresolved comparison coverage.
No reviewed comparison establishes all four controls together under these definitions.
Requiring three training runs retains the same nine replication positives, while requiring explicitly stated training randomness retains seven.
Excluding learning-rate selection on the reported or an unnamed split leaves four selection positives.

\paragraph{Scope of interpretation.}
Library membership defines a practical sample spanning several domains and research objectives.
It is not a representative sample of all PEFT research.
Domains overlap at paper level, and controls documented in different comparisons cannot be combined into a single qualifying experiment.
The cohort and validation-selection definition differ from \citet{lee2026learningrate}, so the percentages cannot be compared directly.
The review characterizes available evidence and does not estimate how undocumented procedures would change method rankings.

\begin{table}[!htbp]
\captionsetup{labelsep=period}
\caption{Definitions used in the review. Evidence must cover both methods and the stated main comparison. These are operational criteria for this review.}
\label{tab:peft-review-criteria}
\centering
\begin{tabular}{@{}p{.25\linewidth}p{.71\linewidth}@{}}
\toprule
Control & Required evidence \\
\midrule
Training runs and uncertainty & At least two independent training runs per method with a defined SD, SE or CI attributable to training variation. \\
LR selection & At least three learning rates evaluated per method, with separate selection using a stated validation criterion. \\
LR candidates & Numerical candidate values or a search range stated for both methods. \\
Comparable conditions & Shared checkpoint and data, comparable exposure, and adapted modules and trainable budgets matched or addressed through an appropriate controlled comparison. \\
\bottomrule
\end{tabular}
\end{table}

\begin{table}[!htbp]
\captionsetup{labelsep=period}
\caption{Paper-level coverage of experimental controls. The first three columns partition the 64 eligible papers using the at least one comparison rule. The final column requires established coverage of every applicable main comparison. Unresolved coverage prevents a universal positive.}
\label{tab:peft-review-coverage}
\centering
\begin{tabular*}{\linewidth}{@{\extracolsep{\fill}}lrrrr@{}}
\toprule
Control & Documented & Explicit departure & Unclear & \shortstack{All main\\comparisons} \\
\midrule
Training runs and uncertainty & 9 & 5 & 50 & 2 \\
LR selection & 7 & 4 & 53 & 1 \\
LR candidates & 13 & 0 & 51 & 4 \\
Comparable conditions & 25 & 10 & 29 & 1 \\
\bottomrule
\end{tabular*}
\end{table}

\begin{table}[!htbp]
\captionsetup{labelsep=period}
\caption{Documented controls within each domain. Entries give qualifying papers over applicable papers. A paper can occur in several domains, so rows cannot be summed. The decoder subset restricts comparisons to standard decoder adaptation against vanilla LoRA, with the exclusions described by Lee et al. Its definitions and sampling frame still differ from theirs.}
\label{tab:peft-review-domains}
\centering
\begin{tabular*}{\linewidth}{@{\extracolsep{\fill}}lrrrr@{}}
\toprule
Domain & \shortstack{Training\\uncertainty} & LR selection & LR candidates & Conditions \\
\midrule
Decoder language & 4/51 & 4/51 & 7/51 & 18/51 \\
Other language & 5/40 & 3/40 & 6/40 & 7/40 \\
Vision and multimodal & 3/20 & 1/20 & 3/20 & 2/20 \\
Diffusion & 0/11 & 0/11 & 1/11 & 3/11 \\
Other & 0/1 & 0/1 & 0/1 & 0/1 \\
\midrule
Decoder subset & 3/40 & 2/40 & 4/40 & 12/40 \\
\bottomrule
\end{tabular*}
\end{table}

\FloatBarrier
\end{document}